\newcommand{\Loss}{\text{Loss}}
\newcommand{\reals}{{\mathbb{R}}}
\newcommand{\argmin}{\mathop{\rm argmin}}
\newcommand{\dquote}[1]{``#1''}
\newcommand{\poly}{\mathrm{poly}}
\newcommand{\labs}{\left\vert}
\newcommand{\rabs}{\right\vert}
\newcommand{\lnorm}{\left\Vert}
\newcommand{\rnorm}{\right\Vert}
\newcommand{\tr}{\operatorname{tr}}
\newcommand{\opt}{\mathrm{opt}}
\def\TV{\mathrm{TV}}
\newcommand{\expect}{\mathbb{E}}
\newcommand{\indict}{\mathbb{I}}
\newtheorem{thm}{Theorem}
\newtheorem{lem}{Lemma}
\newtheorem{prop}{Proposition}
\newtheorem{asmp}{Assumption}
\newtheorem{defn}{Definition}
\newtheorem{fact}{Fact}
\newtheorem{rem}{Remark}
\newtheorem{example}{Example}
\newtheorem{ec}{Empirical Observation}
\crefname{thm}{Theorem}{Theorems}
\crefname{lem}{Lemma}{Lemmas}
\crefname{cor}{Corollary}{Corollaries}
\crefname{prop}{Proposition}{Propositions}
\crefname{asmp}{Assumption}{Assumptions}
\crefname{defn}{Definition}{Definitions}
\crefname{oracle}{Oracle}{Oracles}
\crefname{fact}{Fact}{Facts}
\crefname{conj}{Conjecture}{Conjectures}
\crefname{rem}{Remark}{Remarks}
\crefname{claim}{Claim}{Claims}
\crefname{ec}{Empirical Observation}{Empirical Observations}
\definecolor{red}{rgb}{1, 0, 0}
\definecolor{green}{rgb}{0, 1, 0}
\definecolor{darkgreen}{rgb}{0.0, 0.2, 0.13}
\definecolor{darkseagreen}{rgb}{0.56, 0.74, 0.56}
\definecolor{officegreen}{rgb}{0.0, 0.5, 0.0}
\newcommand{\GREEN}[1]{{\color{green} #1}}
\definecolor{blue}{rgb}{0, 0, 1}
\newcommand{\BLUE}[1]{{\color{blue} #1}}
\definecolor{orange}{rgb}{1, 0.4, 0.0}
\renewcommand{\opt}{\operatorname{opt}}
\newcommand{\constant}{\operatorname{constant}}
\newcommand{\rl}{\operatorname{RL}}
\newcommand{\piE}{\pi^{\operatorname{E}}}
\newcommand{\bc}{\operatorname{BC}}
\newcommand{\ail}{\operatorname{AIL}}
\newcommand{\piail}{\pi^{\ail}}
\newcommand{\goodS}{\gS^{\operatorname{G}}}
\newcommand{\badS}{\gS^{\operatorname{B}}}
\newcommand{\gooda}{{\GREEN{a}}}
\newcommand{\bada}{{\BLUE{a}}}
\newcommand{\cdp}{\operatorname{DP}}
\newcommand{\eval}{\operatorname{EVAL}}
\renewcommand{\opt}{\operatorname{OPT}}
\renewcommand{\textsf}[1]{#1}
\setlist{topsep=0pt,parsep=0pt,partopsep=0pt}
\newcommand{\meanstd}[2]{$#1 {\scriptscriptstyle \pm #2}$}
\definecolor{dmorange500}{HTML}{FF5F19}
\definecolor{dmblue300}{HTML}{2267EB}
\definecolor{dmred300}{HTML}{FF617B}
\theoremstyle{plain}
\theoremstyle{definition}
\theoremstyle{remark}
\title{Understanding Adversarial Imitation Learning in Small Sample Regime: A Stage-coupled Analysis}
\author[1]{Tian Xu\thanks{Equal contribution. Author ordering is determined by coin flip. Email: \texttt{xut@lamda.nju.edu.cn} and \texttt{ziniuli@link.cuhk.edu.cn}}}
\author[2,3]{Ziniu Li$^*$}
\author[1]{Yang Yu\thanks{Corresponding authors. Email: \texttt{yuy@nju.edu.cn} and \texttt{luozq@cuhk.edu.cn}}}
\author[2,3]{Zhi-Quan Luo$^\dagger$}
\affil[1]{National Key Laboratory for Novel Software Technology and School of Artificial Intelligence, Nanjing University}
\affil[2]{The Chinese University of Hong Kong, Shenzhen}
\affil[3]{Shenzhen Research Institute of Big Data}
\date{\today}
\begin{document}

\maketitle
\begingroup
\renewcommand{\thefootnote}{}
\footnotetext{This paper is accepted in IEEE Transactions on Pattern Analysis and Machine Intelligence (TPAMI).}
\addtocounter{footnote}{0}
\endgroup

\begin{abstract}

Imitation learning (IL) learns a policy from expert trajectories, serving as a fundamental paradigm in both large language model training and embodied AI. This process is challenging due to the nature of sequential decision-making where errors can accumulate and distributions may shift over horizons. However, it has been found that a kind of IL approach, adversarial imitation learning (AIL), can have exceptional empirical performance. With just one expert trajectory, AIL often matches the expert performance even in a long horizon, on tasks such as robotic locomotion control. There are two fundamental yet unsolved questions: why does AIL perform well with so \emph{few} trajectories, and why does it maintain good performance over \emph{long} horizons? Previous theoretical results fail to answer these questions as they are meaningful only in large sample regime (i.e., lots of expert trajectories) {and have dependence on the decision horizon.} In this paper, we analyze a total-variation-distance-based AIL (called TV-AIL), showing a horizon-free imitation gap $\gO(\min\{1, \sqrt{|\gS|/N} \})$ on a class of instances abstracted from robotic locomotion control tasks. Here $|\gS|$ is the state space size for a Markov Decision Process (MDP), and $N$ is the number of expert trajectories. We emphasize two important features of our bound. First, this bound is meaningful in both small and large sample regimes. Second, this bound suggests that the imitation gap of TV-AIL does not increase with the decision horizon. Together, our bound can therefore explain the empirical observations and provide insights into how AIL addresses the distribution shift issue. Our analysis leverages the multi-stage policy optimization structure in TV-AIL and presents a new stage-coupled analysis. This tool also helps analyze the worst-case imitation gap of TV-AIL, disclosing its limitations in general MDPs.

\end{abstract}

\section{Introduction}\label{sec:introduction}
Imitation learning (IL) aims to train a policy by learning from expert demonstrations \citep{argall2009survey, hussein2017survey, osa2018survey}. It serves as a fundamental paradigm across diverse domains, from large language model training (through pre-training and supervised fine-tuning) to embodied AI (where agents learn skills by mimicking human demonstrations). One popular approach is behavioral cloning (BC), which directly matches the policy to the expert's actions using supervised learning (i.e., maximum likelihood estimation) \citep{Pomerleau91bc}. Although simple and widely used in various applications \citep{ross11dagger, silver2016mastering, levin16_end_to_end}, BC suffers from compounding errors \citep{ross11dagger}. Specifically, IL operates under the sequential decision-making framework \citep{puterman2014markov}, where future observations depend on previous decisions. Therefore, decision errors may accumulate and the distribution may shift over horizons, leading to poor practical performance. This issue becomes critical when the number of expert demonstrations is limited and the decision horizon is long \citep{ghasemipour2019divergence,xu2020error}, which is often the case in practical applications. See Table~\ref{tab:mujoco_sample_size} for the empirical evidence in the (robotic) locomotion control tasks. Thus, effectively addressing the compounding errors is believed to be fundamental in imitation learning \citep{xu2021error,rajaraman2020fundamental}.

Generative adversarial imitation learning (GAIL) \citep{ho2016gail} is introduced as an alternative to behavioral cloning (BC). While BC implements direct policy matching via supervised learning, GAIL performs state-action distribution matching\footnote{The difference between these two matching principles can be understood as follows: state-action distribution is a marginal distribution, while policy is a conditional distribution. Both matching principles lead to the expert policy optimally when infinite expert trajectories are available, but differ with finite trajectories.}. Practically, AIL first recovers a reward function from expert demonstrations and subsequently uses reinforcement learning (RL) \citep{sutton2018reinforcement} methods to maximize the reward.  This principle has led to the development of many adversarial imitation learning (AIL) methods, which minimize a certain divergence function between an agent's \emph{state-action} distribution and an expert's \citep{fu2018airl, Kostrikov19dac, Kostrikov20value_dice, Brantley20disagreement}. These methods have shown exceptional performance in various applications. For example, {as shown in Table \ref{tab:mujoco_sample_size}}, in locomotion control tasks with only one expert trajectory and a decision horizon of 1000, AIL methods can nearly match expert performance, whereas BC cannot \citep{ho2016gail}.

It is widely accepted that AIL methods are effective in addressing the issue of compounding errors in empirical settings. However, the theoretical underpinnings and understanding remain lacking. Specifically, two critical questions remain unresolved: why do AIL methods perform well with \emph{few} expert trajectories, and how do they maintain good performance over \emph{long} horizons? While existing theoretical results \citep{sun2019provably, wang2020computation, zhang2020generative, rajaraman2020fundamental, nived2021provably, xu2021error, swamy2021moments, liu2021provably} offer solid convergence and sample complexity guarantees for AIL, none adequately address these two areas of practical concern. In particular, previous theoretical models primarily focus on the large sample regime, where a large number of data ensures good performance. However, practitioners are often more concerned with the small sample regime, for which, to the best of our knowledge, there is no existing theoretical framework. We note that existing analytical tools have certain limitations and advocate for the development of a new theoretical framework. Further details are discussed below.

\begin{table*}[t]
\centering
\caption{Scaled imitation gap on Hopper, HalfCheetah and Walker2d with $H = 1000$. The reward scales over three tasks are different. We divide the original imitation gap by a scalar, which is proportional to the expert's policy value (see \cref{table:expert_value_mujoco} in the Appendix). We report the mean of the scaled imitation gap with the standard deviation over 5 independent experiments (same with \cref{tab:mujoco_horizon}). {For the original imitation gap and policy return, please refer to Appendix \ref{appendix:experiment_details}.}}
\label{tab:mujoco_sample_size}
\begin{tabular}{@{}cccccc@{}}
\toprule
     &    & $N=1$ & $N=4$ & $N=7$ &$N=10$ \\ \midrule
\multirow{2}{*}{Hopper (scale = 3.2)} &BC      & \meanstd{784.97}{28.09} &  \meanstd{887.04}{31.26} &  \meanstd{666.44}{106.58} & \meanstd{460.72}{74.95}       \\
 &TV-AIL      & \meanstd{10.38}{11.25}  & \meanstd{1.81}{2.88}       & \meanstd{-4.96}{10.88}  & \meanstd{2.33}{10.59} \\
\midrule
\multirow{2}{*}{HalfCheetah (scale = 7.7)} &BC      & \meanstd{1058.48}{8.27}& \meanstd{1066.21}{22.76} & \meanstd{988.07}{35.52} & \meanstd{579.53}{171.28}       \\
 &TV-AIL      & \meanstd{-22.45}{101.65}  & \meanstd{-84.96}{16.84}       & \meanstd{-78.69}{6.98} & \meanstd{-79.29}{7.95} \\
 \midrule
\multirow{2}{*}{Walker2d (scale = 5.0)} &BC      & \meanstd{1002.13}{9.68} & \meanstd{939.27}{19.10} & \meanstd{528.91}{181.46} & \meanstd{222.98}{52.97}       \\
 &TV-AIL      & \meanstd{12.89}{19.61}  & \meanstd{9.04}{17.99}       & \meanstd{-4.91}{8.14}  & \meanstd{14.36}{12.48} \\ \bottomrule
\end{tabular}
\end{table*}

Previous works primarily employed the reduction-and-estimation framework for analyzing AIL methods, a method dating back to \citep{pieter04apprentice, syed07game}. For tabular and episodic MDPs with finite states and actions, it has been proven that two AIL methods, FEM \citep{pieter04apprentice} and GTAL \citep{syed07game}, have worst-case imitation gap bounds $V(\piE) - V(\pi)$ of $\gO(\min \{ H, H\sqrt{|\gS||\gA|/N}) \})$ and $\gO(\min \{H, H|\gS||\gA|/\sqrt{N} \})$, respectively. In the statistical learning setting, the expert policy $\piE$ collects $N$ trajectory for the learner $\pi$. Furthermore, $|\gS|$ and $|\gA|$ denote the sizes of the state and action spaces, $H$ denotes the decision horizon, and $V(\pi)$ denotes the sum of rewards obtained by a policy $\pi$. These results have limitations and cannot explain the empirical observation well. On the one hand, these bounds are only meaningful in the large sample regime (i.e., $N \gtrsim |\gS||\gA|$); otherwise, the first term dominates in the small sample regime, and the bounds become trivial\footnote{Note that one-step reward is assumed to be between 0 and 1,  and hence the maximum imitation gap is at most $H$.} (in this paper, $\gtrsim$ and $\lesssim$ denote greater or less than up to constants, respectively). On the other hand, the bounds suggest that the performance of AIL methods may degenerate a lot for long-horizon tasks, which is rarely observed in practice.

Now we discuss the technical limitations of the reduction-and-estimation framework, which we will further elaborate on in \cref{subsec:key_analysis_ideas}. This framework consists of two main steps: (1) proving that the imitation gap is upper-bounded by the statistical estimation error of the expert state-action distribution and (2) controlling the estimation error through proper concentration inequalities. This framework suggests that if the statistical estimation error is small, the imitation gap is small. However, we believe that this worst-case analysis could be too loose for practical tasks, as it considers the multi-step decision errors in a \emph{decoupled} and \emph{independent} manner in step (1). To support our claim, we conducted experiments and observed that even if the statistical estimation error is large, the imitation gap of AIL methods could still be very small. See the evidence in \cref{tab:reset_cliff_horizon_estimation_error}. In this paper, we propose a new theoretical analysis that explains why AIL methods perform well even when the estimation error is large.

\subsection{Our Contribution}

To investigate the properties of AIL methods, we introduce a class of instances\footnote{An imitation learning instance refers to the underlying MDP and associated expert policy. When the context is clear, we refer to an instance as an MDP.} called RBAS MDPs, which are defined in detail in \cref{asmp:reset_cliff}. These instances have reachable bad absorbing states (RBAS), which is a characteristic found in practical tasks such as locomotion control and Atari games. When an agent makes a wrong action in these tasks, it typically goes to a terminal state with a zero reward. RBAS MDPs capture this feature, as shown in \cref{fig:toy_reset_cliff}. We confirm through numerical experiments that the well-known empirical observations about AIL methods hold for RBAS MDPs, suggesting that these instances serve as a suitable mathematical model for studying the algorithmic properties of AIL methods.

Now, we provide an informal preview of our main result. 

\begin{thm}[Informal Statement of \cref{theorem:ail_reset_cliff}] \label{thm:informal_main_result}
There is an AIL method with the total variation distance (called TV-AIL) that achieves the horizon-free imitation gap $\gO(\min\{1, \sqrt{|\gS|/N}\})$ for any instance in the class of RBAS MDPs.
\end{thm}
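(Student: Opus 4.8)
The plan is to prove a horizon-free imitation gap for TV-AIL on RBAS MDPs by exploiting the layered structure of the policy optimization problem and chaining the errors across stages via dynamic programming, rather than the decoupled reduction-and-estimation route criticized in the introduction. First I would set up the optimization problem that TV-AIL solves: with the total variation distance as discriminator class, the min-max problem collapses (by LP duality / the usual occupancy-measure argument) to minimizing, over occupancy measures realizable in the tabular episodic MDP, the sum over time steps $h$ of the $\ell_1$ distance between the learner's state-action distribution at stage $h$ and the empirical expert state-action distribution $\widehat{\rho}^{\,\piE}_h$ built from the $N$ trajectories. The key observation is that this objective decomposes stage by stage, and the learner can in principle match $\widehat{\rho}^{\,\piE}_h$ exactly on every state that has been visited by some expert trajectory; the only unavoidable error comes from states that the expert data never reached at stage $h$, together with the forward-propagated consequences of those errors.

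Next I would bring in the RBAS structure (\cref{asmp:reset_cliff}): outside a ``good'' set of states the dynamics send the agent to a zero-reward absorbing sink, and on the good set the expert policy is essentially deterministic toward staying in the good set. The crucial structural fact I would extract and prove as a lemma is that, on an RBAS instance, once the learner reaches a good state $s$ at stage $h$ that was covered by the expert data, TV-AIL's stage-$h$ matching forces it to replay the (nearly deterministic) expert action there, so the error does not compound along good trajectories the way it does in a worst-case MDP; any state where the learner could go wrong is either (i) an uncovered good state, whose total expert-occupancy mass is what the concentration bound controls, or (ii) an absorbing bad state, which contributes zero to the value anyway. This is the ``stage-coupled'' heart of the argument: I would run a backward induction (dynamic programming) on $h = H, H-1, \dots, 1$, maintaining the invariant that the value gap contributed from stage $h$ onward is bounded by the expert occupancy mass on the set of good states not covered by the empirical expert distribution at stages $\ge h$, \emph{without} an extra factor of $H$, because entering the bad sink truncates all future gap and replaying expert actions on covered good states creates no new gap.

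Then the statistical part: I would bound $\Expect\big[\sum_h \rho^{\piE}_h(\text{uncovered good states at stage }h)\big]$. For a single stage this is the expected missing mass of an empirical distribution from $N$ i.i.d. samples, which is $\gO(|\gS|/N)$ in expectation (missing-mass / coupon-collector type bound, or a direct Bernstein argument), and also trivially at most $1$; summed against the value this gives the $\min\{1,\sqrt{|\gS|/N}\}$ form once one optimizes the two regimes (the $\sqrt{\cdot}$ appears from the standard $\ell_1$-concentration bound $\Expect\|\widehat{\rho}_h - \rho_h\|_1 \lesssim \sqrt{|\gS|/N}$, which is the binding bound in the moderate-$N$ regime, while the missing-mass bound and the trivial bound of $1$ cover the small- and tiny-$N$ regimes). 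Because the per-stage bound is combined additively through the DP invariant rather than multiplicatively, the $H$ dependence cancels, yielding the claimed horizon-free rate.

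The main obstacle I expect is making the stage-coupled induction genuinely tight — precisely, showing that matching the empirical expert occupancy at stage $h$ does \emph{not} force the learner into a distribution over stage-$(h{+}1)$ states that is badly mismatched with $\widehat{\rho}^{\,\piE}_{h+1}$ in a way that would have to be paid for again. One has to argue that the optimal TV-AIL solution can simultaneously (near-)match all stages, i.e. that the realizability constraints across stages are compatible up to the uncovered-mass error, which is exactly where the RBAS assumption (deterministic-ish transitions on the good set, absorbing sink off it) is used and where a naive argument would reintroduce a horizon factor. A secondary subtlety is handling the fact that TV-AIL optimizes a single coupled objective, so I would need to compare the value of the TV-AIL optimizer to that of a cleverly chosen feasible ``imitator'' occupancy measure that agrees with the expert on covered good states, and bound the value gap of that comparator directly via the DP argument above.
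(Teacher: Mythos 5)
You have the right skeleton (a backward dynamic-programming analysis exploiting the fact that future matching losses penalize entering the bad absorbing set), but your error accounting has a gap that would reintroduce the horizon factor. Your induction invariant charges, at each stage $h$, the expert-occupancy mass of the good states not covered by the data at stage $h$, and you justify horizon-freeness by saying that entering the bad sink ``truncates all future gap.'' This is backwards: if the learner places mass $m$ on a non-expert action at an uncovered good state at stage $h$, that mass falls into the zero-reward absorbing set and the resulting \emph{value} gap is $m(H-h+1)$, not $m$ --- entering the sink maximizes the future gap rather than truncating it. Moreover, summing uncovered mass over stages (which, for $N=1$ and a uniform initial distribution over $19$ good states, is $\Theta(1)$ at every stage) yields a bound of order $H$, not $\gO(\min\{1,\sqrt{|\gS|/N}\})$. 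Relatedly, your claim that the stage-$h$ matching loss alone forces the expert action on \emph{covered} states is not what drives the result: the one-stage piecewise-linear loss has many minimizers even on covered states, and this ambiguity is exactly what produces the $\Omega(\min\{H,H\sqrt{|\gS|/N}\})$ lower bound on the isolated-absorbing-state instances of \cref{asmp:standard_imitation}.

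The missing idea is the exact optimality condition of \cref{prop:ail_general_reset_cliff}: for every $h\in[H-1]$, each \emph{future} loss $\Loss_{h^\prime}$ with $h^\prime>h$ is \emph{uniquely} minimized over $\pi_h$ by $\pi_h(a^1|s)=1$ for all $s\in\goodS$ --- covered or uncovered --- because by the Bellman flow equation any mass $d^{\pi}_h(s)\lp 1-\pi_h(a^1|s)\rp$ is routed into $\badS$, where $\widehat{d^{\piE}_{h^\prime}}$ vanishes, and so is paid in full in every subsequent stage's $\ell_1$ loss; the reachability condition $P_h(s^\prime|s,a^1)>0$ guarantees $d^{\pi}_h(s)>0$ on every good state, so this penalty is strictly positive even on states the expert data never visited. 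Consequently $\piail$ agrees with $\piE$ exactly on steps $1,\dots,H-1$, the intermediate stages contribute \emph{zero} gap, and the whole bound collapses to the single-stage estimation error at $h=H$, namely $\min\lb 1,\,2\,\E\|\widehat{d^{\piE}_H}-d^{\piE}_H\|_1\rb\le\min\lb 1,\,2\sqrt{(|\gS|-1)/N}\rb$. No sum over stages of uncovered mass ever appears, which is precisely why the bound is horizon-free; your comparator-based argument, as stated, cannot avoid that sum.
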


\begin{figure}[tbp]
     \centering
     \begin{subfigure}[b]{0.49\linewidth}
         \centering
         \includegraphics[width=\linewidth]{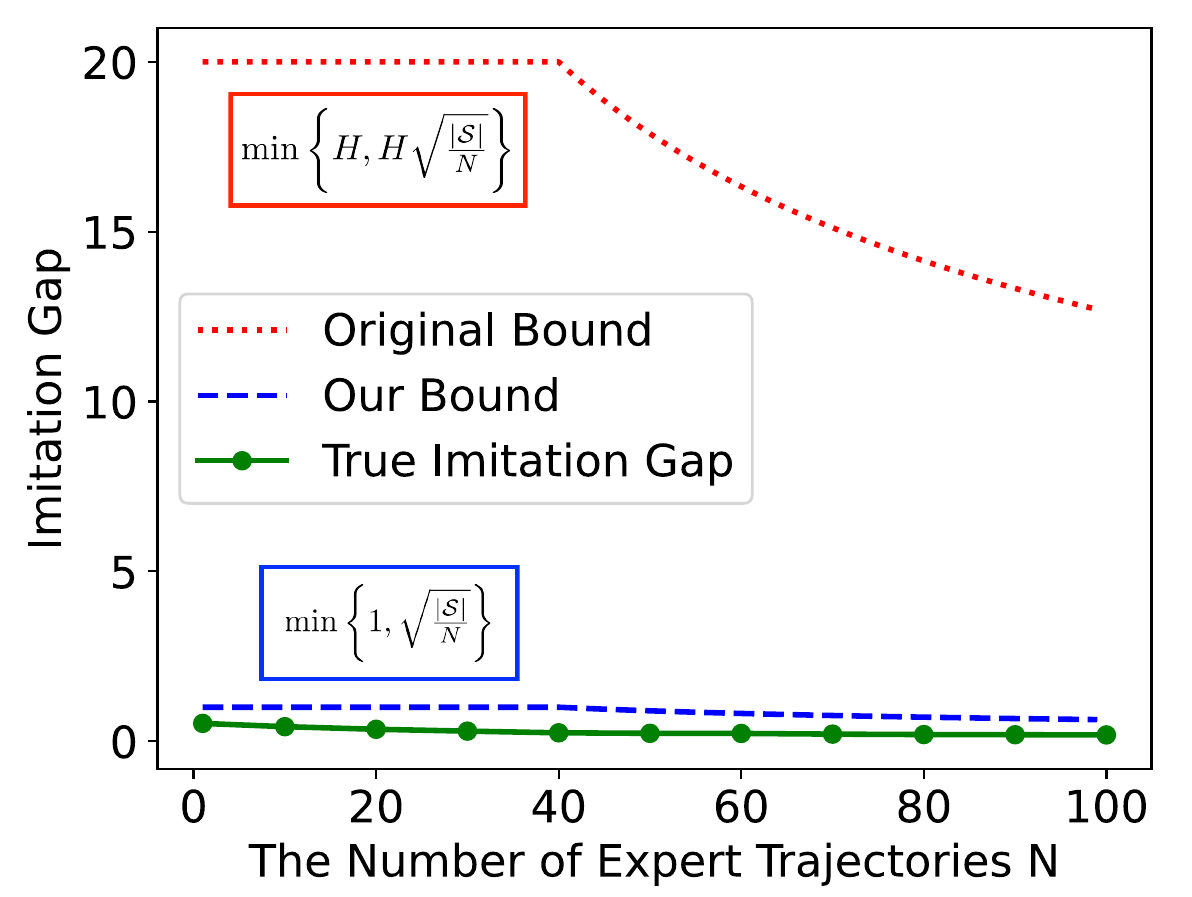}
         \caption{Imitation gap with different number of expert trajectories $N$ and horizon $H=20$}
         \label{fig:y equals x}
     \end{subfigure}
     \hfill
     \begin{subfigure}[b]{0.49\linewidth}
         \centering
         \includegraphics[width=\linewidth]{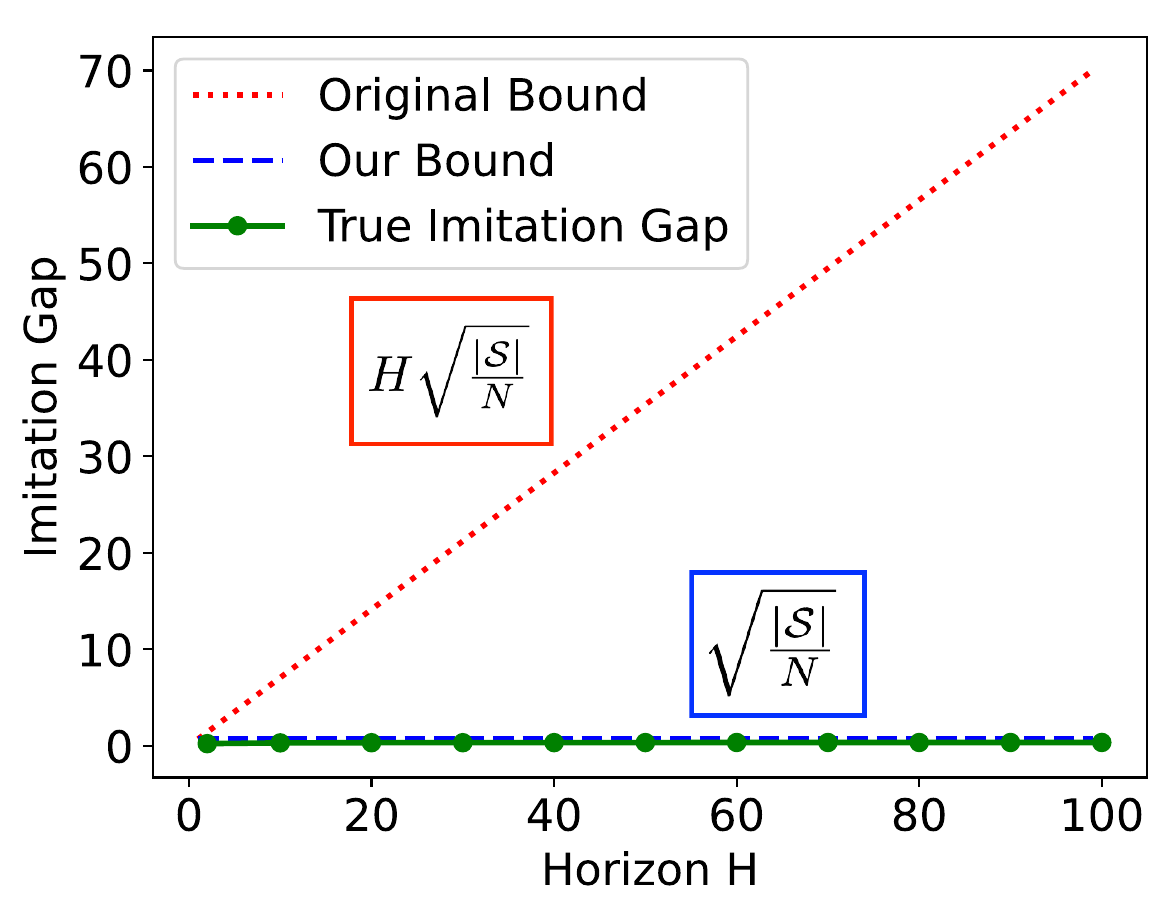}
         \caption{Imitation gap with different horizon $H$ and the number of expert trajectories $N=80$}
         \label{fig:three sin x}
     \end{subfigure}
        \caption{A comparison between the original bound, our bound and the true imitation gap on RBAS MDPs. Our theoretical bound predicts the empirical observation well.}
        \label{fig:bound_comparison}
\end{figure}

We remark that \cref{thm:informal_main_result} is meaningful in both small and large sample regimes, as the bound on the imitation gap is at most 1, which is smaller than the maximum value of $H$. Moreover, this theorem indicates that the performance of TV-AIL remains stable for long-horizon tasks, a finding that aligns well with empirical observations. To our best knowledge, \cref{thm:informal_main_result} provides the first horizon-free bound that is also meaningful in the small sample regime.

It is essential to note that this positive outcome is not solely due to the total variation distance but is instead derived from a new theoretical analysis. Interestingly, the reduction-and-estimation framework would suggest that the imitation gap of TV-AIL should be bounded by $\gO(\min \{H, H \sqrt{|\gS|/N} \})$, which is inadequate to explain the superior performance of TV-AIL. {As an illustration, we make a comparison between the original bound, our new bound and the true imitation gap on RBAS MDPs, which is depicted in Figure \ref{fig:bound_comparison}. In particular, our new bound predicts the true imitation gap in both situations with varying numbers of expert trajectories and horizons. In contrast, the original bound provides a quite loose prediction.} {Furthermore, the performance prediction given by our new bound also aligns well with the AIL's performance in practical locomotion control tasks shown in Tables \ref{tab:mujoco_sample_size} and \ref{tab:mujoco_horizon}, suggesting that the developed theory can explain the exceptional empirical performance of AIL methods.}

To derive \cref{thm:informal_main_result}, we develop a new stage-coupled analysis that characterizes the fundamental dynamic programming structure inherent to AIL. This structural insight distinguishes our analysis from the conventional reduction-and-estimation analysis, which fails to capture this essential property. Technically, the optimization problem in TV-AIL involves a multi-stage optimization and is generally non-convex. In particular, environment transition, policy, and the induced state-action distribution are coupled across stages. Previous analyses deal with this issue independently, whereas we do not. We overcome the analysis difficulty through a careful backward induction analysis. Our findings suggest that, from a forward perspective, an action decision can determine the future state-action distribution. Conversely, from a backward perspective, feedback from future time steps, via dynamic programming or RL, can enable TV-AIL to replicate the expert's actions in states not encountered in the expert dataset. This discloses that the stage-coupling structure in multi-stage optimization helps TV-AIL identify the expert action on states out of the demonstration distribution, thereby addressing the distribution shift issue and mitigating the compounding errors over horizons.

After obtaining the horizon-free imitation gap bound for TV-AIL, one might question whether this good algorithmic behavior holds for all instances or only for a specific set of instances. To address this concern, we provide a lower bound and a matching upper bound for TV-AIL.

\begin{thm}[Informal Statement of \cref{prop:lower_bound_vail} and \cref{thm:worst_case_vail}]   \label{thm:informal_lower_bound}
There exists a class of instances (refer to \cref{asmp:standard_imitation}) such that the worst-case imitation gap of TV-AIL $\gO(\min\{H, H\sqrt{|\gS|/N} \})$ is tight.
\end{thm}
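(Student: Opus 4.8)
The plan is to prove the two halves separately: the upper bound $\gO(\min\{H, H\sqrt{|\gS|/N}\})$ of \cref{thm:worst_case_vail} by a short ``the expert occupancy is feasible'' argument, and the matching lower bound $\Omega(\min\{H, H\sqrt{|\gS|/N}\})$ of \cref{prop:lower_bound_vail} by an explicit hard-instance construction.

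For the upper bound I would first record that TV-AIL returns a (non-stationary) policy $\piail$ minimizing $\sum_{h=1}^H \| d_h^{\pi} - \hat d_h^{\piE}\|_1$ over all policies $\pi$, where $d_h^{\pi}$ is the true-dynamics state-action occupancy at stage $h$ and $\hat d_h^{\piE}$ the empirical expert occupancy. Since the true expert $\piE$ is itself a policy, $\{d_h^{\piE}\}_h$ is feasible for this program, so $\sum_h \| d_h^{\piail} - \hat d_h^{\piE}\|_1 \le \sum_h \| d_h^{\piE} - \hat d_h^{\piE}\|_1$. Combining this with the value-difference identity $V(\piE) - V(\piail) = \sum_h \langle r_h, d_h^{\piE} - d_h^{\piail}\rangle$, the bound $\langle r_h, \cdot\rangle \le \|\cdot\|_1$ (rewards in $[0,1]$), and the triangle inequality through $\hat d_h^{\piE}$, gives $V(\piE) - V(\piail) \le 2\sum_h \| d_h^{\piE} - \hat d_h^{\piE}\|_1$. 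Each summand is the $\ell_1$ deviation of an empirical distribution built from $N$ i.i.d.\ stage-$h$ samples supported on at most $|\gS|$ state-action pairs (for a deterministic expert only one action per state carries mass), so $\expect \| d_h^{\piE} - \hat d_h^{\piE}\|_1 \lesssim \min\{1,\sqrt{|\gS|/N}\}$ by a standard calculation; summing over the $H$ stages and using the trivial bound $V(\piE) - V(\piail) \le H$ yields $\gO(\min\{H, H\sqrt{|\gS|/N}\})$. This also pinpoints the contrast with RBAS MDPs: there the per-stage errors do not all survive into the value, whereas in a generic MDP nothing prevents them from accumulating linearly in $H$.

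For the lower bound I would build an instance in the class of \cref{asmp:standard_imitation} in which the per-stage estimation error is genuinely converted into value loss and the losses add up. The skeleton: an $H$-stage MDP whose stage-$h$ expert occupancy is spread over $\Theta(|\gS|)$ states, where from each such state a ``wrong'' action sends the agent to a region in which the demonstrations are uninformative and the reward is smaller. When $N \precsim |\gS|$, with constant probability a $\Theta(1)$ fraction of the stage-$h$ state mass is never visited in the $N$ demonstrations; on those states the TV-AIL objective puts no constraint on the learned action, so the worst TV-AIL solution (equivalently, any learner with no information about the expert's action there) takes the wrong action and loses $\Omega(1)$ reward at stage $h$. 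Choosing transitions so that this does not let the learner ``escape'' back to previously-seen states, and rewards so that the stage-wise losses are one-sided and do not cancel, the total loss is $\Omega(H) = \min\{H, H\sqrt{|\gS|/N}\}$ in this regime; with the upper bound this shows $\gO(\min\{H, H\sqrt{|\gS|/N}\})$ cannot be improved. For the complementary regime $N \succsim |\gS|$ I would instead exploit the geometry of the occupancy polytope: arrange the expert occupancy to sit at a ``corner'' so that the $\ell_1$-projection of the (unbiased but noisy) empirical occupancy onto the set of valid occupancies — which is essentially what TV-AIL computes — is pushed a distance $\Omega(\sqrt{|\gS|/N})$ per stage toward a value-decreasing face, giving $\Omega(H\sqrt{|\gS|/N})$ with constant probability.

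The hard part is the lower bound, and within it the delicate point is closing off every route by which TV-AIL could beat the naive ``match the empirical occupancy'' behavior: because the state-action occupancy couples the stage-wise policies (the very feature that helps in RBAS MDPs, \cref{prop:ail_general_reset_cliff}), one must verify that the coupling cannot be used to recover the expert's action on the unvisited / under-sampled states, and that re-routing probability mass toward well-sampled states is either impossible or itself costly under the chosen transitions. One must also make sure the stage-wise errors are one-sided in value and \emph{additive} — neither cancelling in expectation nor compounding to $\Omega(H^2\sqrt{|\gS|/N})$ — which is what dictates the ``resetting'' transition structure and the reward design. The upper bound, by contrast, needs only the feasibility-plus-concentration argument above together with a summation over the $H$ stages.
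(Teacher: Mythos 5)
Your upper-bound half is exactly the paper's argument (feasibility of the expert occupancy in the TV-AIL program, triangle inequality through $\widehat{d^{\piE}_h}$, per-stage $\ell_1$ concentration at rate $\sqrt{|\gS|/N}$), so nothing to add there. The genuine gap is in the lower bound, and it sits precisely at the point you yourself flag as ``the delicate point'' but do not resolve. The statement requires the hard instances to lie in the class of \cref{asmp:standard_imitation}: every state is absorbing and \emph{all actions induce the same transition}, $P_h(s|s,a)=1$. Your sketched construction --- a ``wrong'' action that sends the agent to a region the demonstrations do not cover, with transitions chosen so the learner cannot ``escape'' --- is not in this class, and worse, it reintroduces exactly the mechanism that defeats a lower bound: if the wrong action leads to states the expert never occupies, the stage-$(h+1)$ matching loss penalizes taking that action at stage $h$, and TV-AIL recovers the expert action even on unvisited states. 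That is the content of \cref{prop:ail_general_reset_cliff} for RBAS MDPs, so a construction of this flavor would tend to prove the opposite of what you want. The paper's resolution is structural rather than probabilistic: with action-independent self-loops one gets $d^{\pi}_h(s)=\rho(s)$ for every policy and every $h$, so the $H$-stage program provably decouples into $H$ independent piecewise-linear matching problems (\cref{proposition:ail_policy_value_gap_standard_imitation}); there is then \emph{no} route by which stage coupling could help, and nothing to ``close off.''

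Once decoupled, the paper shows the per-stage optimal solution set is the interval $\piail_h(a^1|s)\in[\widehat{d^{\piE}_h}(s)/\rho(s),\,1]$ on under-sampled states, and the worst-case (or uniformly tie-broken) optimal solution achieves imitation gap exactly $\tfrac12\sum_h\|\widehat{d^{\piE}_h}-d^{\piE}_h\|_1$; note the expert policy itself remains an optimal solution, so some tie-breaking convention is unavoidable, and your ``worst TV-AIL solution'' phrasing is consistent with the one the paper adopts. The remaining work is then a lower bound on the $\ell_1$ risk of the empirical distribution: $\Omega(1)$ for $N\precsim|\gS|$ via a missing-mass computation on a near-uniform initial distribution (your ``constant fraction of mass unvisited'' argument is this, and is fine), and $\Omega(\sqrt{|\gS|/N})$ for $N\succsim|\gS|$ by a known estimation-theoretic result. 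Your ``occupancy-polytope corner / $\ell_1$-projection pushed toward a value-decreasing face'' picture for the large-sample regime is both unnecessary and harder to make rigorous than the direct identity between the gap and the estimation error that the decoupled structure provides.
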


We note that \cref{thm:informal_lower_bound} does not contradict \cref{thm:informal_main_result} since the instance classes in the two theorems do not overlap. Furthermore, \cref{thm:informal_lower_bound} reveals that for the hard instances satisfying \cref{asmp:standard_imitation}, each state is absorbing, and there is no connection between states. Consequently, the multi-stage policy optimization reduces to $H$ independent imitation problems, which is quite different from RBAS MDPs. Therefore, we conclude that without reasonable assumptions about MDPs, obtaining the nice algorithmic behavior of AIL methods is impossible.

The paper is structured as follows. Section \ref{sec:related_work} provides a review of related works, while Section \ref{sec:preliminary} provides a background on imitation learning. The main result of the paper, which concerns the horizon-free imitation gap of TV-AIL for RBAS MDPs, is presented in Section \ref{sec:horizon_free_sample_complexity}. Section \ref{sec:a_matching_lower_bound} discusses the worst-case performance of TV-AIL, and Section \ref{sec:conclusion} offers concluding remarks.

\section{Related Work}  \label{sec:related_work}

Over the years, there have been significant efforts to comprehend the behavior of the classical algorithm behavioral cloning (BC) \citep{ross2010efficient, syed2010reduction, ross11dagger, xu2020error, rajaraman2020fundamental, rajaraman2021value, nived2021provably, swamy2021moments}. It was shown in the seminal work \citep{ross2010efficient} that BC could result in compounding errors at the population level (when there are \emph{infinite} expert trajectories). Recent research \citep{rajaraman2020fundamental} established an imitation gap of ${\gO}(\min \{H, |\gS| H^2/N \})$ when expert trajectories are \emph{finite}. Furthermore, from an information-theoretic viewpoint, \citep{rajaraman2020fundamental} provided an impressive lower bound of $\Omega(H^2 |\gS|/N)$ for offline imitation learning\footnote{\dquote{Offline} here means that the agent has access to only a dataset collected by the expert policy. The agent does not know the transition function and cannot interact with the environment.}. This result suggests that for all offline imitation learning algorithms, including BC, the $\gO(H^2)$ term in the imitation gap is inevitable, i.e., a fundamental limit in the offline setting. Similar conclusions also hold for discounted and infinite-horizon MDPs \citep{xu2021error}.

The idea of state-action distribution matching in adversarial imitation learning (AIL) can be traced back to the apprenticeship learning algorithms \citep{pieter04apprentice, syed07game, syed08lp, ziebart2008maximum}. Previous AIL methods, such as FEM \citep{pieter04apprentice} and GTAL \citep{syed07game}, employed $\ell_2$-norm-based and $\ell_{\infty}$-norm-based metrics to measure the discrepancy between the state-action distributions, respectively. A reduction-and-estimation-based analysis was developed in \citep{pieter04apprentice, syed07game} to upper bound the imitation gap as $\gO(\min \{H, H\poly(|\gS||\gA|)/\sqrt{N} \})$ for FEM and GTAL, where $H$ is the horizon, $N$ is the number of demonstrations, $|\gS|$ is the state space size, and $|\gA|$ is the action space size. This analysis has been widely used in recent works \citep{wang2020computation, rajaraman2020fundamental, xu2021error, liu2021provably}.

Generative adversarial imitation learning (GAIL) \citep{ho2016gail} has become a significant milestone in AIL research. Unlike its predecessors, such as FEM and GTAL, GAIL employs the Jensen-Shannon divergence to measure the discrepancy between state-action distributions. Moreover, GAIL leverages neural networks to learn feature representations adaptively. GAIL, along with other AIL methods like FEM and GTAL, has been found to match expert performance for long-horizon tasks ($H=1000$) in the small sample regime ($N=1$) in locomotion control tasks \citep{ho2016gail}, and similar results are observed for Atari games such as Pong and Breakout \citep{yu2020intrinsic, cai2021imitation}. These empirical findings have garnered significant attention. Since the initial work of \citep{ho2016gail}, many empirical advancements have been made in the field of AIL \citep{fu2018airl, Kostrikov19dac, Kostrikov20value_dice, Brantley20disagreement, dadashi2021primal, liu2021energy}.

The theoretical understanding of why AIL methods can perform well is currently limited. CoRL 2019's best paper \citep{ghasemipour2019divergence} conjectured that \dquote{In common MDPs of interest, the reward function depends more on the state than action. Hence encouraging policies to explicitly match expert state marginals is an important learning criterion}. However, our lower bound argument shows that the conditions in this conjecture are \emph{not} enough to establish the superior performance of AIL.

Recent works \citep{xu2020error, swamy2021moments} have provided an answer by showing that the imitation gap of a class of AIL methods has a linear dependence on the decision horizon $H$, which is better than the quadratic dependence in BC's imitation gap. However, this result is true only at the population level, which differs from the practical scenario where expert trajectories are finite. To extend the results in \citep{xu2020error, swamy2021moments} to the finite sample case, one can use the reduction-and-estimation analysis, leading to an imitation gap bound similar to that of FEM and GTAL. Alternatively, \citep{rajaraman2020fundamental} proposed an AIL algorithm called MIMIC-MD, which achieves a better imitation gap $\gO(\min \{H, H^{3/2} |\gS|/N, H\sqrt{|\gS|/N} \})$ compared with BC and classical AIL methods (e.g., FEM and GTAL). However, this improvement is based on a more accurate estimation of the expert state-action distribution and only holds in the large sample regime, whereas we are interested in algorithmic behaviors in the small sample regime. Furthermore, \citep{rajaraman2020fundamental} established an information-theoretic lower bound of $\Omega(H |\gS|/N)$ that applies to AIL methods, which holds only in the large sample regime. However, the superior performance of AIL methods can be observed in the small sample regime, creating a gap between theory and practice.

In this paper, we focus on tabular MDPs, where no function approximation is used. We notice that \citep{cai2019lqr} and \citep{liu2021provably} studied AIL methods with linear function approximation, while the neural network approximation case is studied in \citep{wang2020computation, zhang2020generative, xu2021error}. Besides, \citep{jung2024sample} proposed improving AIL by enhancing feature representations, motivated by the function approximation theory that learning invariant features could reduce the VC dimension. The main message in the function approximation is that under structural assumptions, the dependence on $|\gS|$ in the imitation gap can be improved to the inherent dimension $d$ with function approximation. Nevertheless, we mainly study algorithmic behaviors in terms of the dependence on $H$, which is usually unrelated to function approximation. One may still worry that the function approximation is crucial to the superior performance of AIL methods. To address this concern, we show that AIL methods can recover the expert policy with one expert trajectory on constructed tabular MDPs, suggesting that the function approximation is not a key factor for our research problem. Finally, this paper primarily investigates a class of MDPs termed RBAS MDPs. This approach mirrors prior theoretical works that develop sharp analysis by focusing on specific instance classes: \citep{rajaraman2021value} analyzed MDPs satisfying recoverability conditions, while \citep{foster2024behavior} investigated deterministic expert policies.

\section{Preliminary}
\label{sec:preliminary}

This section introduces the Markov decision process, imitation learning setup, and representative algorithms, including behavioral cloning and adversarial imitation learning.

\subsection{Episodic Markov Decision Process}

In this paper, we study episodic Markov decision processes (MDPs) defined by the tuple $\gM = (\gS, \gA, \gP, r, H, \rho)$. Here, $\gS$ and $\gA$ denote the state and action spaces, respectively. The decision horizon $H$ represents the total number of actions taken by the agent in a trajectory, and $\rho$ is the initial state distribution. The transition function of the MDP is defined by $\gP = \{P_1, \cdots, P_{H}\}$, where $P_h(s_{h+1}|s_h, a_h)$ denotes the probability of transitioning to state $s_{h+1}$ at time step $h+1$ given the current state $s_h$ and action $a_h$ taken at time step $h$, for $h \in [H]$\footnote{$[x]$ denotes the set of integers from $1$ to $x$.}. Similarly, the reward function is denoted by $r = \{r_1, \cdots, r_{H}\}$, where $r_h: \gS \times \gA \rar [0, 1]$ assigns a reward to each state-action pair at time step $h$, for $h \in [H]$. A policy $\pi = \lb \pi_1, \cdots, \pi_H \rb$ is a sequence of non-stationary policies, where $\pi_h: \gS \rar \Delta(\gA)$ and $\Delta(\gA)$ is the probability simplex over $\gA$. The function $\pi_h (a|s)$ gives the probability of selecting action $a$ at time step $h$ given the current state $s$, for $h \in [H]$.

The sequential decision process proceeds as follows: at the start of each episode, the environment resets to an initial state sampled from $\rho$. Then, the agent observes the state $s_h$ and selects an action $a_h$ according to the policy $\pi_h(a_h|s_h)$. After that, the environment transitions to the next state $s_{h+1}$ following $P_h(s_{h+1}|s_h, a_h)$ and sends a reward signal $r_h(s_h, a_h)$ to the agent. The process repeats for a total of $H$ steps until the end of the episode.

The effectiveness of a policy is evaluated based on its expected long-term return, which is also known as its \emph{policy value}. It is defined as follows:
\begin{align*}
    V(\pi) := \expect \bigg[ \sum_{h=1}^{H} r_h(s_h, a_h) \mid s_1\sim \rho; a_h \sim \pi_h (\cdot|s_h), s_{h+1} \sim P_h(\cdot|s_h, a_h), \forall h \in [H] \bigg].
\end{align*}
To facilitate later analysis, we introduce the state-action distribution induced by a policy $\pi$: 
\begin{align*}
    d_h^{\pi}(s, a) := \sP ( s_h = s, a_h = a \mid s_1 \sim \rho, a_\ell \sim \pi_h(\cdot|s_\ell), s_{\ell+1} \sim P_\ell(\cdot|s_\ell, a_\ell), \forall \ell \in [h]).
\end{align*}
In other words, $d_h^{\pi}(s, a)$ quantifies the visitation probability of state-action pair $(s, a)$ in time step $h$. Then according to the definition, we obtain the dual form of policy value \citep{puterman2014markov}:
\begin{align}
\label{eq:value_dual_representation}
    V(\pi) = \sum_{h=1}^H \sum_{(s, a)} d_h^{\pi}(s, a) r_h (s, a). 
\end{align}
This formula will be used in the analysis of AIL methods. Likewise, we can define the state distribution $d^{\pi}_h(s)$. According to the definition, we have $d^{\pi}_h(s) = \sum_{a} d^{\pi}_h(s, a)$. Unless mentioned, when we use the symbol $d^{\pi}_h$, we mean the state-action distribution.

\subsection{Imitation Learning} 

The goal of imitation learning is to learn a high quality policy directly from expert demonstrations. Typically, it is assumed that a (nearly optimal) expert policy $\piE$ is available to interact with the environment and generate a dataset of $N$ trajectories, each with length $H$:
\begin{align*}
    \gD = \bigg\{ \tr = \lp s_1, a_1, s_2, a_2, \cdots, s_H, a_H \rp; s_1 \sim \rho, a_h \sim \piE_h(\cdot|s_h), s_{h+1} \sim P_h(\cdot|s_h, a_h), \forall h \in [H] \bigg\}.
\end{align*}
Each trajectory in the dataset $\gD$ is obtained independently by executing the expert policy $\piE$. The learner can then use $\gD$ to mimic the expert and learn a good policy. The quality of imitation is evaluated using the (expected) \emph{imitation gap}:
\begin{align*}
V(\piE) - \expect \ls V(\pi) \rs,
\end{align*}
where the expectation is taken over the randomness in generating $N$ trajectories. It is important to note that $\pi$ is a random variable that depends on $\gD$. Furthermore, during the training phase, IL algorithms do \emph{not} have access to reward information. Ideally, a good learner should be able to perfectly imitate the expert, resulting in a small imitation gap. For theoretical analysis purposes, it is common to assume that the expert policy is deterministic, which is a widely-used assumption in the literature \citep{xu2020error, rajaraman2020fundamental, nived2021provably}.

\subsection{Behavioral Cloning}

Behavioral cloning \citep{Pomerleau91bc} is a classic offline algorithm for solving the imitation learning task. Its main idea is to perform maximum likelihood estimation for the expert trajectory. Specifically, the likelihood for a trajectory $\tr$ is given by:
\begin{align*}
    \sP\lp \tr \rp = \rho(s_1) \prod_{h=1}^{H} \pi_h (a_h|s_h) P_{h}(s_{h+1} | s_h, a_h).
\end{align*}
Then, we obtain that 
\begin{align*}
    \log \lp \sP\lp \tr \rp \rp = \sum_{h=1}^{H} \log \lp \pi_h(a_h |s_h) \rp + \text{constant},  
\end{align*}
where the constant term is unrelated to policy $\pi$. By extending this idea to $N$ expert trajectories in the given dataset $\gD$, we obtain the following optimization problem for BC:
\begin{align}   \label{eq:objective_bc}
   \max_{\pi \in \Pi} \sum_{h=1}^{H} \sum_{(s_h, a_h) \in \gD} \log \lp \pi_h(a_h |s_h) \rp,
\end{align}
where $\Pi$ is the set of stochastic policies. In the tabular setting, the optimal solution could be 
\begin{align}   \label{eq:pi_bc}
 \pi^{\bc}_h(a|s) = \left\{ \begin{array}{ll}
      \frac{\# \tr_h(\cdot, \cdot) = (s, a)}{\sum_{a^\prime} \# \tr_h(\cdot, \cdot) = (s, a^\prime)}  & \text{if } \# \tr_h(\cdot) = s > 0  \\
        \frac{1}{|\gA|} & \text{otherwise}  
    \end{array} \right.
\end{align}
Here $\# \tr_h(\cdot, \cdot) = (s, a)$ ($\# \tr_h(\cdot) = s$) refers to the number of trajectories such that their state-action pairs (states) are equal to $(s, a)$ ($s$) in time step $h$. That is, $\pi^{\bc}_h(a|s)$ computes the empirical conditional distribution for visited states and the uniform distribution for non-visited states. Since BC does not know the expert action on non-visited states, it may suffer from the compounding errors issue \citep{ross2010efficient}. On the theoretical side, the imitation gap of BC has been analyzed in \citep{rajaraman2020fundamental}.

\begin{thm}[\citep{rajaraman2020fundamental}]  \label{thm:imitation_gap_bc}
For any tabular and episodic MDP, given $N$ expert trajectories, the imitation gap of BC is $\gO(\min\{H,  |\gS|H^2/N \})$.
\end{thm}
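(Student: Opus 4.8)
The plan is to reduce the imitation gap to the probability that a rollout of $\pibc$ ever enters a state that was unvisited in the demonstration dataset, and then to control that probability with a coupon-collector-type estimate. First I would dispatch the trivial half of the bound: since $r_h \in [0,1]$, we have $0 \le V(\pibc) \le V(\piE) \le H$, so $V(\piE) - \expect[V(\pibc)] \le H$ holds unconditionally.

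For the other term, fix the dataset $\gD$ and let $\gU_h \subseteq \gS$ collect the states that do not appear at time step $h$ in any trajectory of $\gD$. The key structural fact is that, because $\piE$ is deterministic, all trajectories passing through a state $s$ at step $h$ take the same action there; hence by the closed form \eqref{eq:pi_bc}, $\pibc_h(\cdot \mid s)$ is exactly the point mass on $\piE_h(s)$ for every visited state $s \notin \gU_h$. Coupling a rollout of $\pibc$ with a rollout of $\piE$ through shared transition randomness, the two trajectories therefore agree up to the first step $t$ at which the $\pibc$-rollout lands in $\gU_t$; from that step on the reward can differ by at most $H-t+1$. A performance-difference / telescoping argument across the step of first disagreement then gives, for this fixed $\gD$,
\[
V(\piE) - V(\pibc) \;\le\; \sum_{h=1}^{H} (H-h+1)\, \sP_{s \sim d_h^{\piE}}\!\big[\, s \in \gU_h \,\big] \;\le\; H \sum_{h=1}^{H} \sP_{s \sim d_h^{\piE}}\!\big[\, s \in \gU_h \,\big],
\]
where the per-step estimate uses that, restricted to the event of agreeing with $\piE$ before step $h$, the $\pibc$-rollout's state marginal at step $h$ is dominated by $d_h^{\piE}$ (it is obtained by restricting the expert prefix distribution to visited prefixes).

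Next I would take expectation over $\gD$. For each $h$ and $s$ the draws are i.i.d., so $\sP_\gD[\, s \in \gU_h \,] = (1 - d_h^{\piE}(s))^N$, hence
\[
\expect_\gD\Big[\sP_{s \sim d_h^{\piE}}[\, s \in \gU_h \,]\Big] \;=\; \sum_{s} d_h^{\piE}(s)\,(1 - d_h^{\piE}(s))^N \;\le\; \min\Big\{\, 1,\ \tfrac{|\gS|}{eN} \,\Big\},
\]
where the inequality follows by bounding each summand by $\min\{d_h^{\piE}(s), \tfrac{1}{eN}\}$ via $x(1-x)^N \le x e^{-Nx} \le \tfrac{1}{eN}$, and then using $\sum_s d_h^{\piE}(s) = 1$ over $|\gS|$ states. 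Summing over $h$ yields $\expect[V(\piE) - V(\pibc)] \le H^2 \min\{1, |\gS|/(eN)\}$, and combining this with the trivial bound $H$ gives the claimed $\gO(\min\{H, |\gS|H^2/N\})$.

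The step I expect to be the main obstacle is making the coupling rigorous: one must carefully account for the fact that $\pibc$ is itself a random, dataset-dependent object, justify the decomposition of the value gap over the step of first disagreement (this is essentially the simulation lemma applied to the event that $\pibc$ and $\piE$ first differ), and verify the domination of the pre-escape trajectory marginal by $d_h^{\piE}$. The determinism of $\piE$ is doing real work here, since it is precisely what forces $\pibc$ to recover $\piE$ exactly on every visited state, so that the only errors come from the sets $\gU_h$. The remaining ingredients—the $x(1-x)^N$ inequality, the truncated-sum bound, and the combination with the trivial term—are routine.
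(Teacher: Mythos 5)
This theorem is imported from \citep{rajaraman2020fundamental} and the paper itself supplies no proof, so the only meaningful comparison is with the cited reference's argument --- which your proposal essentially reconstructs correctly: the reduction of the imitation gap to the expected expert-mass of unvisited states (using that a deterministic expert forces $\pibc$ to agree with $\piE$ on every visited state, so errors occur only upon first entering some $\gU_h$), followed by the missing-mass bound $\sum_s d_h^{\piE}(s)(1-d_h^{\piE}(s))^N \le \min\{1, |\gS|/(eN)\}$. All steps, including the coupling/first-disagreement decomposition and the $x(1-x)^N \le 1/(eN)$ estimate, are sound and yield the stated $\gO(\min\{H, |\gS|H^2/N\})$.
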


\subsection{Adversarial Imitation Learning}

Adversarial imitation learning (AIL) is a type of algorithm that aims to match the state-action distribution between the expert and the learner's policy. One example of an AIL algorithm is GAIL \citep{ho2016gail}, which employs the Jensen-Shannon (JS) divergence to quantify the difference in state-action distributions:
\begin{align*}
    \min_{\pi \in \Pi} \sum_{h=1}^{H} \JS \lp d^{\pi}_h (\cdot, \cdot),  \widehat{d^{\piE}_h}(\cdot, \cdot) \rp,
\end{align*}
where for two distributions $p$ and $q$ on the set $\gX$, $\JS(p, q) = \KL(p, (p + q)/2) + \KL(q, (p+q)/2)$, where $\KL(p, q) = \sum_{x} p(x) \log (p(x)/ q(x))$. Moreover, $\widehat{d^{\piE}_h}$ is the estimation of the marginal distribution $d^{\piE}_h$:
\begin{align}   \label{eq:estimate_by_count}
   \widehat{d^{\piE}_h} (s, a) := \frac{  \sum_{\tr \in \gD}  \indict\lb \tr(\cdot, \cdot) = (s, a) \rb }{N},
\end{align} 
where we use $\indict\{ \cdot \}$ to denote the indicator function, and $ \sum_{\tr \in \gD}  \indict\lb \tr(\cdot, \cdot) = (s, a) \rb$ counts the number of expert trajectories that have the state-action pair $(s, a)$ in time step $h$. If we consider the dual form of the JS divergence, we recover the common min-max formulation for GAIL:
\begin{align*}
    \min_{\pi \in \Pi} \max_{c \in \gC_{\operatorname{JS}}} \,\, \sum_{h=1}^H \expect_{(s, a) \sim d^{\pi}_h}  \left[ \log c_h(s, a) \right] + \expect_{(s, a)\sim \widehat{d^{\piE}_h} } \ls  \log (1 - c_h(s, a) )\rs ,
\end{align*}
where $\gC_{\operatorname{JS}}$ is the set of functions $c_h: \gS \times \gA \rar (0, 1)$, which is often called discriminator. This min-max formulation can be implemented with neural networks in practice. In particular, the discriminator recovers a cost (negative reward) function from expert data, and the goal of the policy is to minimize this cost using reinforcement learning (RL) approaches \citep{sutton2018reinforcement}.

In this paper, we mainly consider the total variation (TV) distance to measure the state-action distribution discrepancy, which leads to the following formulation:
\begin{align}  \label{eq:ail}
  \min_{\pi \in \Pi} \sum_{h=1}^{H} \sum_{(s, a) \in \gS \times \gA} | d^{\pi}_h(s, a) - \widehat{d^{\piE}_h}(s, a) |,
\end{align}
We call such an approach TV-AIL (total-variation-distance-based AIL) and use $\piail$ to denote any optimal solution to \eqref{eq:ail}. Similarly, there exists a min-max formulation for TV-AIL:
\begin{align}  
    \min_{\pi \in \Pi} \max_{c \in \gC_{\TV}}  \,\,  & \sum_{h=1}^H \expect_{(s, a) \sim d^{\pi}_h} \left[c_h(s, a) \right] - \expect_{(s, a)\sim \widehat{d^{\piE}_h} } \left[c_h(s, a) \right], \label{eq:ail_min_max} 
\end{align}
where $\gC_{\TV}$ is the set of functions $c_h: \gS \times \gA \rar [-1, 1]$. In practice, we employ multi-layer neural networks to represent $c$ and apply gradient-descent-ascent to optimize \eqref{eq:ail_min_max}. It is expected that the set of neural networks could approximate $\gC_{\TV}$ well due to its expressive power \citep{hornik1989multilayer}. We note that TV-AIL has a comparative performance with GAIL for practical tasks.

In this paper, we primarily focus on analyzing TV-AIL in the known-transition setting, where computing the state-action distribution $d^{\pi}$ in the optimization step is straightforward. This setting has been considered in the existing literature \citep{rajaraman2020fundamental, rajaraman2021value, nived2021provably, xu2021error}. It is worth noting that in the known-transition setting, we still need expert trajectories to recover the expert policy since the reward information is inaccessible in the training phase of imitation learning. In addition, our analysis of TV-AIL can be extended to other AIL methods with additional efforts, but we omit this for the sake of clarity. Specifically, the main difference between AIL methods is the divergence function (in terms of measuring the discrepancy of state-action distributions). As mentioned, FEM and GTAL use the $\ell_2$-norm-based and $\ell_{\infty}$-norm-based divergences, respectively, while the $\ell_1$-norm-based divergence is considered in TV-AIL. Nevertheless, all norms are \dquote{equivalent} in the finite dimension space in the sense that they can be upper bounded mutually \citep{yosida2012functional}. JS divergence and TV distance are connected via Pinsker's inequality; see \citep{xu2020error} for more discussion. Therefore, AIL methods are likely to share similar algorithmic properties, and we omit a general analysis to maintain clarity in our presentation.

\section{A Horizon-free Imitation Gap of TV-AIL}
\label{sec:horizon_free_sample_complexity}

In this section, we present the main conclusion of the paper, which is that TV-AIL is capable of achieving the horizon-free imitation gap for MDPs that are abstracted from tasks like locomotion control. To provide evidence in support of this claim, we first examine two empirical observations from the literature \citep{ho2016gail, ghasemipour2019divergence, yu2020intrinsic, cai2021imitation}. The first observation pertains to the sample size.

\begin{ec}  \label{ec:sample_size}
For practical tasks (e.g., locomotion control), with a few expert trajectories (e.g., 1 expert trajectory), AIL methods (e.g., TV-AIL) can achieve a small imitation gap, whereas BC cannot.
\end{ec}

\begin{figure}[t]
    \centering
    \includegraphics[width=0.5\linewidth]{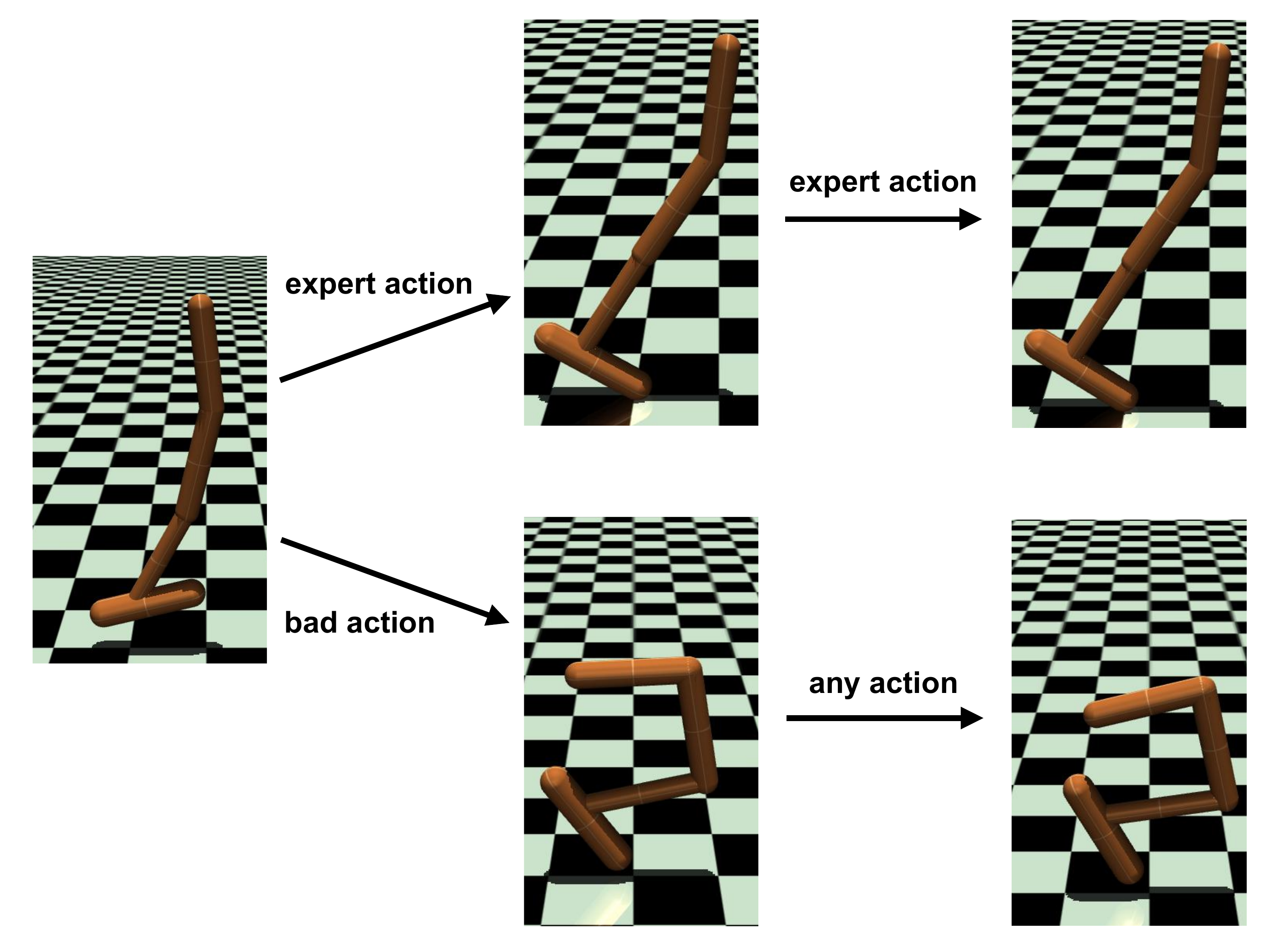}
    \caption{An illustration of the Hopper task in the MuJoCo locomotion benchmark. The top row shows the robot executing the expert action and successfully jumping forward, resulting in a positive reward. On the other hand, the bottom row illustrates the robot taking a non-expert action, resulting in a fall and a zero reward on the terminal (absorbing) state.}
    \label{fig:hopper_illustration}
\end{figure}

To fully support the conclusion in \cref{ec:sample_size}, we provide the experimental results in \cref{tab:mujoco_sample_size}, which show that TV-AIL can perform well even with limited expert trajectories on MuJoCo locomotion control tasks. Moreover, an additional empirical observation to consider is about the decision horizon.

\begin{ec}  \label{ec:horizon}
For practical tasks (e.g., locomotion control), even if the decision horizon is large (e.g., $H=1000$), with limited expert trajectories (e.g., 1 expert trajectory), AIL methods (e.g., TV-AIL) can achieve a small imitation gap.
\end{ec}

We present evidence supporting the second empirical observation in \cref{tab:mujoco_horizon}, which has received less attention in the literature except for a few studies such as \citep{xu2020error, xu2021error}. Figure 2 in \citep{xu2020error} shows that AIL methods' performance is less affected by the decision horizon, but there is no rigorous theoretical explanation for this observation. Nonetheless, we believe that this observation is crucial in gaining insights into the algorithmic behavior of AIL methods.

\begin{table*}[t]
\centering
\caption{Scaled imitation gap on Hopper, HalfCheetah and Walker2d with $N=1$.}
\label{tab:mujoco_horizon}
\begin{tabular}{@{}cccccc@{}}
\toprule
     &    & $H=100$ & $H=500$ & $H=1000$ &$H=2000$ \\ \midrule
\multirow{2}{*}{Hopper (scale=3.2)} &BC      &\meanstd{0.80}{1.72} & \meanstd{178.56}{79.27} & \meanstd{784.97}{28.09} & \meanstd{1950.42}{36.20}       \\
 &TV-AIL       & \meanstd{4.96}{4.56}   & \meanstd{-1.73}{6.77}       & \meanstd{10.38}{11.25}  & \meanstd{-9.92}{37.15} \\
\midrule
\multirow{2}{*}{HalfCheetah (scale=7.7)} &BC      &  \meanstd{56.44}{9.23} & \meanstd{491.91}{21.01} & \meanstd{1058.48}{8.27} & \meanstd{2198.61}{12.93}       \\
 &TV-AIL      & \meanstd{8.49}{8.26}  & \meanstd{-24.73}{11.46}       & \meanstd{-22.45}{101.65}  & \meanstd{-169.83}{16.06} \\
 \midrule
\multirow{2}{*}{Walker2d (scale=5.0)} &BC      & \meanstd{10.25}{7.89} & \meanstd{413.02}{13.65} & \meanstd{1002.13}{9.68} & \meanstd{2158.84}{2.05}       \\
 &TV-AIL      & \meanstd{-0.18}{1.04}  & \meanstd{16.26}{22.64}       & \meanstd{12.93}{19.61}  & \meanstd{71.69}{66.30} \\ \bottomrule
\end{tabular}
\end{table*}

While exploring the theoretical aspects of the above observations is intriguing, it is a challenging task due to the many essential factors that contribute to superior performance, such as environment preprocessing, neural network architectures and optimizers \citep{orsini2021what}. Capturing all these factors in a simple and intuitive theory is difficult. However, in this paper, we provide an answer in the tabular setting. Here, the expert policy can be well approximated, and efficient computation procedures are available, rendering approximation and optimization errors irrelevant. To begin, we investigate a class of MDPs with structural transitions.

\subsection{RBAS MDPs}
\label{subsec:reset_cliff}

In this part, we introduce a class of tabular and episodic MDPs called RBAS MDPs, which will be used to study algorithmic behaviors of TV-AIL.

\begin{asmp}[RBAS MDPs]   \label{asmp:reset_cliff}
For a tabular and episodic MDP and an expert policy, we assume that 
\begin{itemize}   
    \item State space is divided into the sets of \dquote{good} states and \dquote{bad} states, i.e., $\gS = \goodS \cup \badS, \goodS \cap \badS = \emptyset$. All bad states only have transitions to themselves, i.e., for any $b \in \badS$, $a \in \gA$ and $h \in [H]$, $\sum_{b^\prime \in \badS} P_h(b^\prime|b, a) = 1$. For any good state, $a^1$ is the expert action and the others are non-expert actions.\footnote{For the simplicity of notations, we assume that expert actions are the same on all good states. Nevertheless, our results can be seamlessly extended to the case where expert actions are different on good states.}

    \item The initial state distribution supports on the set of good states. That is, $\rho (s) > 0, \forall s \in \goodS$ and $\sum_{s \in \goodS} \rho (s) = 1$.
    \item For action $a^1$, we have for any state $s, s^\prime \in \goodS$ and $h \in [H]$, $P_h(s^\prime|s, a^{1}) > 0$. 
    \item For action $a \ne a^{1}$, we have for any state $s, s^\prime \in \goodS$ and $h \in [H]$, $P_h(s^\prime|s, a) = 0$.
\end{itemize}
\end{asmp}

The first assumption is sound in practice. For example, in locomotion control tasks, "good" states mean that the robot can walk well, while "bad" states mean that the robot falls down and cannot recover back. Furthermore, the expert action is crucial to maintaining such good status. The second assumption is posed to avoid the trivial case where the agents start from a bad absorbing state. The third assumption is widely applicable and means that the expert action does not lead to a bad state. The last assumption means that non-expert actions lead to transitions into bad states. Together, we see that these assumptions hold in practical tasks. For instance, in locomotion control tasks, once taking the non-expert (wrong) action, the robot falls and goes into a bad terminal (absorbing) state; refer to \cref{fig:hopper_illustration}.

While slightly idealized, RBAS MDPs provide the controlled analysis setup to isolate and rigorously investigate the core phenomena of AIL. Besides, the properties of RBAS MDPs can be relaxed in the sense that 1). with a small probability, the agent can also transit into good states by taking non-expert actions on good states; 2). with a small probability, the agent can return to good states starting from bad states. For interested readers, please refer to \cref{subsubsec:rbas_mdps_extensions} for details.

A simple example of RBAS MDPs with three states and two actions is shown in \cref{fig:toy_reset_cliff}. For this MDP, $s^{1}$ and $s^{2}$ are good states, while $b$ is a bad state. Here we use the superscript to indicate the state index, which can avoid confusion with the time step in the subscript. Since there are only two actions, we use colors, rather than the superscript, to distinguish them: $\GREEN{a}$ is the expert action and $\BLUE{a}$ is the non-expert action. Besides, $H = 2$ is considered in this MDP and transition probabilities are indicated by digits on the arrows in \cref{fig:toy_reset_cliff}. The initial state distribution $\rho$ is 
\begin{align*}
    \rho = (\rho (s^{1}), \rho (s^{2}), \rho (b)) = \lp \frac{1}{2}, \frac{1}{2}, 0 \rp.
\end{align*}
As one can see, if the agent takes the non-expert action $\BLUE{a}$ on a good state, it goes to the bad absorbing state $b$. In this case, the imitation gap can be at most $2$.

\begin{figure}[htbp]
    \centering
    \includegraphics[width=0.5\linewidth]{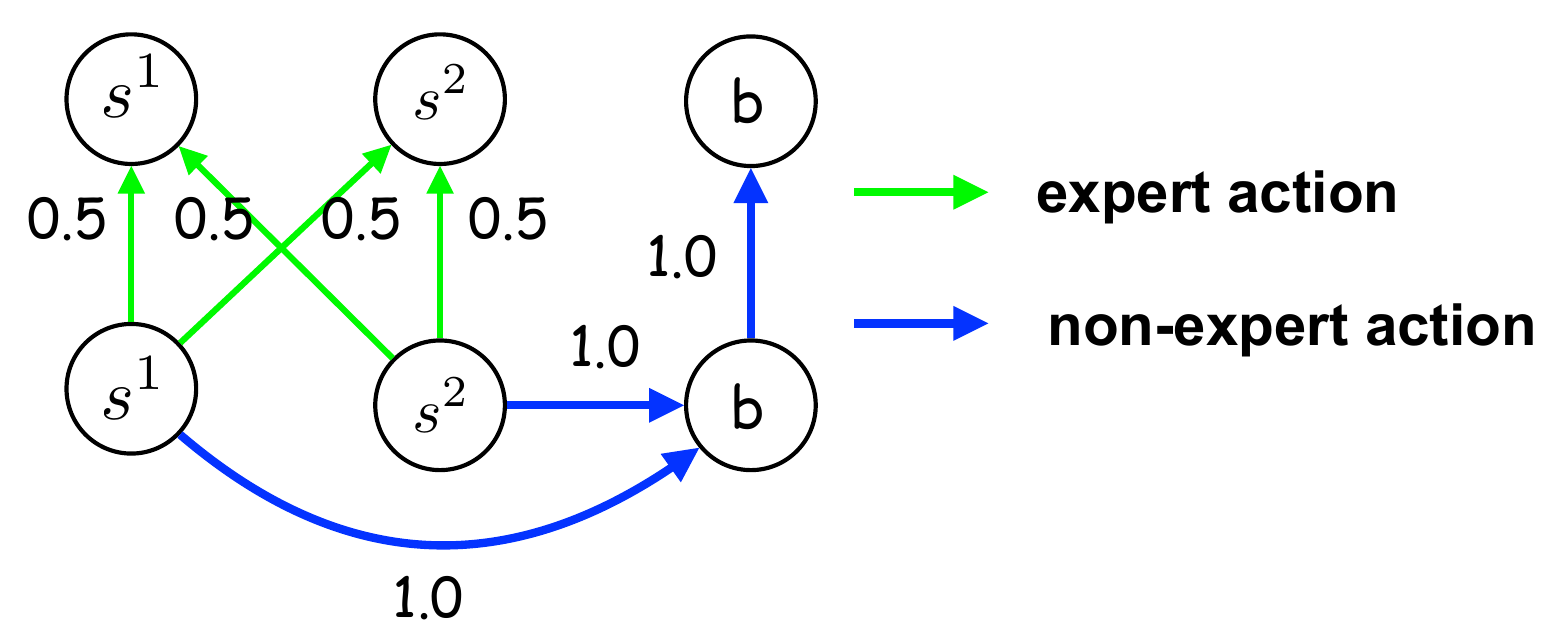}
    \caption{A simple MDP corresponding to \cref{asmp:reset_cliff}. Digits indicate the transition probabilities.}
    \label{fig:toy_reset_cliff}
\end{figure}

It is worth mentioning that the MDP instance used as a lower bound for offline imitation learning algorithms, as proposed by \citep{rajaraman2020fundamental}, also satisfies the assumptions of \cref{asmp:reset_cliff}. Specifically, the instance has $|\gS|-1$ good states and $1$ bad state, and the initial state distribution $\rho$ is formulated as follows:
\begin{align*}
\rho &= \lp \rho (s^{1}), \cdots, \rho (s^{|\gS|-2}), \rho (s^{|\gS|-1}), \rho (b) \rp \\
&= \lp \frac{1}{N+1}, \cdots, \frac{1}{N+1}, 1 - \frac{\vert \gS \vert-2}{N+1}, 0 \rp.
\end{align*}
The transition function for this instance is carefully designed to satisfy the RBAS property as well. Specifically, for each good state, executing the expert action $\GREEN{a}$ leads to a state transition according to $\rho$, i.e., $P_{h}(\cdot|s^{i}, \GREEN{a}) = \rho(\cdot)$ for $i \in [|\gS| - 1]$. For the bad state, it is absorbing, i.e., $P(b|b, \BLUE{a}) = 1$. It has been proved that any offline imitation learning approach, including BC, suffers an imitation gap of at least $\Omega( \vert \gS \vert H^2 / N )$ on this instance \citep{rajaraman2020fundamental}. Note that the imitation gap bound of BC in \cref{thm:imitation_gap_bc} matches this lower bound, and hence, we conclude that BC has a tight imitation gap of $\Theta ( \min \{ H, \vert \gS \vert H^2 / N \} )$ on RBAS MDPs.

We confirm that the conclusions drawn in \cref{ec:sample_size} and \cref{ec:horizon} also apply to RBAS MDPs. To illustrate, we consider a specific RBAS MDP with $|\gS|=20$ and $|\gA| = 2$, similar to the one shown in \cref{fig:toy_reset_cliff}, consisting of $19$ good states and $1$ bad state. On good states, the reward equals 1 on the expert action and 0 on the other actions. On the bad state, all actions have zero rewards. The initial state distribution is uniform over good states.  Please refer to Appendix \ref{appendix:experiment_details} for details. First, we examine the imitation gaps of BC and TV-AIL with respect to the number of expert demonstrations. The results, summarized in \cref{tab:reset_cliff_sample_size}, support our claim that TV-AIL can closely approximate the expert's performance with only one expert trajectory, whereas BC struggles in this scenario, in line with \cref{ec:sample_size}. Second, we evaluate TV-AIL and BC on the same MDP with varying horizons, as shown in \cref{tab:reset_cliff_horizon}, and we observe that the imitation gap of TV-AIL remains largely unchanged, indicating the validity of \cref{ec:horizon}. Our empirical findings support the idea that the algorithmic behaviors of offline imitation learning approaches are consistent with RBAS MDPs, which are therefore a suitable model for studying these behaviors.

\begin{table}[htbp]
\centering
\caption{Imitation gap on RBAS MDPs with $H=1000$. We report the mean of imitation gap with the standard deviation over 20 independent experiments (same with the remaining tables).}
\label{tab:reset_cliff_sample_size}

\begin{tabular}{@{}ccccc@{}}
\toprule
         & $N=1$ & $N=4$ & $N=7$ &$N=10$ \\ \midrule
BC       & $998.87 \pm 0.15$& $998.57 \pm 0.14$ & $998.12 \pm 0.16$  & $997.60 \pm 0.30$        \\
TV-AIL      & $0.71 \pm 0.00$ & $0.64 \pm 0.01$       & $0.61 \pm 0.02$  & $0.55 \pm 0.02$ \\ \bottomrule
\end{tabular}

\end{table}

\begin{table}[htbp]
\centering
\caption{Imitation gap on the RBAS MDP with $N=1$.}
\label{tab:reset_cliff_horizon}

\begin{tabular}{@{}ccccc@{}}
\toprule
         &$H=100$  & $H=500$  & $H=1000$ &$H=2000$ \\ \midrule
BC      & $98.89 \pm 0.14$ & $498.91 \pm 0.10$ & $998.87 \pm 0.15$& $1998.88 \pm 0.10$       \\
TV-AIL     & $0.69 \pm 0.00$ & $0.70 \pm 0.00$  & $0.71 \pm 0.00$   & $0.71 \pm 0.00$ \\ \bottomrule
\end{tabular}

\end{table}

\subsection{Main Results}

In this section, we introduce TV-AIL's horizon-free imitation gap guarantee based on \cref{prop:ail_general_reset_cliff}. This proposition establishes the optimality condition for the problem in \eqref{eq:ail} for tabular and episodic MDPs that satisfy \cref{asmp:reset_cliff}.

\begin{prop}
\label{prop:ail_general_reset_cliff}
For any tabular and episodic MDP satisfying \cref{asmp:reset_cliff},  suppose that $\piail$ is a minimizer of \eqref{eq:ail}. When $N \geq 1$, we have the following optimality condition almost surely:
\begin{align*}
    \piail_h (a^1|s) = 1, \forall s \in \goodS, h \in [H-1].
\end{align*}
\end{prop}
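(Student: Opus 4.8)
The plan is to analyze the objective in \eqref{eq:ail} stage by stage and exploit the coupling between consecutive time steps that is induced by the structure of RBAS MDPs. First I would observe that, because of \cref{asmp:reset_cliff}, the state-action distribution $d_h^{\pi}$ has a very rigid shape: the mass that the policy keeps on good states at time $h$ is exactly the probability of having played $a^1$ on good states at every earlier step, and any mass that "leaks" via a non-expert action on a good state, or that is present on bad states, is frozen forever afterwards (bad states are absorbing and yield reward $0$). Concretely, writing $p_h := \sum_{s\in\goodS}\sum_a d_h^{\pi}(s,a)$ for the probability of being in a good state at step $h$, the structure forces $p_{h+1} = \sum_{s\in\goodS} d_h^{\pi}(s,a^1)$, so $p_h$ is nonincreasing, and the empirical expert distribution $\widehat{d_h^{\piE}}$ is supported on $\goodS\times\{a^1\}$ (since $\piE$ is deterministic, plays $a^1$, and never leaves $\goodS$) with total mass $1$ at every $h\le H-1$.

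Next I would lower-bound the TV objective $\sum_{h=1}^H \sum_{(s,a)} |d_h^{\pi}(s,a) - \widehat{d_h^{\piE}}(s,a)|$ by a quantity that is minimized only when the expert action is replayed on all good states. The key step is a telescoping/dynamic-programming argument: the per-stage term at step $h$ is at least something like $2(1 - p_h) + (\text{disagreement with } \widehat{d_h^{\piE}} \text{ on } \goodS\times\{a^1\})$ up to handling the empirical support, and since $1-p_h$ can only grow when the policy deviates from $a^1$ on a good state at some step $h'<h$, a single deviation at step $h'$ is charged in every later term $h'+1,\dots,H$. This shows that the all-$a^1$-on-good-states policy strictly decreases the objective relative to any policy that deviates, provided there is at least one later stage to absorb the penalty — which is why the conclusion is stated for $h\in[H-1]$ rather than $h\in[H]$ (the last step has nothing downstream to couple to, so the expert action need not be forced there). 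I would also need to check that, on good states that appear in the dataset, matching $\widehat{d_h^{\piE}}$ already pins down $a^1$, and that on good states \emph{not} in the dataset the downstream coupling is what forces $a^1$; this is exactly the "guidance from future time steps" mechanism advertised in the introduction, and it is where the known-transition assumption (so that $d^\pi$ is exactly computable and these identities hold) is used.

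The main obstacle I anticipate is making the "charge a deviation to all later stages" bookkeeping rigorous while the empirical target $\widehat{d_h^{\piE}}$ is itself a random object whose support may be a strict subset of $\goodS$. One has to argue that redirecting any leaked mass back onto $\goodS\times\{a^1\}$ (by switching to $a^1$ at the offending good state) weakly decreases every downstream per-stage TV term and strictly decreases at least one of them, \emph{regardless} of which states happen to be in the dataset — i.e. the improvement is monotone in a way that survives the randomness, giving the "almost surely" in the statement. A clean way to do this is to fix the dataset, argue the improvement pointwise, and note the argument never used properties of the dataset beyond "$\widehat{d_h^{\piE}}$ is a distribution on $\goodS\times\{a^1\}$," which holds with probability one. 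The reward/value side is then essentially free: forcing $a^1$ on good states maximizes $p_h$ for all $h$, hence maximizes $V(\pi)=\sum_h p_h$, so the minimizer of the TV objective is simultaneously value-optimal on these instances, and the imitation gap bound in \cref{thm:informal_main_result} will follow by separately bounding $1-p_h$ (equivalently the probability the deterministic optimal policy is ever pushed off $\goodS$ by an unvisited state) via a concentration argument over the $N$ trajectories.
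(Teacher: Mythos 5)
Your proposal is correct in spirit but is architecturally different from the paper's proof. The paper proceeds by backward induction on a dynamic-programming decomposition: it defines $\pi_h^{\star}$ as the minimizer of $\Loss_h + \sum_{h'>h}\Loss_{h'}^{\star}$ with the future policies already fixed to the expert, shows $\piE_h$ is \emph{an} optimizer of the current-stage loss (a one-dimensional piecewise-linear fact, \cref{lem:single_variable_opt}) and the \emph{unique} optimizer of every downstream loss (\cref{lem:mn_variables_opt_unique}), and combines the two via \cref{lem:unique_opt_solution_condition}. You instead propose a forward perturbation/exchange argument: take any candidate minimizer that deviates from $a^1$ on a reachable good state at some $h\le H-1$, redirect the leaked mass by switching to $a^1$ there, and show every downstream per-stage TV term weakly decreases while at least one strictly decreases. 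This is a legitimate alternative and is in fact closer to how the paper proves the \emph{approximate}-optimality version (the telescoping identity in the proof of \cref{prop:ail_general_reset_cliff_approximate_solution}) and to the contradiction argument inside \cref{lem:condition_for_ail_optimal_solution}; what the exchange route buys is that you never need to reason about uniqueness of stagewise optimizers, only about strict improvability, at the cost of having to handle partial deviations $\pi_h(a^1|s)=1-\beta$ with $\beta\in(0,1)$ uniformly.

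Two steps in your sketch need to be pinned down before the argument is complete. First, the strict decrease of a downstream loss $\Loss_{h'}$ is not automatic from ``more mass on good states'': increasing $\pi_h(a^1|s_0)$ raises $d^{\pi}_{h'}(s)$ on good states and each term $|\widehat{d^{\piE}_{h'}}(s)-d^{\pi}_{h'}(s)|$ can \emph{increase} by up to the added mass, exactly cancelling the decrease of the bad-state term. Strictness requires (i) the reachability condition $P_{h}(s'|s,a^1)>0$ for all $s,s'\in\goodS$, so that the redirected mass reaches every good state, and (ii) the observation that $\sum_{s\in\goodS}\widehat{d^{\piE}_{h'}}(s)=1$ always weakly exceeds the policy's total good-state mass, so some good state is strictly under-target and absorbs added mass without overshooting. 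These are precisely the hypotheses $A>0$ and $\sum_i c_i\ge\sum_{i,j}a_{ij}$ of the paper's \cref{lem:mn_variables_opt_unique}, and your argument must invoke both. Second, your exchange is vacuous at a good state with $d^{\pi}_h(s)=0$, so you must separately rule out minimizers that have entirely abandoned the good states by some time step (in which case switching an action changes nothing pointwise at that state); the paper's \cref{lem:condition_for_ail_optimal_solution} does this with its own exchange argument, showing such a policy incurs the maximal loss $2$ at all later stages and is therefore strictly improvable. With those two pieces supplied, your route goes through and yields the same conclusion, including the restriction to $h\in[H-1]$ and the ``almost surely'' qualifier for exactly the reasons you state.
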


\cref{prop:ail_general_reset_cliff} implies that TV-AIL can recover the expert actions on both visited and non-visited states within the first $H-1$ time steps. In contrast, BC cannot achieve this and may choose non-expert actions on non-visited states. The reason for this difference in behavior is as follows.

\begin{rem}

It is important to note that the state-action distribution matching problem in \eqref{eq:ail} is a multi-stage policy optimization problem, where decision variables may be interdependent. To illustrate this,  recall the definition of $d^{\pi}_h(s, a)$: \begin{align}
    &\quad d^{\pi}_h(s, a) \nonumber
    \\
    &= d^{\pi}_h(s) \pi_h(a|s) \nonumber
    \\
    &= \ls \sum_{(s^\prime, a^\prime)} d^{\pi}_{h-1}(s^\prime, a^\prime) P_{h-1}(s|s^\prime, a^\prime) \rs  \pi_h(a|s) \nonumber  \\
    &= \ls \sum_{(s^\prime, a^\prime)} d^{\pi}_{h-1}(s^\prime) \pi_{h-1}(a^\prime | s^\prime) P_{h-1}(s|s^\prime, a^\prime) \rs  \pi_h(a|s) . \label{eq:flow_link}
\end{align}
As can be seen, the variables $\pi_{h-1}$ and $\pi_h$ are linked by the intermediate variable $d^{\pi}_{h}(s, a)$, which means that the optimization of $\pi_{h-1}$ and $\pi_{h}$ is carried out jointly. This coupling structure is a significant difference between TV-AIL and BC. Although BC also optimizes a non-stationary policy, its objective does not have any coupling structure, as shown in \eqref{eq:objective_bc}. Furthermore, while BC solves a convex optimization problem in \eqref{eq:objective_bc}, TV-AIL may solve a non-convex policy optimization problem due to the coupling structure in \eqref{eq:flow_link}.

\begin{prop}    \label{prop:non_convex}
There exist tabular and episodic MDPs such that the objective of TV-AIL in \eqref{eq:ail} is non-convex. 
\end{prop}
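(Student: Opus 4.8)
The plan is to exhibit one tabular episodic MDP together with two stochastic policies $\pi^{(0)},\pi^{(1)}\in\Pi$ at which the objective
$F(\pi):=\sum_{h=1}^{H}\sum_{(s,a)\in\gS\times\gA}\big|d^{\pi}_h(s,a)-\widehat{d^{\piE}_h}(s,a)\big|$
of \eqref{eq:ail} fails the midpoint convexity inequality. Since $\Pi=\prod_{h,s}\Delta(\gA)$ is convex, the whole segment $\{\lambda\pi^{(0)}+(1-\lambda)\pi^{(1)}:\lambda\in[0,1]\}$ lies in $\Pi$, so it is legitimate to test convexity of $F$ along it. The structural reason such an example must exist is the identity \eqref{eq:flow_link}: $d^{\pi}_h(s,a)$ is a multilinear (for $h=2$, bilinear) function of $(\pi_1,\dots,\pi_h)$, hence not affine in the joint variable $\pi$; composing the convex map $|\cdot|$ with a non-affine map need not preserve convexity. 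To make the effect survive the summation over all $(s,a)$, I would take the expert empirical distribution $\widehat{d^{\piE}_h}$ to be concentrated so that every summand of $F$ except one collapses to an affine function of $\pi$ and therefore cannot repair the lost convexity.

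Concretely, take $H=2$, $\gS=\{s^1,s^2\}$, $\gA=\{a^1,a^2\}$, initial distribution $\rho=\delta_{s^1}$, and step-$1$ transitions $P_1(s^1\mid s^1,a^1)=1$ and $P_1(s^2\mid s^1,a^2)=1$; the step-$2$ transitions are never used (the episode ends after $h=2$) and may be chosen arbitrarily. Let the dataset consist of $N=1$ expert trajectory $(s^1,a^1,s^1,a^1)$, so that $\widehat{d^{\piE}_1}$ has mass $1$ on $(s^1,a^1)$ and $\widehat{d^{\piE}_2}$ has mass $1$ on $(s^1,a^1)$, with zero mass on all other pairs. Parametrize a policy by $p:=\pi_1(a^1|s^1)$, $q:=\pi_2(a^1|s^1)$, $r:=\pi_2(a^1|s^2)$; then $d^{\pi}_1(s^1,a^1)=p$, $d^{\pi}_1(s^1,a^2)=1-p$, $d^{\pi}_2(s^1,a^1)=pq$, $d^{\pi}_2(s^1,a^2)=p(1-q)$, $d^{\pi}_2(s^2,a^1)=(1-p)r$, $d^{\pi}_2(s^2,a^2)=(1-p)(1-r)$.

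Substituting these into $F$ and noting that on $[0,1]^3$ each absolute-value argument has a fixed sign (indeed $0\le p,\,pq\le 1$ and the remaining four arguments are manifestly nonnegative), one gets $F(\pi)=(1-p)+(1-p)+(1-pq)+p(1-q)+(1-p)r+(1-p)(1-r)=4-2p-2pq$, which is independent of $r$ and whose Hessian in $(p,q)$ is indefinite. To finish, take $\pi^{(0)}$ with $(p,q,r)=(0,0,0)$ and $\pi^{(1)}$ with $(p,q,r)=(1,1,1)$, so that $F(\pi^{(0)})=4$ and $F(\pi^{(1)})=0$; the midpoint policy has $(p,q,r)=(\tfrac12,\tfrac12,\tfrac12)$ and
$F\!\big(\tfrac12\pi^{(0)}+\tfrac12\pi^{(1)}\big)=4-1-2\cdot\tfrac12\cdot\tfrac12=\tfrac52>2=\tfrac12\big(F(\pi^{(0)})+F(\pi^{(1)})\big)$,
so $F$ is not convex on $\Pi$. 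I do not expect a genuine obstacle here: the only point requiring care is checking that the sign of each absolute-value argument is constant on the domain (so that the closed form $4-2p-2pq$ is valid), and since the conclusion is witnessed at three explicitly given feasible policies this is a finite direct verification. The main effort is simply choosing the MDP and the expert trajectory so that the reduction to the transparently non-convex scalar function $4-2p-2pq$ works cleanly.
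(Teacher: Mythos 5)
Your proposal is correct and follows essentially the same route as the paper's proof: both construct a tiny two-step MDP in which the coupling $d^{\pi}_2(s,a)=d^{\pi}_2(s)\pi_2(a|s)$ collapses the objective to an explicit bilinear expression (your $4-2p-2pq$ versus the paper's $2(2-x-xy)$ on a five-state example) and then certify non-convexity, the only cosmetic difference being that you verify failure of midpoint convexity directly while the paper exhibits an indefinite Hessian.
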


Proof of \cref{prop:non_convex} is deferred to Appendix \ref{appendix:proof_prop:non_convex}. The intuition is that function $f(x, y) = xy$ is non-convex. In our context, $x$ and $y$ may refer to $\pi_{h-1}$ and $\pi_{h}$, respectively. Before the follow-up discussion about \cref{prop:ail_general_reset_cliff}, we comment that even though the policy optimization in TV-AIL could be non-convex, there exists a linear-programming-based procedure for \eqref{eq:ail}, which runs in a polynomial time. Furthermore, gradient-based methods can also return an approximately optimal solution for \eqref{eq:ail_min_max} in a polynomial time. Thus, we do not need to worry much about the computation efficiency; please see Appendix \ref{appendix:optimization_procedures} for details.

Despite the non-convexity, we establish the global optimality condition in \cref{prop:ail_general_reset_cliff}, based on a staged-coupled analysis. We defer details to \cref{subsec:key_analysis_ideas}. We remark that the recursive structure in \eqref{eq:flow_link} is essential to recover the expert action on non-visited states, as the state-action distribution matching loss in a large time step (say $\Vert d^{\pi}_h - \widehat{d^{\piE}_h} \Vert_1$) can affect the decision variables in a small time step (say $\pi_{1}$). For an explanation of this point, please refer to \cref{example:ail_success} in Appendix~\ref{appendix:proof_claim_example_ail_success}. Since there is no future guidance in the last time step, we cannot guarantee that the obtained policy in the last time step follows the expert policy, as stated in \cref{prop:ail_general_reset_cliff}.
\end{rem}

With \cref{prop:ail_general_reset_cliff}, it is immediate to obtain the imitation gap of TV-AIL for RBAS MDPs.

\begin{thm}[Horizon-free Imitation Gap of TV-AIL on RBAS MDPs]     \label{theorem:ail_reset_cliff}
For any tabular and episodic MDP satisfying \cref{asmp:reset_cliff}, suppose that $\piail$ is any minimizer of \eqref{eq:ail}. Then we have that
\begin{align}   \label{eq:ail_reset_cliff}
    V({\piE}) - \expect \ls V(\piail) \rs \leq \gO \lp \min \lb 1, \sqrt{\frac{\vert \gS \vert}{N}} \rb \rp,
\end{align}
where the expectation is taken over the randomness in collecting $N$ expert trajectories.
\end{thm}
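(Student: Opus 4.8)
The plan is to feed \cref{prop:ail_general_reset_cliff} into the dual value identity \eqref{eq:value_dual_representation} so that the entire imitation gap collapses onto the single time step $H$, and then control that one step by an elementary concentration estimate.

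First I would record the consequence of \cref{asmp:reset_cliff} for \eqref{eq:value_dual_representation}: since the only rewarding pairs are $(s,a^1)$ with $s\in\goodS$, every policy satisfies $V(\pi)=\sum_{h=1}^{H}\sum_{s\in\goodS} d^{\pi}_h(s)\,\pi_h(a^1|s)$. By \cref{prop:ail_general_reset_cliff} we have $\piail_h(a^1|s)=1=\piE_h(a^1|s)$ for all $s\in\goodS$ and $h\le H-1$ almost surely. Next I would propagate this agreement to the occupancy measures: because a good state at step $h+1$ can be entered only from a good state at step $h$ via $a^1$ (non-expert actions leave $\goodS$ and bad states are absorbing), a one-line induction on $h$ gives $d^{\piail}_h(s)=d^{\piE}_h(s)$ for every $s\in\goodS$ and \emph{all} $h\in[H]$ (the induction only ever invokes the proposition at steps $\le H-1$). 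Plugging these two facts back into the $V$-decomposition kills the terms with $h\le H-1$ and leaves
\[
V(\piE)-V(\piail)=\sum_{s\in\goodS} d^{\piE}_H(s)\bigl(1-\piail_H(a^1|s)\bigr)\le\sum_{s\in\goodS} d^{\piE}_H(s)\le 1 ,
\]
which already yields the $\gO(1)$ half of the bound.

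To obtain the $\gO(\sqrt{|\gS|/N})$ half I would identify the last-step policy of any minimizer. Since step $H$ has no downstream effect, once $\pi_{1:H-1}$ are fixed the loss $\lVert d^{\pi}_H-\widehat{d^{\piE}_H}\rVert_1$ in \eqref{eq:ail} decouples over states; using $d^{\piail}_H(s)=d^{\piE}_H(s)$ and the fact that the deterministic expert's empirical mass sits entirely on $a^1$, any minimizing $\pi_H(\cdot|s)$ must satisfy $\piail_H(a^1|s)\ge\min\{1,\widehat{d^{\piE}_H}(s)/d^{\piE}_H(s)\}$ on states visited at step $H$ and is arbitrary on unvisited states (where $\widehat{d^{\piE}_H}(s)=0$). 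In every case $d^{\piE}_H(s)\bigl(1-\piail_H(a^1|s)\bigr)\le\max\{0,\,d^{\piE}_H(s)-\widehat{d^{\piE}_H}(s)\}$, so
\[
V(\piE)-V(\piail)\le\sum_{s\in\goodS}\max\bigl\{0,\,d^{\piE}_H(s)-\widehat{d^{\piE}_H}(s)\bigr\}\le\bigl\lVert d^{\piE}_H-\widehat{d^{\piE}_H}\bigr\rVert_1 .
\]
Taking expectations, $\widehat{d^{\piE}_H}$ is the empirical distribution of $N$ i.i.d.\ draws supported on at most $|\gS|$ states, so the textbook estimate $\expect\lVert d^{\piE}_H-\widehat{d^{\piE}_H}\rVert_1\le\sqrt{|\gS|/N}$ (equivalently, bound each $\expect[\max\{0,d^{\piE}_H(s)-\widehat{d^{\piE}_H}(s)\}]$ by $\sqrt{d^{\piE}_H(s)/N}$ and then apply Cauchy--Schwarz over the $\le|\gS|$ good states) finishes the job; combining the two halves gives \eqref{eq:ail_reset_cliff}.

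Given \cref{prop:ail_general_reset_cliff}, none of this is deep. The only places needing care are (i) checking that the occupancy on $\goodS$ is insensitive to whatever the policies do on $\badS$ and to any leakage out of $\goodS$, so that the agreement over steps $1,\dots,H-1$ really is exact, and (ii) pinning the behaviour of $\piail_H$ precisely enough on expert-visited states to turn the step-$H$ shortfall into $\max\{0,d^{\piE}_H-\widehat{d^{\piE}_H}\}$ rather than the cruder $d^{\piE}_H$. I expect (ii) to be the main (still minor) obstacle, since it is the one spot where the structure of the objective \eqref{eq:ail}, and not merely the conclusion of the proposition, is used.
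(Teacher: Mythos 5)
Your proposal is correct and follows essentially the same route as the paper: invoke \cref{prop:ail_general_reset_cliff} to make $\piail$ and $\piE$ agree (hence have identical occupancies) through step $H-1$, collapse the gap onto step $H$ via \eqref{eq:value_dual_representation}, and bound that single step by $\min\{1,\,$estimation error$\}$. The only divergence is cosmetic: where you explicitly characterize the optimal last-step policy as $\piail_H(a^1|s)\ge\min\{1,\widehat{d^{\piE}_H}(s)/d^{\piE}_H(s)\}$ to get $\sum_s\max\{0,d^{\piE}_H(s)-\widehat{d^{\piE}_H}(s)\}$, the paper instead uses a triangle inequality plus the observation that $\piE_H$ is a feasible competitor for the decoupled step-$H$ loss, yielding the same $\gO(\sqrt{|\gS|/N})$ up to a factor of $2$.
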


\begin{rem}
\cref{theorem:ail_reset_cliff} says that TV-AIL has two types of imitation gaps for RBAS MDPs, depending on the sample size. In the small sample regime (i.e., $N \lesssim |\gS|$), the first term dominates in \eqref{eq:ail_reset_cliff}, indicating the imitation gap of TV-AIL is at most 1. In particular, this guarantee holds for any $H$ and $|\gS|$. On the other hand, in the large sample regime (i.e., $N \gtrsim |\gS|$), the second term in \eqref{eq:ail_reset_cliff} dominates and the imitation gap diminishes to 0 as $N$ goes to infinity. This result can explain the empirical results in {\cref{tab:reset_cliff_sample_size} and \cref{tab:reset_cliff_horizon}}. By the similarity between RBAS MDPs and locomotion tasks from the MuJoCo benchmark, \cref{theorem:ail_reset_cliff} can also help understand the superior performance of TV-AIL in practice. To our best knowledge, this represents the first horizon-free imitation gap bound that is also meaningful in the small sample regime.

We clarify that the good result in \cref{theorem:ail_reset_cliff} is not due to the total variation distance. Instead, we observe that other AIL methods (e.g., FEM, GTAL and GAIL) also have comparative performance with TV-AIL on RBAS MDPs; see the numerical result in Appendix~\ref{appendix:algorithmic_behaviors_of_other_ail_methods}.
\end{rem}

\begin{rem}

Recall that for RBAS MDPs, the imitation gap of BC is ${\gO}(\min \{H, |\gS| H^2/N \})$. In the small sample regime where $N \lesssim |\gS|$, the imitation gap bound of TV-AIL is much smaller than that of BC. This result suggests that through the coupled multi-stage optimization, TV-AIL can effectively overcome the issue of compounding errors in offline imitation.

However, in the large sample regime where $N \gtrsim |\gS|H^4$, careful readers may notice that BC has a better imitation gap bound than TV-AIL. It is important to note, however, that in this regime, the imitation gap is less than $\gO(1/H^{2})$, which is an extremely small value. Thus, this observation may not often be significant in practice. Nonetheless, we would like to comment that the sample barrier issue is not a fundamental problem for TV-AIL. By making a slight modification to TV-AIL, we can achieve an improved imitation gap bound of $\gO(\min \{1, |\gS|/N \})$ for RBAS MDPs, which is better than the imitation gap bound of BC for the entire sample regime. Please refer to Appendix \ref{appendix:vail_sample_barrier} for further discussion on this topic.

\end{rem}

\begin{rem}
It is important to note that the horizon-free imitation gap presented in \cref{theorem:ail_reset_cliff} does not contradict the lower bound $\Omega(H |\gS|/N )$ found in \citep{rajaraman2020fundamental}. There are two reasons for this. First, the lower bound is applicable only in the large sample regime (i.e., $N \gtrsim |\gS|$). Second, the instance that provides the lower bound in \cref{theorem:ail_reset_cliff} does not satisfy \cref{asmp:reset_cliff}.
\end{rem}

{Finally, to gain an intuitive understanding of the horizon-free imitation gap and the coupling structure in the state-action distribution matching, we provide an example in Appendix~\ref{appendix:proof_claim_example_ail_success}.}

\subsection{Toward A Stage-coupled Analysis}
\label{subsec:key_analysis_ideas}

In this part, we outline the primary analysis technique employed to establish the horizon-free imitation gap of TV-AIL. Initially, we give a concise overview of the classical reduction-and-estimation analysis that has been utilized in previous studies \citep{pieter04apprentice, syed07game, xu2020error, rajaraman2020fundamental}. We then discuss why this technique fails to offer a tight bound on RBAS MDPs. Finally, we introduce a novel stage-coupled analysis that uncovers the algorithmic characteristics of TV-AIL for RBAS MDPs.

\textbf{Reduction-and-Estimation Analysis.} To analyze AIL methods, the reduction-and-estimation analysis reduces the imitation gap to the statistical estimation error of the expert's state-action distribution. Concretely, we have that
\begin{align}
& \quad \labs  V({\piE}) - V({\piail}) \rabs \nonumber
\\
&\overset{(a)}{=} \bigg\vert \sum_{h=1}^{H}  \sum_{(s, a) \in \gS \times \gA}  d^{\piE}_h(s, a) r_h(s, a) - d^{\piail}_h(s, a) r_h(s, a) \bigg\vert  
\nonumber \\
&\overset{(b)}{\leq} \sum_{h=1}^{H}  \sum_{(s, a) \in \gS \times \gA} \labs d^{\piE}_h(s, a) - d^{\piail}_h(s, a) \rabs \nonumber \\
&\overset{(c)}{\leq} \sum_{h=1}^{H}  \sum_{(s, a) \in \gS \times \gA} \labs d^{\piE}_h(s, a) - \widehat{d^{\piE}_h}(s, a) \rabs  + \sum_{h=1}^{H}  \sum_{(s, a) \in \gS \times \gA} \labs  \widehat{d^{\piE}_h}(s, a) - d^{\piail}_h(s, a) \rabs \nonumber 
\\
&\overset{(d)}{\leq} 2 \sum_{h=1}^{H}  \sum_{(s, a) \in \gS \times \gA} \labs  \widehat{d^{\piE}_h}(s, a) - d^{\piE}_h(s, a) \rabs, \label{eq:estimation_reduction_final_step}
\end{align}
where equation $(a)$ follows the dual representation of policy value in \eqref{eq:value_dual_representation}, inequality $(b)$ is based on the assumption that $r_h(s, a) \in [0, 1]$, and inequality $(d)$ holds because $\piail$ is the optimal solution to \eqref{eq:ail}, i.e., $\sum_{h=1}^{H}\Vert d^{\piail}_h - \widehat{d^{\piE}_h} \Vert_1 \leq \sum_{h=1}^{H}\Vert d^{\piE}_h - \widehat{d^{\piE}_h} \Vert_1$. Then, the estimation error $\sum_{h=1}^{H}\Vert d^{\piE}_h - \widehat{d^{\piE}_h} \Vert_1$ can be further upper bounded via proper concentration inequalities. For instance, the $\ell_1$-risk estimation error typically concentrates in a rate $\gO(\sqrt{|\gX|/N})$ \citep{weissman2003inequalities, han2015minimax, kamath2015learning}, where $|\gX|$ is the cardinality of the symbol set $\gX$ (i.e., $|\gX|$ is the estimation dimension) and $N$ is the sample size. In the context of imitation learning, we have that $\Vert d^{\piE}_h - \widehat{d^{\piE}_h} \Vert_1 \lesssim \sqrt{|\gS|/N}$ for $h \in [H]$, where we consider the assumption that the expert policy is deterministic so that the error bound does not depend on $|\gA|$. Combing the above two steps, one can obtain the imitation gap bound.

\begin{thm} \label{thm:worst_case_vail}
For any tabular and episodic MDP, including RBAS MDPs, the imitation gap of TV-AIL is $\gO(\min\{H, H \sqrt{|\gS|/N} \})$.
\end{thm}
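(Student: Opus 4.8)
The plan is to follow the reduction-and-estimation recipe already displayed in the text, then plug in an off-the-shelf $\ell_1$ concentration bound for empirical distributions together with the trivial $H$-bound. First I would reuse the pointwise chain of inequalities culminating in \eqref{eq:estimation_reduction_final_step}, which gives, for every realization of the dataset $\gD$,
\[
  \bigl| V(\piE) - V(\piail) \bigr| \;\le\; 2 \sum_{h=1}^{H} \bigl\| \widehat{d^{\piE}_h} - d^{\piE}_h \bigr\|_1 ,
\]
the only nontrivial input being step $(c)$, i.e.\ that $\piail$ is a global minimizer of \eqref{eq:ail} and that the feasible $\piE$ yields a matching loss of zero against itself; a minimizer exists since the objective of \eqref{eq:ail} is continuous on the compact set of policies. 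Taking expectations over $\gD$ and using $V(\piE) - \expect[V(\piail)] \le \expect\,\bigl|V(\piE) - V(\piail)\bigr|$, the claim reduces to bounding $\sum_{h=1}^{H} \expect\,\|\widehat{d^{\piE}_h} - d^{\piE}_h\|_1$.

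Next I would exploit the standing assumption that $\piE$ is deterministic. For a fixed $h$, the $N$ trajectories supply $N$ i.i.d.\ samples of $(s_h,a_h)\sim d^{\piE}_h$ (independence across trajectories is all that is needed --- there is no need for independence across time steps, since the sum over $h$ only uses linearity of expectation), and every rollout obeys $a_h=\piE_h(s_h)$; hence both $\widehat{d^{\piE}_h}(s,a)$ and $d^{\piE}_h(s,a)$ vanish unless $a=\piE_h(s)$, so $\|\widehat{d^{\piE}_h}-d^{\piE}_h\|_1$ equals the $\ell_1$ deviation of an empirical distribution supported on at most $|\gS|$ symbols built from $N$ samples. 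A textbook estimate --- per-symbol Jensen, $\expect|\widehat p(s)-p(s)|\le\sqrt{p(s)(1-p(s))/N}$, followed by Cauchy--Schwarz, $\sum_s\sqrt{p(s)}\le\sqrt{|\gS|}$ --- yields $\expect\,\|\widehat{d^{\piE}_h}-d^{\piE}_h\|_1\le\sqrt{|\gS|/N}$ (alternatively one may invoke \citep{weissman2003inequalities}). Summing over $h\in[H]$ gives $\sum_h\expect\,\|\widehat{d^{\piE}_h}-d^{\piE}_h\|_1\le H\sqrt{|\gS|/N}$, hence the imitation gap is at most $2H\sqrt{|\gS|/N}$.

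Finally I would combine this with the crude bound $V(\piE)-\expect[V(\piail)]\le H$, which holds because $r_h\in[0,1]$ forces $V(\pi)\in[0,H]$ for every $\pi$; taking the minimum of the two estimates yields $\gO(\min\{H,\,H\sqrt{|\gS|/N}\})$, uniformly over all tabular episodic MDPs (RBAS or otherwise). The main obstacle here is mild: the theorem is essentially a corollary of the reduction \eqref{eq:estimation_reduction_final_step} plus standard empirical-distribution concentration, and the only points needing a line of care are (i) that a minimizer of \eqref{eq:ail} exists so step $(c)$ is legitimate, and (ii) using determinism of $\piE$ to keep the estimation dimension at $|\gS|$ rather than $|\gS||\gA|$. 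The contrast with \cref{theorem:ail_reset_cliff} is instructive --- this route controls the per-stage matching errors independently and is therefore blind to the cross-stage coupling that produces the horizon-free rate on RBAS MDPs.
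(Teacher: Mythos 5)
Your proof is correct and takes essentially the same route as the paper's: Theorem~\ref{thm:worst_case_vail} is proved in the text exactly by the reduction-and-estimation chain \eqref{eq:estimation_reduction_final_step}, followed by the $\sqrt{|\gS|/N}$ bound on $\expect\|\widehat{d^{\piE}_h}-d^{\piE}_h\|_1$ (using determinism of $\piE$ to keep the estimation dimension at $|\gS|$) and the trivial bound $H$. One wording slip worth fixing: in step $(c)$ the comparator $\piE$ does \emph{not} have zero matching loss against $\widehat{d^{\piE}_h}$ --- its loss is precisely the estimation error $\sum_h\|d^{\piE}_h-\widehat{d^{\piE}_h}\|_1$, which is why the factor of $2$ appears; your displayed inequality is nonetheless the correct one.
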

This bound, though comparable to the results of classical algorithms such as FEM and GTAL, does not provide a satisfactory explanation for empirical observations made on RBAS MDPs and MuJoCo locomotion tasks. This is because this bound is only meaningful in the large sample regime $N \gtrsim |\gS|$; otherwise, the first term $\gO(H)$ in the imitation gap dominates and this bound becomes trivial.

Furthermore, empirical evidence, as demonstrated in \cref{tab:reset_cliff_horizon_estimation_error}, indicates that the estimation error can be substantial even when the imitation gap of TV-AIL is small for RBAS MDPs. As such, the reduction-and-estimation analysis falls short of closing the gap between theory and practice.

\begin{table}[htbp]
\centering
\caption{Imitation gap and estimation error of TV-AIL on RBAS MDPs with $N=1$.}
\label{tab:reset_cliff_horizon_estimation_error}

\begin{tabular}{@{}ccccc@{}}
\toprule
         &$H=100$  & $H=500$  & $H=1000$ &$H=2000$ \\ \midrule

Imitation Gap     &  \meanstd{0.69}{0.00} &  \meanstd{0.70}{0.00}  & \meanstd{0.71}{0.00}   & \meanstd{0.71}{0.00} \\ 
Estimation Error      & \meanstd{189.47}{0.00} & \meanstd{947.37}{0.00} & \meanstd{1894.74}{0.00} & \meanstd{3789.47}{0.00}       \\
\bottomrule
\end{tabular}
\end{table}

\textbf{Stage-coupled Analysis.} To overcome the limitations of the reduction-and-estimation analysis, we develop a stage-coupled analysis approach to study the optimal solution of TV-AIL. Specifically, we seek to determine the optimal solution $\pi^{\ail} = \{\pi_1^{\ail}, \pi_2^{\ail}, \ldots, \pi_H^{\ail} \}$ in a backward-inductive manner.

Our analysis technique exploits the transition properties of RBAS MDPs to establish that, for any optimal policy $\pi^{\ail}$ to the state-action distribution matching problem defined in (\ref{eq:ail}), we must have
\begin{align}   \label{eq:visit_good_states}
    \forall h \in [H], \forall s \in \goodS: \quad d^{\piail}_{h}(s) > 0,
\end{align}
where $d^{\piail}_{h}(s)$ is calculated by $(\pi^{\ail}_1, \ldots, \pi^{\ail}_{h-1})$. We provide a detailed proof of this result in \cref{lem:condition_for_ail_optimal_solution} in the Appendix. It is important to note that while the condition in \eqref{eq:visit_good_states} does not establish that $\piail$ must take the expert action on good states, it does demonstrate that optimal policies can visit good states in each step. Further, based on the transition properties of RBAS MDPs, we can infer that for all $h \in [H]$, there is at least one good state $s \in \goodS$ such that $\piail_h(a^{1} | s) > 0$; we provide details on this in \cref{lem:condition_for_ail_optimal_solution} in the Appendix.

Next, we aim to eliminate the possibility of the optimal policy $\piail$ taking bad actions on good states. To do so, we apply the backward induction technique based on the idea of dynamic programming \citep{bertsekas2012dynamic}. We argue that for each time-dependent policy $\piail_h$, it is necessary to take the expert action on good states. Otherwise, it would lead to a higher cumulative state-action distribution matching loss and hence cannot be optimal. Our previous result in \eqref{eq:visit_good_states} serves as a crucial building block for this argument. Below, we provide a brief outline of the proof and leave the detailed proofs to Appendix \ref{appendix:proof_of_prop:ail_general_reset_cliff}.

In the proof, we use the optimality conditions of multi-stage optimization, expressed as
\begin{align}   \label{eq:single_optimality}
    \pi^{\ail}_{h} \in \argmin_{\pi_h} f_h(\pi_h; \pi^{\ail}_1, \ldots, \pi^{\ail}_{h-1}, \pi^{\ail}_{h+1}, \ldots, \pi^{\ail}_{H})
\end{align}
for all $h \in [H]$, where 
\begin{align*}
    f_h(\pi_h; \piail_1, \ldots, \piail_{h-1}, \piail_{h+1}, \ldots, \piail_H) = \sum_{h=1}^{H} \sum_{(s, a)} \labs d^{\pi}_h(s, a) - \widehat{d^{\piE}_h}(s, a) \rabs 
\end{align*}
is a single-variable loss function that takes $\pi_h$ as the variable and other time-dependent policies \\$(\piail_{1}, \ldots, \piail_{h-1}, \piail_{h+1}, \ldots, \piail_{H})$ as fixed parameters.  This loss function measures the discrepancy between the state-action distribution under $\pi_h$ and the corresponding expert distribution. Mathematically speaking, the condition in \eqref{eq:single_optimality} means that the global optimality implies the directional optimality in each coordinate.  The details of \eqref{eq:single_optimality} are presented in \cref{lem:n_vars_opt_greedy_structure} in the Appendix.

Using the optimality conditions in \eqref{eq:single_optimality}, we proceed with the backward induction proof. The base step is to prove $\pi^{\ail}_{H-1} (\cdot|s) = \piE_{H-1} (\cdot|s)$ for all $s \in \goodS$ (note that we do not guarantee the quality of $\piail_H$). Our strategy is to prove that $\pi^{\ail}_{H-1} (a^{1}|s) = \piE_{H-1} (a^{1}|s) = 1$ for all $s \in \goodS$ is the \emph{unique} optimal solution of
\begin{align}   \label{eq:base_step_proof}
    \min_{\pi_{H-1}} f_{H-1}(\pi_{H-1}; \pi_1^{\ail}, \ldots, \pi_{H-2}^{\ail}, \pi_{H}^{\ail}).
\end{align}
We can express $f_{H-1}$ as 
\begin{align*}
f_{H-1} = \sum_{(s, a)} \labs d^{\pi}_{H-1}(s, a) - \widehat{d^{\piE}_{H-1}}(s, a) \rabs + \sum_{(s, a)} \labs d^{\pi}_{H}(s, a) - \widehat{d^{\piE}_{H}}(s, a)  \rabs + \constant,
\end{align*}
where $d^{\pi}_{H-1}(s, a)$ and $d^{\pi}_{H}(s, a)$ are induced by $(\piail_1, \ldots, \piail_{H-2}, \pi_{H-1}, \piail_H)$. Besides, $\constant$ is the sum of state-action distribution matching losses from $h=1$ to $h=H-2$ incurred by $(\piail_1, \piail_2, \ldots, \piail_{H-2})$, and is independent of $\pi_{H-1}$. For notation simplicity, let $\Loss_{h}$ be the state-action distribution matching loss in time step $h$. That is, let $\Loss_{H-1} = \Vert d^{\pi}_{H-1} - \widehat{d^{\piE}_{H-1}} \Vert_1$, and $\Loss_{H} = \Vert d^{\pi}_{H} -  \widehat{d^{\piE}_{H}} \Vert_1$. To prove the optimality of $\piE_{H-1}$ in \eqref{eq:base_step_proof}, we first show that (I) $\pi_{H-1}(a^{1}|s) = 1$ for all $s \in \goodS$ is the optimal solution with respect to $\min_{\pi_{H-1}} \Loss_{H-1}$, and then show that (II) it is also the \emph{unique} optimal solution with respect to $\min_{\pi_{H-1}} \Loss_{H}$. We elaborate on these two steps as follows.

\textbf{Step (I):} we will demonstrate that $\pi_{h} (a^1|s) = 1$ for all $s \in \goodS$ is an optimal solution to minimizing $\Loss_h$. Since $\piE$ always takes the expert action and does not visit bad states, we have that $\widehat{d^{\piE}_{H-1}}(s) = 0$ for $s \in \badS$. Moreover, we have $\widehat{d^{\piE}_{H-1}}(s, a) = 0$ for $s \in \goodS$ and $a \ne a^{1}$, as the expert policy always executes $a^{1}$. With these facts, we obtain that 
\begin{align*}
     \Loss_{H-1} &= \sum_{(s, a)} \labs d^{\pi}_{H-1}(s, a) - \widehat{d^{\piE}_{H-1}}(s, a) \rabs \nonumber \\
    &= \sum_{s \in \badS} d^{\pi^{\text{AIL}}}_{H-1} (s) + \sum_{s \in \goodS} \bigg( d^{\pi^{\text{AIL}}}_{H-1} (s) \lp 1- \pi_{H-1} (a^1|s) \rp + \labs \widehat{d^{\piE}_{H-1}} (s, a^1) - d^{\pi^{\text{AIL}}}_{H-1} (s) \pi_{H-1} (a^1|s) \rabs  \bigg) . 
\end{align*}
Since the first term $d^{\pi^{\text{AIL}}}_{H-1}(s)$ is irrelevant to $\pi_{H-1}$, we only need to consider the second term. For a specific $s \in \goodS$, where $d^{\pi^{\text{AIL}}}_{H-1}(s) > 0$ (as stated in \eqref{eq:visit_good_states}), we can show that $\pi_h(a^{1}|s) = 1$ is an optimal solution for such kind of piece-wise linear function, regardless of the estimation $\widehat{d^{\piE}_{H-1}}$. This proof is provided in \cref{lem:single_variable_opt} in the Appendix. Therefore, we conclude that $\pi_{H-1} (a^1|s) = 1$ for all $s \in \goodS$ is an optimal solution regarding $\min_{\pi_{H-1}} \Loss_{H-1}$.

\textbf{Step (II):} We prove that $\pi_{H-1} (a^1|s) = 1$ for all $s \in \goodS$ is the \emph{unique} optimal solution with respect to $\min_{\pi_{H-1}} \Loss_{H}$. This property reflects the interdependence of policy optimization across different stages, where the objective at a later stage $H$ influences the optimization process at an earlier stage $H-1$. To illustrate this, consider that
\begin{align}
     d^{\pi}_{H}(s, a)
    =& d^{\pi}_{H}(s) \pi^{\text{AIL}}_{H}(a|s) \nonumber \\
    =& \bigg[ \sum_{(s^\prime, a^\prime)} d^{\pi^{\text{AIL}}}_{H-1}(s^\prime) \pi_{H-1}(a^\prime | s^\prime) P_{H-1}(s|s^\prime, a^\prime) \bigg]  \pi^{\text{AIL}}_{H}(a|s) \label{eq:main_text_eq_1} 
\end{align}

For our purpose, we break down $\Loss_{H}$ into two parts according to visited states and non-visited states. Specifically, we define the set of visited states in time step $H$ as $\gV_{H} := \{s \in \gS: \widehat{d^{\piE}_{H}}(s) > 0 \}$.  Then, using \eqref{eq:main_text_eq_1}, we have 
\begin{align*}
    \Loss_{H}
    =& \sum_{(s, a)} \labs d^{\pi}_H(s, a) - \widehat{d^{\piE}_{H}}(s, a) \rabs \\
    =& \sum_{s \in \gV_{H}} \bigg\vert \sum_{s^\prime \in \goodS}  A(s, s^\prime) -  \widehat{d^{\piE}_{H}} (s) \bigg\vert  - \sum_{s^\prime \in \goodS} d(s^\prime) \pi_{H-1} (a^{1}|s^\prime) + \constant,
\end{align*}
where $\constant$ is independent of $\pi_{H-1}$. We introduce two new notations: $A(s, s^\prime)$ and $d(s^\prime)$. Here are their definitions: 
\begin{align*}
   A(s, s^\prime) &=  d^{\pi^{\text{AIL}}}_{H-1} (s^\prime)  P_{H-1} (s | s^\prime, a^{1}) \pi^{\ail}_{H} (a^{1}|s) \pi_{H-1} (a^{1}|s^\prime) \\
   d(s^\prime) &= \sum_{s \in \gV_{H}} d^{\pi^{\text{AIL}}}_{H-1} (s^\prime) P_{H-1} (s | s^\prime, a^{1}) \pi_{H}^{\ail} (a^{1}|s).
\end{align*}
For the detailed derivation, please see \eqref{eq:proof_vail_reset_cliff_1} in the Appendix. To analyze the optimization problem $\min_{\pi_{H-1}} \Loss_{H}$, we rely on a specialized result from the Appendix, \cref{lem:mn_variables_opt_unique}, which proves that $\pi_{H-1}(a^{1}|s) = 1$ for all $s \in \goodS$ is the unique optimal solution. The proof of \cref{lem:mn_variables_opt_unique} is technical and beyond the scope of this main text. However, interested readers can refer to the Appendix for further details.

After performing the previous two steps, we have shown that $\pi_{H-1} (a^{1}|s) = 1$ for all $s \in \goodS$ is optimal for minimizing both $\Loss_{H-1}$ and $\Loss_{H}$. Moreover, since $\pi_{H-1} (a^{1}|s) = 1$ for all $s \in \goodS$ is the unique minimizer for the latter objective, we can conclude that it is also the unique minimizer for the sum objective $f_{H-1} = \Loss_{H-1} + \Loss_{H} + \constant$, by applying \cref{lem:unique_opt_solution_condition} from the Appendix. This completes the proof of the base step.

The proof for the induction step follows a similar approach. Assuming that for stage $h$ we have $\piail_{h^\prime}(\cdot|s) = \piE_{h^\prime}(\cdot|s)$ for all $s \in \goodS$ for all $h+1 \leq h^\prime \leq H-1$, we aim to prove that $\piail_h(\cdot|s) = \piE_h(\cdot|s)$ for all $s \in \goodS$. We follow the same proof strategy as the base step. Specifically, we have
\begin{align}
    \piail_{h} &\in \argmin_{\pi_h} f_{h}(\pi_h; \pi^{\ail}_1, \ldots, \piail_{h-1}, \piail_{h+1}, \ldots, \piail_{H}) \label{eq:main_text_3} \\
    &\in \argmin_{\pi_h}  f_{h}(\pi_h; \piail_{1}, \ldots, \piail_{h-1}, \piE_{h+1}, \ldots, \piail_{H}). \nonumber
\end{align}
As before, we can decompose $f_h$ into three parts: the state-action distribution matching loss in the current stage, the cumulative state-action distribution losses in the future stages, and the constant term in the early stages.
\begin{align*}
    f_h &= \sum_{(s, a)} \labs d^{\pi}_h(s, a) - \widehat{d^{\piE}_h}(s, a) \rabs + \sum_{h^\prime = h + 1}^{H} \sum_{(s, a)} \labs d^{\pi}_{h^\prime} (s, a) - \widehat{d^{\piE}_{h^\prime}}(s, a) \rabs + \constant,
\end{align*}
where $\constant$ is the sum of the losses incurred in matching state-action distributions by $(\piail_1, \ldots, \piail_{h-1})$ from stage $1$ to $h-1$. Using the same notation as before, we have $f_h = \Loss_h + \sum_{h^\prime = h+1}^{H} \Loss_{h^\prime} + \constant$. Following the proof strategy for the base step, we can show that the expert policy is the unique optimal solution by separately minimizing $\Loss_h$ and $\sum_{h^\prime = h+1}^{H} \Loss_{h^\prime}$. We omit the details here and refer readers to Appendix \ref{appendix:proof_of_prop:ail_general_reset_cliff} for more information.

\begin{rem}[Stage-coupled Analysis V.S. Reduction-and-estimation Analysis]
    In RBAS MDPs, the stage-coupled analysis derives an imitation gap $\gO(\min\{1, \sqrt{|\gS|/N} \})$ for TV-AIL, while the classical reduction-and-estimation analysis leads to a looser bound $\gO(\min\{H, H \sqrt{|\gS|/N} \})$. Both analysis connects the imitation gap with the cumulative state-action distribution discrepancy $\sum_{h=1}^{H} \| d^{\piE}_h - d^{\piail}_h  \|_1$, but they differ fundamentally in how to analyze this distribution discrepancy. On one hand, as shown in inequality (c) in Eq.(\ref{eq:estimation_reduction_final_step}), the reduction-and-estimation analysis relates the distribution discrepancy with the cumulative statistical estimation error $\sum_{h=1}^{H} \| d^{\piE}_h - \widehat{d^{\piE}_h}  \|_1$. Such a cumulative estimation error does not diminish under RBAS MDP assumptions and results in the horizon dependence in the imitation gap. On the other hand, the stage-coupled analysis employs the mentioned backward induction-based method to provide a sharp characterization of TV-AIL's policy. Concretely, Proposition \ref{prop:ail_general_reset_cliff} establishes that TV-AIL can exactly recover the expert policy in the first $H-1$ stages, suggesting that the state-action distribution discrepancy equals zero for these stages. As such, only the distribution discrepancy in the final stage contributes to the imitation gap, leading to a horizon-free bound.   
\end{rem}

\begin{rem}[Difference with the Dynamic-programming-based Proof]
Our analysis differs from the dynamic programming (DP) proof. In particular, our proof of \cref{prop:ail_general_reset_cliff} utilizes RBAS MDPs properties directly to characterize the optimal policies and does not require forward substitution as in the direct dynamic programming technique. In contrast, the DP proof computes a functional by backward induction and then uses forward substitution to find the optimal policy. For interested readers, please refer to Appendix \ref{appendix:difference_with_the_dp_analysis} for a detailed discussion.

\end{rem}

We note that the assumption of reachable bad absorbing states is crucial to our stage-coupled analysis, especially in establishing the optimality of $\piE_h$ at each stage. Our analysis demonstrates that selecting an action that leads to bad absorbing states can result in significant matching loss in future stages. As such, guidance from distribution matching in the future can provably assist TV-AIL in recovering the expert action on non-visited states in the first $H-1$ stages. This uncovers that the stage-coupling structure in the distribution matching loss helps TV-AIL identify the expert action on states out of the demonstration distribution, thereby addressing the distribution shift issue and mitigating the compounding errors over horizons.

\subsection{Extensions}
\label{subsec:extension}
{In the previous part, we focus on the setting with exact solutions to the distribution matching problem and known transition functions. The subsequent sections will provide extensions by considering approximate solutions and unknown transitions.}

\subsubsection{When Exact Solutions Are Not Available}
\label{sec:approximate_solutions}

In the previous section, we have explored the imitation gap of solutions that are \emph{exactly} optimal. However, in practice, gradient-based methods are typically employed to solve \eqref{eq:ail_min_max}, leading to solutions that are only \emph{approximately} optimal. In this part, we demonstrate that our previous findings remain valid when accounting for optimization errors.

\begin{defn}[$\varepsilon$-optimal solution]   \label{defn:approximate_solution}
A policy $\widebar{\pi}$ is an $\varepsilon$-optimal solution, if 
\begin{align*}
     \sum_{h=1}^{H} \lnorm d^{\widebar{\pi}}_h - \widehat{d^{\piE}_h} \rnorm_1 \leq \min_{\pi \in \Pi} \sum_{h=1}^{H} \lnorm d^{\pi}_h - \widehat{d^{\piE}_h} \rnorm_1 + \varepsilon. 
\end{align*}
\end{defn}

We begin by highlighting that incorporating optimization error in the reduction-and-estimation framework is a straightforward process. By following the steps in \eqref{eq:estimation_reduction_final_step} and applying an additional triangle inequality, we obtain the bound:
\begin{align*}
   \labs V(\widebar{\pi}) - V(\piE) \rabs \leq 2 \sum_{h=1}^{H} \lnorm  \widehat{d^{\piE}_h} - d^{\piE}_h \rnorm_1 + \varepsilon.
\end{align*}
In comparison with the error bound presented in \eqref{eq:estimation_reduction_final_step}, this inequality has an extra term, namely, the optimization error $\varepsilon$. However, incorporating the optimization error within our stage-coupled analysis is not trivial. One may guess that due to optimization error, the approximately optimal policy $\widebar{\pi}$ may select a non-expert action with a small probability. Consequently, the agent may experience compounding errors, and the horizon-free imitation gap may not hold. This conjecture is reasonable when the optimization error is large. However, we will show that when the optimization error is well-controlled, the horizon-free guarantee remains unchanged. We present our formal claim below.

\begin{thm}[Horizon-free Imitation Gap of Approximate \textsf{TV-AIL} on RBAS MDPs]
\label{theorem:ail_approximate_reset_cliff}
For each tabular and episodic MDP satisfying \cref{asmp:reset_cliff}, the candidate policy set is defined as $\Pi^{\opt} = \{ \pi \in \Pi: \forall h \in [H], \exists s \in \goodS, \pi_h (a^{1}|s) > 0 \}$. Suppose that $\widebar{\pi} \in \Pi^{\opt} $ is an $\varepsilon$-optimal solution of \eqref{eq:ail}. Let us define $\varepsilon^\prime = 8 \varepsilon / c (\widebar{\pi})$, then we have 
\begin{align*}
    & V({\piE}) - \expect \ls V(\widebar{\pi}) \rs \lesssim \min \lb 1 + \varepsilon^\prime, \sqrt{\frac{\vert \gS \vert}{N}} + \varepsilon^\prime  \rb,
\end{align*}
where $c(\widebar{\pi}) > 0$ is defined as 
\begin{align*}
     c (\widebar{\pi}) := \min_{1 \leq \ell < h \leq H, s, s^\prime \in \goodS} \{ \sP^{\widebar{\pi}} \lp s_{h} = s |s_{\ell} = s^\prime, a_{\ell} = a^{1} \rp \}.
\end{align*}
Here $\sP^{\widebar{\pi}} \lp s_{h} = s |s_{\ell} = s^\prime, a_{\ell} = a^{1} \rp $ is the visitation probability of $s$ in time step $h$ by starting from $s^\prime, a^{1}$ in time step $\ell$, which is jointly determined by the transition function and policy $\widebar{\pi}$. 
\end{thm}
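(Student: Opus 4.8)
The plan is to mimic the structure of the exact-solution analysis (\cref{prop:ail_general_reset_cliff} and \cref{theorem:ail_reset_cliff}), but replace the statement \dquote{$\pi^\star_h(a^1|s) = 1$ exactly} with a quantitative statement \dquote{$1 - \widebar{\pi}_h(a^1|s)$ is small in a suitable aggregated sense} and then propagate this slack through the horizon. First I would introduce, for each good state $s\in\goodS$ and each time step $h\in[H-1]$, the \emph{defect} $\delta_h(s) := 1 - \widebar{\pi}_h(a^1|s)$, and the aggregated defect $\Delta_h := \sum_{s\in\goodS} d^{\widebar\pi}_h(s)\,\delta_h(s)$, which is exactly the probability mass that \dquote{leaks} into bad states between step $h$ and $h+1$ along trajectories of $\widebar\pi$. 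The key observation, already visible in Step~II of the stage-coupled analysis, is the Bellman-flow identity $\sum_{s\in\badS} d^{\widebar\pi}_{h'}(s) = \sum_{s\in\badS} d^{\widebar\pi}_h(s) + \sum_{\ell=h}^{h'-1}\Delta_\ell$ for $h < h'$; in particular the total bad-state mass at the last step, $\sum_{s\in\badS} d^{\widebar\pi}_H(s) = \sum_{\ell=1}^{H-1}\Delta_\ell$, and the imitation gap is controlled by precisely this quantity plus the last-step one-step error, so it suffices to show $\sum_{\ell=1}^{H-1}\Delta_\ell \precsim \min\{1+\varepsilon',\ \sqrt{|\gS|/N}+\varepsilon'\}$.

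The heart of the argument is a lower bound on how much matching loss the defects cost, so that $\varepsilon$-optimality forces them to be small. Here I would use the quantity $c(\widebar\pi)$ exactly as defined: any defect $\delta_\ell(s)$ at a good state visited with probability $d^{\widebar\pi}_\ell(s)$ creates, at \emph{every} later step $h'>\ell$, a bad-state mass of at least $d^{\widebar\pi}_\ell(s)\delta_\ell(s)$ (bad states are absorbing), hence contributes at least $\Delta_\ell$ to $\Loss_{h'}$; but more importantly, compared against the benchmark policy that agrees with $\widebar\pi$ except it plays $a^1$ deterministically on all good states in step $\ell$, \emph{redirecting} that leaked mass back onto good states and then — using $c(\widebar\pi)$ — onto the states the expert would visit, reduces the loss at each of the $H-\ell$ subsequent steps by a definite amount. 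Summing the per-step savings and using that $\widebar\pi$ is $\varepsilon$-optimal (so no modification can improve the total loss by more than $\varepsilon$... but actually the modification itself may raise $\Loss_\ell$ by at most $\Delta_\ell$ and lowers each later term), I would extract an inequality of the shape $c(\widebar\pi)\sum_{h'>\ell}(\text{savings at }h') \le \varepsilon + O(\Delta_\ell)$, which after summing over $\ell$ and rearranging yields $\sum_\ell \Delta_\ell \precsim \varepsilon/c(\widebar\pi) + (\text{the exact-case bound})$; the factor $8$ in $\varepsilon' = 8\varepsilon/c(\widebar\pi)$ is the slack absorbed from the triangle inequalities and the self-loss increase. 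The condition $\widebar\pi\in\Pi^{\opt}$ guarantees $d^{\widebar\pi}_h(s)>0$ for the good states in the argument so that $c(\widebar\pi)>0$ and the construction is well-defined.

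Once $\sum_{\ell=1}^{H-1}\Delta_\ell$ is controlled, the final step reassembles the imitation gap: by the dual representation \eqref{eq:value_dual_representation} and $r_h\in[0,1]$, $V(\piE)-V(\widebar\pi) \le \sum_{h=1}^H \|d^{\piE}_h - d^{\widebar\pi}_h\|_1$, and a telescoping/flow argument bounds the right side by a constant multiple of $\sum_{\ell=1}^{H-1}\Delta_\ell$ (the leaked mass, which never returns) plus the honest last-step statistical error $\|d^{\piE}_H - \widehat{d^{\piE}_H}\|_1 \precsim \min\{1,\sqrt{|\gS|/N}\}$ inherited from the concentration bound used in \cref{theorem:ail_reset_cliff}; taking expectations over the $N$ expert trajectories gives the claimed $\min\{1+\varepsilon', \sqrt{|\gS|/N}+\varepsilon'\}$. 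I expect the main obstacle to be Step~II's quantitative version: in the exact case one only needs \dquote{$\piE_h$ is the unique minimizer,} but here I must exhibit an explicit competitor policy and carefully account for the simultaneous loss \emph{increase} at step $\ell$ (from deviating the visited-state behavior, bounded by $\Delta_\ell$, which is benign) against the loss \emph{decrease} at all later steps (which is where $c(\widebar\pi)$ enters and where the piecewise-linear structure of $\|d^{\widebar\pi}_{h'} - \widehat{d^{\piE}_{h'}}\|_1$ must be handled so that the redirected mass genuinely lands on under-covered expert states rather than being wasted). Managing this bookkeeping uniformly over all $\ell$ and all later $h'$, without losing a factor of $H$, is the delicate part.
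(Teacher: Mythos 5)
Your high-level ingredients (the defect variables $1-\widebar{\pi}_\ell(a^1|s)$, the Bellman-flow accounting of leaked mass into bad states, and using $c(\widebar{\pi})$ to convert defects into matching loss) are the right ones, but the execution as described has two gaps that would together cost you a factor of $H$ and destroy the horizon-free character of the bound.

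First, the quantity you reduce to is too weak. Mass leaked at step $\ell$ is absent from the good states at \emph{every} step $h>\ell$, and the reward is earned at every step, so the value gap is (up to the last-step action error) the double sum $\sum_{h=1}^{H}\sum_{\ell< h}\Delta_\ell$, not the single sum $\sum_{\ell}\Delta_\ell = \sum_{s\in\badS}d^{\widebar{\pi}}_H(s)$ that you target. Controlling only $\sum_\ell \Delta_\ell \precsim \varepsilon'$ leaves a value gap as large as $H\varepsilon'$. The paper's \cref{prop:ail_general_reset_cliff_approximate_solution} controls precisely the double sum, $c(\widebar{\pi})\sum_{h}\sum_{\ell<h}\Delta_\ell \le \varepsilon$, plus a separate last-step term over $\gS_{\widebar{\pi}}$ that handles the fact that $\widebar{\pi}_H$ itself can deviate (your sketch leaves the last-step policy error unaddressed).

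Second, your lower bound on the loss caused by defects compares $\widebar{\pi}$ against a \emph{separate} local competitor for each $\ell$ (play $a^1$ deterministically at step $\ell$ only), and each such comparison invokes $\varepsilon$-optimality once, yielding $H-1$ inequalities each with $\varepsilon$ on the right. Summing them gives at best $\sum_\ell \Delta_\ell \precsim H\varepsilon/c(\widebar{\pi})$ (or, after weighting by the $H-\ell$ later steps, a double sum bounded by $H\varepsilon/c(\widebar{\pi})$) — exactly the loss of $H$ you worried about but did not resolve. The paper avoids this by making a \emph{single} comparison $f(\widebar{\pi})-f(\piE)\le\varepsilon$, justified because $\piE$ attains the minimum loss (\cref{lemma:ail_policy_ail_objective_equals_expert_policy_ail_objective}), and then telescoping the interpolation $(\widebar{\pi}_{1:\ell},\piE_{\ell+1:h})$ one time step at a time inside each $\Loss_h$. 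Every defect $\Delta_\ell$ then appears, weighted by $c_{\ell,h}$, at every later stage $h$ on the left of \emph{one} inequality whose right side is a single $\varepsilon$; the per-coordinate quantitative decrease (your ``redirected mass genuinely lands on under-covered expert states'') is supplied by the regularity lemmas \cref{lem:single_variable_regularity} and \cref{lem:mn_variables_opt_regularity}, whose hypothesis $\sum_i c_i\ge\sum_{i,j}a_{ij}$ (total empirical expert mass dominates total reachable good mass) is what guarantees the piecewise-linear loss strictly decreases along every defect coordinate. To repair your argument you would need to replace the family of local competitors by this single global telescoping against $\piE$ and to carry the double sum all the way to the value-gap decomposition.
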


The proof of \cref{theorem:ail_approximate_reset_cliff} relies on a highly technical sensitivity analysis and is presented in Appendix \ref{appendix:proof_theorem:ail_approximate_reset_cliff}. The technical challenge arises because an $\varepsilon$-optimal solution is not necessarily much closer to the optimal solution (in fact, the opposite is often true). The intuition behind \cref{theorem:ail_approximate_reset_cliff} is that for an approximate solution, it is crucial to control decision errors on non-visited states, which is captured by $c(\widebar{\pi})$. If $\widebar{\pi}$ is the expert policy (i.e., the exactly optimal policy), $c(\widebar{\pi})$ is large. Hence, we expect that the horizon-free imitation gap still holds if the optimization error $\varepsilon$ is small. However, if $c(\widebar{\pi})$ is small, the requirement in \cref{theorem:ail_approximate_reset_cliff} may not hold, so we cannot make any strong claims about the $\varepsilon$-optimal solution. It is worth noting that the experiments presented in \cref{subsec:reset_cliff} used gradient-based methods to obtain approximately optimal solutions. Interestingly, we observed that these methods could find solutions with a relatively large $c(\widebar{\pi})$ and a small $\varepsilon$ when the iteration number was large (see the empirical evaluation in \cref{tab:reset_cliff_rel_opt_error}). Understanding why gradient-based methods can find such good solutions remains a topic for future research.

\begin{table}[htbp]
\centering
\caption{Evaluation of Approximate TV-AIL on the RBAS MDP with $N=1$.}
\label{tab:reset_cliff_rel_opt_error}

\begin{tabular}{@{}ccccc@{}}
\toprule
         &$H=100$  & $H=500$  & $H=1000$ &$H=2000$ \\ \midrule
         $\varepsilon / c (\widebar{\pi})$ & $0.42 \pm 0.00$ & $0.43 \pm 0.00$  & $0.44 \pm 0.00$   & $0.44 \pm 0.00$ \\ \bottomrule
\end{tabular}
\end{table}

\subsubsection{When The Transition Function Is Unknown}
\label{subsec:unknown_transition}
This paper primarily addresses scenarios where the transition function is known, enabling precise calculation of the state-action distribution $d^{\pi}$  during the optimization process. In practice, however, the transition function often remains unknown. Nevertheless, learners can estimate it by engaging with the environment through policy roll-outs. This section explores how our findings are applicable in such cases. Notably, if the interactions are substantial, the approximated transition model may be sufficiently accurate for effective imitation learning. Consequently, theoretical guarantees for known transitions may be adapted to scenarios with unknown transitions. This concept has been explored in previous research, notably in \citep{xu2021nearly}. We offer a concise discussion here and direct readers seeking more comprehensive details to the Appendix.

\begin{defn}[Uniform Policy Evaluation]
\label{def:uniform_policy_evaluation}
Given an MDP $\gM$, an algorithm is said to be $(\varepsilon, \delta)$-PAC for uniform policy evaluation in terms of state-action distribution if
\begin{align*}
    \sP \lp \forall \pi \in \Pi, \sum_{h=1}^H \lnorm d^{\pi, \gP}_h - d^{\pi, \widehat{\gP}}_h \rnorm_1 \leq \varepsilon \rp \geq 1-\delta,
\end{align*}
where $d^{\pi, \gP}_h$ and $d^{\pi, \widehat{\gP}}_h$ are the state-action distributions of policy $\pi$ under the real transition function $\gP$ and the transition function $\widehat{\gP}$ learned by the algorithm, respectively.
\end{defn}

With a transition function learned by the algorithm for uniform policy evaluation, we can perform the state-action distribution matching with this empirical transition function.
\begin{align}
\label{eq:AIL_with_empirical_transition_model}
    \min_{\pi \in \Pi} \sum_{h=1}^H \lnorm d^{\pi, \widehat{\gP}}_h - \widehat{d^{\piE}_h} \rnorm_1.
\end{align}

\begin{algorithm}[htbp]
\caption{Model-based TV-AIL}
\label{algo:mbtvail}
\begin{algorithmic}[1]
\REQUIRE{Expert demonstrations $\gD$.}
\STATE{$\widehat{\gP} \leftarrow$ Invoke an algorithm that is $(\varepsilon_{\eval}, \delta)$-PAC for uniform policy evaluation to interact with the environment and learn a transition model.}
\STATE{$\widebar{\pi} \lar$ Apply an algorithm to solve the optimization problem in \eqref{eq:AIL_with_empirical_transition_model} up to an error $\varepsilon_{\opt}$.}
\ENSURE{Policy $\widebar{\pi}$.}
\end{algorithmic}
\end{algorithm}

\begin{prop}    \label{prop:transfer_error}
Under the unknown transition setting, consider Model-based TV-AIL displayed in Algorithm \ref{algo:mbtvail} and $\widebar{\pi}$ is output policy, with probability at least $1-\delta$, $\widebar{\pi}$ is an $(2\varepsilon_{\eval} + \varepsilon_{\opt})$-optimal solution:
\begin{align*}
    \sum_{h=1}^{H} \lnorm d^{\widebar{\pi}}_h - \widehat{d^{\piE}_h} \rnorm_1 \leq \min_{\pi \in \Pi} \sum_{h=1}^{H} \lnorm d^{\pi}_h - \widehat{d^{\piE}_h} \rnorm_1 + 2\varepsilon_{\eval} +  \varepsilon_{\opt}.
\end{align*}
\end{prop}

The additional term $2 \varepsilon_{\eval} + \varepsilon_{\opt}$ in \cref{prop:transfer_error} can be interpreted as the approximation error $\varepsilon$ defined in \cref{defn:approximate_solution}. Therefore, the analysis presented in the previous section is applicable. Moreover, the requirement of uniform policy evaluation can be met using reward-free exploration methods \citep{chi20reward-free, menard20fast-active-learning}, as pointed out by \citep{xu2021nearly}. Hence, our theoretical results hold even in the case of unknown transitions. For further discussion, please refer to Appendix~\ref{appendix:subsec:mb-tvail}.

\subsubsection{When the RBAS MDP Assumption is Violated}
\label{subsubsec:rbas_mdps_extensions}

\begin{figure}[htbp]
\centering
\includegraphics[width=0.6\linewidth]{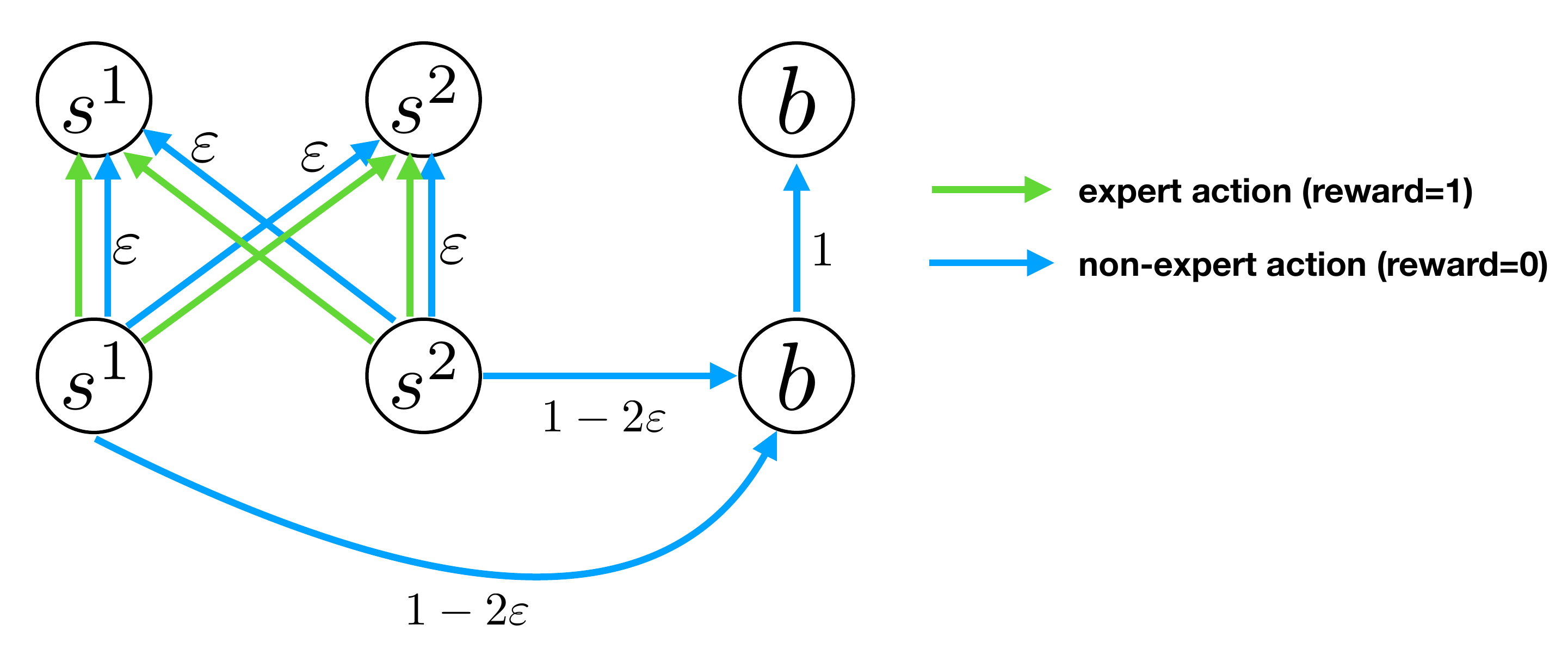}
\caption{Example I of extended RBAS MDPs. Digits indicate the transition probabilities.}
\label{fig:rabs_mdp_extension_one}
\end{figure}

Our main analysis focuses on RBAS MDPs satisfying \cref{asmp:reset_cliff}. In this part, we consider relaxing \cref{asmp:reset_cliff} along two dimensions: (i) taking a non-expert action may lead to a transition into good states with a small probability, and (ii) agents can also return to good states from bad states with a small probability.

For the first aspect, we consider an example of extended RBAS MDPs illustrated in Figure~\ref{fig:rabs_mdp_extension_one} for clarity. The extension to general $|\goodS|$ and $H$ follows straightforwardly via the same induction-based analysis. In this extended RBAS MDP, any non-expert action $a \neq a^1$ satisfies $P_1(s' | s,a)=\varepsilon, \forall s, s^\prime \in \goodS$. Despite this relaxation, we show that TV-AIL can still exactly recover the expert action on preceding unvisited states, analogous to the guarantee in \cref{prop:ail_general_reset_cliff}.

\begin{prop}
\label{prop:rabs_mdps_extension_one}
Consider example I of extended RBAS MDPs shown in Figure~\ref{fig:rabs_mdp_extension_one}, and suppose $\piail$ is a minimizer of \eqref{eq:ail}. If $\varepsilon \le P_1(s' \mid s,a^1) / 2$, $\forall s,s' \in \goodS$, then for any $N \ge 1$, the following optimality condition holds almost surely:
\begin{align*}
    \piail_1(a^1 \mid s) = 1, \qquad \forall s \in \goodS.
\end{align*}
\end{prop}

The proof is provided in Appendix~\ref{appendix:proof_of_extension_one}. Proposition~\ref{prop:rabs_mdps_extension_one} indicates that as long as $\varepsilon$ remains smaller than the expert-action transition probability (up to a constant factor), TV-AIL can still correctly identify expert actions on preceding states. 

\begin{figure}[htbp]
    \centering
    \includegraphics[width=0.6\linewidth]{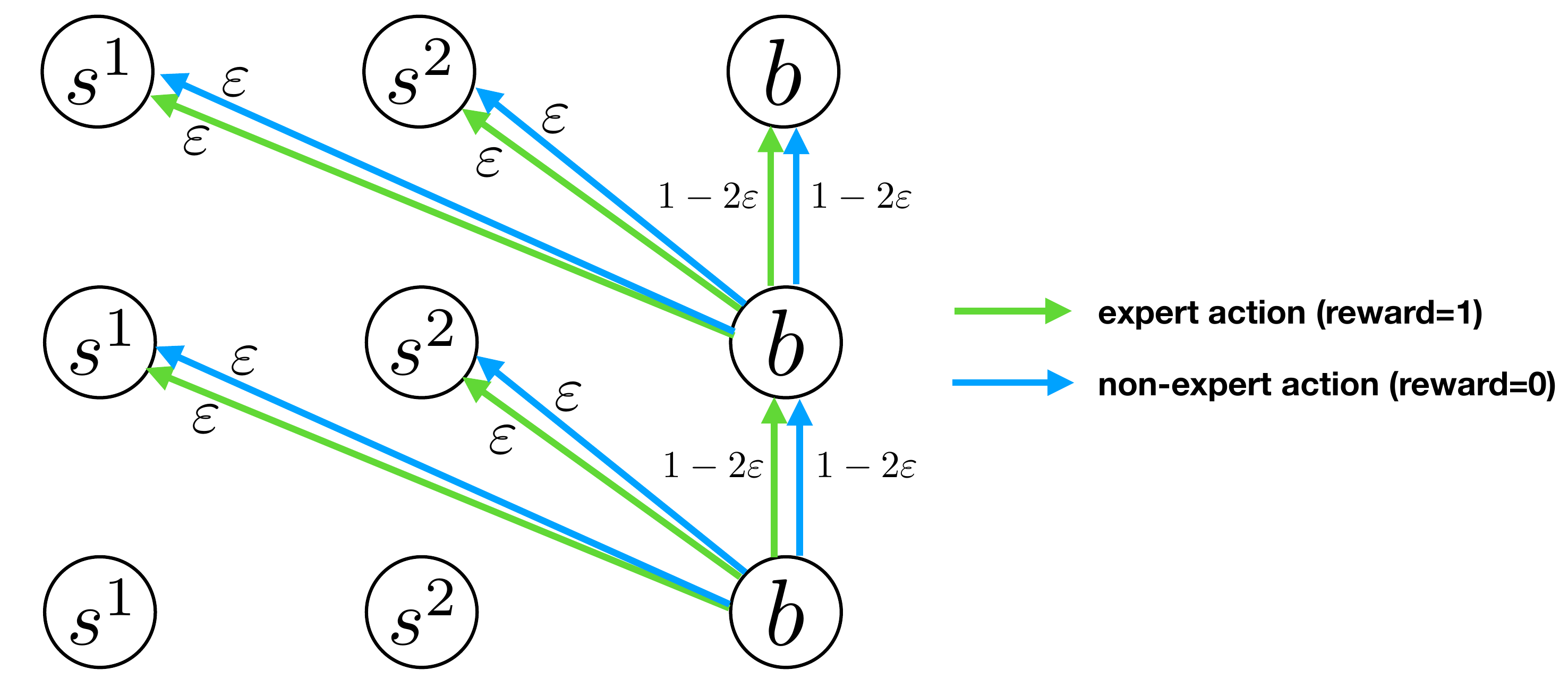}
    \caption{Example II of extended RBAS MDPs. Digits indicate the transition probabilities.}
    \label{fig:rabs_mdp_extension_two}
\end{figure}
For the second aspect, we consider an example of extended RBAS MDPs shown in Figure~\ref{fig:rabs_mdp_extension_two}.\footnote{This relaxation is meaningful when $H\geq 3$ since agents start to visit bad states from $h=2$.} Compared with the original RBAS MDP, the difference lies in the transitions from the bad state: $\forall h \in [2]$ and $\forall a \in \gA$, $P_h (s^\prime|b, a) = \varepsilon, \; \forall s^\prime \in \goodS$ and $P_h (b|b, a) = 1-2\varepsilon$.
\begin{prop}
\label{prop:rabs_mdps_extension_two}
    Consider example II of extended RBAS MDPs shown in Figure \ref{fig:rabs_mdp_extension_two}, suppose that $\piail$ is a minimizer of \eqref{eq:ail}. If $\varepsilon \leq d^{\piE}_{3} (s) / 2, \; \forall s \in \goodS$, $\varepsilon \leq \sP^{\piE} (s_3=s^\prime |s_1 = s, a_1=a^1), \; \forall s, s^\prime \in \goodS$, then for any $N \geq 1$, the following optimality condition almost surely:
\begin{align*}
    \piail_h (a^1|s) = 1, \forall  h \in [2], \forall s \in \goodS.
\end{align*}
\end{prop}
The proof is provided in Appendix \ref{appendix:proof_of_extension_two}. Proposition \ref{prop:rabs_mdps_extension_two} indicates that when the probability of returning to good states from bad states is small, TV-AIL can still identify expert actions on proceeding states. Intuitively, taking non-expert actions induces a one-point distribution on the bad state, and then the majority of probability mass on the bad state will be maintained in future time steps when $\varepsilon$ is small. This incurs a large distribution matching loss as the expert policy never visits the bad state.

\section{Beyond RBAS MDPs And Horizon-free Imitation Gap}
\label{sec:a_matching_lower_bound}

We have previously identified a horizon-free imitation gap for TV-AIL in RBAS MDPs. This discovery leads us to question whether TV-AIL consistently demonstrates a horizon-free imitation gap across all instances. In this section, we address this question by presenting a horizon-dependent lower bound and a corresponding upper bound for the imitation gap of TV-AIL in specific challenging instances. These challenging instances will be formally introduced in the subsequent sections.

\begin{asmp}[{MDPs with Isolated Absorbing States}]    \label{asmp:standard_imitation}
For a tabular and episodic MDP and an expert policy, we assume that 
\begin{itemize}
    \item Each state is absorbing and each action has the same transitions. i.e., $\forall (s, a) \in \gS \times \gA, h \in [H]$, we have $ P_h(s|s, a) = 1$.
    \item For any state, $a^{1}$ is the expert action with a reward 1 and the others are non-expert actions with a reward 0.
\end{itemize}
\end{asmp}

\begin{figure}[t]
    \centering
    \includegraphics[width=0.6\linewidth]{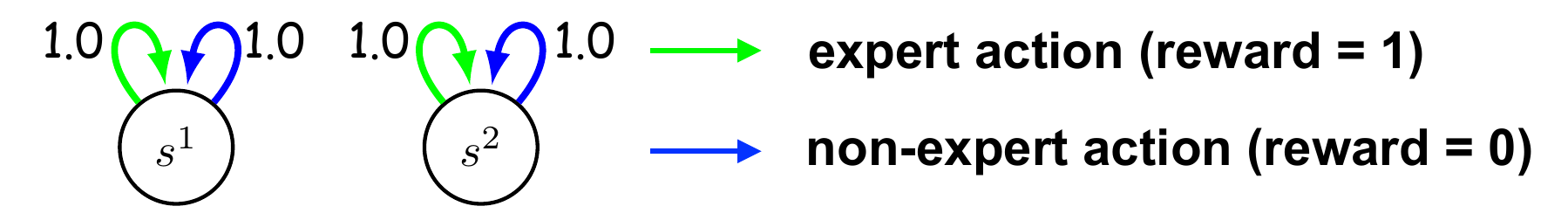}
    \caption{A simple MDP corresponding to \cref{asmp:standard_imitation}. Digits indicate the transition probabilities.}
    \label{fig:toy_standard_imitation}
\end{figure}

We would like to highlight two main characteristics of the hard instances that satisfy \cref{asmp:standard_imitation}. First, each state is isolated and absorbing, implying that the states are disconnected. Second, while all actions lead to the same transition, only the expert action provides a positive reward. These features do not apply to the instances that satisfy \cref{asmp:reset_cliff}. We provide a basic example that satisfies \cref{asmp:standard_imitation} with two states and two actions in \cref{fig:toy_standard_imitation}.

Now, we explain why the above two features make the imitation problem difficult. First, the self-absorbing characteristic implies that decision variables over stages become disconnected. For any instance satisfying \cref{asmp:standard_imitation}, we can deduce that for any $h \in [H]$,
\begin{align*}
    d^{\pi}_h(s, a) &= d^{\pi}_h(s) \pi_h(a|s) = \sum_{a^\prime }d^{\pi}_{h-1}(s, a^\prime)\pi_h(a|s) \\
    &= d^{\pi}_{h-1}(s) \pi_h(a|s) \\
    &= \ldots \\
    &= \rho(s) \pi_h(a|s).
\end{align*}
Then, we can obtain that 
\begin{align*}
    d^{\pi}_{h}(s) = \sum_{a} d^{\pi}_h(s, a) = \sum_{a} \rho(s) \pi_h(a|s) = \rho(s).
\end{align*}
This means that the state visitation distribution is equal to the initial state distribution, indicating that the policy does not affect the state visitation distribution. Thus, when we use the backward-induction-based approach to analyze the optimal policy $\piail_h$, we find that it is independent of $(\piail_1, \ldots, \piail_{h-1})$. Moreover, it can be shown that $\piail_h$ is also unrelated to $(\piail_{h+1}, \ldots, \piail_{H})$. Therefore, the obtained time-dependent policies are decoupled in this case. Thus, the multi-stage policy optimization reduces to $H$ independent one-step state-action distribution matching problems. Mathematically speaking, the policy optimization problem in \eqref{eq:single_optimality} becomes: for all $h \in [H]$,
\begin{align}
     \piail_h &\in \argmin_{\pi_h} f_h(\pi_h; \pi^{\ail}_1, \ldots, \pi^{\ail}_{h-1}, \pi^{\ail}_{h+1}, \ldots, \pi^{\ail}_{H}) \nonumber \\
     &\in \argmin_{\pi_h} \sum_{(s, a)}  \labs \rho(s) \pi_h(a|s)  - \widehat{d^{\piE}_h}(s, a) \rabs, \label{eq:ail_piece_wise_linear}
\end{align}
i.e., a piece-wise linear optimization problem. 

Second, we argue that the one-step state-action distribution matching cannot guarantee optimality even on visited states. That is, TV-AIL may select a wrong action even on visited states. This is mainly because the matching is performed in the marginal distribution space. {We illustrate this point by providing an example in Appendix \ref{appendix:example_standard_imitation}.}

Based on the above discussion, we formally state the imitation gap of TV-AIL on instances satisfying \cref{asmp:standard_imitation}.
\begin{prop}    
\label{proposition:ail_policy_value_gap_standard_imitation}
For any tabular and episodic MDP satisfying \cref{asmp:standard_imitation},  for each time step $h$, we define a set of states $\gW_h := \{s \in \gS: \widehat{d^{\piE}_h} (s) < \rho (s)   \}$. Then,
\begin{itemize}
    \item For each time step $h$, for an optimal solution $\piail$, it satisfies
    \begin{align*}
        \piail_h (a^1|s) \in [\widehat{d^{\piE}_h}(s) / \rho (s), 1], \forall s \in \gW_h, \quad \piail_h (a^1|s) = 1, \forall s \in \gW_h^c, 
    \end{align*}
    where $\gW_h^{c}$ is the complement set of $\gW_{h}$. 
    \item Among all possible optimal solutions, in the worst-case, we have 
    \begin{align*}
        \max_{\pi \in \Pi^{\ail}} V({\piE}) - \expect \ls V({\pi})\rs = \frac{1}{2} \expect \ls \sum_{h=1}^H \lnorm \widehat{d^{\piE}_h} - d^{\piE}_h   \rnorm_1 \rs.  
    \end{align*}
    \item The largest imitation gap is achieved by the policy $\pi_h (a^1|s) = \widehat{d^{\piE}_h}(s) / \rho (s), \forall s \in \gW_h$ and $\pi_h (a^1|s) = 1, \forall s \in \gW_h^c$.
\end{itemize}
\end{prop}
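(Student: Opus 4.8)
The plan is to leverage the decoupling already established above, $d^\pi_h(s,a)=\rho(s)\pi_h(a|s)$ and $d^\pi_h(s)=\rho(s)$, so as to reduce the TV-AIL objective \eqref{eq:ail} to $H\vert\gS\vert$ independent scalar problems, analyze each one, and then read off all three claims. Fixing a state $s$ and a time step $h$, write $x=\pi_h(a^1|s)$ and recall that since the expert is deterministic, $\widehat{d^{\piE}_h}(s,a)=0$ for $a\neq a^1$ while $\widehat{d^{\piE}_h}(s,a^1)=\widehat{d^{\piE}_h}(s)$; hence the only part of the objective in \eqref{eq:ail_piece_wise_linear} that depends on $\pi_h(\cdot|s)$ is
\begin{align*}
    g_{s,h}(x)=\labs \rho(s)\,x-\widehat{d^{\piE}_h}(s)\rabs+\rho(s)\,(1-x),\qquad x\in[0,1].
\end{align*}

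First I would analyze this piecewise-linear function. For $s\in\gW_h$ (so $\widehat{d^{\piE}_h}(s)<\rho(s)$, in particular $\rho(s)>0$), $g_{s,h}$ strictly decreases on $[0,\widehat{d^{\piE}_h}(s)/\rho(s)]$ and is constant, equal to $\rho(s)-\widehat{d^{\piE}_h}(s)$, on $[\widehat{d^{\piE}_h}(s)/\rho(s),1]$, so its set of minimizers is exactly that interval. For $s\in\gW_h^c$ with $\rho(s)>0$ one has $\rho(s)x\le\widehat{d^{\piE}_h}(s)$ throughout $[0,1]$, so $g_{s,h}$ strictly decreases and $x=1$ is its unique minimizer (states with $\rho(s)=0$ never contribute and may be set to $1$). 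Since the contributions of distinct pairs $(s,h)$ are separate, $\Pi^{\ail}$ is precisely the product of these minimizer sets, which is the first bullet.

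Next I would compute the imitation gap of an arbitrary $\pi\in\Pi^{\ail}$. By the dual representation \eqref{eq:value_dual_representation}, the fact that $r_h(s,a)=1$ only for $a=a^1$, the determinism of the expert, and $d^\pi_h(s,a^1)=\rho(s)\pi_h(a^1|s)$,
\begin{align*}
    V(\piE)-V(\pi)=\sum_{h=1}^{H}\sum_{s\in\gS}\rho(s)\bigl(1-\pi_h(a^1|s)\bigr).
\end{align*}
Using the first bullet, on $\gW_h^c$ every optimal $\pi$ has $\pi_h(a^1|s)=1$ and contributes $0$, while on $\gW_h$ the smallest admissible value of $\pi_h(a^1|s)$ is $\widehat{d^{\piE}_h}(s)/\rho(s)$, contributing $\rho(s)-\widehat{d^{\piE}_h}(s)$; so for each realization of $\gD$ the largest value of $V(\piE)-V(\pi)$ over $\pi\in\Pi^{\ail}$ equals $\sum_{h=1}^{H}\sum_{s\in\gW_h}(\rho(s)-\widehat{d^{\piE}_h}(s))$, attained by the policy named in the third bullet---which is a valid stochastic policy since $\widehat{d^{\piE}_h}(s)/\rho(s)\in[0,1)$ on $\gW_h$, and lies in $\Pi^{\ail}$ as a boundary minimizer of each $g_{s,h}$. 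Finally I would rewrite this positive-part sum as a total-variation quantity: as $\rho$ and $\widehat{d^{\piE}_h}$ are probability distributions on $\gS$, $\sum_{s\in\gW_h}(\rho(s)-\widehat{d^{\piE}_h}(s))=\sum_{s}(\rho(s)-\widehat{d^{\piE}_h}(s))_+=\tfrac12\sum_{s}\labs\rho(s)-\widehat{d^{\piE}_h}(s)\rabs$, and since here $d^{\piE}_h(s)=\rho(s)$ and the expert puts all mass on $a^1$, this equals $\tfrac12\lnorm\widehat{d^{\piE}_h}-d^{\piE}_h\rnorm_1$. Taking expectations over the $N$ trajectories gives the second bullet, and the third bullet has already been identified as the maximizer.

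I do not expect a deep obstacle here. The steps needing care are purely bookkeeping: keeping the strict inequality defining $\gW_h$ consistent with the boundary cases of the minimizer interval, handling states with $\rho(s)=0$, and---most importantly---verifying that the worst case is a genuine maximum by exhibiting the extremal policy of the third bullet and checking it is simultaneously a valid policy and an optimal solution of \eqref{eq:ail}, since the proposition asserts an equality rather than merely an upper bound.
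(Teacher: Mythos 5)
Your proposal is correct and follows essentially the same route as the paper: exploit $d^\pi_h(s)=\rho(s)$ to decouple the objective into per-$(s,h)$ piecewise-linear scalar problems, read off the minimizer set (the paper packages this step as \cref{lem:single_variable_opt_condition}, you analyze $g_{s,h}$ directly), evaluate the gap via \eqref{eq:value_dual_representation}, and convert the positive-part sum into $\tfrac12\lnorm\widehat{d^{\piE}_h}-d^{\piE}_h\rnorm_1$. Your explicit handling of $\rho(s)=0$ and the check that the extremal policy is admissible are minor added care, not a different argument.
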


Proof of \cref{proposition:ail_policy_value_gap_standard_imitation} can be found in Appendix \ref{appendix:proof_proposition:ail_policy_value_gap_standard_imitation}. This proposition implies that for instances satisfying \cref{asmp:standard_imitation}, the imitation gap of TV-AIL is equal to the statistical estimation error of the expert state-action distribution (up to constants). This matches the upper bound in \eqref{eq:estimation_reduction_final_step} obtained from the estimation-and-reduction-based analysis. However, we still need to determine whether this upper bound is tight or not. To answer this question, we provide a lower bound on the imitation gap in the following section.

Using \cref{proposition:ail_policy_value_gap_standard_imitation}, we can establish a lower bound on the imitation gap by proving a lower bound for the $\ell_1$-risk estimation.

\begin{thm}
\label{thm:tv_error_lower_bound_small_data}
Consider a {categorical} distribution $Q$ over a finite set $\gX$. Given $N$ i.i.d. samples ($X_1, \cdots, X_N$) from $Q$, consider the  estimator $\widehat{Q}$:
\begin{align*}
    \widehat{Q}(i) = \frac{\sum_{j} \mathbb{I}(X_j = i)}{N}.
\end{align*}
If  $N \lesssim |\gX|$, we have that 
\begin{align*}
    \max_{Q \in \gQ} \expect \ls \lnorm Q - \widehat{Q} \rnorm_1 \rs \gtrsim  1.
\end{align*}
If $N \gtrsim |\gX|$, we have that 
\begin{align*}
    \max_{Q \in \gQ} \expect \ls \lnorm Q - \widehat{Q} \rnorm_1 \rs \gtrsim  \sqrt{\frac{|\gX|}{N}}.
\end{align*}
Here $\gQ$ is the set of all {categorical} distributions on the set $\gX$.
\end{thm}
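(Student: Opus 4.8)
\textbf{Proof proposal for Theorem \ref{thm:tv_error_lower_bound_small_data}.}

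The plan is to prove a minimax lower bound for $\ell_1$-risk estimation of a multinomial distribution using the empirical estimator by exhibiting an explicit hard instance and lower-bounding its expected $\ell_1$ error. Since we only need a lower bound on $\max_{Q \in \gQ} \expect[\lnorm Q - \widehat Q \rnorm_1]$, it suffices to pick \emph{one} convenient distribution $Q$ and compute (or lower-bound) the corresponding risk; there is no need for the full Le Cam / Assouad machinery since the estimator is fixed rather than arbitrary.

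For the small-sample regime $N \precsim |\gX|$, I would take $Q$ to be the uniform distribution on $\gX$, so $Q(i) = 1/|\gX|$ for each $i$. Writing $N_i = \sum_j \indict(X_j = i)$ for the observed count of symbol $i$, we have $\widehat Q(i) = N_i/N$ and $\lnorm Q - \widehat Q \rnorm_1 = \sum_{i} \abs{1/|\gX| - N_i/N}$. The key observation is that when $N \precsim |\gX|$, a constant fraction of the symbols are \emph{never} observed (this is a balls-in-bins / birthday-type fact: the expected number of empty bins is $|\gX|(1-1/|\gX|)^N \gtrsim |\gX|$ when $N \lesssim |\gX|$). For each such unobserved symbol $i$, the contribution to the $\ell_1$ error is exactly $1/|\gX|$. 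Summing over the $\Omega(|\gX|)$ empty bins gives $\expect[\lnorm Q - \widehat Q\rnorm_1] \gtrsim |\gX| \cdot (1/|\gX|) = \Omega(1)$. Making this rigorous requires only: (i) $\expect[\#\{\text{empty bins}\}] = |\gX|(1-1/|\gX|)^N$, and (ii) the elementary inequality $(1-1/|\gX|)^N \geq (1 - 1/|\gX|)^{c|\gX|} \geq$ some positive constant when $N \leq c|\gX|$, e.g. using $(1-x)^{1/x} \geq 1/4$ for $x \in (0,1/2]$.

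For the large-sample regime $N \succsim |\gX|$, again take $Q$ uniform. Now each $N_i \sim \mathrm{Binomial}(N, 1/|\gX|)$ has mean $N/|\gX|$ and variance $\approx N/|\gX|$, so $\abs{N_i/N - 1/|\gX|}$ has expectation of order $\sqrt{\Var(N_i)}/N \approx \sqrt{N/|\gX|}/N = 1/\sqrt{N|\gX|}$. Summing the $|\gX|$ coordinates gives $\expect[\lnorm Q - \widehat Q\rnorm_1] \gtrsim |\gX| \cdot (1/\sqrt{N|\gX|}) = \sqrt{|\gX|/N}$. To make the per-coordinate bound rigorous I would use a lower bound on the mean absolute deviation of a binomial: either a Paley–Zygmund-type argument (since $\expect[(N_i - \expect N_i)^2] = \Var(N_i)$ and the fourth moment is controlled, $\expect\abs{N_i - \expect N_i} \gtrsim \sqrt{\Var(N_i)}$), or the classical fact that for $\mathrm{Binomial}(n,p)$ with $np(1-p) \gtrsim 1$ the mean absolute deviation is $\Theta(\sqrt{np(1-p)})$; here $np = N/|\gX| \gtrsim 1$ exactly in this regime, which is why the threshold $N \succsim |\gX|$ appears. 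Linearity of expectation then assembles the coordinatewise bounds into the claimed $\sqrt{|\gX|/N}$ without any independence issues.

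The main obstacle is the per-coordinate anti-concentration in the large-sample case: showing $\expect\abs{N_i - N/|\gX|} \gtrsim \sqrt{N/|\gX|}$ uniformly requires that $N/|\gX|$ be bounded below by a constant (so the binomial is non-degenerate), which is precisely the regime assumption, and requires a clean moment or CLT-type argument rather than a one-line estimate. In the small-sample case the analogous subtlety is controlling the empty-bin count from below; expectation suffices here, so no concentration is needed, making that case the easier of the two. I would present the uniform-distribution instance once and handle the two regimes as two short lemmas on binomial deviations.
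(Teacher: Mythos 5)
Your proposal is correct, and the two halves compare differently against the paper. For the small-sample regime your argument is essentially the paper's: both lower-bound the $\ell_1$ error by the missing mass $\sum_i Q(i)\,\indict\{N_i=0\}$, whose expectation is $\sum_i Q(i)(1-Q(i))^N \succsim 1$ when $N \precsim |\gX|$; the paper uses a near-uniform distribution (mass $1/(|\gX|+1)$ on $|\gX|-1$ symbols plus a remainder) where you use the exactly uniform one, an immaterial difference. For the large-sample regime you genuinely diverge: the paper does not prove this half at all but simply cites \citep[Lemma 8]{kamath2015learning}, whereas you give a self-contained argument via the mean absolute deviation of $\mathrm{Binomial}(N,1/|\gX|)$ coordinates assembled by linearity of expectation. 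That argument is sound: the only nontrivial step is the per-coordinate anti-concentration $\expect\labs N_i - N/|\gX| \rabs \succsim \sqrt{N/|\gX|}$, which you correctly flag and which follows cleanly from H\"older applied to the second and fourth central moments (for $Y = N_i - \expect N_i$ one has $\expect[Y^2] \le (\expect\labs Y\rabs)^{2/3}(\expect[Y^4])^{1/3}$, and $\expect[Y^4] \precsim (\Var(N_i))^2$ once $N/|\gX| \succsim 1$, which is exactly the regime assumption). Your route buys a fully self-contained proof at the cost of one extra moment computation; the paper's buys brevity by outsourcing the harder half.
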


The proof of \cref{thm:tv_error_lower_bound_small_data} can be found in Appendix \ref{appendix:proof_thm:tv_error_lower_bound_small_data}. We should note that while a lower bound has been previously established in the large sample regime \citep{kamath2015learning, han2015minimax}, our contribution is providing a lower bound in the small sample regime. By utilizing \cref{proposition:ail_policy_value_gap_standard_imitation} and \cref{thm:tv_error_lower_bound_small_data}, we can derive the following lower bound for the imitation gap.

\begin{prop}  \label{prop:lower_bound_vail}
To break the tie, suppose that {TV-AIL} outputs an optimal policy $\piail$ by uniformly sampling from all possible optimal solutions. Then, there exists a tabular and episodic MDP satisfying \cref{asmp:standard_imitation} such that

\begin{align*}
    V({\piE}) - \expect \ls V(\piail) \rs \geq \Omega \lp \min \lb H, H\sqrt{ \frac{|\gS|}{N}}  \rb \rp.
\end{align*}
\end{prop}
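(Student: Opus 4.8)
The plan is to instantiate a single MDP satisfying \cref{asmp:standard_imitation} and then chain together the structural characterization of TV-AIL's optimal solutions in \cref{proposition:ail_policy_value_gap_standard_imitation} with the estimation lower bound of \cref{thm:tv_error_lower_bound_small_data}. First I would fix the instance: because every state is absorbing and all actions induce the trivial transition, every rollout is a constant trajectory $s_1 = s_2 = \cdots = s_H$, so for \emph{every} $h$ the empirical marginal $\widehat{d^{\piE}_h}(\cdot)$ coincides with the empirical distribution $\widehat{\rho}$ of the $N$ i.i.d.\ initial states, while $d^{\piE}_h(\cdot) = \rho(\cdot)$. Using that the expert is deterministic (so the state-action and state versions of the $\ell_1$ error agree), this gives $\sum_{h=1}^H \lnorm d^{\piE}_h - \widehat{d^{\piE}_h} \rnorm_1 = H \lnorm \rho - \widehat{\rho} \rnorm_1$. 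I would then choose $\rho$ to be the worst-case multinomial distribution on $\gX = \gS$ witnessing \cref{thm:tv_error_lower_bound_small_data}, so that $\expect \lnorm \rho - \widehat{\rho} \rnorm_1 \succsim \min\{1, \sqrt{|\gS|/N}\}$ for all $N$.

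Next I would control the effect of the uniform tie-breaking rule. From (the proof of) \cref{proposition:ail_policy_value_gap_standard_imitation}, the TV-AIL objective decouples across time steps and across states, so $\Pi^{\ail}$ is the product, over $h \in [H]$ and $s \in \gS$, of the single-block constraint on $\pi_h(\cdot|s)$: namely $\pi_h(a^1|s) \in [\widehat{d^{\piE}_h}(s)/\rho(s), 1]$ for $s \in \gW_h$ and $\pi_h(a^1|s) = 1$ for $s \in \gW_h^c$, with the remaining $|\gA|-1$ coordinates free on the corresponding scaled simplex. Since $V(\pi) = \sum_h \sum_s \rho(s)\pi_h(a^1|s)$ and $V(\piE) = H$ for these MDPs, a sampled optimal policy $\piail$ has imitation gap $\sum_h \sum_{s \in \gW_h} \rho(s)\big(1 - \piail_h(a^1|s)\big)$. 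Under the uniform measure on a block, the coordinate $p := \pi_h(a^1|s)$ with $s \in \gW_h$ has density proportional to $(1-p)^{|\gA|-2}$ on $[c,1]$ with $c = \widehat{d^{\piE}_h}(s)/\rho(s)$, whence $\expect[1-p] = \tfrac{|\gA|-1}{|\gA|}(1-c) \geq \tfrac12 (1-c)$. Summing over $s \in \gW_h$ and using $\sum_{s \in \gW_h}(\rho(s) - \widehat{d^{\piE}_h}(s)) = \tfrac12 \lnorm d^{\piE}_h - \widehat{d^{\piE}_h} \rnorm_1$, then over $h$ and over the data, I obtain $\expect[V(\piE) - V(\piail)] \geq \tfrac14 \, \expect \big[ \sum_{h} \lnorm d^{\piE}_h - \widehat{d^{\piE}_h} \rnorm_1 \big]$, i.e.\ a constant fraction of the worst-case gap in \cref{proposition:ail_policy_value_gap_standard_imitation}.

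Combining these two bounds, $\expect[V(\piE) - V(\piail)] \geq \tfrac14 H \, \expect \lnorm \rho - \widehat{\rho} \rnorm_1 \succsim H \min\{1, \sqrt{|\gS|/N}\} = \min\{H, H\sqrt{|\gS|/N}\}$, which is the claim (and the maximal gap $H$ is also attained trivially, matching the $\min$). I expect the only nonroutine step to be the second paragraph: one must verify that the uniform distribution on the optimal-solution polytope places an $\Omega(1)$ fraction of its mass away from the ``benign vertex'' $\pi_h(a^1|s) = \widehat{d^{\piE}_h}(s)/\rho(s)$, which requires identifying the geometry of $\Pi^{\ail}$ as a product of scaled simplices and computing the Beta-type marginal law of the $a^1$-coordinate; everything else is bookkeeping plus invocations of \cref{proposition:ail_policy_value_gap_standard_imitation} and \cref{thm:tv_error_lower_bound_small_data}. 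If one wished to sidestep the tie-breaking issue, the same instance together with the third bullet of \cref{proposition:ail_policy_value_gap_standard_imitation} already yields the lower bound for the \emph{worst} optimal solution; the stated uniform-sampling version is strictly stronger and is exactly what the extra computation buys.
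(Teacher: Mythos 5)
Your proposal is correct and follows essentially the same route as the paper: instantiate an instance of \cref{asmp:standard_imitation} (where absorbing states make every $\widehat{d^{\piE}_h}$ equal the empirical initial distribution), invoke the characterization of $\Pi^{\ail}$ from \cref{proposition:ail_policy_value_gap_standard_imitation} to show the expected gap under uniform tie-breaking is a constant fraction of $\sum_h \Vert d^{\piE}_h - \widehat{d^{\piE}_h}\Vert_1$, and finish with \cref{thm:tv_error_lower_bound_small_data}. The one place you diverge is the tie-breaking step: the paper simply draws each coordinate $\piail_h(a^1|s)$ uniformly from $[\widehat{d^{\piE}_h}(s)/\rho(s),1]$ and gets the factor $\tfrac12$ exactly, whereas you take the uniform measure on the full optimal polytope and compute the resulting $(1-p)^{|\gA|-2}$ marginal --- a slightly more literal reading of ``uniformly sampling from all possible optimal solutions'' that yields the same bound up to constants.
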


\begin{rem}
The established lower bound $\Omega (H, H \sqrt{|\gS| / N})$ matches the upper bound $\gO (H, H \sqrt{|\gS| / N})$ in \cref{thm:worst_case_vail}, demonstrating our result is tight. Moreover, using \cref{thm:worst_case_vail}, we can confirm that in the worst-case scenario, the imitation gap of TV-AIL must have a linear relationship with $H$. As a result, the question that was posed at the beginning of this section cannot be answered positively: there cannot be a pleasant imitation gap that is free from any horizon limitations for any instance.
\end{rem}

We further empirically validate this lower bound. Specifically, we test BC, TV-AIL and other representative AIL methods such as FEM, GTAL and GAIL on instances satisfying \cref{asmp:standard_imitation}. The imitation gaps on both small and large sample regimes are reported in \cref{tab:standard_imitation_other_ail_small_data} and \cref{tab:standard_imitation_other_ail_large_data}, respectively. Across both regimes, we observe that the imitation gaps of both BC and TV-AIL increase when the horizon grows, closely matching our theoretical prediction in \cref{prop:lower_bound_vail}. Besides, other AIL methods such as FEM, GTAL and GAIL exhibit qualitatively similar behavior, suggesting that this phenomenon is not specific to TV-AIL.

Finally, we emphasize that the worst-case scenario for TV-AIL, where the imitation gap suffers a linear dependence on the horizon $H$, may rarely occur in practice due to the unique natures of isolation and self-absorption. Therefore, the lower bound we have established does not contradict the observed excellent performance of AIL methods in practice. Instead, this result can provide insights into when TV-AIL may fail and what factors (e.g., expert action transitions are discriminative with respect to non-expert actions and there exist reachable bad self-absorbing states) are crucial to its success.

\begin{table}[htbp]
\centering
\caption{Imitation gap on the lower bound instance with $N=1$.}
\label{tab:standard_imitation_other_ail_small_data}
\begin{tabular}{@{}ccccc@{}}
\toprule
         &$H=100$  & $H=500$  & $H=1000$ &$H=2000$ \\ \midrule
BC & \meanstd{49.49}{0.24} & \meanstd{247.54}{0.48} & \meanstd{494.7}{0.6} & \meanstd{990.25}{1.19} \\
         TV-AIL & \meanstd{49.50}{0.01} & \meanstd{247.50}{0.00} & \meanstd{495.00}{0.01} & \meanstd{990.00}{0.00} \\
         FEM &  \meanstd{49.50}{0.00} & \meanstd{247.50}{0.00} & \meanstd{495.00}{0.00} & \meanstd{990.00}{0.00}       \\
GTAL     &   \meanstd{50.05}{4.93} & \meanstd{250.52}{3.96}  & \meanstd{495.05}{4.95}   & \meanstd{989.06}{4.85} \\
GAIL &   \meanstd{49.50}{0.00} & \meanstd{247.50}{0.00} & \meanstd{495.00}{0.00} &  \meanstd{990.00}{0.00} \\
\bottomrule
\end{tabular}

\end{table}

\begin{table}[htbp]
\centering
\caption{Imitation gap on the lower bound instance with $N=100$.}
\label{tab:standard_imitation_other_ail_large_data}
\begin{tabular}{@{}ccccc@{}}
\toprule
         &$H=100$  & $H=500$  & $H=1000$ &$H=2000$ \\ \midrule
                  BC & \meanstd{18.17}{1.56} & \meanstd{94.12}{6.93} & \meanstd{179.74}{16.36} & \meanstd{359.87}{38.14}
         \\
         TV-AIL & \meanstd{18.23}{1.63} & \meanstd{94.30}{6.85} & \meanstd{179.80}{16.31} & \meanstd{360.05}{38.21} \\
         FEM & \meanstd{18.18}{1.63} & \meanstd{92.85}{7.06} & \meanstd{192.21}{15.89} & \meanstd{378.15}{30.52}       \\
GTAL     & \meanstd{21.52}{2.22} & \meanstd{94.63}{7.83}  & \meanstd{188.02}{15.06}  & \meanstd{373.98}{30.82} \\
GAIL &    \meanstd{18.27}{1.62}  & \meanstd{91.71}{7.55} & \meanstd{184.34}{14.34} & \meanstd{371.09}{30.64}
\\
\bottomrule
\end{tabular}
\end{table}

\section{Conclusion}
\label{sec:conclusion}

This paper introduces a new theoretical framework to explain the success of adversarial imitation learning (AIL) methods in matching expert performance with limited demonstrations. We begin by identifying a class of MDPs abstracted from locomotion control tasks where AIL excels empirically. In these MDPs, we prove that TV-AIL, a representative AIL method, can achieve a horizon-free imitation gap bound that is meaningful in the small sample regime. This sharp theory is proved through a newly developed stage-coupled analysis. This technique reveals a key mechanism: the stage-coupling structure inherent in distribution matching enables TV-AIL to identify the expert action on states out of the demonstration distribution. This provides fundamental insights into how AIL mitigates the distribution shift issue. Finally, our theory provides guidance for practitioners: AIL is most effective on tasks where policy decisions substantially shape future state distributions, helping determine when AIL is the appropriate method.

There are several promising avenues for future research in this area. One direction is to explore the use of function approximation in AIL. This paper focused on tabular AIL, where there is no extrapolation, but it demonstrated that TV-AIL can generalize well on non-visited states under certain assumptions. However, extending AIL to parameterized functions may present new challenges, and more assumptions are needed to establish a horizon-free imitation gap bound. Furthermore, in the function approximation setting, it has been empirically validated that AIL methods offer the distinct advantage of learning good features \citep{yunzhu2017infogail}. Therefore, it would be interesting to investigate the theoretical understanding of feature learning in AIL.

Another direction is to investigate AIL methods for other problems related to imitating policies. For example, imitation learning approaches can be used to recover environment transitions \citep{venkatraman2015, xu2020error}, which are critical for model-based reinforcement learning methods \citep{sutton2018reinforcement}. It would be interesting to explore whether the nice horizon-free guarantee also applies in the context of environment learning.

\section*{Acknowledgments}

The work of Tian Xu is supported by the Fundamental Research Program for Young Scholars (PhD Candidates) of the National Science Foundation of China (623B2049). The work of Yang Yu is supported by the National Key Research and Development Program of China (2024CSJZN00300), NSFC (62495093), and Jiangsu Science Foundation (BK20243039). The work of Zhi-Quan Luo is supported by the National Natural Science Foundation of China (No. 61731018) and the Guangdong Provincial Key Laboratory of Big Data Computation Theories and Methods.

\bibliographystyle{abbrvnat}
\bibliography{reference.bib}

@inproceedings{Brantley20disagreement,
 author = {Kiant{\'{e}} Brantley and
Wen Sun and
Mikael Henaff},
 booktitle = {Proceedings of the 8th International Conference on Learning Representations},
 title = {Disagreement-Regularized Imitation Learning},
 year = {2020}
}

@article{cai2019lqr,
 author = {Qi Cai and
Mingyi Hong and
Yongxin Chen and
Zhaoran Wang},
 journal = {ar{X}iv},
 title = {On the Global Convergence of Imitation Learning: {A} Case for Linear
Quadratic Regulator},
 volume = {1901.03674},
 year = {2019}
}

@inproceedings{wang2020computation,
 author = {               Yizhou Wang and
Tianyi Liu and
Zhuoran Yang and
Xingguo Li and
Zhaoran Wang and
Tuo Zhao},
 booktitle = {Proceedings of the 8th International Conference on Learning Representations},
 title = {On Computation and Generalization of Generative Adversarial Imitation
Learning},
 year = {2020}
}

@inproceedings{chi20reward-free,
 author = {Chi Jin and
Akshay Krishnamurthy and
Max Simchowitz and
Tiancheng Yu},
 booktitle = {Proceedings of the 37th International Conference on Machine Learning},
 pages = {4870--4879},
 title = {Reward-Free Exploration for Reinforcement Learning},
 year = {2020}
}

@inproceedings{zhang2020generative,
  title = 	 {Generative Adversarial Imitation Learning with Neural Network Parameterization: Global Optimality and Convergence Rate},
  author =       {Zhang, Yufeng and Cai, Qi and Yang, Zhuoran and Wang, Zhaoran},
  booktitle = 	 {Proceedings of the 37th International Conference on Machine Learning},
  pages = 	 {11044--11054},
  year = 	 {2020},
}

@inproceedings{fu2018airl,
 author = {Justin Fu and Katie Luo and Sergey Levine},
 booktitle = {Proceedings of the 6th International Conference on Learning Representations},
 title = {Learning Robust Rewards with Adverserial Inverse Reinforcement Learning},
 year = {2018}
}

@inproceedings{ghasemipour2019divergence,
 author = {Seyed Kamyar Seyed Ghasemipour and
Richard S. Zemel and
Shixiang Gu},
 booktitle = {Proceedings of the 3rd Annual Conference on Robot Learning},
 pages = {1259--1277},
 title = {A Divergence Minimization Perspective on Imitation Learning Methods},
 year = {2019}
}

@inproceedings{haarnoja2018sac,
 author = {Tuomas Haarnoja and
Aurick Zhou and
Pieter Abbeel and
Sergey Levine},
 booktitle = {Proceedings of the 35th International Conference on Machine Learning},
 pages = {1856--1865},
 title = {Soft Actor-Critic: Off-Policy Maximum Entropy Deep Reinforcement Learning
with a Stochastic Actor},
 year = {2018}
}

@inproceedings{ho2016gail,
 author = {Jonathan Ho and
Stefano Ermon},
 booktitle = {Advances in Neural Information Processing Systems 29},
 pages = {4565--4573},
 title = {Generative Adversarial Imitation Learning},
 year = {2016}
}

@inproceedings{ke2019imitation,
  title={Imitation learning as f-divergence minimization},
  author={Ke, Liyiming and 
  Choudhury, Sanjiban and Barnes, Matt and Sun, Wen and Lee, Gilwoo and Srinivasa, Siddhartha},
  booktitle={International Workshop on the Algorithmic Foundations of Robotics},
  pages={313--329},
  year={2020}
}

@inproceedings{Kostrikov19dac,
 author = {Ilya Kostrikov and
Kumar Krishna Agrawal and
Debidatta Dwibedi and
Sergey Levine and
Jonathan Tompson},
 booktitle = {Proceedings of the 7th International Conference on Learning Representations},
 title = {Discriminator-Actor-Critic: Addressing Sample Inefficiency and Reward
Bias in Adversarial Imitation Learning},
 year = {2019}
}

@inproceedings{Kostrikov20value_dice,
 author = {Ilya Kostrikov and
Ofir Nachum and
Jonathan Tompson},
 booktitle = {Proceedings of the 8th International Conference on Learning Representations},
 title = {Imitation Learning via Off-Policy Distribution Matching},
 year = {2020}
}

@article{levin16_end_to_end,
 author = {Sergey Levine and
Chelsea Finn and
Trevor Darrell and
Pieter Abbeel},
 journal = {Journal of Machine Learning Research},
 number = {39},
 pages = {1--40},
 title = {End-to-End Training of Deep Visuomotor Policies},
 volume = {17},
 year = {2016}
}

@article{liu2021provably,
 author = {Liu, Zhihan and Zhang, Yufeng and Fu, Zuyue and Yang, Zhuoran and Wang, Zhaoran},
 journal = {arXiv},
 title = {Provably Efficient Generative Adversarial Imitation Learning for Online and Offline Setting with Linear Function Approximation},
 volume = {2108.08765},
 year = {2021}
}

@article{mcallester03concentration,
 author = {David A. McAllester and
Luis E. Ortiz},
 journal = {Journal of  Machine Learning Research},
 pages = {895--911},
 title = {Concentration Inequalities for the Missing Mass and for Histogram
Rule Error},
 volume = {4},
 year = {2003}
}

@article{good1953population,
  title={The population frequencies of species and the estimation of population parameters},
  author={Good, Irving J},
  journal={Biometrika},
  volume={40},
  number={3-4},
  pages={237--264},
  year={1953},
}

@inproceedings{menard20fast-active-learning,
 author = {Pierre M{\'{e}}nard and
Omar Darwiche Domingues and
Anders Jonsson and
Emilie Kaufmann and
Edouard Leurent and
Michal Valko},
 booktitle = {Proceedings of the 38th International Conference on Machine Learning},
 pages = {7599--7608},
 title = {Fast active learning for pure exploration in reinforcement learning},
 year = {2021}
}

@article{nived2021provably,
 author = {Nived Rajaraman and
Yanjun Han and
Lin F. Yang and
Kannan Ramchandran and
Jiantao Jiao},
 journal = {ar{X}iv},
 title = {Provably Breaking the Quadratic Error Compounding Barrier in Imitation
Learning, Optimally},
 volume = {2102.12948},
 year = {2021}
}

@article{Orabona19a_modern_introduction_to_ol,
 author = {Francesco Orabona},
 journal = {ar{X}iv},
 title = {A Modern Introduction to Online Learning},
 volume = {1912.13213},
 year = {2019}
}

@inproceedings{yunzhu2017infogail,
  author       = {Yunzhu Li and
                  Jiaming Song and
                  Stefano Ermon},
  title        = {InfoGAIL: Interpretable Imitation Learning from Visual Demonstrations},
  booktitle    = {Advances in Neural Information Processing Systems 30},
  pages        = {3812--3822},
  year         = {2017}
}

@inproceedings{pieter04apprentice,
 author = {Pieter Abbeel and
Andrew Y. Ng},
 booktitle = {Proceedings of the 21st International Conference on Machine Learning},
 title = {Apprenticeship learning via inverse reinforcement learning},
 pages = {1--8},
 year = {2004}
}

@article{Pomerleau91bc,
 author = {Dean Pomerleau},
 journal = {Neural Computation},
 number = {1},
 pages = {88--97},
 title = {Efficient Training of Artificial Neural Networks for Autonomous Navigation},
 volume = {3},
 year = {1991}
}

@book{puterman2014markov,
 author = {Martin L. Puterman},
 publisher = {John Wiley \& Sons},
 title = {Markov Decision Processes: Discrete Stochastic Dynamic Programming},
 year = {2014}
}

@inproceedings{rajaraman2020fundamental,
 author = {Nived Rajaraman and
Lin F. Yang and
Jiantao Jiao and
Kannan Ramchandran},
 booktitle = {Advances in Neural Information Processing Systems 33},
 pages = {2914--2924},
 title = {Toward the Fundamental Limits of Imitation Learning},
 year = {2020}
}

@inproceedings{ross11dagger,
 author = {St{\'{e}}phane Ross and
Geoffrey J. Gordon and
Drew Bagnell},
 booktitle = {Proceedings of the 14th International Conference on Artificial
Intelligence and Statistics},
 pages = {627--635},
 title = {A Reduction of Imitation Learning and Structured Prediction to No-Regret
Online Learning},
 year = {2011}
}

@inproceedings{ross2010efficient,
 author = {Ross, St{\'e}phane and Bagnell, Drew},
 booktitle = {Proceedings of the 13rd International Conference on Artificial Intelligence and Statistics},
 pages = {661--668},
 title = {Efficient reductions for imitation learning},
 year = {2010}
}

@article{shalev12online-learning,
 author = {Shai Shalev{-}Shwartz},
 journal = {Foundations and Trends in Machine Learning},
 number = {2},
 pages = {107--194},
 title = {Online Learning and Online Convex Optimization},
 volume = {4},
 year = {2012}
}

@article{silver2016mastering,
 author = {Silver, David and Huang, Aja and Maddison, Chris J and Guez, Arthur and Sifre, Laurent and Van Den Driessche, George and Schrittwieser, Julian and Antonoglou, Ioannis and Panneershelvam, Veda and Lanctot, Marc and others},
 journal = {Nature},
 number = {7587},
 pages = {484--489},
 title = {Mastering the game of Go with deep neural networks and tree search},
 volume = {529},
 year = {2016}
}

@book{sutton2018reinforcement,
 author = {Sutton, Richard S and Barto, Andrew G},
 publisher = {MIT press},
 title = {Reinforcement {L}earning: {A}n {I}ntroduction},
 year = {2018}
}

@inproceedings{syed07game,
 author = {Umar Syed and
Robert E. Schapire},
 booktitle = {Advances in Neural Information Processing Systems 20},
 pages = {1449--1456},
 title = {A Game-Theoretic Approach to Apprenticeship Learning},
 year = {2007}
}

@inproceedings{syed08lp,
 author = {Umar Syed and
Michael H. Bowling and
Robert E. Schapire},
 booktitle = {Proceedings of the 25th International Conference on Machine Learning},
 pages = {1032--1039},
 title = {Apprenticeship learning using linear programming},
 year = {2008}
}

@article{weissman2003inequalities,
 author = {Weissman, Tsachy and Ordentlich, Erik and Seroussi, Gadiel and Verdu, Sergio and Weinberger, Marcelo J},
 journal = {Hewlett-Packard Labs, Techical Report},
 title = {Inequalities for the L1 deviation of the empirical distribution},
 year = {2003}
}

@inproceedings{xu2020error,
 author = {Tian Xu and
Ziniu Li and
Yang Yu},
 booktitle = {Advances in Neural Information Processing Systems 33},
 pages = {15737--15749},
 title = {Error Bounds of Imitating Policies and Environments},
 year = {2020}
}

@article{xu2021error,
 author = {Xu, Tian and Li, Ziniu and Yu, Yang},
 publisher = {IEEE},
 title = {Error Bounds of Imitating Policies and Environments for Reinforcement Learning},
  journal={IEEE Transactions on Pattern Analysis and Machine Intelligence},
  volume={44},
  number={10},
  pages={6968--6980},
  year={2021},
}

@article{xu2021nearly,
  title={More Efficient Adversarial Imitation Learning Algorithms With Known and Unknown Transitions},
  author={Xu, Tian and Li, Ziniu and Yu, Yang},
  volume={2106.10424, v2},
  journal={arXiv},
  year={2021}
}

@article{rajaraman2021value,
  title={On the Value of Interaction and Function Approximation in Imitation Learning},
  author={Rajaraman, Nived and Han, Yanjun and Yang, Lin and Liu, Jingbo and Jiao, Jiantao and Ramchandran, Kannan},
  journal={Advances in Neural Information Processing Systems 34},
  year={2021}
}

@article{argall2009survey,
  title={A survey of robot learning from demonstration},
  author={Argall, Brenna D and Chernova, Sonia and Veloso, Manuela and Browning, Brett},
  journal={Robotics and autonomous systems},
  volume={57},
  number={5},
  pages={469--483},
  year={2009},
}

@article{hussein2017survey,
  author    = {Ahmed Hussein and
               Mohamed Medhat Gaber and
               Eyad Elyan and
               Chrisina Jayne},
  title     = {Imitation Learning: {A} Survey of Learning Methods},
  journal   = {{ACM} Computing Surveys},
  volume    = {50},
  number    = {2},
  pages     = {1--35},
  year      = {2017},
}

@article{osa2018survey,
  author    = {Takayuki Osa and
               Joni Pajarinen and
               Gerhard Neumann and
               J. Andrew Bagnell and
               Pieter Abbeel and
               Jan Peters},
  title     = {An Algorithmic Perspective on Imitation Learning},
  journal   = {Foundations and Trends in Robotic},
  volume    = {7},
  number    = {1-2},
  pages     = {1--179},
  year      = {2018},
}

@article{orsini2021what,
  title     = {What Matters for Adversarial Imitation Learning?},
  author    = {Manu Orsini and
               Anton Raichuk and
               L{\'{e}}onard Hussenot and
               Damien Vincent and
               Robert Dadashi and
               Sertan Girgin and
               Matthieu Geist and
               Olivier Bachem and
               Olivier Pietquin and
               Marcin Andrychowicz},
  journal={Advances in Neural Information Processing Systems 34},
  year={2021}
}

@book{bertsekas2012dynamic,
  title={Dynamic Programming and Optimal Control: Volume I},
  author={Bertsekas, Dimitri},
  year={2012},
  publisher={Athena scientific}
}

@inproceedings{li2022rethinking,
  title={Rethinking ValueDice: Does It Really Improve Performance?},
  author={Ziniu Li and
    Tian Xu and
    Yang Yu and
    Zhi-Quan Luo},
  booktitle={Proceedings of the 11st International Conference on Learning Representations},
  year={2022},
}

@inproceedings{sun2019provably,
  title={Provably efficient imitation learning from observation alone},
  author={Sun, Wen and Vemula, Anirudh and Boots, Byron and Bagnell, Drew},
  booktitle={Proceeding of the 36th International Conference on Machine Learning},
  pages={6036--6045},
  year={2019},
}

@inproceedings{swamy2021moments,
  title={Of moments and matching: A game-theoretic framework for closing the imitation gap},
  author={Swamy, Gokul and Choudhury, Sanjiban and Bagnell, J Andrew and Wu, Steven},
  booktitle={Proceeding of the 38th International Conference on Machine Learning},
  pages={10022--10032},
  year={2021},
}

@inproceedings{dadashi2021primal,
  author    = {Robert Dadashi and
               L{\'{e}}onard Hussenot and
               Matthieu Geist and
               Olivier Pietquin},
  title     = {Primal Wasserstein Imitation Learning},
  booktitle = {Proceeedings of the 9th International Conference on Learning Representations},
  year      = {2021},
}

@inproceedings{ziebart2008maximum,
  author    = {Brian D. Ziebart and
               Andrew L. Maas and
               J. Andrew Bagnell and
               Anind K. Dey},
  title     = {Maximum Entropy Inverse Reinforcement Learning},
  booktitle = {Proceedings of the 23rd {AAAI} Conference on Artificial Intelligence},
  pages     = {1433--1438},
  year      = {2008},
}

@inproceedings{kamath2015learning,
  title={On learning distributions from their samples},
  author={Kamath, Sudeep and Orlitsky, Alon and Pichapati, Dheeraj and Suresh, Ananda Theertha},
  booktitle={Proceedings of the 28th Conference on Learning Theory},
  pages={1066--1100},
  year={2015},
}

@article{han2015minimax,
  author    = {Yanjun Han and
               Jiantao Jiao and
               Tsachy Weissman},
  title     = {Minimax Estimation of Discrete Distributions Under $\ell_1$  Loss},
  journal   = {{IEEE} Transactions on Information Theory},
  volume    = {61},
  number    = {11},
  pages     = {6343--6354},
  year      = {2015},
}

@inproceedings{syed2010reduction,
  author    = {Umar Syed and
               Robert E. Schapire},
  title     = {A Reduction from Apprenticeship Learning to Classification},
  booktitle = {Advances in Neural Information Processing Systems 23},
  pages     = {2253--2261},
  year      = {2010},
}

@book{yosida2012functional,
  title={Functional analysis},
  author={Yosida, K{\"o}saku},
  year={2012},
  publisher={Springer Science \& Business Media}
}

@inproceedings{cai2021imitation,
  author    = {Xin{-}Qiang Cai and
               Yao{-}Xiang Ding and
               Yuan Jiang and
               Zhi{-}Hua Zhou},
  title     = {Imitation Learning from Pixel-Level Demonstrations by HashReward},
  booktitle = {Proceedings of the 20th International Conference on Autonomous Agents and
               Multiagent Systems},
  pages     = {279--287},
  year      = {2021},
}

@inproceedings{yu2020intrinsic,
  author    = {Xingrui Yu and
               Yueming Lyu and
               Ivor W. Tsang},
  title     = {Intrinsic Reward Driven Imitation Learning via Generative Model},
  booktitle = {Proceedings of the 37th International Conference on Machine Learning},
  volume    = {119},
  pages     = {10925--10935},
  year      = {2020},
}

@inproceedings{liu2021energy,
  author    = {Minghuan Liu and
               Tairan He and
               Minkai Xu and
               Weinan Zhang},
  title     = {Energy-Based Imitation Learning},
  booktitle = {Proceedings of the 20th International Conference on Autonomous Agents and
               Multiagent Systems},
  pages     = {809--817},
  year      = {2021},
}

@inproceedings{venkatraman2015,
  author    = {Arun Venkatraman and
               Martial Hebert and
               J. Andrew Bagnell},
  title     = {Improving Multi-Step Prediction of Learned Time Series Models},
  booktitle = {Proceedings of the 29th {AAAI} Conference on Artificial Intelligence},
  pages     = {3024--3030},
  year      = {2015},
}

@article{hornik1989multilayer,
  title={Multilayer feedforward networks are universal approximators},
  author={Hornik, Kurt and Stinchcombe, Maxwell and White, Halbert},
  journal={Neural networks},
  volume={2},
  number={5},
  pages={359--366},
  year={1989}
}

@article{jung2024sample,
  title={Sample-efficient adversarial imitation learning},
  author={Jung, Dahuin and Lee, Hyungyu and Yoon, Sungroh},
  journal={Journal of Machine Learning Research},
  volume={25},
  number={31},
  pages={1--32},
  year={2024}
}

@article{foster2024behavior,
  title={Is behavior cloning all you need? understanding horizon in imitation learning},
  author={Foster, Dylan J and Block, Adam and Misra, Dipendra},
  journal={Advances in Neural Information Processing Systems 37},
  pages={120602--120666},
  year={2024}
}

\clearpage
\appendix

\section{Proof of Results in Section \ref{sec:horizon_free_sample_complexity}}

\subsection{RBAS MDPs and Useful Properties}
\label{appendix:reset_cliff_and_ail_properties}

In this part, we present some useful properties of \textsf{TV-AIL} on RBAS MDPs. For RBAS MDPs, we know the expert policy never visits bad states. Thus, we have the following fact. 
\begin{fact}    \label{fact:ail_estimation}
For any tabular and episodic MDP satisfying \cref{asmp:reset_cliff}, and the estimation $\widehat{d^{\piE}_h} (s, a)$, we have that 
\begin{align*}
   &\forall h \in [H], \forall s \in \badS, \forall a \in \gA: \,  \widehat{d^{\piE}_h}(s) = 0 \text{ and } \widehat{d^{\piE}_h}(s, a) = 0, \\
   &\forall h \in [H]: \, \sum_{s \in \goodS} \widehat{d^{\piE}_h}(s, a^{1}) = 1, \\
   &\forall h \in [H], \forall s \in \goodS, \forall a \ne a^{1}: \,  \widehat{d^{\piE}_h}(s, a) = 0. 
\end{align*}
\end{fact}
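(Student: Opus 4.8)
The plan is to show that, for \emph{every} realization of the expert dataset $\gD$, each expert trajectory stays inside the good states and always plays $a^1$; the three claims then follow by simple inspection of the empirical estimator in \eqref{eq:estimate_by_count}.

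First I would prove the trajectory-level statement: for each $\tr = (s_1, a_1, \ldots, s_H, a_H) \in \gD$ and each $h \in [H]$, one has $s_h \in \goodS$ and $a_h = a^1$. This is an induction on $h$. For the base case, the initial state distribution $\rho$ is supported on $\goodS$ (part of the RBAS setup), so $s_1 \in \goodS$; since the expert is deterministic and plays $a^1$ on every good state by \cref{asmp:reset_cliff}, we get $a_1 = a^1$. For the inductive step, assuming $s_h \in \goodS$ and $a_h = a^1$, the transition structure of \cref{asmp:reset_cliff} (action $a^1$ keeps the process among good states, with full probability mass) gives $s_{h+1} \in \goodS$, and then $a_{h+1} = a^1$ by the same argument as the base case. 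Note there is no randomness along the sample path that could ever escape $\goodS$, so this holds surely, not merely almost surely.

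Given this, the three claims are immediate from \eqref{eq:estimate_by_count}. (i) No trajectory in $\gD$ records a pair $(s, a)$ with $s \in \badS$, so the count is zero and $\widehat{d^{\piE}_h}(s, a) = 0$ for all $h \in [H]$, $s \in \badS$, $a \in \gA$; summing over $a$ gives $\widehat{d^{\piE}_h}(s) = 0$. (ii) Likewise no trajectory records a pair $(s, a)$ with $s \in \goodS$ and $a \ne a^1$, so $\widehat{d^{\piE}_h}(s, a) = 0$ in that case. (iii) For each fixed $h$, the function $\widehat{d^{\piE}_h}(\cdot, \cdot)$ is a probability distribution, since it assigns mass $1/N$ to the time-$h$ state-action pair of each of the $N$ trajectories, hence $\sum_{(s,a)} \widehat{d^{\piE}_h}(s, a) = 1$; because every term vanishes except those of the form $(s, a^1)$ with $s \in \goodS$, we conclude $\sum_{s \in \goodS} \widehat{d^{\piE}_h}(s, a^1) = 1$.

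The statement is elementary and I do not expect any real obstacle; the only point deserving a word is the induction confining the expert to $\goodS$, which relies on $\rho$ charging only good states and on $a^1$ transitioning among good states with full probability mass — both features of the RBAS construction encoded in \cref{asmp:reset_cliff}.
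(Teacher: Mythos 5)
Your proof is correct and is exactly the formalization of the paper's one-line justification (``the expert policy never visits bad states''), which the paper states without further argument. The only point worth flagging is that the two properties you lean on --- $\rho$ being supported on $\goodS$, and action $a^1$ placing its full transition mass on $\goodS$ --- are not literally itemized in \cref{asmp:reset_cliff}, but they are clearly intended and are used in the same way elsewhere in the paper (e.g.\ the identity $\sum_{s \in \goodS} P_{H-1}(s \mid s', a^1) = 1$ invoked in the proof of \cref{prop:ail_general_reset_cliff}), so you are right to single them out as the load-bearing features of the induction.
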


The following lemma states that on RBAS MDPs, in each time step, the optimal solution $\piail$ must take the expert action on certain good state with a positive probability.
\begin{lem}
\label{lem:condition_for_ail_optimal_solution}
For any tabular and episodic MDP satisfying \cref{asmp:reset_cliff}, suppose that $\piail$ is an optimal solution to the state-action distribution matching problem \eqref{eq:ail}. Then for all $h \in [H]$, there exists a state $s \in \goodS$ such that $\piail_h (a^{1} |s) > 0$. Consequently, we have $d^{\piail}_h(s) > 0$ for all $ h \in [H]$ and $s \in \goodS$. 
\end{lem}

\begin{proof}[Proof of \cref{lem:condition_for_ail_optimal_solution}]
The proof is based on contradiction. Assume that the original first statement is false: there exists a policy $\piail$, which is an optimal solution of \eqref{eq:ail}, such that $\exists h \in [H]$, $\forall s \in \goodS$, $\piail_h (a^{1}|s) = 0$. Let $h$ denote the smallest time step index such that $\forall s \in \goodS, \piail_{h} (a^{1}|s) = 0$. It also implies that $\forall s \in \goodS, \sum_{a \in \gA \setminus \{a^1 \}} \piail_{h} (a|s) = 1$.

We construct another policy $\widetilde{\pi}^{\operatorname{AIL}}$, which is only different from $\piail$ in time step $h$. In particular, in time step $h$, we assume that $\widetilde{\pi}^{\operatorname{AIL}} (a^{1}|s) = 1, \forall s \in \goodS$. Here we compare objective values of $\piail$ and $\widetilde{\pi}^{\operatorname{AIL}}$. Since $\piail$ is the same as $\widetilde{\pi}^{\operatorname{AIL}}$ in the first $h-1$ steps, their objective values are the same in the first $h-1$ steps. We only need to compare state-action distribution matching losses from time step $h$. Notice that $d^{\widetilde{\pi}^{\operatorname{AIL}}}_h (s) =d^{\piail}_h (s)$, we obtain
\begin{align*}
    &\quad \Loss_h(\piail) \\
    &= \sum_{(s, a)} \labs \widehat{d^{\piE}_h} (s, a) - d^{\piail}_h (s, a)  \rabs \\
    &= \sum_{s \in \goodS} \bigg[ \labs \widehat{d^{\piE}_h} (s, a^{1}) - d^{\piail}_h (s, a^{1})  \rabs + \sum_{a \ne a^{1}} \labs \widehat{d^{\piE}_h} (s, a) - d^{\piail}_h (s, a)  \rabs \bigg] + \sum_{s \in \badS} \sum_{a} \labs \widehat{d^{\piE}_h} (s, a) - d^{\piail}_h (s, a)  \rabs  \\
    &= \sum_{s \in \goodS} \ls \labs \widehat{d^{\piE}_h} (s, a^{1}) - 0  \rabs + \sum_{a \ne a^{1}} \labs 0 - d^{\piail}_h (s, a)  \rabs \rs + \sum_{s \in \badS} \sum_{a} \labs 0 - d^{\piail}_h (s, a)  \rabs \\
    &=  \sum_{s \in \goodS} \lp \widehat{d^{\piE}_h} (s) + d^{\piail}_h (s) \rp + \sum_{s \in \badS}  d^{\piail}_h (s),
\end{align*}
and 
\begin{align*}
    &\quad \Loss_h(\widetilde{\pi}^{\operatorname{AIL}}) \\
    &= \sum_{(s, a)} \labs \widehat{d^{\piE}_h} (s, a) - d^{\widetilde{\pi}^{\operatorname{AIL}}}_h (s, a)  \rabs \\
    &= \sum_{s \in \goodS} \bigg[ \labs \widehat{d^{\piE}_h} (s, a^{1}) - d^{\widetilde{\pi}^{\operatorname{AIL}}}_h (s, a^{1})  \rabs + \sum_{a \ne a^{1}} \labs \widehat{d^{\piE}_h} (s, a) - d^{\widetilde{\pi}^{\operatorname{AIL}}}_h (s, a)  \rabs \bigg] + \sum_{s \in \badS} \sum_{a} \labs \widehat{d^{\piE}_h} (s, a) - d^{\widetilde{\pi}^{\operatorname{AIL}}}_h (s, a)  \rabs  \\
    &=  \sum_{s \in \goodS} \ls \labs \widehat{d^{\piE}_h} (s, a^{1}) - d^{\widetilde{\pi}^{\operatorname{AIL}}}_h (s, a^{1})  \rabs + \sum_{a \ne a^{1}} \labs 0 - 0 \rabs \rs + \sum_{s \in \badS} \sum_{a} \labs 0 - d^{\piail}_h (s, a)  \rabs \\
    &= \sum_{s \in \goodS} \labs \widehat{d^{\piE}_h} (s) - d^{\piail}_h (s) \rabs + \sum_{s \in \badS}  d^{\piail}_h (s).
\end{align*}
Then we have
\begin{align*}
    &\quad \Loss_h(\widetilde{\pi}^{\operatorname{AIL}}) - \Loss_h(\piail) \\
    &= \sum_{s \in \goodS} \labs \widehat{d^{\piE}_h} (s) - d^{\piail}_h (s) \rabs - \widehat{d^{\piE}_h} (s) - d^{\piail}_h (s) \\
    &< 0,  
\end{align*}
where the last strict inequality follows that there always exists $s \in \goodS$ such that $\widehat{d^{\piE}_h} (s)>0$ and $d^{\piail}_h (s) >0$, so $\vert \widehat{d^{\piE}_h} (s) - d^{\piail}_h (s) \vert < \widehat{d^{\piE}_h} (s) + d^{\piail}_h (s)$. To argue $ d^{\piail}_h (s) >0$ for $s \in \goodS$,  we note that $\forall h^\prime \in [h-1]$, there exists $s \in \goodS$ such that $\piail_{h^\prime}(a^{1} | s) > 0$. With the reachable assumption (refer to \cref{asmp:reset_cliff}) that $\forall s, s^\prime \in \goodS,  P_h (s^\prime |s, a^1) > 0$, we therefore know  $ d^{\piail}_h (s) >0$ for all $s \in \goodS$.
\begin{align*}
    &\quad \text{Loss}_{h^\prime}(\piail) \\
    &=  \sum_{(s, a)} \labs \widehat{d^{\piE}_{h^\prime}} (s, a) - d^{\piail}_{h^\prime} (s, a)  \rabs  \\
    &= \sum_{s \in \goodS} \sum_{a}  \labs \widehat{d^{\piE}_{h^\prime}} (s, a) - d^{\piail}_{h^\prime} (s, a)  \rabs + \sum_{s \in \badS} \sum_{a} \labs \widehat{d^{\piE}_{h^\prime}} (s, a) - d^{\piail}_{h^\prime} (s, a)  \rabs  \\
    &=  \sum_{s \in \goodS} \sum_{a}  \labs \widehat{d^{\piE}_{h^\prime}} (s, a) - 0 \rabs   + \sum_{s \in \badS} \sum_{a} \labs 0 - d^{\piail}_{h^\prime} (s, a)  \rabs \\
    &= \sum_{s \in \goodS} \widehat{d^{\piE}_{h^\prime}} (s) + \sum_{s \in \badS}  d^{\piail}_{h^\prime} (s) \\
    &= 1 + 1 =2,
\end{align*}
which is the maximal value of TV-AIL's objective in each time step. Thus, we have that $\text{Loss}_{h^\prime}(\widetilde{\pi}^{\operatorname{AIL}}) \leq \text{Loss}_{h^\prime}(\piail)$.

Combing the above two arguments, we have that $\sum_{h = 1}^{H} \Loss_h(\widetilde{\pi}^{\operatorname{AIL}})< \sum_{h = 1}^{H} \Loss_h(\piail)$. This contradicts the fact that $\piail$ is the optimal solution to TV-AIL's objective. Hence the original statement is true and we finish the proof of the first statement.

Now we proceed to prove the second statement. The second statement follows the first statement and the properties of RBAS MDPs. The proof is based on the forward induction. In the base step where $h=1$, we directly have that $\forall s \in \goodS, d^{\piail}_1 (s) = \rho (s) > 0$ as the initial state distribution only supports on the set of good states on RBAS MDPs defined in \cref{asmp:reset_cliff}.

In the induction step, we assume that in time step $h$, $\forall s \in \goodS, d^{\piail}_h (s) > 0$. We aim to prove that $\forall s \in \goodS, d^{\piail}_{h+1} (s) > 0$. For each $s \in \goodS$, according to the Bellman flow equation, we have that
\begin{align*}
    d^{\piail}_{h+1} (s) &= \sum_{s^\prime, a^\prime} d^{\piail}_{h} (s^\prime) \piail_h (a^\prime|s^\prime) P_h (s|s^\prime, a^\prime)
    \\
    &= \sum_{s^\prime \in \goodS} d^{\piail}_{h} (s^\prime) \piail_h (a^1|s^\prime) P_h (s|s^\prime, a^1). 
\end{align*}
The last equation follows that only by taking the expert action on good states, the agent can transit into good states. According to the first statement, there exists $\widetilde{s} \in \goodS$ such that $\piail_h (a^1|\widetilde{s}) > 0$. Therefore, we have that
\begin{align*}
    d^{\piail}_{h+1} (s) \geq d^{\piail}_{h} (\widetilde{s}) \piail_h (a^1|\widetilde{s}) P_h (s|\widetilde{s}, a^1). 
\end{align*}
Due to the assumption in the induction step, we have that $d^{\piail}_{h} (\widetilde{s}) > 0$. Due to the reachable property of RBAS MDPs, we have that $P_h (s|\widetilde{s}, a^1) > 0$. In summary, we derive that $d^{\piail}_{h+1} (s) > 0$, which completes the proof in the induction step. Thus, we finish the proof of the second statement.

\end{proof}

\cref{lem:condition_for_ail_optimal_solution} claims that there exists certain good state such that the optimal policy must take the expert action with a positive probability. The following lemma characterizes such states in the last time step.

\begin{lem}
\label{lem:general_condition_ail_policy_at_last_step}
Consider any tabular and episodic MDP satisfying \cref{asmp:reset_cliff}. For the estimation $\widehat{d^{\piE}_h} (s)$, we define the set of visited states as $\gV_{h} := \{ s \in \gS: \widehat{d^{\piE}_h} (s) > 0  \}$. Suppose that $\piail = (\piail_1, \cdots, \piail_H)$ is an optimal solution of \eqref{eq:ail}, then $\forall s \in \gV_{H}$, we have $\piail_{H} \lp a^{1} |s \rp > 0$. 
\end{lem}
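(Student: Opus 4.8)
The plan is a proof by contradiction: assume $\piail$ is a minimizer of \eqref{eq:ail} and yet there is a state $s^\star \in \gV_H$ with $\piail_H(a^1 \mid s^\star) = 0$, and then construct a policy that strictly improves the objective by editing only the action distribution at the single pair $(s^\star, H)$, which contradicts optimality.

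First I would pin down two preliminary facts. Since \cref{fact:ail_estimation} gives $\widehat{d^{\piE}_H}(s) = 0$ for every $s \in \badS$, membership $s^\star \in \gV_H$ (i.e.\ $\widehat{d^{\piE}_H}(s^\star) > 0$) forces $s^\star \in \goodS$; moreover, by \cref{fact:ail_estimation} again, $\widehat{d^{\piE}_H}(s^\star, a^1) = \widehat{d^{\piE}_H}(s^\star) =: p > 0$ while $\widehat{d^{\piE}_H}(s^\star, a) = 0$ for every $a \neq a^1$. Second, because $\piail$ is optimal, \cref{lem:condition_for_ail_optimal_solution} combined with the reachability clause of \cref{asmp:reset_cliff} (as spelled out in \cref{appendix:reset_cliff_and_ail_properties}) gives $d^{\piail}_H(s) > 0$ for every good state; I set $q := d^{\piail}_H(s^\star) > 0$.

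Next I would define $\widetilde{\pi}$ to agree with $\piail$ everywhere except that $\widetilde{\pi}_H(a^1 \mid s^\star) = 1$. The key structural observation is that this edit is purely local: unrolling the Bellman flow \eqref{eq:flow_link} shows that the state marginal $d^{\pi}_h(\cdot)$ depends only on $\pi_1, \dots, \pi_{h-1}$, so $d^{\widetilde{\pi}}_h(\cdot) = d^{\piail}_h(\cdot)$ as state distributions for every $h$, and since the objective in \eqref{eq:ail} sums only over $h \in [H]$, the sole summands that change are the two attached to $s^\star$ inside $\Loss_H$. Using $\piail_H(a^1 \mid s^\star) = 0$ (hence $\sum_{a \neq a^1}\piail_H(a \mid s^\star) = 1$), the $\piail$-value of those two summands is $|p - q\cdot 0| + q = p + q$, while for $\widetilde{\pi}$ it is $|p - q| + 0$. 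Since $p, q > 0$ we have $|p - q| < p + q$, so $\widetilde{\pi}$ strictly lowers the objective of \eqref{eq:ail}, contradicting the optimality of $\piail$. Therefore $\piail_H(a^1 \mid s) > 0$ for all $s \in \gV_H$.

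The only genuinely load-bearing step is the positivity $d^{\piail}_H(s^\star) > 0$: if it failed, the edit at $(s^\star, H)$ would be vacuous and yield no contradiction. This is exactly where optimality of $\piail$ is consumed — via \cref{lem:condition_for_ail_optimal_solution} and the assumption that the expert action keeps all good states mutually reachable — so I expect that to be the main obstacle; the remaining comparison is a one-line piecewise-linear inequality and should present no further difficulty (one could even sharpen the edit to $\widetilde{\pi}_H(a^1 \mid s^\star) = \min\{1, p/q\}$, but this is not needed for a strict improvement).
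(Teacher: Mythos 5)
Your proof is correct and is essentially the paper's argument in a different packaging: the paper routes through \cref{lem:n_vars_opt_greedy_structure} to isolate the last-stage loss and then invokes the one-dimensional \cref{lem:single_variable_opt_condition} (whose proof is exactly your comparison $f(0)-f(1)=p+q-|p-q|>0$), whereas you perform the same local perturbation and comparison explicitly. Both arguments consume optimality in the same place --- establishing $d^{\piail}_H(s^\star)>0$ via \cref{lem:condition_for_ail_optimal_solution} and the reachability clause of \cref{asmp:reset_cliff} --- which you correctly identify as the load-bearing step.
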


\begin{proof}[Proof of \cref{lem:general_condition_ail_policy_at_last_step}]
With \cref{lem:n_vars_opt_greedy_structure}, if $\piail = (\piail_1, \ldots, \piail_{H-1}, \piail_{H})$ is an optimal solution to \eqref{eq:ail}, then we have 
\begin{align*}
    \piail_{H} & \in \argmin_{\pi_{H}}  \sum_{(s, a)} \labs d^{\pi}_H(s) \pi_{H}(a|s) - \widehat{d^{\piE}_H}(s, a) \rabs,
\end{align*}
where $d^{\pi}_{H}(s)$ is computed by $\piail_{1}, \ldots, \piail_{H-1}$. Then, we obtain  
\begin{align*}
   \piail_{H} &\in \argmin_{\pi_{H}}  \bigg\{ \sum_{s \in \gV_H} \sum_{a \in \gA} \labs d^{\pi}_H(s) \pi_{H}(a|s) - \widehat{d^{\piE}_H}(s, a) \rabs + \sum_{s \notin \gV_H} \sum_{a \in \gA} \labs d^{\pi}_H(s) \pi_H(a|s) - \widehat{d^{\piE}_H}(s, a) \rabs \bigg\}
   \\
   &= \argmin_{\pi_{H}} \bigg\{  \sum_{s \in \gV_H} \bigg( \labs d^{\pi}_{H}(s) \pi_H ( a^{1} |s) - \widehat{d^{\piE}_H}(s, a^{1} ) \rabs + \sum_{a \ne a^{1} } \labs d^{\pi}_{H}(s) \pi_H (a|s) - \widehat{d^{\piE}_H}(s, a) \rabs \bigg) 
   \\
   &\; + \sum_{s \notin \gV_H} \sum_{a \in \gA } \labs d^{\pi}_h(s) \pi_H(a|s) - \widehat{d^{\piE}_H}(s, a) \rabs \bigg\}.
\end{align*}
Since $\widehat{d^{\piE}_H}(s, a) = 0$ for $a \not= a^{1}$, we obtain
\begin{align*}
\piail_{H} &\in \argmin_{\pi_{H}} \bigg\{  \sum_{s \in \gV_H} \bigg[ \labs d^{\pi}_H(s) \pi_H ( a^{1} |s) - \widehat{d^{\piE}_H}(s, a^{1}) \rabs + d^{\pi}_H(s) \lp 1 - \pi_H (a^{1} |s) \rp \bigg] 
\\
&\;+ \sum_{s \notin \gV_H} \sum_{a \in \gA } \labs d^{\pi}_H(s) \pi_H(a|s) - \widehat{d^{\piE}_H}(s, a) \rabs \bigg\}
   \\
   &= \argmin_{\pi_{H}}  \bigg\{ \sum_{s \in \gV_H} \bigg[ \labs d^{\pi}_H(s) \pi_H ( a^{1} |s)  - \widehat{d^{\piE}_H}(s, a^{1} ) \rabs - d^{\pi}_H(s) \pi_H ( a^{1} |s)   \bigg] + \sum_{s \notin \gV_H} \sum_{a \in \gA } \labs d^{\pi}_H(s) \pi_H(a|s) - \widehat{d^{\piE}_H}(s, a) \rabs \bigg\}.
\end{align*}
The last equation follows that $d^{\pi}_H(s)$ is independent of $\pi_H$. Note that for different $s \in \gV_H$, $\pi_H (a^{1} |s)$ are independent by the tabular formulation. Thus, we can consider the optimization problem for each $s \in \gV_H$ separately. Specifically, for each $s \in \gV_H$, we have
\begin{align*}
    \piail_{H} (a^{1} |s) &= \argmin_{\pi_H (a^{1} |s)} \bigg\{ \labs d^{\pi}_H(s) \pi_H ( a^{1} |s) - \widehat{d^{\piE}_H}(s, a^{1}) \rabs - d^{\pi}_H(s) \pi_H ( a^{1} |s) \bigg\}.
\end{align*}
For the above one-dimension optimization problem, \cref{lem:single_variable_opt_condition} claims that the optimal solution must be positive, i.e., $\piail_{H} (a^{1} |s)  > 0$. Thus, we finish the proof if we can verify the conditions in \cref{lem:single_variable_opt_condition}. 

In the following part, we verify the conditions required by \cref{lem:single_variable_opt_condition} by setting $a = d^{\pi}_H(s), c = \widehat{d^{\piE}_H}(s, a^{1})$. Since $\piail$ is an optimal solution of TV-AIL's objective, with \cref{lem:condition_for_ail_optimal_solution}, we have that $\forall h \in [H]$, $\exists s \in \goodS$, $\piail_h (a^1 |s) > 0$. With the assumption that $\forall h \in [H], s, s^\prime \in \goodS, P_h (s^\prime |s, a^1) > 0$, we have that $d^{\pi}_h(s) > 0, \forall s \in \goodS$. Based on the definition, for each $s \in \gV_{H}$, $\widehat{d^{\piE}_H}(s, a^{1}) > 0$. Now conditions required by \cref{lem:single_variable_opt_condition} are verified and we obtain that $\piail_H (a^{1} |s) > 0, \forall s \in \gV_{H}$. 
 
\end{proof}

\subsection{Proof of Proposition \ref{prop:ail_general_reset_cliff}}
\label{appendix:proof_of_prop:ail_general_reset_cliff}

\begin{proof}[Proof of \cref{prop:ail_general_reset_cliff}]
The proof is based on backward induction. First, we establish the optimality conditions of multi-stage optimization in the backward induction proof. From \cref{lem:n_vars_opt_greedy_structure}, we have 
\begin{align*}
    \pi^{\ail}_{h} \in \argmin_{\pi_h} f_h(\pi_h; \pi^{\ail}_1, \ldots, \pi^{\ail}_{h-1}, \pi^{\ail}_{h+1}, \ldots, \pi^{\ail}_{H})
\end{align*}
for all $h \in [H]$, where 
\begin{align*}
    f_h(\pi_h; \piail_1, \ldots, \piail_{h-1}, \piail_{h+1}, \ldots, \piail_H) &= \sum_{h=1}^{H} \sum_{(s, a)} \labs d^{\pi}_h(s, a) - \widehat{d^{\piE}_h}(s, a) \rabs 
\end{align*}
is a single-variable loss function that takes $\pi_h$ as the variable and other time-dependent policies\\ $(\piail_1, \ldots, \piail_{h-1}, \piail_{h+1}, \ldots, \piail_H)$ as fixed parameters. Here we emphasize that the state-action distribution $d^{\pi}_h(s, a)$ is calculated by $(\piail_1, \ldots, \piail_{h-1}, \pi_h)$.

Now we proceed to the induction-based proof. Specifically, our induction assumption is: for each $h+1 \leq h^\prime \leq H-1$, we assume that $\piail_{h^\prime} (a^1|s) = \piE_{h^\prime} (a^1|s) = 1, \forall s \in \goodS$. We first consider the base case, i.e., we need to prove that $\piail_{H-1} (a^1|s) = \piE_{H-1} (a^1|s) = 1, \forall s \in \goodS$.

\textbf{Base Case.} Recall the optimality condition in time step $H-1$.
\begin{align*}
    \pi^{\ail}_{H-1} &\in \argmin_{\pi_{H-1}} f_h(\pi_{H-1}; \pi^{\ail}_1, \ldots, \pi^{\ail}_{H-2}, \pi^{\ail}_{H}).
\end{align*}
If we can prove that $\pi_{H-1}(a^1|s) = \piE_{H-1} (a^1|s) = 1, \forall s \in \goodS$ is the unique optimal solution with respect to 
\begin{align*}
    \min_{\pi_{H-1}} f_{H-1}(\pi_{H-1}; \pi_1^{\ail}, \ldots, \pi_{H-2}^{\ail}, \pi_{H}^{\ail}),
\end{align*}
then we can derive that $\piail_{H-1} (a^1|s) = \piE_{H-1} (a^1|s) = 1, \forall s \in \goodS$. To achieve this target, we decompose $f_{H-1}(\pi_{H-1}; \pi_1^{\ail}, \ldots, \pi_{H-2}^{\ail}, \pi_{H}^{\ail})$ into three parts.
\begin{align*}
    f_{H-1}(\pi_{H-1}) &= \underbrace{\sum_{(s, a)} \labs d^{\pi}_{H-1}(s, a) - \widehat{d^{\piE}_{H-1}}(s, a) \rabs}_{\Loss_{H-1}} + \underbrace{\sum_{(s, a)} \labs d^{\pi}_{H}(s, a) - \widehat{d^{\piE}_{H}}(s, a)  \rabs}_{\Loss_{H}} + \underbrace{\sum_{h=1}^{H-2} \sum_{(s, a)} \labs d^{\pi}_{h}(s, a) - \widehat{d^{\piE}_{h}}(s, a)  \rabs}_{\constant}. 
\end{align*}
Here the state-action distributions $d^{\pi}_1 (s, a), \ldots, d^{\pi}_H (s, a)$ are calculated by $( \pi_1^{\ail}, \ldots, \pi_{H-2}^{\ail}, \pi_{H-1}, \pi_{H}^{\ail})$. Notice that the terms $\Loss_{H-1}$ and $\Loss_{H}$ depend on $\pi_{h-1}$ while the term $\constant$ does not depend on $\pi_{h-1}$. Therefore, we have that
\begin{align*}
    \argmin_{\pi_{H-1}} f_{H-1}(\pi_{H-1}; \pi_1^{\ail}, \ldots, \pi_{H-2}^{\ail}, \pi_{H}^{\ail}) &= \argmin_{\pi_{H-1}} \Loss_{H-1} + \Loss_{H}.   
\end{align*}
We will argue that $\pi_{H-1}(a^1|s) = \piE_{H-1} (a^1|s) = 1, \forall s \in \goodS$ is the optimal solution with respect to $\min_{\pi_{H-1}} \Loss_{H-1}$ and the unique optimal solution with respect to $\min_{\pi_{H-1}} \Loss_{H}$. Therefore, we can claim that $\pi_{H-1}(a^1|s) = \piE_{H-1} (a^1|s) = 1, \forall s \in \goodS$ is the unique optimal solution with respect to $\min_{\pi_{H-1}} f_{H-1}(\pi_{H-1}; \pi_1^{\ail}, \ldots, \pi_{H-2}^{\ail}, \pi_{H}^{\ail})$ by \cref{lem:unique_opt_solution_condition}.

For $\Loss_{H-1}$, we have that 
    \begin{align*}
\Loss_{H-1} &= \sum_{s \in \gS} \sum_{a \in \gA} \labs \widehat{d^{\piE}_{H-1}} (s, a) - d^{\piail}_{H-1} (s) \pi_{H-1} (a|s) \rabs
        \\
        &= \sum_{s \in \goodS}  \bigg[ \labs \widehat{d^{\piE}_{H-1}} (s, a^{1}) - d^{\piail}_{H-1} (s) \pi_{H-1} (a^1|s) \rabs + \sum_{a \ne a^{1}} \labs \widehat{d^{\piE}_{H-1}} (s, a) - d^{\piail}_{H-1} (s) \pi_{H-1} (a|s)  \rabs \bigg] 
        \\
        &\; + \sum_{s \in \badS} \sum_{a \in \gA} \labs \widehat{d^{\piE}_{H-1}} (s, a) - d^{\piail}_{H-1} (s) \pi_{H-1} (a|s) \rabs \\
        &= \sum_{s \in \goodS} \bigg( \labs \widehat{d^{\piE}_{H-1}} (s) - d^{\piail}_{H-1} (s) \pi_{H-1} (a^{1}|s)  \rabs + d^{\piail}_{H-1} (s) \lp 1 - \pi_{H-1} (a^{1}|s) \rp \bigg) + \sum_{s \in \badS} d^{\piail}_{H-1} (s). 
    \end{align*}
    The last equation follows \cref{fact:ail_estimation}. Notice that $d^{\piail}_{H-1} (s)$ is fixed and independent of $\pi_{H-1}$, so we can obtain the following optimization problem: 
	    \begin{align*}
	        \argmin_{\pi_{H-1}} \mathrm{Loss}_{H-1} &= \argmin_{\pi_{H-1}} \bigg\{ \sum_{s \in \goodS} \bigg\vert \widehat{d^{\piE}_{H-1}} (s) - d^{\piail}_{H-1} (s) \pi_{H-1} (a^{1}|s)  \bigg\vert - d^{\piail}_{H-1} (s) \pi_{H-1} (a^{1}|s) \bigg\}.
	    \end{align*}
    Since elements in $\{ \pi_{H-1} (\cdot|s): s \in \goodS \}$ are independent, we can consider the above optimization problem for each $s \in \goodS$ individually:
    \begin{align*}
        \argmin_{\pi_{H-1} (a^{1}|s) \in [0, 1]} \bigg\{ \labs \widehat{d^{\piE}_{H-1}} (s) - d^{\piail}_{H-1} (s) \pi_{H-1} (a^{1}|s)  \rabs - d^{\piail}_{H-1} (s) \pi_{H-1} (a^{1}|s) \bigg\}.
    \end{align*}
    For this one-dimension optimization problem, we can use \cref{lem:single_variable_opt} to show that $\pi_{H-1} (a^{1}|s) = \piE_{H-1} (a^{1}|s)  = 1$ is an optimal solution. 
    
    For $\Loss_{H}$, let us introduce the notation $\gV_{H} :=\{ s \in \gS: \widehat{d^{\piE}_H} (s) > 0  \}$, i.e., the set of visited states in time step $H$. For any $s \notin \gV_{H}$,  we have that $\widehat{d^{\piE}_{H}} (s) = 0$. Then, we obtain 
    \begin{align}
      \Loss_{H} &= \sum_{s \in \gS} \sum_{a \in \gA} \labs \widehat{d^{\piE}_{H}} (s, a) - d^{\pi}_{H} (s, a) \rabs \nonumber
        \\
        &= \sum_{s \in \goodS} \sum_{a \in \gA} \labs \widehat{d^{\piE}_{H}} (s, a) - d^{\pi}_{H} (s, a) \rabs + \sum_{s \in \badS} d^{\pi}_H (s)  \nonumber 
        \\
        &= \underbrace{\sum_{s \in \gV_{H}} \labs \widehat{d^{\piE}_{H}} (s) - d^{\pi}_{H} (s, a^{1}) \rabs}_{\text{Term I}} + \underbrace{ \sum_{s \in \gV_{H}} \sum_{a\not= a^1} d^{\pi}_H (s, a)}_{\text{Term II}} + \underbrace{ \sum_{s \in \goodS \text{ and } s \notin \gV_{H}} d^{\pi}_H (s) }_{\text{Term III}} + \underbrace{\sum_{s \in \badS} d^{\pi}_H (s)}_{\text{Term IV}}. \label{eq:main_prop_proof_1}
    \end{align}
    Note that the state-action distributions $d^{\pi}_{H} (s, a)$ and $d^{\pi}_H (s)$ are computed by $(\piail_1, \ldots, \piail_{H-2}, \pi_{H-1}, \piail_{H})$, where $\pi_{H-1}$ is the decision variable and the others are given. Now let us consider the first three terms in \eqref{eq:main_prop_proof_1}. For $d^{\pi}_H(s)$  with $s \in \goodS$,  with the Bellman-flow equation in \eqref{eq:flow_link}, we have 
    \begin{align*}
        d^{\pi}_{H} (s) &= \sum_{s^\prime \in \gS} \sum_{a \in \gA} d^{\piail}_{H-1} (s^\prime) \pi_{H-1} (a|s^\prime) P_{H-1} (s | s^\prime, a)
        \\
        &= \sum_{s^\prime \in \goodS} d^{\piail}_{H-1} (s^\prime) \pi_{H-1} (a^{1}|s^\prime) P_{H-1} (s | s^\prime, a^{1}).
    \end{align*}
   Accordingly, we have 
    \begin{align*}
        &\quad \text{Term I} 
        \\
    &= \sum_{s \in \gV_{H}} \bigg\vert \widehat{d^{\piE}_{H}} (s) - \bigg( \sum_{s^\prime \in \goodS} d^{\piail}_{H-1} (s^\prime) \pi_{H-1} (a^{1}|s^\prime) P_{H-1} (s | s^\prime, a^{1})  \bigg) \piail_{H} (a^{1}|s) \bigg\vert \\
         &= \sum_{s \in \gV_{H}} \bigg\vert \widehat{d^{\piE}_{H}} (s) - \sum_{s^\prime \in \goodS} d^{\piail}_{H-1} (s^\prime)  P_{H-1} (s | s^\prime, a^{1}) \piail_{H} (a^{1}|s) \pi_{H-1} (a^{1}|s^\prime) \bigg\vert,
    \end{align*}
    and
    \begin{align*}
        &\quad \text{Term II}
        \\
    &= \sum_{s \in \gV_{H} } \lp 1- \piail_{H} (a^{1}|s) \rp \cdot  \lp \sum_{s^\prime \in \goodS} d^{\piail}_{H-1} (s^\prime) \pi_{H-1} (a^{1}|s^\prime) P_{H-1} (s | s^\prime, a^{1})  \rp  \\
    &= \sum_{s^\prime \in \goodS} \pi_{H-1} (a^{1}|s^\prime) \cdot  \lp \sum_{s \in \gV_{H}} d^{\piail}_{H-1} (s^\prime) P_{H-1} (s | s^\prime, a^{1}) \lp 1- \piail_{H} (a^{1}|s) \rp \rp,
    \end{align*}
   and 
\begin{align*}
    &\quad \text{Term III}
    \\
    &=  \sum_{s \in \goodS \text{ and } s \notin \gV_{H}} \bigg( \sum_{s^\prime \in \goodS} d^{\piail}_{H-1} (s^\prime) \pi_{H-1} (a^{1}|s^\prime)  P_{H-1} (s | s^\prime, a^{1}) \bigg) \\
    &= \sum_{s^\prime \in \goodS} \pi_{H-1} (a^{1}|s^\prime) \cdot \lp \sum_{s \in \goodS \text{ and } s \notin \gV_{H}} d^{\piail}_{H-1} (s^\prime) P_{H-1} (s | s^\prime, a^{1}) \rp.
\end{align*}
Next, we consider the last term in \eqref{eq:main_prop_proof_1}.  For $d^{\pi}_{H} (s)$ with $s \in \badS$, recall that when the agent takes a non-expert action,  it transits into bad states. Therefore, the probability of visiting bad states in time step $H$ arises from two parts. One is the probability of visiting bad states in time step $H-1$ and the other is the probability of visiting good states and taking non-expert actions in time step $H-1$. Accordingly, we obtain 
    \begin{align*}
        &\quad \sum_{s \in \badS} d^{\pi}_H (s)
        \\
        &= \sum_{s^\prime \in \badS} d^{\piail}_{H-1} (s^\prime) + \sum_{s^\prime \in \goodS} d^{\piail}_{H-1} (s^\prime) \bigg( \sum_{a \not= a^{1}} \pi_{H-1} (a|s^\prime) \bigg)
        \\
        &= \sum_{s^\prime \in \badS} d^{\piail}_{H-1} (s^\prime) + \sum_{s^\prime \in \goodS} d^{\piail}_{H-1} (s^\prime) \lp 1 - \pi_{H-1} (a^{1}|s^\prime) \rp.
    \end{align*}
   Then, it is ready to get 
   \begin{align*}
       &\quad \text{Term IV}
       \\
       &= \sum_{s^\prime \in \badS} d^{\piail}_{H-1} (s) + \sum_{s^\prime \in \goodS} d^{\piail}_{H-1} (s^\prime) \lp 1 - \pi_{H-1} (a^{1}|s^\prime) \rp \\
       &= \sum_{s^\prime \in \badS} d^{\piail}_{H-1} (s^\prime) + \sum_{s^\prime \in \goodS} d^{\piail}_{H-1} (s^\prime) - \sum_{s^\prime \in \goodS} d^{\piail}_{H-1}(s^\prime) \pi_{H-1} (a^{1} | s^\prime).
   \end{align*}
    Subsequently, we merge the optimization variable $\pi_{H-1}$ in the second, third, and fourth terms to obtain 
    \begin{align}
       &\quad \text{Term II} + \text{Term III} + \text{Term IV} \nonumber  \\
       &=    \sum_{s^\prime \in \goodS} \pi_{H-1} (a^{1}|s^\prime)  \nonumber
       \\
       &\quad \cdot \lp \sum_{s \in \gV_{H}} d^{\piail}_{H-1} (s^\prime) P_{H-1} (s | s^\prime, a^{1}) \lp 1- \piail_{H} (a^{1}|s) \rp    \rp  \nonumber + \sum_{s^\prime \in \goodS} \pi_{H-1} (a^{1}|s^\prime)  \nonumber
        \\
        &\quad \cdot \bigg( \sum_{s \in \goodS \text{ and } s \notin \gV_{H}} d^{\piail}_{H-1} (s^\prime) P_{H-1} (s | s^\prime, a^{1}) \bigg) \nonumber - \sum_{s^\prime \in \goodS} \pi_{H-1} (a^{1}|s^\prime) d^{\piail}_{H-1} (s^\prime) + \constant  \nonumber 
        \\
        &= \sum_{s^\prime \in \goodS} \pi_{H-1} (a^{1}|s^\prime) \Bigg( \sum_{s \in \gV_{H}} d^{\piail}_{H-1} (s^\prime) P_{H-1} (s | s^\prime, a^{1}) \nonumber - \sum_{s \in \gV_{H}} d^{\piail}_{H-1} (s^\prime) P_{H-1} (s | s^\prime, a^{1}) \piail_{H} (a^{1}|s)  \nonumber 
        \\
        &\;+ \sum_{s \in \goodS \text{ and } s \notin \gV_{H}}  d^{\piail}_{H-1} (s^\prime) P_{H-1} (s | s^\prime, a^{1})  - d^{\piail}_{H-1} (s^\prime)   \Bigg) \nonumber + \constant   \nonumber 
        \\
        &= \sum_{s^\prime \in \goodS} \pi_{H-1} (a^{1}|s^\prime) \Bigg( \sum_{s \in \goodS} d^{\piail}_{H-1} (s^\prime) P_{H-1} (s | s^\prime, a^{1}) - \sum_{s \in \gV_{H}}   d^{\piail}_{H-1} (s^\prime) P_{H-1} (s | s^\prime, a^{1}) \piail_{H} (a^{1}|s)  - d^{\piail}_{H-1} (s^\prime)   \Bigg) \label{eq:main_prop_proof_2}
        \\
        &= - \sum_{s^\prime \in \goodS} \pi_{H-1} (a^{1}|s^\prime)   \cdot \lp \sum_{s \in \gV_{H}} d^{\piail}_{H-1} (s^\prime) P_{H-1} (s | s^\prime, a^{1}) \piail_{H} (a^{1}|s)  \rp + \constant , \nonumber 
    \end{align}
    where in the last equation we use the fact that for $s^\prime \in \goodS$, we have $\sum_{s \in \goodS} P_{H-1}(s|s^\prime, a^1) = 1$, so the first term and the third term in \eqref{eq:main_prop_proof_2} are canceled. In the above equations, $\constant = \sum_{s^\prime \in \badS} d^{\piail}_{H-1} (s^\prime) + \sum_{s^\prime \in \goodS} d^{\piail}_{H-1} (s^\prime)$, which is independent of $\pi_{H-1}$. Back to \eqref{eq:main_prop_proof_1}, we get that 
    \begin{equation}    \label{eq:proof_vail_reset_cliff_1}
    \begin{split}
          \Loss_{H} &= \sum_{s \in \gV_{H}} \bigg\vert \widehat{d^{\piE}_{H}} (s) - \sum_{s^\prime \in \goodS} d^{\piail}_{H-1} (s^\prime)  P_{H-1} (s | s^\prime, a^{1}) \piail_{H} (a^{1}|s) \pi_{H-1} (a^{1}|s^\prime) \bigg\vert 
         \\
         &\;- \sum_{s^\prime \in \goodS} \pi_{H-1} (a^{1}|s^\prime) \cdot \lp \sum_{s \in \gV_{H}} d^{\piail}_{H-1} (s^\prime) P_{H-1} (s | s^\prime, a^{1}) \piail_{H} (a^{1}|s)  \rp + \constant.     
    \end{split}
    \end{equation}
    Then we have that
    \begin{align*}
        &\quad \argmin_{\pi_{H-1}} \Loss_{H}
        \\
        &= \argmin_{\pi_{H-1}} \bigg\{ \sum_{s \in \gV_{H}} \bigg\vert \widehat{d^{\piE}_{H}} (s) - \sum_{s^\prime \in \goodS} d^{\piail}_{H-1} (s^\prime)  P_{H-1} (s | s^\prime, a^{1}) \piail_{H} (a^{1}|s) \pi_{H-1} (a^{1}|s^\prime) \bigg\vert 
        \\
        &\; - \sum_{s^\prime \in \goodS} \pi_{H-1} (a^{1}|s^\prime) \cdot \bigg( \sum_{s \in \gV_{H}} d^{\piail}_{H-1} (s^\prime) P_{H-1} (s | s^\prime, a^{1}) \piail_{H} (a^{1}|s)  \bigg)  \bigg\}.
    \end{align*}
    For this optimization problem, we will apply \cref{lem:mn_variables_opt_unique} to show that $\forall s \in \goodS, \pi_{H-1}(a^{1} | s) = \piE_{H-1}(a^{1} | s) = 1$ is the unique optimal solution. In particular, we can verify the conditions required by \cref{lem:mn_variables_opt_unique} by defining the following terms:
    \begin{align*}
        &m = \labs \gV_{H}  \rabs, n = \labs \goodS \rabs, \forall s \in \gV_{H}, c(s) = \widehat{d^{\piE}_{H}} (s),
        \\
        & \forall s \in \gV_{H}, s^\prime \in \goodS, A (s, s^\prime) = d^{\piail}_{H-1} (s^\prime)  P_{H-1} (s | s^\prime, a^{1}) \piail_{H} (a^{1}|s),
        \\
        & \forall s^\prime \in \goodS, d(s^\prime) = \sum_{s \in \gV_{H}} d^{\piail}_{H-1} (s^\prime) P_{H-1} (s | s^\prime, a^{1}) \piail_{H} (a^{1}|s). 
    \end{align*}
    Now we verify the conditions in \cref{lem:mn_variables_opt_unique}. To start with, we note that 
    \cref{lem:condition_for_ail_optimal_solution} implies that if $\piail$ is an optimal solution to \eqref{eq:ail_reset_cliff} on RBAS MDPs, then $\forall s^\prime \in \goodS, d^{\piail}_{H-1}(s^\prime) > 0$.  With \cref{lem:general_condition_ail_policy_at_last_step}, we have that $\forall s \in \gV_{H}, \piail_{H} (a^{1}|s) > 0$. Hence we have that $A > 0$, where $>$ means element-wise comparison. Besides, on the one hand, we have that
    \begin{align*}
         \sum_{s \in  \gV_{H} } c(s) = \sum_{s \in  \gV_{H} } \widehat{d^{\piE}_{H}} (s) = 1. 
    \end{align*}
    On the other hand, we have that
    \begin{align*}
         \sum_{s \in  \gV_{H} } \sum_{s^\prime \in \goodS} A (s, s^\prime) &\leq \sum_{s \in  \gV_{H} } \sum_{s^\prime \in \goodS} d^{\piail}_{H-1} (s^\prime)  P_{H-1} (s | s^\prime, a^{1}) \leq  1.
    \end{align*}
    Therefore, we obtain
    \begin{align*}
        \sum_{s \in  \gV_{H} } \sum_{s^\prime \in \goodS} A (s, s^\prime) \leq \sum_{s \in  \gV_{H} } c(s). 
    \end{align*}
    For each $s^\prime \in \goodS$, it holds that 
 \begin{align*}
     \sum_{s \in \gV_{H}} A (s, s^\prime)  &=  \sum_{s \in \gV_{H}} d^{\piail}_{H-1} (s^\prime)  P_{H-1} (s | s^\prime, a^{1}) \piail_{H} (a^{1}|s) = d (s^\prime). 
    \end{align*}
    Thus, we have verified the conditions in \cref{lem:mn_variables_opt_unique}. With \cref{lem:mn_variables_opt_unique}, we obtain that $\pi_{H-1} (a^{1}|s) =\piE_{H-1} (a^{1}|s) = 1, \forall s \in \goodS$ is the unique optimal solution of $\mathrm{Loss}_{H}$.
    By \cref{lem:unique_opt_solution_condition}, $\pi_{H-1} (a^{1}|s) =\piE_{H-1} (a^{1}|s) = 1, \forall s \in \goodS$ is the unique optimal solution of $\min_{\pi_{H-1}} \Loss_{H-1} + \Loss_{H}$, which completes the proof of the base case.

\textbf{Induction Step.} The main proof strategy is similar to what we have used in the proof of the base case but is more tricky. We assume that for step $h^\prime = h+1, h+2, \cdots, H-1$, $\piail_{h^\prime} (a^{1}|s) = \piE_{h^\prime} (a^{1}|s) = 1, \forall s \in \goodS$. We aim to prove that for step $h$, $\piail_{h} (a^{1}|s) = \piE_{h} (a^{1}|s) = 1, \forall s \in \goodS$.

Recall the optimality condition in time step $h$.
\begin{align*}
    \pi^{\ail}_{h} &\in \argmin_{\pi_h} f_h(\pi_h; \pi^{\ail}_1, \ldots, \pi^{\ail}_{h-1}, \pi^{\ail}_{h+1}, \ldots, \pi^{\ail}_{H}).  
\end{align*}
Our target becomes to prove that $\pi_{h} (a^{1}|s) = \piE_{h} (a^{1}|s) = 1, \forall s \in \goodS$ is the unique optimal solution regarding
\begin{align*}
    \min_{\pi_h} f_h(\pi_h; \pi^{\ail}_1, \ldots, \pi^{\ail}_{h-1}, \pi^{\ail}_{h+1}, \ldots, \pi^{\ail}_{H}).
\end{align*}
Similar to the analysis in the base step, we decompose $f_h(\pi_h; \pi^{\ail}_1, \ldots, \pi^{\ail}_{h-1}, \pi^{\ail}_{h+1}, \ldots, \pi^{\ail}_{H})$ into three parts.
\begin{align*}
    f_{h}(\pi_{h}) &= \underbrace{\sum_{(s, a)} \labs d^{\pi}_{h}(s, a) - \widehat{d^{\piE}_{h}}(s, a) \rabs}_{\Loss_{h}} + \sum_{h^\prime=h+1}^{H} \underbrace{ \sum_{(s, a)}   \labs d^{\pi}_{h^\prime} (s, a) - \widehat{d^{\piE}_{h^\prime}} (s, a) \rabs}_{\Loss_{h^\prime}} + \underbrace{\sum_{h^\prime=1}^{h-1} \sum_{(s, a)} \labs d^{\pi}_{h^\prime}(s, a) - \widehat{d^{\piE}_{h^\prime}}(s, a)  \rabs}_{\constant}. 
\end{align*}
Notice that the state-action distributions appeared in $f_{h}(\pi_{h})$ are computed by $(\piail_1, \ldots, \piail_{h-1},\pi_h, \piail_{h+1}, \ldots, \piail_{H})$. In particular, for each $1 \leq h^\prime \leq h-1$, $d^{\pi}_{h^\prime}(s, a)$ is independent of the optimization variable $\pi_h$. Therefore, we obtain that
\begin{align*}
    \argmin_{\pi_h} f_h(\pi_h; \pi^{\ail}_1, \ldots, \pi^{\ail}_{h-1}, \pi^{\ail}_{h+1}, \ldots, \pi^{\ail}_{H}) &= \argmin_{\pi_h} \Loss_{h} + \sum_{h^\prime=h+1}^{H} \Loss_{h^\prime}.   
\end{align*}
Similarly, we will first prove that $\pi_{h} (a^1|s) = \piE_{h} (a^1|s) = 1, \forall s \in \goodS$ is an optimal solution with respect to $\min_{\pi_{h}} \Loss_{h}$. Then we will prove that for each $h+1 \leq h^\prime \leq H$, $\pi_{h} (a^1|s) = \piE_{h} (a^1|s) = 1, \forall s \in \goodS$ is the {unique} optimal solution with respect to $\min_{\pi_{h}} \Loss_{h^\prime}$. In this way, we can argue that $\pi_{h} (a^1|s) = \piE_{h} (a^1|s) = 1, \forall s \in \goodS$ is the unique optimal solution with respect to
\begin{align*}
    \min_{\pi_h} f_h(\pi_h; \pi^{\ail}_1, \ldots, \pi^{\ail}_{h-1}, \pi^{\ail}_{h+1}, \ldots, \pi^{\ail}_{H}).
\end{align*}
For $\Loss_h$, we have that
\begin{align*}
    &\quad \Loss_{h}
    \\
    &= \sum_{s \in \gS} \sum_{a \in \gA} \labs \widehat{d^{\piE}_{h}} (s, a) - d^{\piail}_{h} (s) \pi_h(a|s) \rabs
    \\
    &= \sum_{s \in \goodS}  \bigg( \labs \widehat{d^{\piE}_{h}} (s, a^1) - d^{\piail}_{h} (s) \pi_h(a^1|s) \rabs + \sum_{a \not=  a^1} d^{\piail}_{h} (s) \pi_h (a|s)  \bigg) + \sum_{s \in \badS} d^{\piail}_{h} (s)
    \\
    &= \sum_{s \in \goodS} \bigg( \labs \widehat{d^{\piE}_{h}} (s) - d^{\piail}_{h} (s) \pi_{h} (a^{1}|s)  \rabs + d^{\piail}_{h} (s) \lp 1 - \pi_{h} (a^{1}|s) \rp \bigg) + \sum_{s \in \badS} d^{\piail}_{h} (s). 
\end{align*}
Notice that $d^{\piail}_{h} (s)$ is independent of $\pi_{h}$, then we have that
	\begin{align*}
	    \argmin_{\pi_{h}} \mathrm{Loss}_{h} &= \argmin_{\pi_{h}} \sum_{s \in \goodS} \bigg( \labs \widehat{d^{\piE}_{h}} (s) - d^{\piail}_{h} (s) \pi_{h} (a^{1}|s)  \rabs - d^{\piail}_{h} (s) \pi_{h} (a^{1}|s) \bigg).
	\end{align*}
By the tabular formulation, we can consider the above optimization problem for each $s \in \goodS$ individually:
\begin{align*}
    &\argmin_{\pi_{h} (a^{1}|s) \in [0, 1]} \labs \widehat{d^{\piE}_{h}} (s) - d^{\piail}_{h} (s) \pi_{h} (a^{1}|s)  \rabs - d^{\piail}_{h} (s) \pi_{h} (a^{1}|s).
\end{align*}
For this one-dimension optimization problem, we can show that $\pi_h(a^{1}|s) = \piE_h(a^{1}|s) = 1$ is an optimal solution by \cref{lem:single_variable_opt}. Thus, we obtain that $\pi_{h} (a^{1}|s) =  \piE_h(a^{1}|s) = 1, \forall s \in \goodS$ is an optimal solution of $\min_{\pi_h} \mathrm{Loss}_{h}$.

    Next, for each $h+1 \leq h^\prime \leq H-1$, we consider the optimization problem of $\min_{\pi_h} \Loss_{h^\prime}$. Notice that we assume that for each $h+1 \leq h^\prime \leq H-1$, $\piail_{h^\prime} (a^{1}|s) = 1, \forall s \in \goodS$. Then we have
    \begin{align}
        &\quad \Loss_{h^\prime} \nonumber
        \\
        &= \sum_{s \in \gS} \sum_{a \in \gA} \labs \widehat{d^{\piE}_{h^\prime}} (s, a) - d^{\pi}_{h^\prime} (s, a) \rabs
        \nonumber \\
        &= \sum_{s \in \goodS} \sum_{a \in \gA} \labs \widehat{d^{\piE}_{h^\prime}} (s, a) - d^{\pi}_{h^\prime} (s) \piail_{h^\prime} (a|s)  \rabs \nonumber + \sum_{s \in \badS} \sum_{a \in \gA} d^{\pi}_{h^\prime} (s, a)
        \nonumber \\
        &= \sum_{s \in \goodS} \labs \widehat{d^{\piE}_{h^\prime}} (s, a^1) - d^{\pi}_{h^\prime} (s) \piail_{h^\prime}( a^1|s) \rabs + \sum_{s \in \badS} d^{\pi}_{h^\prime} (s) 
       \nonumber  \\
        &= \underbrace{\sum_{s \in \goodS} \labs \widehat{d^{\piE}_{h^\prime}} (s) - d^{\pi}_{h^\prime} (s) \rabs}_{\text{Term I}} + \underbrace{\sum_{s \in \badS} d^{\pi}_{h^\prime} (s)}_{\text{Term II}}.  \label{eq:eq:main_prop_proof_3}
    \end{align}
    Here $d^{\pi}_{h^\prime} (s)$ and $d^{\pi}_{h^\prime} (s, a)$ are induced by $(\piail_1, \ldots, \piail_{h-1}, \pi_h, \piail_{h+1}, \ldots, \piail_{h^\prime})$. Note that only through taking the expert action on good states, the agent could visit good states. With the Bellman-flow equation in \eqref{eq:flow_link}, we have that $ \forall s \in \goodS$, 
    \begin{align*}
        &\quad d^{\pi}_{h^\prime} (s)
        \\
        &= \sum_{s^\prime \in \gS} \sum_{a \in \gA} d^{\piail}_h (s^\prime) \pi_h (a|s^\prime) \sP^{\piail} \lp s_{h^\prime} = s |s_h = s^\prime, a_h = a \rp
        \\
        &= \sum_{s^\prime \in \goodS} d^{\piail}_h (s^\prime) \pi_h (a^{1}|s^\prime) \sP^{\piail} \lp s_{h^\prime} = s |s_h = s^\prime, a_h = a^{1} \rp,
    \end{align*}
    where $\sP^{\piail}(s_{h^\prime} = s |s_h = s^\prime, a_h = a^{1})$ refers to the transition probability of $s$ in time step $h^\prime$ by starting from $(s^\prime, a^{1})$ in time step $h$ via policy $\piail_{h+1}, \ldots, \piail_{h^\prime-1}$. Besides, $d^{\piail}_h (s^\prime)$ is induced by $(\piail_1, \ldots, \piail_{h-1})$. 
    Then we obtain that
    \begin{align*}
        \text{Term I} = \sum_{s \in \goodS} \bigg\vert \widehat{d^{\piE}_{h^\prime}} (s) - \sum_{s^\prime \in \goodS} d^{\piail}_h (s^\prime)  \sP^{\piail} \lp s_{h^\prime} = s |s_h = s^\prime, a_h = a^{1} \rp \pi_h (a^{1}|s^\prime) \bigg\vert.
    \end{align*}
    First of all, we notice that the conditional probability of $\sP^{\piail} \lp s_{h^\prime} = s |s_h = s^\prime, a_h = a^{1} \rp$ is independent of $\pi_h$. Besides, as for each $h^\prime = h+1, h+2, \ldots, H-1$, $\piail_{h^\prime} (a^{1}|s) = 1, \forall s \in \goodS$, the visitation probability of bad states in step $h^\prime$ comes from two parts in step $h$. One is the visitation probability of bad states in step $h$. The other is the probability of visiting good states and taking non-expert actions in step $h$. We obtain
    \begin{align*}
        \text{Term II} &= \sum_{s \in \badS} d^{\piail}_{h} (s) + \sum_{s^\prime \in \goodS} \sum_{a \not= a^{1}} d^{\piail}_h (s^\prime)  \pi_h (a|s^\prime) = \sum_{s \in \badS} d^{\piail}_{h} (s) + \sum_{s^\prime \in \goodS} d^{\piail}_h (s^\prime) \lp 1 - \pi_h (a^{1}|s^\prime) \rp .
    \end{align*}
    Plugging $\text{Term I}$ and $\text{Term II}$ into $\Loss_{h^\prime}$ in \eqref{eq:eq:main_prop_proof_3} yields
    \begin{align*}
        &\quad \Loss_{h^\prime}
        \\
        &= \sum_{s \in \goodS} \bigg\vert \widehat{d^{\piE}_{h^\prime}} (s) - \sum_{s^\prime \in \goodS} d^{\piail}_h (s^\prime)  \sP^{\piail} \lp s_{h^\prime} = s |s_h = s^\prime, a_h = a^{1} \rp \pi_h (a^{1}|s^\prime) \bigg\vert + \sum_{s \in \badS} d^{\piail}_{h} (s) 
        \\
        &\;+ \sum_{s^\prime \in \goodS} d^{\piail}_h (s^\prime) \lp 1 - \pi_h (a^{1}|s^\prime) \rp
        \\
        &= \sum_{s \in \goodS} \bigg\vert \widehat{d^{\piE}_{h^\prime}} (s) - \sum_{s^\prime \in \goodS} d^{\piail}_h (s^\prime)  \sP^{\piail} \lp s_{h^\prime} = s |s_h = s^\prime, a_h = a^{1} \rp \pi_h (a^{1}|s^\prime) \bigg\vert - \sum_{s^\prime \in \goodS} d^{\piail}_h (s^\prime) \pi_h (a^{1}|s^\prime) + \constant. 
    \end{align*}
    Here $\constant =  \sum_{s \in \badS} d^{\piail}_{h} (s) + \sum_{s^\prime \in \goodS} d^{\piail}_h (s^\prime)$ is independent of $\pi_h$. This equation is similar to \eqref{eq:proof_vail_reset_cliff_1} in the proof of the base case. Then we have that
    \begin{align*}
        & \quad \argmin_{\pi_h} \Loss_{h^\prime}
        \\
        &= \argmin_{\pi_h} \sum_{s \in \goodS} \bigg\vert \widehat{d^{\piE}_{h^\prime}} (s) - \sum_{s^\prime \in \goodS} d^{\piail}_h (s^\prime)  \sP^{\piail} \lp s_{h^\prime} = s |s_h = s^\prime, a_h = a^{1} \rp \pi_h (a^{1}|s^\prime) \bigg\vert - \sum_{s^\prime \in \goodS} d^{\piail}_h (s^\prime) \pi_h (a^{1}|s^\prime)  .
    \end{align*}
    For this optimization problem, we can again use \cref{lem:mn_variables_opt_unique} to prove that $\forall s \in \goodS, \pi_h(a^{1} | s) = \piE_h(a^{1} | s)= 1$ is the unique global optimal solution with respect to $\min_{\pi_h} \Loss_{h^\prime}$. To check the conditions required by \cref{lem:mn_variables_opt_unique}, we define 
    \begin{align*}
        & m = n = \labs \goodS \rabs, \forall s \in \goodS, c(s) = \widehat{d^{\piE}_{h^\prime}} (s), \\
        & \forall s, s^\prime \in \goodS, A (s, s^\prime) = d^{\piail}_h (s^\prime)  \sP^{\piail} \lp s_{h^\prime} = s |s_h = s^\prime, a_h = a^{1} \rp,
        \\
        & \forall s^\prime \in \goodS, d(s^\prime) = d^{\piail}_h (s^\prime). 
    \end{align*}
    Now we verify the conditions in \cref{lem:mn_variables_opt_unique}. Following the same argument in the proof of the base case, we have that $\forall s, s^\prime \in \goodS$, 
    \begin{align*}
   d^{\piail}_{h} (s^\prime)  > 0, \text{ and } \sP^{\piail} \lp s_{h^\prime} = s |s_h = s^\prime, a_h = a^{1} \rp > 0.
    \end{align*}
    Then we can obtain that $A > 0$ where $>$ means element-wise comparison. Besides, on the one hand, we have that
    \begin{align*}
        \sum_{s \in \goodS} c (s) = \sum_{s \in \goodS} \widehat{d^{\piE}_{h^\prime}} (s)  = 1.
    \end{align*}
    On the other hand, we have that
    \begin{align*}
    &\quad \sum_{s \in \goodS} \sum_{s^\prime \in \goodS} A (s, s^\prime)
    \\
    &= \sum_{s \in \goodS} \sum_{s^\prime \in \goodS} d^{\piail}_h (s^\prime)  \sP^{\piail} \lp s_{h^\prime} = s |s_h = s^\prime, a_h = a^{1} \rp
    \\
    &\leq \sum_{s^\prime \in \goodS} d^{\piail}_h (s^\prime)
    \\
    &\leq 1.
    \end{align*}
    To summarize, we obtain 
    \begin{align*}
        \sum_{s \in \goodS} \sum_{s^\prime \in \goodS} A (s, s^\prime) \leq \sum_{s \in \goodS} c (s). 
    \end{align*}
	    For each $s^\prime \in \goodS$, we further have that 
	    \begin{align*}
	        \sum_{s \in \goodS} A (s, s^\prime) &= \sum_{s \in \goodS} d^{\piail}_h (s^\prime)  \sP^{\piail} \lp s_{h^\prime} = s |s_h = s^\prime, a_h = a^{1} \rp = d^{\piail}_h (s^\prime) = d(s^\prime). 
	    \end{align*}
    In the penultimate equation, we utilize the argument that $\sum_{s \in \goodS} \sP^{\piail} \lp s_{h^\prime} = s |s_h = s^\prime, a_h = a^{1} \rp = 1$. This is because that conditioned on $s_h=s^\prime, a_h=a^{1}$, taking the policy $\piail$ only visits good states due to the assumption that for each $h+1 \leq \ell \leq H-1$, $\piail_{\ell} (a^1|s) = 1, \forall s \in \goodS$ in the induction step. Thus, we have verified the conditions in \cref{lem:mn_variables_opt_unique}. By Lemma \ref{lem:mn_variables_opt_unique}, we obtain that $\pi_{h} (a^{1}|s) = \piE_{h} (a^{1}|s) = 1, \forall s \in \goodS$ is the {unique} optimal solution of $\mathrm{Loss}_{h^\prime}$ for each time step $h^\prime$, where $h+1 \leq h^\prime \leq H-1$.

    Finally, we consider $\Loss_{H}$. Recall the definition that $\gV_{H} = \{s \in \gS, \widehat{d^{\piE}_H} (s) > 0  \}$. Then we derive
    \begin{align*}
      \Loss_{H} &= \sum_{s \in \gS} \sum_{a \in \gA} \labs \widehat{d^{\piE}_{H}} (s, a) - d^{\pi}_H (s, a) \rabs
        \\
        &= \sum_{s \in \goodS} \sum_{a \in \gA} \labs \widehat{d^{\piE}_{H}} (s, a) - d^{\pi}_H (s, a) \rabs + \sum_{s \in \badS} d^{\pi}_H (s) 
        \\
        &= \underbrace{\sum_{s \in \gV_{H}} \labs \widehat{d^{\piE}_{H}} (s) - d^{\pi}_H (s, a^{1}) \rabs}_{\text{Term I}} + \underbrace{ \sum_{s \in \gV_{H}} \sum_{a \not= a^1} d^{\pi}_H (s, a)}_{\text{Term II}}  + \underbrace{ \sum_{s \in \goodS \text{ and } s \notin \gV_{H}} d^{\pi}_H (s) }_{\text{Term III}} + \underbrace{\sum_{s \in \badS} d^{\pi}_H (s)}_{\text{Term IV}}. 
    \end{align*}
    Here $d^{\pi}_H (s, a)$ and $d^{\pi}_H (s)$ are induced by $(\piail_1, \ldots, \piail_{h-1}, \pi_h, \piail_{h+1}, \ldots, \piail_H)$. With the Bellman-flow equation in \eqref{eq:flow_link}, we have $\forall s \in \goodS$, 
    \begin{align*}
          &\quad d^{\pi}_H (s)
          \\
          &= \sum_{s^\prime \in \gS} \sum_{a \in \gA} d^{\piail}_{h} (s^\prime) \pi_{h} (a|s^\prime) \sP^{\piail} (s_H = s | s_h = s^\prime, a_h = a)
        \\
        &= \sum_{s^\prime \in \goodS} d^{\piail}_{h} (s^\prime) \pi_{h} (a^{1}|s^\prime) \sP^{\piail} (s_H = s | s_h = s^\prime, a_h = a^{1}).  
    \end{align*}
    Here $d^{\piail}_{h} (s^\prime)$ is induced by $(\piail_1, \ldots, \piail_{h-1})$. Besides, $\sP^{\piail} (s_H = s | s_h = s^\prime, a_h = a^{1})$ refers to the transition probability of $s$ in time step $H$ by starting from $(s^\prime, a^{1})$ in time step $h$ via policy $\piail_{h+1}, \ldots, \piail_{H-1}$.
    Accordingly, we have 
    \begin{align*}
         \text{Term I}
     = \sum_{s \in \gV_{H}} \bigg\vert \widehat{d^{\piE}_{H}} (s) - \sum_{s^\prime \in \goodS} d^{\piail}_{h} (s^\prime) \cdot \sP^{\piail} (s_H = s | s_h = s^\prime, a_h = a^{1}) \piail_H (a^{1}|s) \pi_h (a^{1}|s^\prime)    \bigg\vert, 
    \end{align*}
    and 
    \begin{align*}
         \text{Term II}= \sum_{s^\prime \in \goodS} \pi_h (a^1|s^\prime) \bigg( \sum_{s \in \gV_H} d^{\piail}_h (s^\prime) \cdot \sP^{\piail} (s_H = s | s_h = s^\prime, a_h = a^{1}) \lp 1-\piail_{H} (a^1|s) \rp  \bigg),
    \end{align*}
    and
    \begin{align*}
         \text{Term III}
      = \sum_{s^\prime \in \goodS} \pi_h (a^1|s^\prime) \cdot \lp \sum_{s \in \goodS \text{ and } s \notin \gV_{H}} d^{\piail}_{h} (s^\prime) \sP^{\piail} (s_H = s | s_h = s^\prime, a_h = a^{1}) \rp.
    \end{align*}
    With the assumption that for time step $h^\prime = h+1, h+2, \cdots, H-1$, $\piail_{h^\prime} (a^{1}|s) = 1, \forall s \in \goodS$, the visitation probability of bad states in step $H$ comes from two parts in step $h$. By a similar argument with the previous analysis of $\Loss_{h^\prime}$, we get 
    \begin{align*}
        \text{Term IV} &= \sum_{s \in \badS} d^{\piail}_h (s) + \sum_{s^\prime \in \goodS} \sum_{a \not= a^1} d^{\piail}_h (s^\prime)  \pi_{h} (a|s^\prime) = \sum_{s \in \badS} d^{\piail}_h (s) + \sum_{s^\prime \in \goodS} d^{\piail}_h (s^\prime) \lp 1 - \pi_{h} (a^{1}|s^\prime) \rp.
    \end{align*}
   Plugging $\text{Term I, II, III, and IV}$ into $\Loss_{H}$ yields that
    \begin{align*}
        &\quad \Loss_{H}
        \\
        &= \sum_{s \in \gV_{H}} \bigg\vert \widehat{d^{\piE}_{H}} (s) - \sum_{s^\prime \in \goodS} d^{\piail}_h (s^\prime) \sP^{\piail} (s_H = s | s_h = s^\prime, a_h = a^{1}) \piail_H (a^{1}|s) \pi_h (a^{1}|s^\prime)    \bigg\vert 
        \\
        &\; - \sum_{s^\prime \in \goodS} \pi_h (a^{1}|s^\prime) \cdot \bigg( \sum_{s \in \gV_{H}} d^{\piail}_h (s^\prime) \sP^{\piail} (s_H = s | s_h = s^\prime, a_h = a^{1}) \piail_H (a^{1}|s) \bigg) + \constant,
    \end{align*}
    where $\constant = \sum_{s \in \badS} d^{\piail}_h (s) + \sum_{s^\prime \in \goodS} d^{\piail}_h (s^\prime)$, which is independent of $\pi_h$. This equation is similar to \eqref{eq:proof_vail_reset_cliff_1} in the proof of the base case. Then we have that
    \begin{align*}
        & \quad \argmin_{\pi_h} \Loss_{H}
        \\
        &= \argmin_{\pi_h} \bigg\{ \sum_{s \in \gV_{H}} \bigg\vert \widehat{d^{\piE}_{H}} (s) - \sum_{s^\prime \in \goodS} d^{\piail}_h (s^\prime) \sP^{\piail} (s_H = s | s_h = s^\prime, a_h = a^{1}) \piail_H (a^{1}|s) \pi_h (a^{1}|s^\prime)    \bigg\vert 
        \\
        & \; - \sum_{s^\prime \in \goodS} \pi_h (a^{1}|s^\prime) \cdot \bigg( \sum_{s \in \gV_{H}} d^{\piail}_h (s^\prime) \sP^{\piail} (s_H = s | s_h = s^\prime, a_h = a^{1}) \piail_H (a^{1}|s) \bigg) \bigg\}.
    \end{align*}
    For this optimization problem, we can again use \cref{lem:mn_variables_opt_unique} to prove that $\forall s \in \goodS, \pi_h(a^{1} | s) = \piE_h (a^{1}|s) = 1$ is the unique optimal solution. To check the conditions in \cref{lem:mn_variables_opt_unique}, we define 
	    \begin{align*}
	        & m = \labs \gV_{H} \rabs, n = \labs \goodS \rabs, \forall s \in \gV_{H}, c(s) = \widehat{d^{\piE}_{H}} (s), 
	        \\
	        & \forall s \in \gV_{H}, s^\prime \in \goodS, A (s, s^\prime) = d^{\piail}_h (s^\prime) \sP^{\piail} (s_H = s | s_h = s^\prime, a_h = a^{1}) \piail_H (a^{1}|s),
	        \\
	        & \forall s^\prime \in \goodS, d(s^\prime) =  \sum_{s \in \gV_{H}} d^{\piail}_h (s^\prime) \sP^{\piail} (s_H = s | s_h = s^\prime, a_h = a^{1}) \piail_H (a^{1}|s). 
	    \end{align*}
    Following the same argument in the proof of the base case, we have that $A >0$. On the one hand, we have 
    \begin{align*}
        \sum_{s \in \gV_{H}} c(s) = \sum_{s \in \gV_{H}} \widehat{d^{\piE}_{H}} (s) = 1. 
    \end{align*}
    On the other hand,
	    \begin{align*}
	        \sum_{s \in \gV_{H}} \sum_{s^\prime \in \goodS} A (s, s^\prime) &\leq \sum_{s \in \gV_{H}} \sum_{s^\prime \in \goodS} d^{\piail}_h (s^\prime) \sP^{\piail} (s_H = s | s_h = s^\prime, a_h = a^{1}) \leq \sum_{s^\prime \in \goodS} d^{\piail}_h (s^\prime) \leq 1.
	    \end{align*}
    Therefore, we obtain
    \begin{align*}
        \sum_{s \in \gV_{H}} \sum_{s^\prime \in \goodS} A (s, s^\prime) \leq \sum_{s \in \gV_{H}} c(s). 
    \end{align*}
    We further obtain $\forall s^\prime \in \goodS$,
    \begin{align*}
        \sum_{s \in \gV_{H}} A (s, s^\prime) &= \sum_{s \in \gV_{H}} d^{\piail}_h (s^\prime) \sP^{\piail} (s_H = s | s_h = s^\prime, a_h = a^{1}) \piail_H (a^{1}|s) = d(s^\prime).
    \end{align*}
    Thus we have verified the conditions in \cref{lem:mn_variables_opt_unique}. By \cref{lem:mn_variables_opt_unique}, we obtain $\pi_h(a^{1} | s) = \piE_h(a^{1} | s)  = 1, \forall s \in \goodS$ is the unique optimal solution of $\min_{\pi_h} \Loss_{H}$. Therefore, we prove that $\pi_h(a^{1} | s) = \piE_h(a^{1} | s)  = 1, \forall s \in \goodS$ is the unique optimal solution of $\argmin_{\pi_h} \Loss_h + \sum_{h^\prime=h+1}^H \Loss_{h^\prime}$. Thus, we finish the induction proof and the whole proof is done. 

\end{proof}

\subsection{Proof of Proposition \ref{prop:non_convex}}
\label{appendix:proof_prop:non_convex}

\begin{figure}[t]
    \centering
    \includegraphics[width=\linewidth]{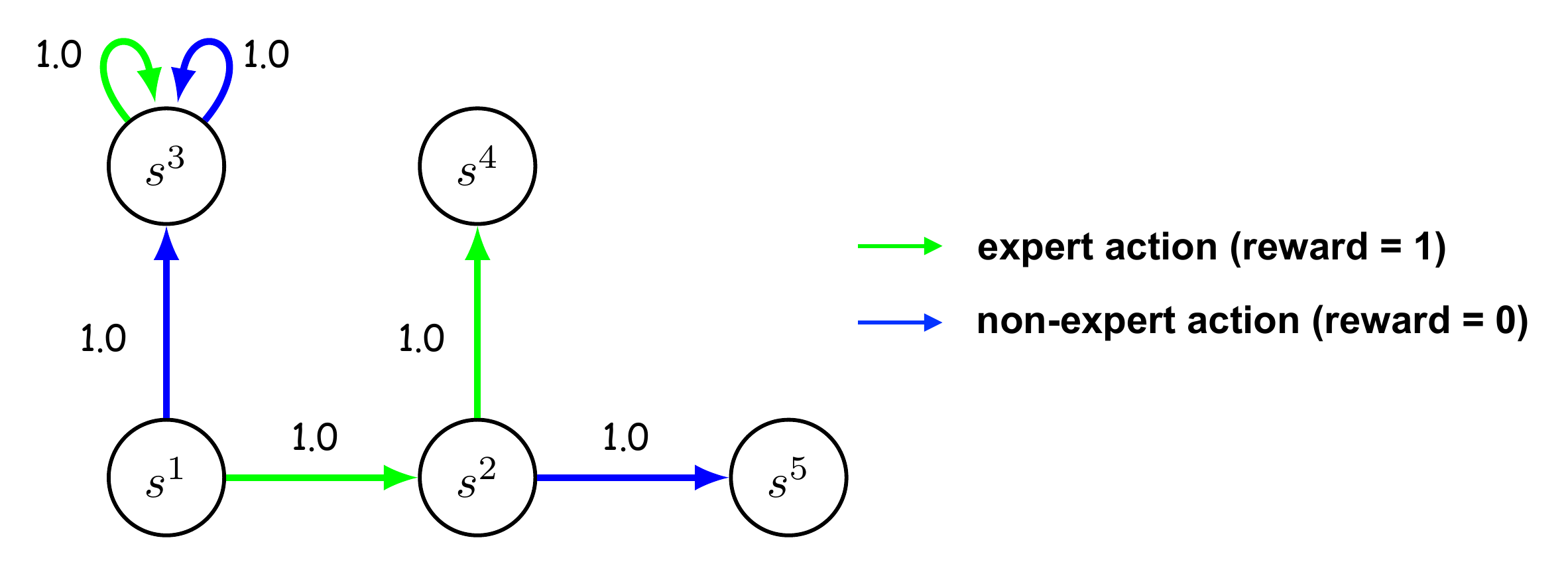}
    \caption{A simple example to show that {TV-AIL}'s objective in \eqref{eq:ail} is non-convex.}
    \label{fig:non_convex}
\end{figure}

\begin{proof}[Proof of \cref{prop:non_convex}]
To prove that the objective in \eqref{eq:ail} could be non-convex, we provide an instance shown in \cref{fig:non_convex}. In particular, there are 5 states $(s^{1}, s^{2}, s^{3}, s^{4}, s^{5})$ and two actions $(\GREEN{a}, \BLUE{a})$. Each arrow shows a deterministic transition. The initial state is $s^1$ and decision horizon is $2$. Assume  $\GREEN{a}$ is the expert action and there is only one expert trajectory: $(s^{1}, \GREEN{a}) \rar (s^{2}, \BLUE{a})$. We can calculate the empirical distribution:
\begin{align*}
    \widehat{d^{\piE}_{1}}(s^{1}, \GREEN{a}) = 1, \quad  \widehat{d^{\piE}_{2}}(s^{2}, \BLUE{a}) = 1.
\end{align*}
Let us use the following notations: $x := \pi_1(a^{1} | s^{1})$ and $y := \pi_2(a^{1} | s^{2})$. In time step $h=1$, we have 
\begin{align*}
    &\quad \Loss_1 \\
    &= \sum_{(s, a)} \labs d^{\pi}_1(s, a) - \widehat{d^{\piE}_1}(s, a) \rabs \\
    &= \labs d^{\pi}_{1}(s^{1}, \GREEN{a}) - \widehat{d^{\piE}_{1}} (s^{1}, \GREEN{a}) \rabs + \labs d^{\pi}_{1} (s^{1}, \BLUE{a}) - \widehat{d^{\piE}_1}(s^{1}, \BLUE{a}) \rabs \\
    &= \labs x - 1 \rabs + \labs 1-x - 0 \rabs = 2(1 - x).
\end{align*}
In time step $h=2$, we have 
\begin{align*}
    &\quad \Loss_2 \\
    &= \sum_{(s, a)} \labs d^{\pi}_2(s, a) - \widehat{d^{\piE}_2}(s, a) \rabs \\
    &= \labs d^{\pi}_{2}(s^{2}, \GREEN{a}) - \widehat{d^{\piE}_2}(s^{2}, \GREEN{a}) \rabs +  \labs d^{\pi}_{2}(s^{2}, \BLUE{a}) - \widehat{d^{\piE}_2}(s^{2}, \BLUE{a}) \rabs + \labs d^{\pi}_{2}(s^{3}, \GREEN{a}) - \widehat{d^{\piE}_2}(s^{3}, \GREEN{a}) \rabs + \labs d^{\pi}_{2}(s^{3}, \BLUE{a}) - \widehat{d^{\piE}_2}(s^{3}, \BLUE{a}) \rabs \\
    &= \labs xy - 1 \rabs + \labs x(1-y) - 0 \rabs + (1-x)  \\
    &= 1 - xy + x - xy + 1 - x = 2(1 - xy).
\end{align*}
Thus, we have that 
\begin{align*}
    f(x, y) = \Loss_1 + \Loss_2 = 2 \lp 2 - x - xy \rp.
\end{align*}
Furthermore, we can compute that 
    \begin{align*}
    \nabla f(x, y) = \bvec{-2 - 2y \\ -2x}, \quad \nabla^2 f(x, y) = \bvec{0 & -2 \\ -2 & 0}.
    \end{align*}
Since $\nabla^2 f(x, y)$ is not a PSD, we claim that $f(x, y)$ is non-convex w.r.t. $(x, y)$.

\end{proof}

\subsection{An Example for TV-AIL in RABS MDPs}
\label{appendix:proof_claim_example_ail_success}

To gain a deeper understanding of the horizon-free imitation gap and the coupling structure in the state-action distribution matching, let us examine the following example.

\begin{example} \label{example:ail_success}

Consider the mentioned MDP shown in \cref{fig:toy_reset_cliff}. Furthermore, assume that 
the agent is provided with 2 expert trajectories: $\tr^1 = (s^{1}, \GREEN{a}) \rar (s^{1}, \GREEN{a}) $ and $\tr^2 = (s^{1}, \GREEN{a}) \rar (s^{2}, \GREEN{a})$, where $\GREEN{a}$ is the expert action.

Let us first study the performance of BC. According to \eqref{eq:pi_bc}, we find that BC exactly recovers the expert action except that it poses a uniform policy on the {non-visited} $s^{2}$ in time step $h = 1$. As a result, BC makes a mistake with probability $\rho(s^{2}) \cdot \pi_1^{\bc}(\BLUE{a} | s^{2}) = 0.25$. Accordingly, its imitation gap is $0.25 \cdot 2 = 0.5$.

For {TV-AIL}, it makes sense to guess that the expert action is recovered on visited states (otherwise, it incurs a state-action distribution matching loss). We argue that {TV-AIL} exactly recovers the expert action even on the {non-visited} state $s^{2}$ in time step $h=1$. Consequently, the imitation gap of {TV-AIL} is 0, which is smaller than BC. Here we mainly explain intuition. For the formal proof, please refer to Appendix \ref{appendix:proof_of_prop:ail_general_reset_cliff}.

Assume that TV-AIL takes the expert action on visited states and let $\pi_{1}(\BLUE{a} | s^{2}) = 1 - \beta$, where $\beta \in [0, 1]$. We note that a positive $\beta$ makes no difference for the loss function in time step $h = 1$, since 
\begin{align*}
&\quad |d^{\pi}_1(s^{2}, \GREEN{a}) - \widehat{d^{\piE}_1}(s^{2}, \GREEN{a}) | + |d^{\pi}_1(s^{2}, \BLUE{a}) - \widehat{d^{\piE}_1}(s^{2}, \BLUE{a}) |
\\
&= |d^{\pi}_1(s^{2}, \GREEN{a}) - 0 | + |d^{\pi}_1(s^{2}, \BLUE{a}) - 0 | = \rho(s^{2}).
\end{align*}
However, it matters for the loss function in time step $h=2$. By \eqref{eq:flow_link}, we can compute the state-action distribution in time step $h=2$: 
\begin{align*}
   d^{\pi}_2(s^{1}, \GREEN{a}) = 0.25(1+\beta), d^{\pi}_2(s^{1}, \BLUE{a}) = 0, \quad d^{\pi}_2(s^{2}, \GREEN{a}) = 0.25(1+\beta) , d^{\pi}_2(s^{2}, \BLUE{a}) = 0, \quad d^{\pi}_2(b, \BLUE{a}) = 0.5(1-\beta).
\end{align*}
Then the state-action distribution matching loss becomes
\begin{align*}
    \text{Loss} (\beta) &= \sum_{h=1}^{2} \sum_{(s, a)} | d^{\pi}_h(s, a) - \widehat{d^{\piE}_h}(s, a) | \\
    &= 1.0 +  \sum_{(s, a)}  | d^{\pi}_2(s, a) - \widehat{d^{\piE}_2}(s, a) | \\
    &= 1.0 + 2 |0.25(1+\beta)  - 0.5| + |0.5(1-\beta) - 0| \\
    &= 2 - \beta,
\end{align*}
which has a unique globally optimal solution at $\beta = 1$. In plain language, if the agent selects a wrong action in the first time step, it may go to the bad absorbing state in the second time step. This results in a large loss because the expert policy never visits the bad absorbing state. Therefore, to minimize the cumulative state-action distribution matching losses, TV-AIL has to select the action that can avoid the bad status.

\end{example}

\subsection{Discussion on the Difference with the Dynamic-programming-based Proof} 
\label{appendix:difference_with_the_dp_analysis}

In this part, we elaborate on the difference between our proof and the dynamic-programming-based proof. Our proof of \cref{prop:ail_general_reset_cliff} utilizes RBAS MDPs properties directly to characterize the optimal policies and does not require forward substitution as in the direct dynamic programming (DP) technique. In contrast, the DP proof computes a functional by backward induction and then uses forward substitution to find the optimal policy.

Specifically, the backward induction of DP's proof computes a functional by solving the following \dquote{cost-to-go} minimization problem \citep{bertsekas2012dynamic}:
\begin{align*}
    \pi^{\cdp}_{h} &\in \argmin_{\pi_h}  f^{\cdp}_h (\pi_h; \pi_1, \ldots, \pi_{h-1}, \pi_{h+1}, \ldots, \pi_{H+1})
\end{align*}
with 
\begin{align*}
      f^{\cdp}_h &=  \sum_{(s, a)} \labs d^{\pi}_h(s, a) - \widehat{d^{\piE}_h}(s, a) \rabs + \sum_{h^\prime = h + 1}^{H} \sum_{(s, a)} \labs d^{\pi}_{h^\prime} (s, a) - \widehat{d^{\piE}_{h^\prime}}(s, a) \rabs + \constant. 
\end{align*}
Here $f^{\cdp}_h$ is conditioned on an arbitrary choice of $(\pi_1, \ldots, \pi_{h-1})$. Thus, the resultant $\pi^{\cdp}_h$ is a functional. Instead, the objective in \eqref{eq:main_text_3} relies on the optimal solution of $(\piail_1, \ldots, \piail_{h-1})$. This difference matters in two aspects. One the one hand, when characterizing the optimal solution to $\min_{\pi_h} f_h$, we use the condition in \eqref{eq:visit_good_states} to argue that $d^{\piail}_h(s) > 0$ for all $s \in \goodS$. This auxiliary property facilitates later arguments in Steps (I) and (II). In contrast, the DP proof cannot use this property as $\pi_1, \ldots, \pi_{h-1}$ are chosen arbitrarily. On the other hand, the resultant optimal policy to minimizing $f_h$ in our proof is an optimal solution set, while the DP proof computes a functional.

\subsection{Proof of Theorem \ref{theorem:ail_reset_cliff}}

\begin{proof}[Proof of \cref{theorem:ail_reset_cliff}]

According to \cref{eq:value_dual_representation}, we have that  
\begin{align*}
\labs  V({\piE}) - V({\piail}) \rabs &= \bigg\vert \sum_{h=1}^{H}  \sum_{(s, a) }  d^{\piE}_h(s, a) r_h(s, a) - d^{\piail}_h(s, a) r_h(s, a) \bigg\vert.
\end{align*}
From \cref{prop:ail_general_reset_cliff}, we have that for any $h \in [H-1]$, $\piail_{h} (a^{1}|s) = \piE_h (a^{1}|s) = 1, \forall s \in \goodS$. Therefore, $\piE$ and $\piail$ never visit bad states and for any $h \in [H-1]$, $d^{\piE}_h(s, a) = d^{\piail}_h(s, a)$. As a result, the policy value gap is upper bounded by the state-action distribution discrepancy in the last time step.
\begin{align*}
 \labs  V({\piE}) - V({\piail}) \rabs &=  \bigg\vert \sum_{(s, a) } d^{\piE}_H (s, a) r_{H}(s, a) - d^{\piail}_H (s, a) r_{H}(s, a) \bigg\vert.
 \end{align*}
On the one hand, since $r_{H} (s, a) 
\in [0, 1]$, we have that
\begin{align*}
    \bigg\vert \sum_{(s, a) } d^{\piE}_H (s, a) r_{H}(s, a) - d^{\piail}_H (s, a) r_{H}(s, a) \bigg\vert \leq 1.
\end{align*}
On the other hand, by triangle inequality, we have
 \begin{align*}
  \bigg\vert \sum_{(s, a) } d^{\piE}_H (s, a) r_{H}(s, a) - d^{\piail}_H (s, a) r_{H}(s, a) \bigg\vert
\leq \sum_{(s, a) } \labs d^{\piE}_H (s, a) - \widehat{d^{\piE}_H} (s, a) \rabs + \sum_{(s, a)} \labs \widehat{d^{\piE}_H} (s, a) - d^{\piail}_H (s, a)  \rabs.
\end{align*}
Therefore, we derive that
\begin{align*}
    \labs  V({\piE}) - V({\piail}) \rabs  \leq \min \bigg\{ 1, \sum_{(s, a) } \labs d^{\piE}_H (s, a) - \widehat{d^{\piE}_H} (s, a) \rabs + \sum_{(s, a)} \labs \widehat{d^{\piE}_H} (s, a) - d^{\piail}_H (s, a)  \rabs. \bigg\}
\end{align*}
In the following part, we want to prove that the optimality of $\piail$ implies that $\sum_{(s, a)} \vert \widehat{d^{\piE}_H} (s, a) - d^{\piail}_H (s, a)  \vert \leq \sum_{(s, a)} \vert d^{\piE}_H (s, a) - \widehat{d^{\piE}_H} (s, a) \vert$.

By \cref{lem:n_vars_opt_greedy_structure}, it holds that
\begin{align*}
    \piail_{H} &\in \argmin_{\pi_H} \sum_{(s, a) } \labs \widehat{d^{\piE}_H} (s, a) - d^{\pi}_H (s, a) \rabs,
\end{align*}
where $d^{\pi}_{H}$ is computed by $(\piail_1, \ldots, \piail_{H-1}, \pi_H)$. From \cref{prop:ail_general_reset_cliff}, we know that $(\piail_1, \ldots, \piail_{H-1}) = \\ (\piE_1, \ldots, \piE_{H-1})$ and thus $d^{\piail}_H (s) = d^{\piE}_H (s), \forall s \in \gS$.
Then, we arrive at 
\begin{align*}
    \piail_{H}  &\in \argmin_{\pi_H} \sum_{(s, a) } \labs \widehat{d^{\piE}_H} (s, a) - d^{\piE}_H (s, a) \rabs, 
\end{align*}
This implies that
\begin{align*}
    \sum_{(s, a) } \labs \widehat{d^{\piE}_H} (s, a) - d^{\piail}_H (s, a)  \rabs &\leq \sum_{(s, a) } \labs \widehat{d^{\piE}_H} (s, a) - d^{\piE}_H(s, a)  \rabs.
\end{align*}
Then we obtain
\begin{align}
&\quad \labs  V({\piE}) - V({\piail}) \rabs  \nonumber   \\
&\leq \min\bigg\{ 1,  2 \sum_{(s, a) } \labs d^{\piE}_H (s, a) - \widehat{d^{\piE}_H} (s, a) \rabs \bigg\} \nonumber \\
&= \min \bigg\{ 1, 2\sum_{s \in \gS} \labs d^{\piE}_H (s) - \widehat{d^{\piE}_H} (s)  \rabs \bigg\}. \label{eq:vail_reset_cliff_value_gap_last_step_estimation_error} 
\end{align}
Finally, we apply \citep[Theorem 1]{han2015minimax} to upper bound the estimation error in \eqref{eq:vail_reset_cliff_value_gap_last_step_estimation_error}:
\begin{align*}
    &\quad \expect \ls \labs  V({\piE}) - V({\piail}) \rabs \rs  \\
    &\leq \min \bigg\{ 1, 2 \expect \ls \lnorm \widehat{d^{\piE}_H} (\cdot) - d^{\piE}_H (\cdot) \rnorm_{1} \rs \bigg\} \\
    &\leq \min \bigg\{1, 2 \sqrt{ \frac{|\gS| - 1}{N}} \bigg\}.
\end{align*}
\end{proof}

\subsection{Proof of Theorem \ref{theorem:ail_approximate_reset_cliff} }
\label{appendix:proof_theorem:ail_approximate_reset_cliff}

To prove \cref{theorem:ail_approximate_reset_cliff}, we need to re-build the optimality condition by a stage-coupled analysis. This part is stated in \cref{prop:ail_general_reset_cliff_approximate_solution}. Before we present \cref{prop:ail_general_reset_cliff_approximate_solution}, we mention a useful property of \textsf{TV-AIL}'s objective on RBAS MDPs.

\begin{lem}
\label{lemma:ail_policy_ail_objective_equals_expert_policy_ail_objective}
Consider RBAS MDPs satisfying \cref{asmp:reset_cliff}. Suppose that $\piail$ is an optimal solution of \eqref{eq:ail}, then  $\piail$ and $\piE$ achieve the same loss, i.e., $\sum_{h=1}^{H} \Vert d^{\piail}_h - \widehat{d^{\piE}_h} \Vert_1 = \sum_{h=1}^{H} \Vert d^{\piE}_h - \widehat{d^{\piE}_h} \Vert_1$.
\end{lem}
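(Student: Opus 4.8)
The statement is an equality of two $\ell_1$ objectives, so the plan is to prove the two inequalities separately. The inequality $\sum_{h=1}^{H}\lnorm d^{\piail}_h - \widehat{d^{\piE}_h}\rnorm_1 \le \sum_{h=1}^{H}\lnorm d^{\piE}_h - \widehat{d^{\piE}_h}\rnorm_1$ is immediate: $\piE$ is a feasible policy in $\Pi$, and $\piail$ is by definition a global minimizer of \eqref{eq:ail}, so its objective value cannot exceed that of $\piE$.

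For the reverse inequality I would first invoke \cref{prop:ail_general_reset_cliff}, which gives $\piail_h(a^1|s) = 1$ for every $s \in \goodS$ and $h \in [H-1]$; this is exactly the expert action on good states. Since the initial distribution is supported on $\goodS$ (\cref{fact:ail_estimation}), and under $a^1$ good states transition only to good states while bad states are absorbing, neither $\piail$ nor $\piE$ ever visits a bad state within the first $H$ time steps. Therefore $\piail$ and $\piE$ induce the same state-action visitation distribution in every step $h \in [H-1]$, i.e. $d^{\piail}_h = d^{\piE}_h$, and moreover the step-$H$ state marginal, which depends only on $\pi_1,\dots,\pi_{H-1}$, satisfies $d^{\piail}_H(s) = d^{\piE}_H(s)$ for all $s \in \gS$. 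In particular $\lnorm d^{\piail}_h - \widehat{d^{\piE}_h}\rnorm_1 = \lnorm d^{\piE}_h - \widehat{d^{\piE}_h}\rnorm_1$ for $h \in [H-1]$, so it only remains to compare the last time step.

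For $h = H$ I would use the triangle inequality to collapse the action coordinate,
\begin{align*}
    \lnorm d^{\piail}_H - \widehat{d^{\piE}_H}\rnorm_1 = \sum_{(s,a)} \labs d^{\piail}_H(s,a) - \widehat{d^{\piE}_H}(s,a)\rabs \ge \sum_{s} \labs d^{\piail}_H(s) - \widehat{d^{\piE}_H}(s)\rabs = \sum_{s} \labs d^{\piE}_H(s) - \widehat{d^{\piE}_H}(s)\rabs,
\end{align*}
and then observe that, since $\piE$ is deterministic and equals $a^1$ on good states while $\widehat{d^{\piE}_H}(s,a) = 0$ for $a \ne a^1$ (\cref{fact:ail_estimation}), the state-marginal $\ell_1$ distance and the state-action $\ell_1$ distance agree for the expert, i.e. $\sum_{s}|d^{\piE}_H(s) - \widehat{d^{\piE}_H}(s)| = \lnorm d^{\piE}_H - \widehat{d^{\piE}_H}\rnorm_1$. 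Summing this last-step bound with the equalities from steps $1,\dots,H-1$ yields $\sum_{h=1}^{H}\lnorm d^{\piail}_h - \widehat{d^{\piE}_h}\rnorm_1 \ge \sum_{h=1}^{H}\lnorm d^{\piE}_h - \widehat{d^{\piE}_h}\rnorm_1$, and combining with the first inequality gives the claimed equality.

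I do not expect a genuine obstacle here; the lemma is essentially a corollary of \cref{prop:ail_general_reset_cliff}. The only point that needs care is bookkeeping the claim that $\piail$ and $\piE$ coincide through step $H-1$ so that their visitation distributions match there and their state marginals match at step $H$ — once that is in place, the rest is a one-line triangle-inequality manipulation together with \cref{fact:ail_estimation}.
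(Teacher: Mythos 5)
Your proposal is correct, and it shares the paper's skeleton: both proofs invoke \cref{prop:ail_general_reset_cliff} to conclude that $\piail$ and $\piE$ agree on good states through step $H-1$, hence match in state-action distribution at steps $1,\dots,H-1$ and in state marginal at step $H$, reducing the lemma to a comparison at the last time step. Where you differ is in how that last step is handled. The paper argues $\Loss_H(\piail) = \Loss_H(\piE)$ directly: by \cref{lem:n_vars_opt_greedy_structure}, $\piail_H$ minimizes the one-step matching loss given $\piail_{1:H-1}$, and \cref{lem:single_variable_opt} shows that $\piE_H$ is also a minimizer of the same per-state piecewise-linear problem. You instead never use the last-step optimality of $\piail_H$: you lower-bound $\lnorm d^{\piail}_H - \widehat{d^{\piE}_H}\rnorm_1$ by the state-marginal $\ell_1$ distance via the triangle inequality, identify that marginal distance with $\lnorm d^{\piE}_H - \widehat{d^{\piE}_H}\rnorm_1$ using \cref{fact:ail_estimation} and the determinism of $\piE$, and close the sandwich with the trivial direction coming from the global optimality of $\piail$. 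Your route is more elementary (no auxiliary optimization lemmas for the last step) and shows in passing that $\piE_H$ attains the minimum possible last-step loss among all policies sharing its step-$H$ state marginal; the paper's route is the one that produces the per-step optimality facts it reuses elsewhere. One small bookkeeping point: the claim that neither policy visits bad states rests on $\rho$ being supported on $\goodS$ and on $a^1$ keeping the agent inside $\goodS$, both of which are implicit in \cref{asmp:reset_cliff} as the paper uses it, rather than on \cref{fact:ail_estimation} itself (which concerns only the empirical estimate); this matches the paper's own treatment and is not a gap.
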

Refer to Appendix \ref{appendix:proof_lemma:ail_policy_ail_objective_equals_expert_policy_ail_objective} for the proof. Note that $\piail$ may be different with $\piE$ in the last time step on RBAS MDPs.  The following proposition demonstrates that the distance between the approximately optimal solution and the exactly optimal solution can be properly controlled. 

\begin{prop}
\label{prop:ail_general_reset_cliff_approximate_solution}
For each tabular and episodic MDP satisfying \cref{asmp:reset_cliff}, define the candidate policy set $\Pi^{\opt} = \{ \pi \in \Pi: \forall h \in [H], \exists s \in \goodS, \pi_h (a^{1}|s) > 0 \}$. Given the expert state-action distribution estimation $\widehat{d^{\piE}_{H}}$, suppose that $\widebar{\pi} \in \Pi^{\opt}$ is an $\varepsilon$-optimal solution (refer to \cref{defn:approximate_solution}). For any $N\geq 1$, we have the following approximate optimality condition almost surely:
\begin{align}   \label{eq:approximate_optimality_condition}
&\quad c (\widebar{\pi}) \bigg[ \sum_{h=1}^{H} \sum_{\ell=1}^{h-1} \sum_{s \in \goodS} d^{\widebar{\pi}}_{\ell} (s) \lp 1 - \widebar{\pi}_{\ell} (a^{1}|s)  \rp + \sum_{s \in  \gS^{\widebar{\pi}}_H}  d^{\widebar{\pi}}_{H} (s) \big( \min \big\{ 1,  \frac{\widehat{d^{\piE}_{H}} (s)}{d^{\piE}_H (s)} \big\} - \widebar{\pi}_H (a^1|s)  \big)  \bigg] \leq \varepsilon,
\end{align}
where $c(\widebar{\pi}) > 0$ is defined as 
\begin{align*}
     c (\widebar{\pi}) := \min_{1 \leq \ell < h \leq H, s, s^\prime \in \goodS} \lb \sP^{\widebar{\pi}} \lp s_{h} = s |s_{\ell} = s^\prime, a_{\ell} = a^{1} \rp \rb.
\end{align*}
Here $\sP^{\widebar{\pi}} \lp s_{h} = s |s_{\ell} = s^\prime, a_{\ell} = a^{1} \rp $ is the visitation probability of $s$ in time step by starting from $s^\prime, a^\prime$ in time step $\ell$, which is jointly determined by the transition function and policy $\widebar{\pi}$. In addition,  
\begin{align*}
    \gS^{\widebar{\pi}}_H  = \lb s \in \goodS: \widebar{\pi}_H (a^1|s) \leq \min \{1,  \widehat{d^{\piE}_{H}} (s) / d^{\piE}_H (s) \}  \rb.
\end{align*}
\end{prop}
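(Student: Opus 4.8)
The plan is to mimic the stage-coupled dynamic-programming argument used for the exact case (\cref{prop:ail_general_reset_cliff}), but track the slack $\varepsilon$ through each stage instead of forcing equalities. The starting point is \cref{lemma:ail_policy_ail_objective_equals_expert_policy_ail_objective}: since $\widebar\pi$ is $\varepsilon$-optimal, we have $\sum_{h=1}^H \lVert d^{\widebar\pi}_h - \widehat{d^{\piE}_h}\rVert_1 \le \sum_{h=1}^H \lVert d^{\piE}_h - \widehat{d^{\piE}_h}\rVert_1 + \varepsilon$. The idea is to lower bound the left-hand side by a sum of nonnegative ``defect'' terms, each measuring how far $\widebar\pi$ is from taking $a^1$ on some good state at some time, plus the unavoidable last-step estimation error $\sum_{h<H}0 + \lVert d^{\piE}_H - \widehat{d^{\piE}_H}\rVert_1$-type quantity; subtracting the unavoidable part leaves $\sum(\text{defects}) \le \varepsilon$, which after extracting the factor $c(\widebar\pi)$ from the Bellman-flow propagation gives \eqref{eq:approximate_optimality_condition}.

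\textbf{Key steps, in order.} First I would decompose each single-stage loss $\Loss_h(\widebar\pi) = \lVert d^{\widebar\pi}_h - \widehat{d^{\piE}_h}\rVert_1$ exactly as in Step I of \cref{subsec:key_analysis_ideas}: using \cref{fact:ail_estimation}, $\Loss_h(\widebar\pi) = \sum_{s\in\goodS}\bigl(\lvert \widehat{d^{\piE}_h}(s) - d^{\widebar\pi}_h(s)\widebar\pi_h(a^1|s)\rvert + d^{\widebar\pi}_h(s)(1-\widebar\pi_h(a^1|s))\bigr) + \sum_{s\in\badS} d^{\widebar\pi}_h(s)$. Second, I would express the bad-state mass recursively via the Bellman-flow equation \eqref{eq:flow_link}: $\sum_{s\in\badS} d^{\widebar\pi}_h(s) = \sum_{\ell=1}^{h-1}\sum_{s\in\goodS} d^{\widebar\pi}_\ell(s)(1-\widebar\pi_\ell(a^1|s))$, which produces the first double-sum in \eqref{eq:approximate_optimality_condition} once summed over $h$. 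Third, for the good-state absolute-value terms I would use the elementary bound $\lvert x\rvert \ge -x$ and, for visited states, $\lvert \widehat{d}(s) - d(s)\widebar\pi_H(a^1|s)\rvert \ge \widehat{d}(s) - d(s)\widebar\pi_H(a^1|s)$ when $\widebar\pi_H(a^1|s) \le \widehat{d}(s)/d(s)$, i.e.\ exactly on $s\in\gS_{\widebar\pi}$; the complement contributes to the ``unavoidable'' estimation-error part that cancels. Fourth — the crucial propagation step — I would bound $d^{\widebar\pi}_H(s)$ from below on visited good states: since $\widebar\pi\in\Pi^{\opt}$, at every stage $\widebar\pi$ places positive mass on $a^1$ at some good state, and by the reachability assumption in \cref{asmp:reset_cliff} together with $\sP^{\widebar\pi}(s_H = s \mid s_\ell = s', a_\ell = a^1) \ge c(\widebar\pi) > 0$, one can factor out $c(\widebar\pi)$ from every term where a deviation $1-\widebar\pi_\ell(a^1|s')$ at an earlier stage $\ell$ gets weighted by a propagation probability. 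Finally, assembling the telescoped sum and using that the $\piE$-baseline equals $\lVert d^{\piE}_H - \widehat{d^{\piE}_H}\rVert_1$ (all earlier stages contribute zero), I subtract and collect the defect terms to obtain \eqref{eq:approximate_optimality_condition}.

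\textbf{Main obstacle.} The hard part is the bookkeeping in step four: in the exact case one can argue stage by stage that $\widebar\pi_h(a^1|\cdot)=1$ and all downstream probabilities are positive, but here a deviation at stage $\ell$ can ``leak'' into a loss at any later stage $h>\ell$, and one must show that the sum of all these leaked contributions is still controlled by a \emph{single} copy of $\varepsilon$, not by $H\varepsilon$ or worse. This requires carefully organizing the inequalities so that, after inserting the lower bound $d^{\widebar\pi}_h(s) \ge c(\widebar\pi)\,d^{\widebar\pi}_\ell(s')\,\widebar\pi_\ell(a^1|s')$ for the relevant predecessor, the resulting expression is a telescoping (or at least monotone) combination whose total does not exceed $\Loss$'s gap over the $\piE$-baseline. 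I also expect some care is needed to handle the last-step term $\min\{1,\widehat{d^{\piE}_H}(s)/d^{\piE}_H(s)\}$ and to verify $d^{\piE}_H(s)>0$ for $s\in\goodS$ so the ratio is well-defined, but these follow from \cref{asmp:reset_cliff} and the arguments already recorded in \cref{appendix:reset_cliff_and_ail_properties}.
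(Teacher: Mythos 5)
Your overall skeleton matches the paper's: reduce to $f(\widebar{\pi}) - f(\piE) \leq \varepsilon$ via \cref{lemma:ail_policy_ail_objective_equals_expert_policy_ail_objective}, decompose the per-stage losses using \cref{fact:ail_estimation}, track the bad-state mass through the Bellman flow, and factor out $c(\widebar{\pi})$ at the end. But there are two genuine gaps. First, your claim that the $\piE$-baseline equals $\Vert d^{\piE}_H - \widehat{d^{\piE}_H}\Vert_1$ with ``all earlier stages contributing zero'' is false: $\Loss_h(\piE) = \Vert d^{\piE}_h - \widehat{d^{\piE}_h}\Vert_1$ is the stage-$h$ estimation error, which is nonzero for finite $N$ and in aggregate can be very large (cf.\ \cref{tab:reset_cliff_horizon_estimation_error}). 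Second, and more seriously, the direct decomposition you propose collapses to the trivial bound. Writing
\begin{align*}
\Loss_h(\widebar{\pi}) - \Loss_h(\piE) = \sum_{s\in\goodS}\Bigl[\,\bigl\lvert\widehat{d^{\piE}_h}(s) - d^{\widebar{\pi}}_h(s)\widebar{\pi}_h(a^1|s)\bigr\rvert - \bigl\lvert\widehat{d^{\piE}_h}(s) - d^{\piE}_h(s)\bigr\rvert\,\Bigr] + \sum_{s\in\goodS}d^{\widebar{\pi}}_h(s)\bigl(1-\widebar{\pi}_h(a^1|s)\bigr) + \sum_{s\in\badS}d^{\widebar{\pi}}_h(s),
\end{align*}
the elementary bound $\lvert a\rvert - \lvert b\rvert \geq -\lvert a-b\rvert$ gives for the bracket only $-(d^{\piE}_h(s) - d^{\widebar{\pi}}_h(s)\widebar{\pi}_h(a^1|s))$, and summing over $s\in\goodS$ this equals exactly \emph{minus} the sum of the two positive terms (both totals equal $1 - \sum_s d^{\widebar{\pi}}_h(s)\widebar{\pi}_h(a^1|s)$). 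So with $\lvert x\rvert \geq -x$ alone you obtain $\Loss_h(\widebar{\pi}) \geq \Loss_h(\piE)$ and no surviving defect term: when $\widehat{d^{\piE}_h}$ underestimates some good states, the good-state $\ell_1$ error can \emph{decrease} under a deviating policy by exactly the amount the bad-state mass increases.

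The paper's mechanism for extracting a strictly positive multiple of each defect is the within-stage telescoping over hybrid policies in \eqref{eq:sum_telescoping}, $\Loss_h(\pi_{1:h}) - \Loss_h(\piE_{1:h}) = \sum_{\ell=1}^{h}\bigl[\Loss_h(\pi_{1:\ell},\piE_{\ell+1:h}) - \Loss_h(\pi_{1:\ell-1},\piE_{\ell:h})\bigr]$, so that each link isolates the deviation at a single stage $\ell$ with all later stages frozen at $\piE$. Each link is then lower-bounded by the quantitative regularity lemmas (\cref{lem:mn_variables_opt_regularity} for $\ell < h$, \cref{lem:single_variable_regularity} for the $\ell = h = H$ term), whose whole point is that the net loss increase is at least $\min_{s}\{d^{\widebar{\pi}}_{\ell}(s')\,\sP^{\widebar{\pi}}(s_h = s \mid s_{\ell} = s', a_{\ell} = a^1)\}$ per unit of defect $1-\widebar{\pi}_{\ell}(a^1|s')$ --- this is precisely where $c(\widebar{\pi})$ and the set $\gS_{\widebar{\pi}}$ enter, and why the final bound carries a single $\varepsilon$ rather than $H\varepsilon$. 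Your ``main obstacle'' paragraph correctly senses that this bookkeeping is the crux, but the telescoping you actually describe (unrolling the bad-state mass across time steps) is a different and insufficient one; without the hybrid-policy decomposition and the regularity lemmas the argument does not close.
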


Proof of \cref{prop:ail_general_reset_cliff_approximate_solution} is rather technical and is deferred to Appendix \ref{appendix:proof_prop:ail_general_reset_cliff_approximate_solution}. We explain \cref{prop:ail_general_reset_cliff_approximate_solution} by connecting it with \cref{prop:ail_general_reset_cliff}. In particular, if $\varepsilon = 0$, we can show that the optimality condition in  \cref{prop:ail_general_reset_cliff_approximate_solution} reduces to that in \cref{prop:ail_general_reset_cliff}. To see this, for each $h \in [H-1]$ and $s \in \goodS$, since $c(\widebar{\pi}) > 0$ and $d^{\widebar{\pi}}_h (s) > 0$, we must have $\widebar{\pi}_{h}(a^{1} | s) = 1$ for all $h \in [H-1]$, while there may exist many optimal solutions in the last step policy optimization (corresponding to the second term in \eqref{eq:approximate_optimality_condition}).

Equipped with \cref{prop:ail_general_reset_cliff_approximate_solution}, we can obtain the horizon-free sample complexity for the approximately optimal solution of {TV-AIL} in \cref{theorem:ail_approximate_reset_cliff}.

\begin{proof}[Proof of \cref{theorem:ail_approximate_reset_cliff}]
Given the estimation $\widehat{d^{\piE}}$, we consider {TV-AIL}'s objective.
\begin{align*}
    \min_{\pi} \sum_{h=1}^{H} \sum_{(s, a) } \labs d^{\pi}_h(s, a) - \widehat{d^{\piE}_h}(s, a) \rabs.
\end{align*}
Suppose $\widebar{\pi}$ is an $\varepsilon$-optimal solution (refer to \cref{defn:approximate_solution}). First,  we construct an optimal solution $\piail$ based on \cref{prop:ail_general_reset_cliff}:
\begin{itemize}
    \item By \cref{prop:ail_general_reset_cliff}, we have $\forall h \in [H-1], s \in \goodS, \piail_{h} (a^{1}|s) = \piE_{h} (a^{1}|s)$.
    
    \item For the last time step $H$, we defined a set of states $\gS_H^1 := \{s \in \goodS: \widehat{d^{\piE}_H} (s) < d^{\piE}_H (s)   \}$. The policy in the last time step is defined as $\forall s \in \gS_H^1$, $\piail_{H} (a^{1}|s) = \widehat{d^{\piE}_H} (s) / d^{\piE}_H (s)$ and $\forall s \in \goodS \setminus \gS_H^1$, $\piail_{H} (a^{1}|s) = 1$. In a word, $\forall s \in \goodS$, $\piail_{H} (a^{1}|s) = \min \{\widehat{d^{\piE}_H} (s) / d^{\piE}_H (s),1  \}$.
    
    \item For simplicity of analysis, we also define the policy on bad states,  although $\piail$ never visit bad states. $\forall h \in [H], s \in \badS, \piail_{h} (\cdot|s) = \widebar{\pi}_{h} (\cdot|s)$.   
\end{itemize}

Next, we verify that such defined $\piail$ is an optimal solution of \eqref{eq:ail}. According to \cref{prop:ail_general_reset_cliff}, it suffices to show that $\pi_H$ achieves the optimality. According to \cref{lem:n_vars_opt_greedy_structure}, we need to prove that 
\begin{align*}
    \piail_{H} \in \min_{\pi_H} \sum_{(s, a)} \labs d^{\pi}_{H}(s, a) - \widehat{d^{\piE}_H}(s, a) \rabs,
\end{align*}
where $d^{\pi}_{H}$ is computed by $(\piail_1, \ldots, \piail_{H-1})$. In fact, the argument here is the same with that in \cref{proposition:ail_policy_value_gap_standard_imitation}. Hence, we omit details.

Now we consider the imitation gap of $\widebar{\pi}$.
\begin{align}
\label{eq:value_gap_decomposition_approximate_solution}
    V({\piE}) - V({\widebar{\pi}}) = V({\piE}) - V({\piail}) + V({\piail}) -  V({\widebar{\pi}}). 
\end{align}
By \eqref{eq:vail_reset_cliff_value_gap_last_step_estimation_error} in the proof of \cref{theorem:ail_reset_cliff}, we have that
\begin{align}
   V({\piE}) - V({\piail}) \leq \min \bigg\{ 1, 2\sum_{s \in \gS} \labs d^{\piE}_H (s) - \widehat{d^{\piE}_H} (s)  \rabs \bigg\}.\label{eq:ail_policy_piE_value_gap}
\end{align}
Then we consider the policy value gap between $\piail$ and $\widebar{\pi}$. With the dual form of policy value in \eqref{eq:value_dual_representation}, we get that
\begin{align}   
&\quad V({\piail}) -  V({\widebar{\pi}}) \nonumber \\
&= \sum_{h=1}^{H-1} \sum_{(s, a)} \lp d^{\piail}_h (s, a) - d^{\widebar{\pi}}_h (s, a)  \rp r_h (s, a) \nonumber + \sum_{(s, a)} \lp d^{\piail}_H (s, a) - d^{\widebar{\pi}}_H (s, a)  \rp r_H (s, a)
    \nonumber \\
    &\leq \sum_{h=1}^{H-1} \lnorm d^{\piail}_h(\cdot, \cdot) - d^{\widebar{\pi}}_h(\cdot, \cdot)  \rnorm_{1} + \sum_{(s, a)} \lp d^{\piail}_H (s, a) - d^{\widebar{\pi}}_H (s, a)  \rp r_H (s, a), \label{eq:approximate_solution_step_1}
\end{align}
where we use $d^{\pi}_h (\cdot, \cdot)$ to explicitly denote the state-action distribution induced by $\pi$. Recall the definition of $\gS^{\widebar{\pi}}_H  = \{s \in \goodS: \widebar{\pi}_H (a^1|s) \leq \min \{1,  \widehat{d^{\piE}_{H}} (s) / d^{\piE}_H (s) \}  \}$ introduced in \cref{prop:ail_general_reset_cliff_approximate_solution}. For the second term in \eqref{eq:approximate_solution_step_1}, we have
\begin{align*}
 &\quad \sum_{(s, a)} \lp d^{\piail}_H  (s, a) - d^{\widebar{\pi}}_H (s, a)  \rp r_H (s, a) \\
 &= \sum_{s \in \goodS}  \lp d^{\piail}_H  (s, a^{1}) - d^{\widebar{\pi}}_H (s, a^{1})  \rp
    \\
    &= \sum_{s \in \goodS}  \lp d^{\piail}_H  (s) \piail_{H} ( a^{1} | s) - d^{\widebar{\pi}}_H (s) \widebar{\pi}_H (a^{1}|s)  \rp
    \\
    &=  \sum_{s \in \goodS}   \lp d^{\piail}_H  (s) - d^{\widebar{\pi}}_H (s) \rp  \piail_{H} ( a^{1}|s ) + \sum_{s \in \goodS} d^{\widebar{\pi}}_H (s) \lp \piail_{H} ( a^{1} | s) -   \widebar{\pi}_H (a^{1}|s) \rp
    \\
    &\leq \lnorm d^{\piail}_H  (\cdot) - d^{\widebar{\pi}}_H (\cdot) \rnorm_1 + \sum_{\substack{ s: s \in \goodS, \\ \piail_{H} ( a^{1}|s ) \geq   \widebar{\pi}_H (a^{1}|s)}} d^{\widebar{\pi}}_H (s) \lp \piail_{H} (a^{1}|s ) -   \widebar{\pi}_H (a^{1}|s) \rp
    \\
    &= \lnorm d^{\piail}_H  (\cdot) - d^{\widebar{\pi}}_H (\cdot) \rnorm_1 + \sum_{s \in \gS^{\widebar{\pi}}_H} d^{\widebar{\pi}}_H (s) \lp \piail_{H} ( a^{1}|s ) -   \widebar{\pi}_H (a^{1}|s) \rp,
\end{align*}
where we use $d^{\pi}_h (\cdot)$ to explicitly denote the state distribution induced by $\pi$. Plugging the above inequality into the policy value gap yields
\begin{align*}
&\quad V({\piail}) -  V({\widebar{\pi}}) \\
&\leq \sum_{h=1}^{H-1} \lnorm d^{\piail}_H  (\cdot, \cdot) - d^{\widebar{\pi}}_H (\cdot, \cdot) \rnorm_{1} + \lnorm d^{\piail}_H  (\cdot) - d^{\widebar{\pi}}_H (\cdot) \rnorm_1 + \sum_{s \in \gS^{\widebar{\pi}}_H} d^{\widebar{\pi}}_H (s) \lp \piail_{H} ( a^{1}|s ) -   \widebar{\pi}_H (a^{1}|s) \rp.
\end{align*}
With \cref{lemma:state_dist_discrepancy}, we have that
\begin{align*}
    &\quad \sum_{h=1}^{H-1} \lnorm d^{\piail}_H  (\cdot, \cdot) - d^{\widebar{\pi}}_H (\cdot, \cdot) \rnorm_{1} + \lnorm d^{\piail}_H  (\cdot) - d^{\widebar{\pi}}_H (\cdot) \rnorm_1 
    \\
    &\leq \sum_{h=1}^{H-1} \lnorm d^{\piail}_H  (\cdot) - d^{\widebar{\pi}}_H (\cdot) \rnorm_{1} + \sum_{h=1}^{H-1} \expect_{s \sim d^{\widebar{\pi}}_H (\cdot)} \ls \lnorm \piail_h (\cdot|s) - \widebar{\pi}_h (\cdot|s) \rnorm_1 \rs + \lnorm d^{\piail}_H  (\cdot) - d^{\widebar{\pi}}_H (\cdot) \rnorm_1
    \\
    &= \sum_{h=1}^{H} \lnorm d^{\piail}_H  (\cdot) - d^{\widebar{\pi}}_H (\cdot) \rnorm_{1} + \sum_{h=1}^{H-1} \expect_{s \sim d^{\widebar{\pi}}_H (\cdot)} \ls \lnorm \piail_h (\cdot|s) - \widebar{\pi}_h (\cdot|s) \rnorm_1 \rs
    \\
    &= \sum_{h=1}^{H} \sum_{\ell=1}^{h-1} \expect_{s \sim d^{\widebar{\pi}}_\ell (\cdot)} \ls \lnorm \piail_\ell (\cdot|s) - \widebar{\pi}_\ell (\cdot|s) \rnorm_1 \rs + \sum_{h=1}^{H-1} \expect_{s \sim d^{\widebar{\pi}}_H (\cdot)} \ls \lnorm \piail_h (\cdot|s) - \widebar{\pi}_h (\cdot|s) \rnorm_1 \rs
    \\
    &\leq 2  \sum_{h=1}^{H} \sum_{\ell=1}^{h-1} \expect_{s \sim d^{\widebar{\pi}}_\ell (\cdot)} \ls \lnorm \piail_\ell (\cdot|s) - \widebar{\pi}_\ell (\cdot|s) \rnorm_1 \rs.
\end{align*}
Then we have that
\begin{align}
    &\quad V({\piail}) -  V({\widebar{\pi}}) \nonumber \\
    &\leq 2  \sum_{h=1}^{H} \sum_{\ell=1}^{h-1} \expect_{s \sim d^{\widebar{\pi}}_\ell (\cdot)} \ls \lnorm \piail_\ell (\cdot|s) - \widebar{\pi}_\ell (\cdot|s) \rnorm_1 \rs + \sum_{s \in \gS^{\widebar{\pi}}_H} d^{\widebar{\pi}}_H (s) \lp \piail_{H} ( a^{1}|s ) -   \widebar{\pi}_H (a^{1}|s) \rp.  \label{eq:approximate_solution_step_3}
\end{align}
Notice that $\piail$ agrees with $\widebar{\pi}$ on bad states and we therefore obtain
\begin{align*}
     &\quad \expect_{s \sim d^{\widebar{\pi}}_{\ell} (\cdot)} \ls \lnorm \piail_{\ell} (\cdot|s) - \widebar{\pi}_{\ell} (\cdot|s) \rnorm_1 \rs \\
     &= \sum_{s \in \goodS } d^{\widebar{\pi}}_{\ell} (s)    \lnorm \piail_{\ell} (\cdot|s) - \widebar{\pi}_{\ell} (\cdot|s) \rnorm_1
    \\
    &=  \sum_{s \in \goodS } d^{\widebar{\pi}}_{\ell} (s) \bigg( \labs \piail_{\ell} (a^{1}|s) - \widebar{\pi}_{\ell} (a^{1}|s)  \rabs + \sum_{a \ne a^{1}} \widebar{\pi}_{\ell} (a|s)  \bigg)
    \\
    &= 2 \sum_{s \in \goodS } d^{\widebar{\pi}}_{\ell} (s)    \lp 1 - \widebar{\pi}_{\ell} (a^{1}|s) \rp.
\end{align*}
In the penultimate inequality, we use the fact that $\forall \ell \in [H-1], s \in \goodS, \piail_{\ell} (a^{1}|s) = 1$. Then we have that
\begin{align}
    &\quad V({\piail}) -  V({\widebar{\pi}}) \nonumber \\
    &\leq 4  \sum_{h=1}^{H} \sum_{\ell=1}^{h-1} \sum_{s \in \goodS } d^{\widebar{\pi}}_{\ell} (s)    \lp 1 - \widebar{\pi}_{\ell} (a^{1}|s) \rp + \sum_{s \in \gS^{\widebar{\pi}}_H} d^{\widebar{\pi}}_H (s) \lp \piail_{H} (a^{1}|s) -   \widebar{\pi}_H (a^{1}|s) \rp. \label{eq:approximate_solution_step_2}
\end{align}
Then we consider the second term in \eqref{eq:approximate_solution_step_2}. For the last time step $H$, notice that by construction, we have $\piail_{H} (a^{1}|s)  = \min\{ \widehat{d^{\piE}_H} (s) / d^{\piE}_H (s), 1  \}$. Then we get 
\begin{align}
    &\quad \sum_{s \in \gS^{\widebar{\pi}}_H} d^{\widebar{\pi}}_H (s) \lp \piail_{H} ( a^{1} | s) -   \widebar{\pi}_H (a^{1}|s) \rp \nonumber \\
    &\leq \sum_{s \in \gS^{\widebar{\pi}}_H} d^{\widebar{\pi}}_H (s) \lp \min\lb \frac{\widehat{d^{\piE}_H} (s)}{d^{\piE}_H (s)},  1  \rb -   \widebar{\pi}_H (a^{1}|s) \rp. \label{eq:approximate_solution_step_4}
\end{align}
Plugging \eqref{eq:approximate_solution_step_4} to \eqref{eq:approximate_solution_step_3} yields
\begin{align*}
    &\quad V({\piail}) -  V({\widebar{\pi}}) \\
    &\leq 4  \sum_{h=1}^{H} \sum_{\ell=1}^{h-1} \sum_{s \in \goodS } d^{\widebar{\pi}}_{\ell} (s)    \lp 1 - \widebar{\pi}_{\ell} (a^{1}|s) \rp + \sum_{s \in \gS^{\widebar{\pi}}_H} d^{\widebar{\pi}}_H (s) \lp \min\lb \frac{\widehat{d^{\piE}_H} (s)}{d^{\piE}_H (s)},  1  \rb -   \widebar{\pi}_H (a^{1}|s) \rp.
\end{align*}
Subsequently, we can apply \cref{prop:ail_general_reset_cliff_approximate_solution} and get that
\begin{align*}
    &\quad V({\piail}) -  V({\widebar{\pi}}) \\
    &\leq 4 \bigg[ \sum_{h=1}^{H} \sum_{\ell=1}^{h-1} \sum_{s \in \goodS } d^{\widebar{\pi}}_{\ell} (s)    \lp 1 - \widebar{\pi}_{\ell} (a^{1}|s) \rp + \sum_{s \in \gS^{\widebar{\pi}}_H} d^{\widebar{\pi}}_H (s) \big( \min\big\{ \frac{\widehat{d^{\piE}_H} (s)}{d^{\piE}_H (s)},  1  \big\} -   \widebar{\pi}_H (a^{1}|s) \big) \bigg]
    \\
    &\leq \frac{4}{c (\widebar{\pi})} \varepsilon_{}.
\end{align*}

We proceed to upper bound the imitation gap. With \eqref{eq:value_gap_decomposition_approximate_solution} and \eqref{eq:ail_policy_piE_value_gap}, we have 
\begin{align*}
   &\quad V({\piE})  - \expect \ls V({\widebar{\pi}}) \rs \\
   &=  V( {\piE}) - \expect \ls  V({\piail}) \rs + \expect \ls V({\piail}) -  V({\widebar{\pi}}) \rs
    \\
    &\leq \min \lb 1, 2 \expect \ls \sum_{s \in \gS} \labs d^{\piE}_H (s) - \widehat{d^{\piE}_H} (s)  \rabs \rs \rb + \frac{4}{c (\widebar{\pi})} \varepsilon
    \\
    &= \min \lb 1, 2 \expect \ls  \lnorm \widehat{d^{\piE}_H} (\cdot) - d^{\piE}_H (\cdot) \rnorm_{1}  \rs \rb + \frac{4}{c (\widebar{\pi})} \varepsilon, 
\end{align*}
where the expectation is taken over the randomness in collecting $N$ expert trajectories. Finally, we apply \citep[Theorem 1]{han2015minimax} to upper bound the estimation error in the first term:
\begin{align*}
    V({\piE}) - \expect \ls   V({\widebar{\pi}}) \rs &\leq \min \lb 1 +\frac{4}{c (\widebar{\pi})} \varepsilon,  2 \sqrt{ \frac{|\gS| - 1}{N}} + \frac{4}{c (\widebar{\pi})} \varepsilon \rb.
\end{align*}
\end{proof}

\subsection{Proof of Proposition \ref{prop:transfer_error}}

\begin{proof}[Proof of Proposition \ref{prop:transfer_error}]
Notice that $\widehat{\gP}$ is the transition model learned by an algorithm that is $(\varepsilon_{\eval}, \delta)$-PAC for uniform policy evaluation (see \cref{def:uniform_policy_evaluation}). Then with probability at least $1-\delta$, for any policy $\pi$, we have that
\begin{align*}
    &\quad \sum_{h=1}^{H} \lnorm d^{\pi}_h - \widehat{d^{\piE}_h} \rnorm_1
    \\
    &= \sum_{h=1}^{H} \lnorm d^{\pi, \gP}_h - \widehat{d^{\piE}_h} \rnorm_1
    \\
    &\leq \sum_{h=1}^{H} \lnorm d^{\pi, \widehat{\gP}}_h - \widehat{d^{\piE}_h} \rnorm_1 + \sum_{h=1}^{H} \lnorm d^{\pi, \widehat{\gP}}_h - d^{\pi, \gP}_h \rnorm_1
    \\
    &\leq \sum_{h=1}^{H} \lnorm d^{\pi, \widehat{\gP}}_h - \widehat{d^{\piE}_h} \rnorm_1 + \varepsilon_{\eval}. 
\end{align*}
This implies that 
\begin{align*}
    \sum_{h=1}^{H} \lnorm d^{\widebar{\pi}}_h - \widehat{d^{\piE}_h} \rnorm_1 \leq \sum_{h=1}^{H} \lnorm d^{\widebar{\pi}, \widehat{\gP}}_h - \widehat{d^{\piE}_h} \rnorm_1 + \varepsilon_{\eval}, 
\end{align*}
where $\widebar{\pi}$ is an $\varepsilon_{\opt}$-approximately optimal solution with respect to the optimization problem in \eqref{eq:AIL_with_empirical_transition_model}. By definition, we also have that
\begin{align*}
    \sum_{h=1}^{H} \lnorm d^{\widebar{\pi}, \widehat{\gP}}_h - \widehat{d^{\piE}_h} \rnorm_1 \leq \min_{\pi \in \Pi} \sum_{h=1}^{H} \lnorm d^{\pi, \widehat{\gP}}_h - \widehat{d^{\piE}_h} \rnorm_1 + \varepsilon_{\opt}. 
\end{align*}
Combining the above two inequalities yields that
\begin{align*}
    &\quad \sum_{h=1}^{H} \lnorm d^{\widebar{\pi}}_h - \widehat{d^{\piE}_h} \rnorm_1
    \\
    &\leq \min_{\pi \in \Pi} \sum_{h=1}^{H} \lnorm d^{\pi, \widehat{\gP}}_h - \widehat{d^{\piE}_h} \rnorm_1 + \varepsilon_{\opt} + \varepsilon_{\eval}
    \\
    &\leq \min_{\pi \in \Pi} \sum_{h=1}^{H} \lnorm d^{\pi, \gP}_h - \widehat{d^{\piE}_h} \rnorm_1 + \varepsilon_{\opt} + 2\varepsilon_{\eval},
\end{align*}
where the last inequality again follows \cref{def:uniform_policy_evaluation}. According to \cref{defn:approximate_solution}, we have that $\widebar{\pi}$ is an $(\varepsilon_{\opt} + 2\varepsilon_{\eval})$-optimal solution with respect to the state-action distribution matching problem and thus complete the proof.
\end{proof}

\subsection{Proof of Proposition \ref{prop:rabs_mdps_extension_one}}
\label{appendix:proof_of_extension_one}
Here we present the proof of Proposition \ref{prop:rabs_mdps_extension_one}. The overall proof strategy is similar to that of Proposition \ref{prop:ail_general_reset_cliff}. However, to analyze the extended RABS MDPs, we need to carefully characterize the visitation probability of good states by taking the non-expert action.

To prove Proposition \ref{prop:rabs_mdps_extension_one}, we need the following auxiliary lemma. We use $\gooda$ and $\bada$ to denote the expert action and non-expert action, respectively.

\begin{lem}
\label{lem:auxiliary_lemma_extension_one}
    Consider the extended RABS MDPs shown in Figure \ref{fig:rabs_mdp_extension_one}, assume that $\forall s, s^{\prime} \in \goodS, \; \varepsilon \leq P_1 (s^\prime |s, \gooda) $. We have that
    \begin{align*}
        \exists s \in \goodS, \piail_1 (\gooda|s) > 0.
    \end{align*}
    Furthermore, we define $\gV_{2} := \{s \in  \goodS: \widehat{d^{\piE}_2} (s) > 0 \}$ as the set of states visited in the expert demonstrations. It holds that
    \begin{align*}
        \forall s \in \gV_2, \; \piail_2 (\gooda|s) > 0.
    \end{align*}
\end{lem}

\begin{proof}
We first prove that for time step $h=1$, $\exists s \in \goodS$, $\piail_1 (\gooda|s) > 0$. We prove this result by contradiction. In particular, we assume that $\forall s \in \goodS, \piail_1 (\gooda|s) = 0$. Then it is easy to calculate that
    \begin{align*}
        & d^{\piail}_1 (s^1) = \rho (s^1), \; d^{\piail}_1 (s^2) = \rho (s^2), \; d^{\piail}_1 (b) = 0,
        \\
        & d^{\piail}_2 (s^1) = \sum_{\widetilde{s} \in \goodS} \rho (\widetilde{s}) \varepsilon, \; d^{\piail}_2 (s^2) = \sum_{\widetilde{s} \in \goodS} \rho (\widetilde{s}) \varepsilon,
        \\
        & d^{\piail}_2 (b) = 1 - \sum_{s \in \goodS} \sum_{\widetilde{s} \in \goodS} \rho (\widetilde{s}) \varepsilon.  
    \end{align*}
    \begin{align*}
        & \quad \Loss_1 (\piail) = 2,
        \\
        &\quad \Loss_2 (\piail)
        \\
        &= \sum_{s \in \goodS} \bigg(  \labs d^{\piail}_2 (s) \piail_2 (\gooda|s) - \widehat{d^{\piE}_h} (s) \rabs + d^{\piail}_2 (s) \lp  1- \piail_2 (\gooda|s) \rp  \bigg) + d^{\piail}_2 (b)
        \\
        &\geq \sum_{s \in \goodS} \bigg( \widehat{d^{\piE}_h} (s) -    d^{\piail}_2 (s) \piail_2 (\gooda|s) + d^{\piail}_2 (s) \lp  1- \piail_2 (\gooda|s) \rp  \bigg) + d^{\piail}_2 (b)
        \\
        &= \sum_{s \in \goodS} \lp \widehat{d^{\piE}_h} (s)    + d^{\piail}_2 (s) -    2 d^{\piail}_2 (s) \piail_2 (\gooda|s)  \rp + d^{\piail}_2 (b)
        \\
        &= 2 - 2 \sum_{s \in \goodS} d^{\piail}_2 (s) \piail_2 (\gooda|s)
        \\
        &\geq 2 - 2 \sum_{s \in \goodS} d^{\piail}_2 (s)
        \\
        &= 2 - 2 \sum_{s \in \goodS} \sum_{\widetilde{s} \in \goodS} \rho (\widetilde{s}) \varepsilon. 
    \end{align*}
     We construct another policy $\widetilde{\pi}^{\ail}$: $\widetilde{\pi}^{\ail}_h (\gooda|s) = 1, \; \forall s \in \goodS, \; \forall h \in [2]$. Then we can obtain that
     \begin{align*}
         &\Loss_1 (\widetilde{\pi}^{\ail}) = \sum_{s \in \goodS} \labs d^{\piE}_1 (s) - \widehat{d^{\piE}_1} (s)  \rabs < 2 = \Loss_1 (\piail),
         \\
         & \Loss_2 (\widetilde{\pi}^{\ail}) = \sum_{s \in \goodS} \labs d^{\piE}_2 (s) - \widehat{d^{\piE}_2} (s)  \rabs. 
     \end{align*}
     To analyze $\Loss_2 (\widetilde{\pi}^{\ail})$, we define the set of states $\gS^{\operatorname{G}, 1} := \{ s \in \goodS: \widehat{d^{\piE}_2} (s) \geq d^{\piE}_2 (s)  \}$ and $\gS^{\operatorname{G}, 2} = \gS \setminus \gS^{\operatorname{G}, 1}$. It is easy to derive that $| \gS^{\operatorname{G}, 1} | \geq 1$  as both $\widehat{d^{\piE}_2} (\cdot)$ and $d^{\piE}_2 (\cdot)$ are valid state distributions. Then we have that
     \begin{align*}
         &\quad \Loss_2 (\widetilde{\pi}^{\ail})
         \\
         &= \sum_{s \in \goodS} \labs d^{\piE}_2 (s) - \widehat{d^{\piE}_2} (s)  \rabs
         \\
         &= \sum_{s \in \gS^{\operatorname{G}, 1}} \widehat{d^{\piE}_2} (s) -  d^{\piE}_2 (s) + \sum_{s \in \gS^{\operatorname{G}, 2}} d^{\piE}_2 (s) -  \widehat{d^{\piE}_2} (s)
         \\
         &= 2 - 2 \sum_{s \in \gS^{\operatorname{G}, 1}} d^{\piE}_2 (s) - 2 \sum_{s \in \gS^{\operatorname{G}, 2}} \widehat{d^{\piE}_2} (s)
         \\
         &\leq 2 - 2 \sum_{s \in \gS^{\operatorname{G}, 1}} d^{\piE}_2 (s)
         \\
         &= 2 - 2 \sum_{s \in \gS^{\operatorname{G}, 1}} \sum_{\widetilde{s} \in \goodS} \rho (\widetilde{s}) P_1 (s|\widetilde{s}, \gooda)
         \\
         &\leq 2 - 2 \min_{s \in \goodS} \sum_{\widetilde{s} \in \goodS} \rho (\widetilde{s}) P_1 (s|\widetilde{s}, \gooda)
         \\
         &\leq 2 - 2  \sum_{\widetilde{s} \in \goodS} \rho (\widetilde{s}) \min_{s \in \goodS} P_1 (s|\widetilde{s}, \gooda) .
     \end{align*}
    According to the assumption that $\forall s, s^\prime \in \goodS, \; \varepsilon \leq   P_1 (s^\prime |s, \gooda) / 2$, then $\Loss_2 (\widetilde{\pi}^{\ail}) \leq \Loss_2 (\piail)$. In summary, we have obtained 
    \begin{align*}
         \Loss_1 (\widetilde{\pi}^{\ail}) <  \Loss_1 (\piail), \Loss_2 (\widetilde{\pi}^{\ail}) \leq \Loss_2 (\piail).  
    \end{align*}

    This implies that $\widetilde{\pi}^{\ail}$ achieves a strictly smaller distribution matching loss than $\piail$, which contradicts the fact that $\piail$ is the optimal solution. Then we can derive that for time step $h=1$, $\exists s \in \goodS$, $\piail_1 (\gooda|s) > 0$.

    Then we prove the second statement. According to the optimality condition, we have that
    \begin{align*}
        \piail_2 \in \argmin_{\pi_2} f_2 (\pi_2; \piail_1).
    \end{align*}
    Notice that $\Loss_1$ is independent of $\pi_2$. Then it holds that
    \begin{align*}
        \piail_2 \in \argmin_{\pi_2} \Loss_2.
    \end{align*}
    For $\Loss_2$, we have that
    \begin{align*}
        \Loss_2 &= \sum_{s \in \goodS} \bigg( \labs \widehat{d^{\piE}_2 } (s) - d^{\piail}_2 (s) \pi_2 (\gooda|s) \rabs + d^{\piail}_2 (s) \lp 1-  \pi_2 (\gooda|s) \rp \bigg)
        \\
        &= \sum_{s \in \gV_2} \bigg( \labs \widehat{d^{\piE}_2 } (s) - d^{\piail}_2 (s) \pi_2 (\gooda|s) \rabs + d^{\piail}_2 (s) \lp 1-  \pi_2 (\gooda|s) \rp \bigg) + \sum_{s \notin \gV_2, s \in \goodS} d^{\piail}_2 (s) + d^{\piail}_2 (b)
        \\
        &= \sum_{s \in \gV_2} \bigg( \labs \widehat{d^{\piE}_2 } (s) - d^{\piail}_2 (s) \pi_2 (\gooda|s) \rabs - d^{\piail}_2 (s)  \pi_2 (\gooda|s) \bigg) + \sum_{s \in \gV_2} d^{\piail}_2  (s)   + \sum_{s \notin \gV_2, s \in \goodS} d^{\piail}_2  (s) + d^{\piail}_2 (b)
        \\
        &= \sum_{s \in \gV_2} \bigg( \labs \widehat{d^{\piE}_2 } (s) - d^{\piail}_2 (s) \pi_2 (\gooda|s) \rabs - d^{\piail}_2 (s)  \pi_2 (\gooda|s) \bigg) + 1.
    \end{align*}
    Then we have that
    \begin{align*}
        \piail_2 \in &\argmin_{\pi_2} \sum_{s \in \gV_2} \bigg( \labs \widehat{d^{\piE}_2 } (s) - d^{\piail}_2 (s) \pi_2 (\gooda|s) \rabs - d^{\piail}_2 (s)  \pi_2 (\gooda|s) \bigg) 
    \end{align*}
    For each $s \in \gV_2$, it holds that
    \begin{align*}
        \piail_2 (\gooda|s) \in &\argmin_{\pi_2 (\gooda|s) \in [0, 1]} \bigg( \labs \widehat{d^{\piE}_2 } (s) - d^{\piail}_2 (s) \pi_2 (\gooda|s) \rabs - d^{\piail}_2 (s)  \pi_2 (\gooda|s) \bigg). 
    \end{align*}
    Based on the first statement that $\exists s \in \goodS, \piail_1 (\gooda|s) > 0$, we have that $\forall s \in \goodS, d^{\piail}_2 (s) > 0$. According to Lemma \ref{lem:single_variable_opt_condition}, we have that $\piail_2 (\gooda|s) > 0$, which completes the proof of the second statement. 
\end{proof}
With the above auxiliary lemma, we are ready to prove Proposition \ref{prop:rabs_mdps_extension_one}.

According to the optimality condition, we have that 
    \begin{align*}
        \piail_1 \in \argmin_{\pi_1} f_1 (\pi_1; \piail_2),
    \end{align*}
    where $f_1 (\pi_1; \piail_2) = \sum_{h=1}^H \sum_{(s,a) \in \gS \times \gA} |d^{\piail}_h (s, a) - \widehat{d^{\piE}_h} (s, a)|$, which is calculated by the policy $(\pi_1, \piail_2)$. For the policy $(\pi_1, \piail_2)$, we can calculate that 
    \begin{align*}
        d^{\piail}_1 (s^1) = \rho (s^1), \; d^{\piail}_1 (s^2) = \rho (s^2), \; d^{\piail}_1 (b) = 0.  
    \end{align*}
    \begin{align*}
        \forall s \in \goodS,
        d^{\piail}_2 (s) &= \sum_{\widetilde{s} \in \goodS} \rho (\widetilde{s}) \lp \pi_1 (\gooda|\widetilde{s}) P_1 (s|\widetilde{s}, \gooda) + \pi_1 (\bada|\widetilde{s}) \varepsilon  \rp
        \\
        &= \sum_{\widetilde{s} \in \goodS} \rho (\widetilde{s}) \lp \pi_1 (\gooda|\widetilde{s}) P_1 (s|\widetilde{s}, \gooda) + (1-\pi_1 (\gooda|\widetilde{s})) \varepsilon  \rp
        \\ 
        &= \varepsilon + \sum_{\widetilde{s} \in \goodS}  \rho (\widetilde{s}) \pi_1 (\gooda|\widetilde{s}) \lp P_1 (s|\widetilde{s}, \gooda) - \varepsilon \rp.
    \end{align*}
    \begin{align*}
        d^{\piail}_2 (b) &= \sum_{\widetilde{s} \in \goodS} \rho (\widetilde{s}) \pi_1 (\bada|\widetilde{s}) \lp 1-2\varepsilon \rp
        \\
        &= \lp 1- 2 \varepsilon \rp \sum_{\widetilde{s} \in \goodS} \rho (\widetilde{s}) \lp 1- \pi_1 (\gooda|\widetilde{s})  \rp
        \\
        &=1-2\varepsilon - \lp 1- 2 \varepsilon \rp \sum_{\widetilde{s} \in \goodS} \rho (\widetilde{s}) \pi_1 (\gooda|\widetilde{s}).
    \end{align*}
    Then we can calculate that
    \begin{align*}
        &\quad \Loss_1 (\pi_1)
        \\
        &= \sum_{s \in \goodS} \lp  \labs \widehat{d^{\piE}_1} (s) - \rho (s) \pi_1 (\gooda|s) \rabs + \rho (s) \lp 1- \pi_1 (\gooda|s) \rp \rp
        \\
        &= 1 + \sum_{s \in \goodS} \lp  \labs \widehat{d^{\piE}_1} (s) - \rho (s) \pi_1 (\gooda|s) \rabs - \rho (s) \pi_1 (\gooda|s)   \rp. 
    \end{align*}
    By Lemma \ref{lem:single_variable_opt}, we have that $\pi_1 (\gooda|s) = 1, \forall s \in \goodS$ is an optimal solution of $\argmin_{\pi_1} \Loss_1 (\pi_1)$.
    \begin{align*}
        &\quad \Loss_2 (\pi_1; \piail_2)
        \\
        &=  \sum_{s \in \goodS} \lp \labs \widehat{d^{\piE}_2} (s) - d^{\piail}_2 (s) \piail_2 (\gooda|s) \rabs + d^{\piail}_2 (s) \piail_2 (\bada|s) \rp + d^{\piail}_2 (b)
        \\
        &=1-2\varepsilon - \lp 1-2\varepsilon \rp \sum_{\widetilde{s} \in \goodS} \lp \rho (\widetilde{s}) \pi_1 (\gooda|\widetilde{s}) \rp 
        \\
        &\; + \sum_{s \in \goodS} \lp \labs \widehat{d^{\piE}_2} (s) - d^{\piail}_2 (s) \piail_2 (\gooda|s) \rabs + d^{\piail}_2 (s) \piail_2 (\bada|s) \rp
        \\
        &=  \underbrace{1 - 2\varepsilon - \lp 1-2\varepsilon \rp \sum_{\widetilde{s} \in \goodS} \lp \rho (\widetilde{s}) \pi_1 (\gooda|\widetilde{s}) \rp}_{\text{Term I}} 
        \\
        &\; + \underbrace{\sum_{s \in \goodS} \labs \widehat{d^{\piE}_2} (s) - d^{\piail}_2 (s) \piail_2 (\gooda|s) \rabs}_{\text{Term II}} + \underbrace{\sum_{s \in \goodS} d^{\piail}_2 (s) \piail_2 (\bada|s) }_{\text{Term III}}. 
    \end{align*}
    For Term III, we have that
    \begin{align*}
        &\quad \sum_{s \in \goodS} d^{\piail}_2 (s) \piail_2 (\bada|s)
        \\
        &= \sum_{s \in \goodS} \lp \varepsilon + \sum_{\widetilde{s} \in \goodS}  \rho (\widetilde{s}) \pi_1 (\gooda|\widetilde{s}) \lp P_1 (s|\widetilde{s}, \gooda) - \varepsilon \rp \rp \piail_2 (\bada|s)
        \\
        &= \sum_{s \in \goodS} \lp \varepsilon + \sum_{\widetilde{s} \in \goodS}  \rho (\widetilde{s}) \pi_1 (\gooda|\widetilde{s}) \lp P_1 (s|\widetilde{s}, \gooda) - \varepsilon \rp \rp \lp 1- \piail_2 (\gooda|s) \rp
        \\
        &= \sum_{s \in \goodS} \varepsilon \lp 1- \piail_2 (\gooda|s) \rp + \sum_{s \in \goodS} \sum_{\widetilde{s} \in \goodS}  \rho (\widetilde{s}) \pi_1 (\gooda|\widetilde{s}) \lp P_1 (s|\widetilde{s}, \gooda) - \varepsilon \rp - \sum_{s \in \goodS} \sum_{\widetilde{s} \in \goodS}  \rho (\widetilde{s}) \pi_1 (\gooda|\widetilde{s}) \lp P_1 (s|\widetilde{s}, \gooda) - \varepsilon \rp \piail_2 (\gooda|s)
        \\
        &= \sum_{s \in \goodS} \varepsilon \lp 1- \piail_2 (\gooda|s) \rp + \sum_{\widetilde{s} \in \goodS}  \rho (\widetilde{s}) \pi_1 (\gooda|\widetilde{s}) \sum_{s \in \goodS} \lp P_1 (s|\widetilde{s}, \gooda) - \varepsilon \rp - \sum_{s \in \goodS} \sum_{\widetilde{s} \in \goodS}  \rho (\widetilde{s}) \pi_1 (\gooda|\widetilde{s}) \lp P_1 (s|\widetilde{s}, \gooda) - \varepsilon \rp \piail_2 (\gooda|s)
        \\
        &= \sum_{s \in \goodS} \varepsilon \lp 1- \piail_2 (\gooda|s) \rp + \sum_{\widetilde{s} \in \goodS}  \rho (\widetilde{s}) \pi_1 (\gooda|\widetilde{s})  \lp 1- |\goodS| \varepsilon \rp - \sum_{s \in \goodS} \sum_{\widetilde{s} \in \goodS}  \rho (\widetilde{s}) \pi_1 (\gooda|\widetilde{s}) \lp P_1 (s|\widetilde{s}, \gooda) - \varepsilon \rp \piail_2 (\gooda|s).
    \end{align*}
    Then we have that
    \begin{align*}
        &\quad \Loss_2 (\pi_1; \piail_2)
        \\
        &=1-2\varepsilon - \lp 1-2\varepsilon \rp \sum_{\widetilde{s} \in \goodS} \lp \rho (\widetilde{s}) \pi_1 (\gooda|\widetilde{s}) \rp + \sum_{s \in \goodS} \labs \widehat{d^{\piE}_2} (s) - d^{\piail}_2 (s) \piail_2 (\gooda|s) \rabs + \sum_{s \in \goodS} d^{\piail}_2 (s) \piail_2 (\bada|s)
        \\
	        &=1-2\varepsilon - \lp 1-2\varepsilon \rp \sum_{\widetilde{s} \in \goodS} \lp \rho (\widetilde{s}) \pi_1 (\gooda|\widetilde{s}) \rp + \sum_{s \in \goodS} \labs \widehat{d^{\piE}_2} (s) - d^{\piail}_2 (s) \piail_2 (\gooda|s) \rabs
	        \\
	        &\quad + \sum_{s \in \goodS} \varepsilon \lp 1- \piail_2 (\gooda|s) \rp + \sum_{\widetilde{s} \in \goodS}  \rho (\widetilde{s}) \pi_1 (\gooda|\widetilde{s})  \lp 1- 2 \varepsilon \rp
	        \\
	        &\quad - \sum_{s \in \goodS} \sum_{\widetilde{s} \in \goodS}  \rho (\widetilde{s}) \pi_1 (\gooda|\widetilde{s}) \lp P_1 (s|\widetilde{s}, \gooda) - \varepsilon \rp \piail_2 (\gooda|s)
        \\
	        &= 1-2\varepsilon +  \sum_{s \in \goodS} \labs \widehat{d^{\piE}_2} (s) - d^{\piail}_2 (s) \piail_2 (\gooda|s) \rabs + \sum_{s \in \goodS} \varepsilon \lp 1- \piail_2 (\gooda|s) \rp
	        \\
	        &\quad - \sum_{s \in \goodS} \sum_{\widetilde{s} \in \goodS}  \rho (\widetilde{s}) \pi_1 (\gooda|\widetilde{s}) \lp P_1 (s|\widetilde{s}, \gooda) - \varepsilon \rp \piail_2 (\gooda|s)
        \\
	        &= \sum_{s \in \goodS} \labs \widehat{d^{\piE}_2} (s) - d^{\piail}_2 (s) \piail_2 (\gooda|s) \rabs
	        \\
	        &\quad - \sum_{s \in \goodS} \sum_{\widetilde{s} \in \goodS}  \rho (\widetilde{s}) \pi_1 (\gooda|\widetilde{s}) \lp P_1 (s|\widetilde{s}, \gooda) - \varepsilon \rp \piail_2 (\gooda|s) + \operatorname{const}_1.
    \end{align*}
    Here $\operatorname{const}_1 = 1-2\varepsilon + \sum_{s \in \goodS} \varepsilon \lp 1- \piail_2 (\gooda|s) \rp$ which is independent of $\pi_1$. Therefore, we can obtain that
    \begin{align*}
        & \quad \argmin_{\pi_1} \Loss_2 (\pi_1; \piail_2)
        \\
        & = \argmin_{\pi_1} \sum_{s \in \goodS} \labs \widehat{d^{\piE}_2} (s) - d^{\piail}_2 (s) \piail_2 (\gooda|s) \rabs - \sum_{s \in \goodS} \sum_{\widetilde{s} \in \goodS}  \rho (\widetilde{s}) \pi_1 (\gooda|\widetilde{s}) \lp P_1 (s|\widetilde{s}, \gooda) - \varepsilon \rp \piail_2 (\gooda|s). 
    \end{align*}
    Then we analyze the term $ \sum_{s \in \goodS} \labs \widehat{d^{\piE}_2} (s) - d^{\piail}_2 (s) \piail_2 (\gooda|s) \rabs$. In particular, we define the set of states $\gV_{2} := \{s \in  \goodS: \widehat{d^{\piE}_2} (s) > 0 \}$. Then, we have that
    \begin{align*}
        &\quad \sum_{s \in \goodS} \labs \widehat{d^{\piE}_2} (s) - d^{\piail}_2 (s) \piail_2 (\gooda|s) \rabs
        \\
        &= \sum_{s \in \gV_{H}} \labs \widehat{d^{\piE}_2} (s) - d^{\piail}_2 (s) \piail_2 (\gooda|s) \rabs + \sum_{s \in \goodS, s \notin \gV_{H}} d^{\piail}_2 (s) \piail_2 (\gooda|s).  
    \end{align*}
    Plugging the equation of $d^{\piail}_2 (s) = \varepsilon + \sum_{\widetilde{s} \in \goodS}  \rho (\widetilde{s}) \pi_1 (\gooda|\widetilde{s}) \lp P_1 (s|\widetilde{s}, \gooda) - \varepsilon \rp, \; \forall s \in \goodS$ into the above equation yields that
    \begin{align*}
        &\quad \sum_{s \in \goodS} \labs \widehat{d^{\piE}_2} (s) - d^{\piail}_2 (s) \piail_2 (\gooda|s) \rabs
        \\
        &= \sum_{s \in \gV_{H}} \bigg| \widehat{d^{\piE}_2} (s) - \lp \varepsilon + \sum_{\widetilde{s} \in \goodS}  \rho (\widetilde{s}) \pi_1 (\gooda|\widetilde{s}) \lp P_1 (s|\widetilde{s}, \gooda) - \varepsilon \rp \rp \piail_2 (\gooda|s) \bigg|  
        \\
        &\; + \sum_{s \in \goodS, s \notin \gV_{H}} \lp \varepsilon + \sum_{\widetilde{s} \in \goodS}  \rho (\widetilde{s}) \pi_1 (\gooda|\widetilde{s}) \lp P_1 (s|\widetilde{s}, \gooda) - \varepsilon \rp \rp \piail_2 (\gooda|s)
        \\
        &= \sum_{s \in \gV_{H}} \bigg| \widehat{d^{\piE}_2} (s) -  \varepsilon  \piail_2 (\gooda|s) - \sum_{\widetilde{s} \in \goodS}  \rho (\widetilde{s})  \lp P_1 (s|\widetilde{s}, \gooda) - \varepsilon \rp \piail_2 (\gooda|s) \pi_1 (\gooda|\widetilde{s}) \bigg| 
        \\
        &\;+ \varepsilon \sum_{s \in \goodS, s \notin \gV_{H}}   \piail_2 (\gooda|s) + \sum_{s \in \goodS, s \notin \gV_{H}} \sum_{\widetilde{s} \in \goodS}  \rho (\widetilde{s}) \pi_1 (\gooda|\widetilde{s}) \lp P_1 (s|\widetilde{s}, \gooda) - \varepsilon \rp  \piail_2 (\gooda|s).
    \end{align*}
    Then we can obtain that
    \begin{align*}
        &\quad \sum_{s \in \goodS} \labs \widehat{d^{\piE}_2} (s) - d^{\piail}_2 (s) \piail_2 (\gooda|s) \rabs - \sum_{s \in \goodS} \sum_{\widetilde{s} \in \goodS}  \rho (\widetilde{s}) \pi_1 (\gooda|\widetilde{s}) \lp P_1 (s|\widetilde{s}, \gooda) - \varepsilon \rp \piail_2 (\gooda|s)
        \\
	        &= \sum_{s \in \gV_{H}} \bigg| \widehat{d^{\piE}_2} (s) -  \varepsilon  \piail_2 (\gooda|s) - \sum_{\widetilde{s} \in \goodS}  \rho (\widetilde{s})  \lp P_1 (s|\widetilde{s}, \gooda) - \varepsilon \rp \piail_2 (\gooda|s) \pi_1 (\gooda|\widetilde{s}) \bigg|
	        \\
	        &\quad + \varepsilon \sum_{s \in \goodS, s \notin \gV_{H}}   \piail_2 (\gooda|s) + \sum_{s \in \goodS, s \notin \gV_{H}} \sum_{\widetilde{s} \in \goodS}  \rho (\widetilde{s}) \pi_1 (\gooda|\widetilde{s}) \lp P_1 (s|\widetilde{s}, \gooda) - \varepsilon \rp  \piail_2 (\gooda|s)
    \end{align*}
    \begin{align*}
        &\quad  \sum_{s \in \goodS} \sum_{\widetilde{s} \in \goodS}  \rho (\widetilde{s}) \pi_1 (\gooda|\widetilde{s}) \lp P_1 (s|\widetilde{s}, \gooda) - \varepsilon \rp \piail_2 (\gooda|s)
        \\
	        &= \sum_{s \in \gV_{H}} \bigg| \widehat{d^{\piE}_2} (s) -  \varepsilon  \piail_2 (\gooda|s) - \sum_{\widetilde{s} \in \goodS}  \rho (\widetilde{s})  \lp P_1 (s|\widetilde{s}, \gooda) - \varepsilon \rp \piail_2 (\gooda|s) \pi_1 (\gooda|\widetilde{s}) \bigg|
	        \\
	        &\quad + \varepsilon \sum_{s \in \goodS, s \notin \gV_{H}}   \piail_2 (\gooda|s) - \sum_{s \in \gV_{H}} \sum_{\widetilde{s} \in \goodS}  \rho (\widetilde{s}) \pi_1 (\gooda|\widetilde{s}) \lp P_1 (s|\widetilde{s}, \gooda) - \varepsilon \rp  \piail_2 (\gooda|s)
        \\
	        &= \sum_{s \in \gV_{H}} \bigg| \widehat{d^{\piE}_2} (s) -  \varepsilon  \piail_2 (\gooda|s) - \sum_{\widetilde{s} \in \goodS}  \rho (\widetilde{s})  \lp P_1 (s|\widetilde{s}, \gooda) - \varepsilon \rp \piail_2 (\gooda|s) \pi_1 (\gooda|\widetilde{s}) \bigg|
	        \\
	        &\quad + \operatorname{const}_2 - \sum_{\widetilde{s} \in \goodS} \pi_1 (\gooda|\widetilde{s}) \lp  \sum_{s \in \gV_{H}}   \rho (\widetilde{s})  \lp P_1 (s|\widetilde{s}, \gooda) - \varepsilon \rp  \piail_2 (\gooda|s) \rp.
    \end{align*}
Here $\operatorname{const}_2 = \varepsilon \sum_{s \in \goodS, s \notin \gV_{H}}   \piail_2 (\gooda|s) $. Then we can obtain that
\begin{align*}
    & \quad \argmin_{\pi_1} \sum_{s \in \goodS} \labs \widehat{d^{\piE}_2} (s) - d^{\piail}_2 (s) \piail_2 (\gooda|s) \rabs - \sum_{s \in \goodS} \sum_{\widetilde{s} \in \goodS}  \rho (\widetilde{s}) \pi_1 (\gooda|\widetilde{s}) \lp P_1 (s|\widetilde{s}, \gooda) - \varepsilon \rp \piail_2 (\gooda|s)
    \\
    & = \argmin_{\pi_1} \sum_{s \in \gV_{H}} \bigg| \widehat{d^{\piE}_2} (s) -  \varepsilon  \piail_2 (\gooda|s) - \sum_{\widetilde{s} \in \goodS}  \rho (\widetilde{s})  \lp P_1 (s|\widetilde{s}, \gooda) - \varepsilon \rp \piail_2 (\gooda|s) \pi_1 (\gooda|\widetilde{s}) \bigg|  
        \\
        &\;  - \sum_{\widetilde{s} \in \goodS} \pi_1 (\gooda|\widetilde{s}) \lp  \sum_{s \in \gV_{H}}   \rho (\widetilde{s})  \lp P_1 (s|\widetilde{s}, \gooda) - \varepsilon \rp  \piail_2 (\gooda|s) \rp.  
\end{align*}
Then we apply Lemma \ref{lem:mn_variables_opt_unique} to prove that $\forall \widetilde{s} \in \goodS, \; \piail_1 (\gooda|\widetilde{s}) = 1$ is an \emph{unqiue} optimal solution of the above optimization problem. In particular, we apply Lemma \ref{lem:mn_variables_opt_unique} with
\begin{align*}
    & \forall s \in \gV_H, \widetilde{s} \in \goodS, \; A (s, \widetilde{s}) = \rho (\widetilde{s}) \lp P_1 (s| \widetilde{s}, \gooda) - \varepsilon \rp \piail_2 (\gooda|s), 
    \\
    &c (s) = \widehat{d^{\piE}_2} (s) - \varepsilon \piail_2 (\gooda|s),
    \\
    &d (\widetilde{s}) = \sum_{s \in \gV_H} \rho (\widetilde{s}) \lp P_1 (s|\widetilde{s}, \gooda) - \varepsilon \rp \piail_2 (\gooda|s).
\end{align*}
First, according to the second argument of Lemma \ref{lem:auxiliary_lemma_extension_one}, we have that $\forall s \in \gV_{H}, \piail_2 (\gooda|s) > 0$. Combined with the assumption $\forall s, s^\prime, P_1 (s^\prime|s, \gooda) > \varepsilon$, we can prove that $\forall s \in \gV_H, \widetilde{s} \in \goodS, A (s, \widetilde{s}) > 0$. Furthermore, on one hand, it holds that
\begin{align*}
    \sum_{s \in \gV_H} c (s) &= \sum_{s \in \gV_H} \widehat{d^{\piE}_2} (s) - \varepsilon \piail_2 (\gooda|s)
    \\
    &= 1 - \varepsilon \lp \sum_{s \in \gV_H} \piail_2 (\gooda|s) \rp.  
\end{align*}
On the other hand, we have that
\begin{align*}
     &\quad \sum_{s \in \gV_H} \sum_{\widetilde{s} \in \goodS} A (s, \widetilde{s})
     \\
     &= \sum_{s \in \gV_H} \sum_{\widetilde{s} \in \goodS} \rho (\widetilde{s}) \lp P_1 (s| \widetilde{s}, \gooda) - \varepsilon \rp \piail_2 (\gooda|s) 
     \\
     &=  \sum_{\widetilde{s} \in \goodS} \rho (\widetilde{s}) \sum_{s \in \gV_H} \lp P_1 (s| \widetilde{s}, \gooda) \piail_2 (\gooda|s)  - \varepsilon  \piail_2 (\gooda|s) \rp
     \\
     &\leq \sum_{\widetilde{s} \in \goodS} \rho (\widetilde{s}) \sum_{s \in \gV_H} \lp P_1 (s| \widetilde{s}, \gooda)  - \varepsilon  \piail_2 (\gooda|s) \rp
     \\
     &\leq \sum_{\widetilde{s} \in \goodS} \rho (\widetilde{s})  \lp 1  - \varepsilon \lp  \sum_{s \in \gV_H}  \piail_2 (\gooda|s) \rp \rp
     \\
     &= 1  - \varepsilon \lp \sum_{s \in \gV_H}  \piail_2 (\gooda|s) \rp. 
\end{align*}
We can derive that $\sum_{s \in \gV_H} \sum_{\widetilde{s} \in \goodS} A (s, \widetilde{s}) \leq \sum_{s \in \gV_H} c (s)$. Finally, for each $\widetilde{s } \in \goodS$, $\sum_{s \in \gV_{H}} A (s, \widetilde{s}) = d (\widetilde{s}) $. We have verified the conditions of Lemma \ref{lem:mn_variables_opt_unique}. According to Lemma \ref{lem:mn_variables_opt_unique}, we have that $\forall s \in \goodS$, $\pi_1 (\gooda|s) = \piE_1 (\gooda|s) =  1$ is the unique optimal solution of $\argmin_{\pi_1} \Loss_2 (\pi_1)$. In summary, we have proved that $\forall s \in \goodS$, $\pi_1 (\gooda|s) = \piE_1 (\gooda|s) =  1$ is an optimal solution of $\argmin_{\pi_1} \Loss_1 (\pi_1) $ and the unique optimal solution of $\argmin_{\pi_1} \Loss_2 (\pi_1)$. According to Lemma \ref{lem:unique_opt_solution_condition}, we can obtain that $\forall s \in \goodS$, $\pi_1 (\gooda|s) = \piE_1 (\gooda|s) =  1$ is the unique optimal solution of $\argmin_{\pi_1} f_2 (\pi_1; \piail_2)$, which completes the proof.

\subsection{Proof of Proposition \ref{prop:rabs_mdps_extension_two}}
\label{appendix:proof_of_extension_two}
In this part, we present the proof of Proposition \ref{prop:rabs_mdps_extension_two}. Different from the analysis for the original RABS MDPs, we need to perform a precise characterization of the visitation probability of good states by starting from bad states.

To prove Proposition \ref{prop:rabs_mdps_extension_two}, we need the following useful lemma.

\begin{lem}
\label{lem:auxiliary_lemma_extension_two}
    Consider the extended RABS MDPs shown in Figure \ref{fig:rabs_mdp_extension_two} and assume that $\varepsilon \leq d^{\piE}_3 (s), \forall s \in \goodS$. We have that
    \begin{align*}
        \forall h \in [3], s \in \goodS, d^{\piail}_h (s) > 0.
    \end{align*}
    Furthermore, we define $\gV_{3} := \{s \in  \goodS: \widehat{d^{\piE}_2} (s) > 0 \}$ as the set of states visited in the expert demonstrations. It holds that
    \begin{align*}
        \forall s \in \gV_3, \; \piail_3 (\gooda|s) > 0.
    \end{align*}
\end{lem}

\begin{proof}
    For the first statement, it is easy to observe that for time step $h=1$, $\forall s \in \goodS$, $d^{\piail}_h (s) = \rho (s) > 0$. Then we turn to consider time steps $h=2, 3$. We prove by contradiction argument. In particular, we assume that in time step $h=2$, $\exists s \in \goodS$, $d^{\piail}_2 (s) = 0$. According to the transition structure of the extended RABS MDPs shown in Figure \ref{fig:rabs_mdp_extension_two}, we can derive that $\forall s \in \goodS, \piail_1 (\gooda|s) = 0$. Furthermore, we can calculate the state-action distribution of $\piail$.
    \begin{align*}
        & d^{\piail}_1 (s^1, \gooda) = d^{\piail}_1 (s^2, \gooda) = d^{\piail}_1 (b) = 0,
        \\
        & d^{\piail}_2 (s^1, \gooda) = d^{\piail}_2 (s^2, \gooda) = 0, \; d^{\piail}_2 (b) = 1,
        \\
        & d^{\piail}_3 (s^1, \gooda) = \varepsilon \cdot \piail_3 (\gooda|s^1),  d^{\piail}_3 (s^2, \gooda) = \varepsilon \cdot \piail_3 (\gooda|s^2),
        \\
        &d^{\piail}_3 (b) = 1-\varepsilon.
    \end{align*}
    Then the distribution matching loss can be calculated as 
    \begin{align*}
    \Loss_1 (\piail) = 2, \Loss_2 (\piail) = 2,    
    \end{align*}
    \begin{align*}
        &\quad \Loss_3 (\piail)
        \\
        &= \sum_{s \in \goodS} \bigg(  \labs \widehat{d^{\piE}_3} (s) -  d^{\piail}_3 (s, \gooda) \rabs + d^{\piail}_3 (s, \bada) \bigg) + d^{\piail}_3 (b)   
        \\
        & = \sum_{s \in \goodS} \bigg(  \labs \widehat{d^{\piE}_3} (s) -  \varepsilon \piail_3 (\gooda|s^1) \rabs + \varepsilon ( 1- \piail_3 (\gooda|s) ) \bigg) + d^{\piail}_3 (b)
        \\
        &\geq \sum_{s \in \goodS} \bigg(   \widehat{d^{\piE}_3} (s) -  \varepsilon \piail_3 (\gooda|s) + \varepsilon ( 1- \piail_3 (\gooda|s) ) \bigg) + 1-\varepsilon
        \\
        &= 2  -  \varepsilon  2  \sum_{s \in \goodS} \piail_3 (\gooda|s)
        \\
        &\geq 2 - 4 \varepsilon.
    \end{align*}
    Then we calculate the distribution matching loss of $\piE$.
	    \begin{align*}
	        \Loss_1 (\piE) = \sum_{s \in \goodS} \labs  \widehat{d^{\piE}_1} (s) - d^{\piE}_1 (s)   \rabs < 2 = \Loss_1 (\piail),
	        \\
	        \Loss_2 (\piE) = \sum_{s \in \goodS} \labs  \widehat{d^{\piE}_2} (s) - d^{\piE}_2 (s)   \rabs < 2 = \Loss_2 (\piail).
	    \end{align*}
    The above inequalities follow that $d^{\piE}_h (\cdot)$ and $ \widehat{d^{\piE}_h} (\cdot)$ have common support. To analyze the distribution matching loss in time step $h=3$, we define the set of states $\gS^{\operatorname{G}, 1}_3 := \{ s \in \goodS: \widehat{d^{\piE}_3} (s) \geq d^{\piE}_3 (s)    \} $ and $\gS^{\operatorname{G}, 2}_3 = \goodS \setminus \gS^{\operatorname{G}, 1}_3$. It is direct to see that $|\gS^{\operatorname{G}, 1}_3| \geq 1$. Then we have that
    \begin{align*}
        &\quad \Loss_3 (\piE)
        \\
        &= \sum_{s \in \goodS} \labs  \widehat{d^{\piE}_3} (s) - d^{\piE}_3 (s)   \rabs
        \\
        &= \sum_{s \in \gS^{\operatorname{G}, 1}_3}   \widehat{d^{\piE}_3} (s) - d^{\piE}_3 (s)    + \sum_{s \in \gS^{\operatorname{G}, 2}_3} d^{\piE}_3 (s) - \widehat{d^{\piE}_3} (s)
        \\
        &= 2 - 2 \sum_{s \in \gS^{\operatorname{G}, 1}_3} d^{\piE}_3 (s) - 2 \sum_{s \in \gS^{\operatorname{G}, 2}_3} \widehat{d^{\piE}_3} (s)
        \\
        &\leq 2 - 2 \min_{s \in \goodS} d^{\piE}_3 (s). 
    \end{align*}
    According to the assumption that $\forall s \in \goodS, \varepsilon \leq d^{\piE}_3 (s) / 2 $, we have that $\Loss_3 (\piE) \leq \Loss_3 (\piail)$. In summary, we have obtained that 
    \begin{align*}
        \sum_{h=1}^3  \Loss_h (\piE) < \sum_{h=1}^3  \Loss_h (\piail), 
    \end{align*}
    which contradicts with the fact that $\piail$ is the optimal solution to the distribution matching loss. Therefore, we have that in time step $h=2$, $\forall s \in \goodS$, $d^{\piail}_2 (s) > 0$.

    Then we continue to prove that in time step $h=3$, $\forall s \in \goodS$, $d^{\piail}_3 (s) > 0$. We also prove this statement by contradiction. We assume that $\exists s \in \goodS$, $d^{\piail}_3 (s) = 0$. According to the transition structure of the extended RABS MDPs shown in Figure \ref{fig:rabs_mdp_extension_two}, we can derive that
    \begin{align*}
        d^{\piail}_2 (b) = 0, \; \forall s \in \goodS, \piail_2 (\gooda|s) = 0.
    \end{align*}
    $d^{\piail}_2 (b) = 0$ further implies that $\forall s \in \goodS, \piail_1 (\gooda|s) = \piE_1 (\gooda|s) =  1$. Then we calculate the distribution matching loss of $\piail$.
    \begin{align*}
        \Loss_1 (\piail) = \Loss_1 (\piE).
    \end{align*}
    \begin{align*}
        \Loss_2 (\piail) = \sum_{s \in \goodS} \labs  \widehat{d^{\piE}_2} (s) - d^{\piail}_2 (s, \gooda)  \rabs + d^{\piail}_2 (s, \bada) = 2. 
    \end{align*}
    \begin{align*}
        \Loss_3 (\piail) = \sum_{s \in \goodS}  \lp \widehat{d^{\piE}_3} (s) \rp + d^{\piail}_3 (b) = 2.
    \end{align*}
    Furthermore, we calculate the distribution matching loss of $\piE$. For time steps $h=2, 3$,
    \begin{align*}
        \Loss_h (\piE) = \sum_{s \in \goodS} \labs  \widehat{d^{\piE}_h} (s) - d^{\piE}_h (s)  \rabs < 2 = \Loss_h (\piail).  
    \end{align*}
    In summary, we have obtained that 
    \begin{align*}
        \sum_{h=1}^3  \Loss_h (\piE) < \sum_{h=1}^3  \Loss_h (\piail), 
    \end{align*}
    which contradicts with the fact that $\piail$ is the optimal solution to the distribution matching loss. Therefore, we have that in time step $h=3$, $\forall s \in \goodS$, $d^{\piail}_3 (s) > 0$, which completes the proof of the first statement.

    Now, we proceed to prove the second statement. According to the optimality condition, we have that
    \begin{align*}
        \piail_3 \in \argmin_{\pi_3} f_2 (\pi_3; \piail_1, \piail_2).
    \end{align*}
    Notice that $\Loss_1, \Loss_2$ are independent of $\pi_3$. Then it holds that
    \begin{align*}
        \piail_3 \in \argmin_{\pi_3} \Loss_3.
    \end{align*}
    For $\Loss_3$, we have that
    \begin{align*}
        \Loss_3 &= \sum_{s \in \goodS} \bigg( \labs \widehat{d^{\piE}_3 } (s) - d^{\piail}_3 (s) \pi_3 (\gooda|s) \rabs + d^{\piail}_3 (s) \lp 1-  \pi_3 (\gooda|s) \rp \bigg)
        \\
        &= \sum_{s \in \gV_3} \bigg( \labs \widehat{d^{\piE}_3 } (s) - d^{\piail}_3 (s) \pi_3 (\gooda|s) \rabs + d^{\piail}_3 (s) \lp 1-  \pi_3 (\gooda|s) \rp \bigg) + \sum_{s \notin \gV_3, s \in \goodS} d^{\piail}_3 (s) + d^{\piail}_3 (b)
        \\
        &= \sum_{s \in \gV_3} \bigg( \labs \widehat{d^{\piE}_3 } (s) - d^{\piail}_3 (s) \pi_3 (\gooda|s) \rabs - d^{\piail}_3 (s)  \pi_3 (\gooda|s) \bigg) + \sum_{s \in \gV_3} d^{\piail}_3  (s)   + \sum_{s \notin \gV_3, s \in \goodS} d^{\piail}_3  (s) + d^{\piail}_3 (b)
        \\
        &= \sum_{s \in \gV_3} \bigg( \labs \widehat{d^{\piE}_3 } (s) - d^{\piail}_3 (s) \pi_3 (\gooda|s) \rabs - d^{\piail}_3 (s)  \pi_3 (\gooda|s) \bigg) + 1.
    \end{align*}
    Then we have that
    \begin{align*}
        \piail_3 \in &\argmin_{\pi_3} \sum_{s \in \gV_3} \bigg( \labs \widehat{d^{\piE}_3 } (s) - d^{\piail}_3 (s) \pi_3 (\gooda|s) \rabs - d^{\piail}_3 (s)  \pi_3 (\gooda|s) \bigg) 
    \end{align*}
    For each $s \in \gV_3$, it holds that
    \begin{align*}
        \piail_3 (\gooda|s) \in &\argmin_{\pi_3 (\gooda|s) \in [0, 1]} \bigg( \labs \widehat{d^{\piE}_3 } (s) - d^{\piail}_3 (s) \pi_3 (\gooda|s) \rabs - d^{\piail}_3 (s)  \pi_3 (\gooda|s) \bigg). 
    \end{align*}
    Based on the first statement that $\forall s \in \goodS, d^{\piail}_3 (s) > 0$. According to Lemma \ref{lem:single_variable_opt_condition}, we have that $\piail_3 (\gooda|s) > 0$, which completes the proof of the second statement.
\end{proof}
Now we are ready to prove Proposition \ref{prop:rabs_mdps_extension_two}. The proof is based on a backward analysis. In particular, we first prove that in time step $h=2$, $\forall s \in \goodS$, $\piail_2 (\gooda|s) = 1$. According to the optimality condition, we have that
\begin{align*}
    \piail_2 \in \argmin_{\pi_2} f_2 (\pi_2; \piail_1, \piail_3).
\end{align*}
As $\Loss_1$ is independent of $\pi_2$, we have 
\begin{align*}
    \piail_2 \in \argmin_{\pi_2}  \Loss_2 (\pi_2) + \Loss_3 (\pi_2).
\end{align*}
For $\Loss_2 (\pi_2)$, we have that
\begin{align*}
    \Loss_2 (\pi_2) &= \sum_{s \in \goodS} \bigg( \labs \widehat{d^{\piE}_2 } (s) - d^{\piail}_2 (s) \pi_2 (\gooda|s) \rabs + d^{\piail}_2 (s) \lp 1-  \pi_2 (\gooda|s) \rp \bigg) + d^{\piail}_2 (b)
        \\
        &= \sum_{s \in \goodS} \bigg( \labs \widehat{d^{\piE}_2 } (s) - d^{\piail}_2 (s) \pi_2 (\gooda|s) \rabs - d^{\piail}_2 (s)  \pi_2 (\gooda|s) \bigg) + 1. 
\end{align*}
Then we have that
\begin{align*}
    \argmin_{\pi_2}  \Loss_2 (\pi_2) &= \argmin_{\pi_2} \sum_{s \in \goodS} \bigg( \labs \widehat{d^{\piE}_2 } (s) - d^{\piail}_2 (s) \pi_2 (\gooda|s) \rabs - d^{\piail}_2 (s)  \pi_2 (\gooda|s) \bigg).   
\end{align*}
For each state $s \in \goodS$, we have that
\begin{align*}
    \piail_2 (\gooda|s) &\in \argmin_{\pi_2 (\gooda|s) \in [0, 1]} \bigg( \labs \widehat{d^{\piE}_2 } (s) - d^{\piail}_2 (s) \pi_2 (\gooda|s) \rabs - d^{\piail}_2 (s)  \pi_2 (\gooda|s) \bigg).
\end{align*}
From Lemma \ref{lem:single_variable_opt}, we have that $ \pi_2 (\gooda|s) = 1$ is an optimal solution to the above optimization problem. Then $\forall s \in \goodS, \pi_2 (\gooda|s) = 1$ is an optimal solution to $\argmin_{\pi_2} \Loss_2 (\pi_2)$. For $\Loss_3 (\pi_2)$, we have that
\begin{align*}
    &\quad \Loss_3 (\pi_2)
    \\
    & = \sum_{s \in \goodS} \bigg( \labs \widehat{d^{\piE}_3} (s) - d^{\piail}_3 (s) \piail_3 (\gooda|s) \rabs + d^{\piail}_3 (s) \lp  1- \piail_3 (\gooda|s) \rp \bigg) + d^{\piail}_3 (b)
    \\
    & = \sum_{s \in \goodS} \bigg( \labs \widehat{d^{\piE}_3} (s) - d^{\piail}_3 (s) \piail_3 (\gooda|s) \rabs - d^{\piail}_3 (s) \piail_3 (\gooda|s) \bigg) + \sum_{s \in \goodS} d^{\piail}_3 (s)  + d^{\piail}_3 (b)
    \\
    & = \sum_{s \in \goodS} \bigg( \labs \widehat{d^{\piE}_3} (s) - d^{\piail}_3 (s) \piail_3 (\gooda|s) \rabs - d^{\piail}_3 (s) \piail_3 (\gooda|s) \bigg) + 1
    \\
	    & = \sum_{s \in \gV_3} \bigg( \labs \widehat{d^{\piE}_3} (s) - d^{\piail}_3 (s) \piail_3 (\gooda|s) \rabs - d^{\piail}_3 (s) \piail_3 (\gooda|s) \bigg)
	    \\
	    &\quad + \sum_{s \notin \gV_3, s \in \goodS} \bigg( \labs \widehat{d^{\piE}_3} (s) - d^{\piail}_3 (s) \piail_3 (\gooda|s) \rabs - d^{\piail}_3 (s) \piail_3 (\gooda|s) \bigg) + 1
    \\
    &\overset{\text{(a)}}{=} \sum_{s \in \gV_3} \bigg( \labs \widehat{d^{\piE}_3} (s) - d^{\piail}_3 (s) \piail_3 (\gooda|s) \rabs - d^{\piail}_3 (s) \piail_3 (\gooda|s) \bigg) + 1
    \\
	    &=  \underbrace{\sum_{s \in \gV_3}  \labs \widehat{d^{\piE}_3} (s) - d^{\piail}_3 (s) \piail_3 (\gooda|s) \rabs}_{\text{Term I}} - \underbrace{\sum_{s \in \gV_3} d^{\piail}_3 (s) \piail_3 (\gooda|s)}_{\text{Term II}} + 1
\end{align*}
Equation (a) follows the definition of $\gV_3$: $\gV_3 := \{ s \in \goodS: \widehat{d^{\piE}_3} (s) > 0  \}$. For $d^{\piail}_3 (s)$, we have that
\begin{align*}
    d^{\piail}_3 (s) &= \sum_{\widetilde{s} \in \goodS} d^{\piail}_2 (\widetilde{s}) \pi_2 (\gooda|\widetilde{s}) P_2 (s|\widetilde{s}, \gooda) + d^{\piail}_2 (b) \sP \lp s_3=s|s_2 = b \rp
    \\
    &= \sum_{\widetilde{s} \in \goodS} d^{\piail}_2 (\widetilde{s}) \pi_2 (\gooda|\widetilde{s}) P_2 (s|\widetilde{s}, \gooda) + d^{\piail}_2 (b) \varepsilon.
\end{align*}
For Term I, we then have that
\begin{align*}
    &\quad \sum_{s \in \gV_3}  \labs \widehat{d^{\piE}_3} (s) - d^{\piail}_3 (s) \piail_3 (\gooda|s) \rabs
    \\
     &= \sum_{s \in \gV_3}  \bigg| \widehat{d^{\piE}_3} (s) - \bigg( \sum_{\widetilde{s} \in \goodS} d^{\piail}_2 (\widetilde{s}) \pi_2 (\gooda|\widetilde{s}) P_2 (s|\widetilde{s}, \gooda) + d^{\piail}_2 (b) \varepsilon \bigg) \piail_3 (\gooda|s) \bigg|
     \\
     &= \sum_{s \in \gV_3}  \bigg| \widehat{d^{\piE}_3} (s) - d^{\piail}_2 (b) \varepsilon \piail_3 (\gooda|s) - \sum_{\widetilde{s} \in \goodS} d^{\piail}_2 (\widetilde{s}) \pi_2 (\gooda|\widetilde{s}) P_2 (s|\widetilde{s}, \gooda) \piail_3 (\gooda|s) \bigg|
     \\
     &= \sum_{s \in \gV_3}  \bigg| \widehat{d^{\piE}_3} (s) - d^{\piail}_2 (b) \varepsilon \piail_3 (\gooda|s) - \sum_{\widetilde{s} \in \goodS} d^{\piail}_2 (\widetilde{s})  P_2 (s|\widetilde{s}, \gooda) \piail_3 (\gooda|s) \pi_2 (\gooda|\widetilde{s}) \bigg|.
\end{align*}
For Term II, we have that
\begin{align*}
    &\quad \sum_{s \in \gV_3} d^{\piail}_3 (s) \piail_3 (\gooda|s)
    \\
    &= \sum_{s \in \gV_3} \bigg( \sum_{\widetilde{s} \in \goodS} d^{\piail}_2 (\widetilde{s}) \pi_2 (\gooda|\widetilde{s}) P_2 (s|\widetilde{s}, \gooda) + d^{\piail}_2 (b) \varepsilon \bigg) \piail_3 (\gooda|s)
    \\
    &= \sum_{s \in \gV_3} \sum_{\widetilde{s} \in \goodS} d^{\piail}_2 (\widetilde{s}) \pi_2 (\gooda|\widetilde{s}) P_2 (s|\widetilde{s}, \gooda) \piail_3 (\gooda|s) + \sum_{s \in \gV_3} d^{\piail}_2 (b) \varepsilon \piail_3 (\gooda|s)
    \\
    &=  \sum_{\widetilde{s} \in \goodS} \lp \sum_{s \in \gV_3} d^{\piail}_2 (\widetilde{s})  P_2 (s|\widetilde{s}, \gooda) \piail_3 (\gooda|s) \rp \pi_2 (\gooda|\widetilde{s}) + \sum_{s \in \gV_3} d^{\piail}_2 (b) \varepsilon \piail_3 (\gooda|s). 
\end{align*}
Combining the above two equations yields that
\begin{align*}
    &\quad \Loss_3 (\pi_2)
    \\
	    &= \sum_{s \in \gV_3}  \bigg| \widehat{d^{\piE}_3} (s) - d^{\piail}_2 (b) \varepsilon \piail_3 (\gooda|s) - \sum_{\widetilde{s} \in \goodS} d^{\piail}_2 (\widetilde{s})  P_2 (s|\widetilde{s}, \gooda) \piail_3 (\gooda|s) \pi_2 (\gooda|\widetilde{s}) \bigg|
	    \\
	    &\quad - \sum_{\widetilde{s} \in \goodS} \lp \sum_{s \in \gV_3} d^{\piail}_2 (\widetilde{s})  P_2 (s|\widetilde{s}, \gooda) \piail_3 (\gooda|s) \rp \pi_2 (\gooda|\widetilde{s}) - \sum_{s \in \gV_3} d^{\piail}_2 (b) \varepsilon \piail_3 (\gooda|s) + 1
    \\
    \\
	    &= \sum_{s \in \gV_3}  \bigg| \widehat{d^{\piE}_3} (s) - d^{\piail}_2 (b) \varepsilon \piail_3 (\gooda|s) - \sum_{\widetilde{s} \in \goodS} d^{\piail}_2 (\widetilde{s})  P_2 (s|\widetilde{s}, \gooda) \piail_3 (\gooda|s) \pi_2 (\gooda|\widetilde{s}) \bigg|
	    \\
	    &\quad - \sum_{\widetilde{s} \in \goodS} \lp \sum_{s \in \gV_3} d^{\piail}_2 (\widetilde{s})  P_2 (s|\widetilde{s}, \gooda) \piail_3 (\gooda|s) \rp \pi_2 (\gooda|\widetilde{s}) + \operatorname{const}.
\end{align*}
Here $\operatorname{const} = 1 - \sum_{s \in \gV_3} d^{\piail}_2 (b) \varepsilon \piail_3 (\gooda|s) $ which is independent of $\pi_2$. Then we obtain that
\begin{align*}
    &\quad \argmin_{\pi_2} \Loss_3 (\pi_2) 
    \\
	    &= \argmin_{\pi_2}  \sum_{s \in \gV_3}  \bigg| \widehat{d^{\piE}_3} (s) - d^{\piail}_2 (b) \varepsilon \piail_3 (\gooda|s) - \sum_{\widetilde{s} \in \goodS} d^{\piail}_2 (\widetilde{s})  P_2 (s|\widetilde{s}, \gooda) \piail_3 (\gooda|s) \pi_2 (\gooda|\widetilde{s}) \bigg|
	    \\
	    &\quad - \sum_{\widetilde{s} \in \goodS} \lp \sum_{s \in \gV_3} d^{\piail}_2 (\widetilde{s})  P_2 (s|\widetilde{s}, \gooda) \piail_3 (\gooda|s) \rp \pi_2 (\gooda|\widetilde{s}).
\end{align*}
We apply Lemma \ref{lem:mn_variables_opt_unique} to characterize the optimal solution of the above optimization problem. To check the conditions of Lemma \ref{lem:mn_variables_opt_unique}, we define that
	\begin{align*}
	    & m = |\gV_3|, n = | \goodS|, \forall s \in \gV_3, c(s) = \widehat{d^{\piE}_3} (s) - d^{\piail}_2 (b) \varepsilon \piail_3 (\gooda|s),
	    \\
	    & \forall s \in \gV_3, \widetilde{s} \in \goodS, A (s, \widetilde{s}) = d^{\piail}_2 (\widetilde{s})  P_2 (s|\widetilde{s}, \gooda) \piail_3 (\gooda|s),
	    \\
	    & \forall \widetilde{s} \in \goodS, d(\widetilde{s}) =  \sum_{s \in \gV_3} d^{\piail}_2 (\widetilde{s})  P_2 (s|\widetilde{s}, \gooda) \piail_3 (\gooda|s). 
	\end{align*}
According to Lemma \ref{lem:auxiliary_lemma_extension_two}, we have that $\forall \widetilde{s} \in \goodS, d^{\piail}_2 (\widetilde{s}) >0$ and $\forall s \in \gV_3, \piail_3 (\gooda|s) > 0$. Then we can get that $\forall \widetilde{s} \in \goodS, s \in \gV_3, A (s, \widetilde{s}) > 0$. Furthermore, we have that
\begin{align*}
    \sum_{s \in \gV_3} c (s) &= \sum_{s \in \gV_3} \widehat{d^{\piE}_3} (s) - d^{\piail}_2 (b) \varepsilon \piail_3 (\gooda|s)
    \\
    &= 1 - d^{\piail}_2 (b) \varepsilon \sum_{s \in \gV_3} \piail_3 (\gooda|s)
    \\
    &\geq 1 - d^{\piail}_2 (b) \varepsilon |\goodS|.
\end{align*}
\begin{align*}
    \sum_{s \in \gV_3} \sum_{\widetilde{s} \in \goodS} A (s, \widetilde{s}) &=  \sum_{s \in \gV_3} \sum_{\widetilde{s} \in \goodS} d^{\piail}_2 (\widetilde{s})  P_2 (s|\widetilde{s}, \gooda) \piail_3 (\gooda|s)
    \\
    &\leq \sum_{s \in \gV_3} \sum_{\widetilde{s} \in \goodS} d^{\piail}_2 (\widetilde{s})  P_2 (s|\widetilde{s}, \gooda)
    \\
    &\leq \sum_{\widetilde{s} \in \goodS} d^{\piail}_2 (\widetilde{s})
    \\
    &= 1 -  d^{\piail}_2 (b). 
\end{align*}
As the transition probability distribution at the bad absorbing state $b$ is a valid distribution, $\varepsilon |\goodS| \leq 1$. Then we get that $\sum_{s \in \gV_3} c (s) \geq \sum_{s \in \gV_3} \sum_{\widetilde{s} \in \goodS} A (s, \widetilde{s})$. Finally, we observe that
\begin{align*}
    \sum_{s \in \gV_3} A (s, \widetilde{s}) = \sum_{s \in \gV_3} d^{\piail}_2 (\widetilde{s})  P_2 (s|\widetilde{s}, \gooda) \piail_3 (\gooda|s) = d (\widetilde{s}).  
\end{align*}
We have checked the conditions of Lemma \ref{lem:mn_variables_opt_unique}. According to Lemma \ref{lem:mn_variables_opt_unique}, we have that $\forall \widetilde{s} \in \goodS, \pi_2 (\gooda|\widetilde{s}) = 1$ is the unique optimal solution to $\argmin_{\pi_2} \Loss_3 (\pi_2)$. Then based on Lemma \ref{lem:unique_opt_solution_condition}, we have that $\forall s \in \goodS, \pi_2 (\gooda|s) = 1 $ is the unique optimal solution of $\argmin_{\pi_2} \Loss_2 (\pi_2) + \Loss_3 (\pi_2) $, which finishes the proof for time step $h=2$.

Then we continue to prove that in time step $h=1$, $\forall s \in \goodS$, $\piail_1 (\gooda|s) = 1$. In particular, we have that
\begin{align*}
    \piail_1 \in \argmin_{\pi_1} f_1 (\pi_1; \piail_2, \piail_3) = \argmin_{\pi_1} \Loss_1 (\pi_1) + \Loss_2 (\pi_1) + \Loss_3 (\pi_1).  
\end{align*}
For $\Loss_1 (\pi_1)$, we have that
\begin{align*}
    &\quad \Loss_1 (\pi_1) 
    \\
    & = \sum_{s \in \goodS} \labs \widehat{d^{\piE}_1} (s) - \rho (s) \pi_1 (\gooda|s) \rabs + \rho (s) \lp 1 - \pi_1 (\gooda|s) \rp
    \\
    &= \sum_{s \in \goodS} \labs \widehat{d^{\piE}_1} (s) - \rho (s) \pi_1 (\gooda|s) \rabs - \rho (s) \pi_1 (\gooda|s) + 1.
\end{align*}
Then we have that for each $s \in \goodS$, we have 
\begin{align*}
    \piail_1 (\gooda|s) \in \argmin_{\pi_1 (\gooda|s) \in [0, 1]} \labs \widehat{d^{\piE}_1} (s) - \rho (s) \pi_1 (\gooda|s) \rabs - \rho (s) \pi_1 (\gooda|s). 
\end{align*}
According to Lemma \ref{lem:single_variable_opt}, we have that $\pi_1 (\gooda|s) = \piE_1 (\gooda|s) = 1$ is an optimal solution to the above optimization problem. Then $\forall s \in \goodS, \pi_1 (\gooda|s) = \piE_1 (\gooda|s) = 1$ is an optimal solution to $\argmin_{\pi_1} \Loss_1 (\pi_1)$. For $\Loss_2 (\pi_1)$, we have that
\begin{align*}
    &\quad \Loss_2 (\pi_1)
    \\
    &= \sum_{s \in \goodS} \bigg( \labs \widehat{d^{\piE}_2} (s) - d^{\piail}_2 (s) \piail_2 (\gooda|s) \rabs + d^{\piail}_2 (s) \lp 1 - \piail_2 (\gooda|s) \rp \bigg) + d^{\piail}_2 (b).  
\end{align*}
Notice that we have proved that in time step $h=2$, $\forall s \in \goodS, \piail_2 (\gooda|s) = 1$. Then we have that
\begin{align*}
    \Loss_2 (\pi_1) = \sum_{s \in \goodS} \labs \widehat{d^{\piE}_2} (s) - d^{\piail}_2 (s) \rabs + d^{\piail}_2 (b).  
\end{align*}
In particular, we can calculate that
\begin{align*}
    & \forall s \in \goodS, d^{\piail}_2 (s) = \sum_{\widetilde{s} \in \goodS} \rho (\widetilde{s}) \pi_1 (\gooda|\widetilde{s}) P_1 (s|\widetilde{s}, \gooda),
    \\
    &  d^{\piail}_2 (b) = \sum_{\widetilde{s} \in \goodS} \rho (\widetilde{s}) \lp 1- \pi_1 (\gooda|\widetilde{s}) \rp. 
\end{align*}
Then we have that
\begin{align*}
     \Loss_2 (\pi_1)= \sum_{s \in \goodS} \labs \widehat{d^{\piE}_2} (s) - \sum_{\widetilde{s} \in \goodS} \rho (\widetilde{s})  P_1 (s|\widetilde{s}, \gooda) \pi_1 (\gooda|\widetilde{s})   \rabs - \sum_{\widetilde{s} \in \goodS} \rho (\widetilde{s}) \pi_1 (\gooda|\widetilde{s}) + 1.  
\end{align*}
\begin{align*}
     \argmin_{\pi_1} \Loss_2 (\pi_1) = \argmin_{\pi_1} \sum_{s \in \goodS} \labs \widehat{d^{\piE}_2} (s) - \sum_{\widetilde{s} \in \goodS} \rho (\widetilde{s})  P_1 (s|\widetilde{s}, \gooda) \pi_1 (\gooda|\widetilde{s})   \rabs - \sum_{\widetilde{s} \in \goodS} \rho (\widetilde{s}) \pi_1 (\gooda|\widetilde{s}). 
\end{align*}
We apply Lemma \ref{lem:mn_variables_opt_unique} to analyze the optimal solution to the above optimization problem. To check the conditions in Lemma \ref{lem:mn_variables_opt_unique}, we define that
\begin{align*}
    & m = n = |\goodS|, \forall s \in \goodS, c (s) = \widehat{d^{\piE}_2} (s),
    \\
    & \forall  s, \widetilde{s} \in \goodS, A (s, \widetilde{s}) = \rho (\widetilde{s})  P_1 (s|\widetilde{s}, \gooda),
    \\
    & \forall \widetilde{s} \in \goodS, d (\widetilde{s}) = \rho (\widetilde{s}).  
\end{align*}
It is direct to verify that $\forall  s, \widetilde{s} \in \goodS, A (s, \widetilde{s}) > 0$.
\begin{align*}
    \sum_{s \in \goodS} c (s) = \sum_{s \in \goodS} \widehat{d^{\piE}_2} (s) = 1.  
\end{align*}
\begin{align*}
    \sum_{s \in \goodS} \sum_{\widetilde{s} \in \goodS} A (s, \widetilde{s}) = \sum_{s \in \goodS} \sum_{\widetilde{s} \in \goodS} \rho (\widetilde{s})  P_1 (s|\widetilde{s}, \gooda) = 1.  
\end{align*}
\begin{align*}
    \sum_{s \in \goodS} A (s, \widetilde{s}) = \sum_{s \in \goodS} \rho (\widetilde{s})  P_1 (s|\widetilde{s}, \gooda) = \rho (\widetilde{s}) = d(\widetilde{s}). 
\end{align*}
We have verified the conditions in Lemma \ref{lem:mn_variables_opt_unique}. According to Lemma \ref{lem:mn_variables_opt_unique}, we have that $\forall \widetilde{s} \in \goodS, \pi_1 (\gooda|\widetilde{s}) = \piE_1 (\gooda|\widetilde{s}) =  1$ is an \emph{unique} optimal solution of $\argmin_{\pi_1} \Loss_2 (\pi_1)$.

For $\Loss_3 (\pi_1)$, we have that
\begin{align*}
    &\quad \Loss_3 (\pi_1)
    \\
    & = \sum_{s \in \goodS} \bigg( \labs \widehat{d^{\piE}_3} (s) - d^{\piail}_3 (s) \piail_3 (\gooda|s) \rabs + d^{\piail}_3 (s) \lp  1- \piail_3 (\gooda|s) \rp \bigg) + d^{\piail}_3 (b)
    \\
    & = \sum_{s \in \goodS} \bigg( \labs \widehat{d^{\piE}_3} (s) - d^{\piail}_3 (s) \piail_3 (\gooda|s) \rabs - d^{\piail}_3 (s) \piail_3 (\gooda|s) \bigg) + \sum_{s \in \goodS} d^{\piail}_3 (s)  + d^{\piail}_3 (b)
    \\
    & = \sum_{s \in \goodS} \bigg( \labs \widehat{d^{\piE}_3} (s) - d^{\piail}_3 (s) \piail_3 (\gooda|s) \rabs - d^{\piail}_3 (s) \piail_3 (\gooda|s) \bigg) + 1
    \\
	    & = \sum_{s \in \gV_3} \bigg( \labs \widehat{d^{\piE}_3} (s) - d^{\piail}_3 (s) \piail_3 (\gooda|s) \rabs - d^{\piail}_3 (s) \piail_3 (\gooda|s) \bigg)
	    \\
	    &\quad + \sum_{s \notin \gV_3, s \in \goodS} \bigg( \labs \widehat{d^{\piE}_3} (s) - d^{\piail}_3 (s) \piail_3 (\gooda|s) \rabs - d^{\piail}_3 (s) \piail_3 (\gooda|s) \bigg) + 1
    \\
    &= \sum_{s \in \gV_3} \bigg( \labs \widehat{d^{\piE}_3} (s) - d^{\piail}_3 (s) \piail_3 (\gooda|s) \rabs - d^{\piail}_3 (s) \piail_3 (\gooda|s) \bigg) + 1
    \\
    &=  \sum_{s \in \gV_3}  \labs \widehat{d^{\piE}_3} (s) - d^{\piail}_3 (s) \piail_3 (\gooda|s) \rabs - \sum_{s \in \gV_3} d^{\piail}_3 (s) \piail_3 (\gooda|s) + 1.
\end{align*}
We can calculate that $\forall s \in \goodS$,
\begin{align*}
    &\quad d^{\piail}_3 (s)
    \\
    &= \sum_{\widetilde{s} \in \goodS} \rho (\widetilde{s}) \pi_1 (\gooda|\widetilde{s}) \sP^{\piail} (s_3=s | s_1 = \widetilde{s}, a_1 = \gooda) + \rho (\widetilde{s}) \lp 1-\pi_1 (\gooda|\widetilde{s}) \rp \sP^{\piail} (s_3=s | s_1 = \widetilde{s}, a_1 = \bada)
    \\
    &\overset{\text{(a)}}{=} \sum_{\widetilde{s} \in \goodS} \rho (\widetilde{s}) \pi_1 (\gooda|\widetilde{s}) \sP^{\piE} (s_3=s | s_1 = \widetilde{s}, a_1 = \gooda) + \rho (\widetilde{s}) \lp 1-\pi_1 (\gooda|\widetilde{s}) \rp \sP^{\piail} (s_3=s | s_1 = \widetilde{s}, a_1 = \bada)
    \\
    &\overset{\text{(b)}}{=} \sum_{\widetilde{s} \in \goodS} \rho (\widetilde{s}) \pi_1 (\gooda|\widetilde{s}) \sP^{\piE} (s_3=s | s_1 = \widetilde{s}, a_1 = \gooda) + \rho (\widetilde{s}) \lp 1-\pi_1 (\gooda|\widetilde{s}) \rp \sP^{\piail} (s_3=s | s_2 =b)
    \\
    &= \sum_{\widetilde{s} \in \goodS} \rho (\widetilde{s}) \pi_1 (\gooda|\widetilde{s}) \sP^{\piE} (s_3=s | s_1 = \widetilde{s}, a_1 = \gooda) + \rho (\widetilde{s}) \lp 1-\pi_1 (\gooda|\widetilde{s}) \rp \varepsilon
    \\
    &= \sum_{\widetilde{s} \in \goodS} \rho (\widetilde{s}) \pi_1 (\gooda|\widetilde{s}) \lp \sP^{\piE} (s_3=s | s_1 = \widetilde{s}, a_1 = \gooda) - \varepsilon \rp + \sum_{\widetilde{s} \in \goodS} \rho (\widetilde{s}) \varepsilon.
\end{align*}
Equation (a) follows that $\sP^{\piail} (s_3=s | s_1 = \widetilde{s}, a_1 = \gooda)$ depends on $\forall \widehat{s} \in \goodS, \piail_2 (\cdot|\widehat{s}) $ and $\forall \widehat{s} \in \goodS, \piail_2 (\gooda|\widehat{s}) = \piE_2 (\gooda|\widehat{s}) = 1 $ which has been proved previously. Equation (b) follows that taking the non-expert action $\bada$ on any good state deterministically transits into the bad state. Then we have that
\begin{align*}
    &\quad \Loss_3 (\pi_1)
    \\
    &=  \sum_{s \in \gV_3}  \labs \widehat{d^{\piE}_3} (s) - d^{\piail}_3 (s) \piail_3 (\gooda|s) \rabs - \sum_{s \in \gV_3} d^{\piail}_3 (s) \piail_3 (\gooda|s) + 1
    \\
    &= \sum_{s \in \gV_3}  \bigg| \widehat{d^{\piE}_3} (s) -  \sum_{\widetilde{s} \in \goodS} \rho (\widetilde{s}) \varepsilon \piail_3 (\gooda|s) - \sum_{\widetilde{s} \in \goodS} \rho (\widetilde{s}) \pi_1 (\gooda|\widetilde{s}) \cdot \lp \sP^{\piE} (s_3=s | s_1 = \widetilde{s}, a_1 = \gooda) - \varepsilon \rp  \piail_3 (\gooda|s) \bigg| 
    \\
    &\; - \sum_{s \in \gV_3} \bigg( \sum_{\widetilde{s} \in \goodS} \rho (\widetilde{s}) \pi_1 (\gooda|\widetilde{s}) \lp \sP^{\piE} (s_3=s | s_1 = \widetilde{s}, a_1 = \gooda) - \varepsilon \rp + \sum_{\widetilde{s} \in \goodS} \rho (\widetilde{s}) \varepsilon  \bigg) \piail_3 (\gooda|s) + 1
    \\
    &= \sum_{s \in \gV_3}  \bigg| \widehat{d^{\piE}_3} (s) -  \sum_{\widetilde{s} \in \goodS} \rho (\widetilde{s}) \varepsilon \piail_3 (\gooda|s) - \sum_{\widetilde{s} \in \goodS} \rho (\widetilde{s}) \pi_1 (\gooda|\widetilde{s}) \cdot \lp \sP^{\piE} (s_3=s | s_1 = \widetilde{s}, a_1 = \gooda) - \varepsilon \rp  \piail_3 (\gooda|s) \bigg| 
    \\
    &\; - \sum_{s \in \gV_3} \bigg( \sum_{\widetilde{s} \in \goodS} \rho (\widetilde{s}) \pi_1 (\gooda|\widetilde{s}) \cdot \lp \sP^{\piE} (s_3=s | s_1 = \widetilde{s}, a_1 = \gooda) - \varepsilon \rp \bigg)   \piail_3 (\gooda|s) - \sum_{s \in \gV_3} \sum_{\widetilde{s} \in \goodS} \rho (\widetilde{s}) \varepsilon  \piail_3 (\gooda|s) + 1
    \\
	    &= \sum_{s \in \gV_3}  \bigg| \widehat{d^{\piE}_3} (s) -  \sum_{\widetilde{s} \in \goodS} \rho (\widetilde{s}) \varepsilon \piail_3 (\gooda|s) - \sum_{\widetilde{s} \in \goodS} \rho (\widetilde{s}) \piail_3 (\gooda|s) \cdot \lp \sP^{\piE} (s_3=s | s_1 = \widetilde{s}, a_1 = \gooda) - \varepsilon \rp   \pi_1 (\gooda|\widetilde{s}) \bigg| 
        \\
        &\; - \sum_{\widetilde{s} \in \goodS}  \bigg( \sum_{s \in \gV_3}  \rho (\widetilde{s}) \cdot \lp \sP^{\piE} (s_3=s | s_1 = \widetilde{s}, a_1 = \gooda) - \varepsilon \rp \cdot \piail_3 (\gooda|s) \bigg) \cdot \pi_1 (\gooda|\widetilde{s}) + \operatorname{const}.
\end{align*}
Here $\operatorname{const} = 1 - \sum_{s \in \gV_3} \sum_{\widetilde{s} \in \goodS} \rho (\widetilde{s}) \varepsilon  \piail_3 (\gooda|s)$, which is independent of $\pi_1$. Therefore, we have that
\begin{align*}
    & \argmin_{\pi_1} \Loss_3 (\pi_1)
    \\
    &= \argmin_{\pi_1} \sum_{s \in \gV_3}  \bigg| \widehat{d^{\piE}_3} (s) -  \sum_{\widetilde{s} \in \goodS} \rho (\widetilde{s}) \varepsilon \piail_3 (\gooda|s) - \sum_{\widetilde{s} \in \goodS} \rho (\widetilde{s}) \piail_3 (\gooda|s) \cdot \lp \sP^{\piE} (s_3=s | s_1 = \widetilde{s}, a_1 = \gooda) - \varepsilon \rp   \pi_1 (\gooda|\widetilde{s}) \bigg| 
    \\
    &\; - \sum_{\widetilde{s} \in \goodS}  \bigg( \sum_{s \in \gV_3}  \rho (\widetilde{s}) \cdot \lp \sP^{\piE} (s_3=s | s_1 = \widetilde{s}, a_1 = \gooda) - \varepsilon \rp  \piail_3 (\gooda|s) \bigg) \pi_1 (\gooda|\widetilde{s}). 
\end{align*}
Similarly, we apply Lemma \ref{lem:mn_variables_opt_unique} to analyze the optimal solution of the above optimization problem.  In particular, we define that
	\begin{align*}
	    & m = |\gV_3|, n = |\goodS|, \forall s \in \gV_3, c (s) = \widehat{d^{\piE}_3} (s) -  \sum_{\widetilde{s} \in \goodS} \rho (\widetilde{s}) \varepsilon \piail_3 (\gooda|s).
	\end{align*}
\begin{align*}
    &\forall s \in \gV_3, \forall \widetilde{s} \in \goodS, A (s, \widetilde{s}) = \rho (\widetilde{s}) \piail_3 (\gooda|s) \lp \sP^{\piE} (s_3=s | s_1 = \widetilde{s}, a_1 = \gooda) - \varepsilon \rp,
    \\
    &\forall \widetilde{s} \in \goodS, d ( \widetilde{s}) = \bigg( \sum_{s \in \gV_3}  \rho (\widetilde{s}) \cdot \lp \sP^{\piE} (s_3=s | s_1 = \widetilde{s}, a_1 = \gooda) - \varepsilon \rp  \piail_3 (\gooda|s) \bigg)  
\end{align*}
Recall the assumption that $\varepsilon < \sP^{\piE} (s_3=s | s_1 = \widetilde{s}, a_1 = \gooda) , \forall s, \widetilde{s} \in \goodS$. According to Lemma \ref{lem:auxiliary_lemma_extension_two}, we have that $\forall s \in \gV_3, \piail_3 (\gooda|s) > 0$. Then we can derive that $\forall s \in \gV_3, \forall \widetilde{s} \in \goodS, A (s, \widetilde{s}) > 0$. Furthermore, we have that
\begin{align*}
    \sum_{s \in \gV_3} c (s) & =  \sum_{s \in \gV_3} \lp  \widehat{d^{\piE}_3} (s) -  \sum_{\widetilde{s} \in \goodS} \rho (\widetilde{s}) \varepsilon \piail_3 (\gooda|s) \rp
    \\
    &= 1 - \sum_{s \in \gV_3} \sum_{\widetilde{s} \in \goodS} \rho (\widetilde{s}) \varepsilon \piail_3 (\gooda|s)
    \\
    &\geq 1 -  \sum_{s \in \gV_3} \sum_{\widetilde{s} \in \goodS} \rho (\widetilde{s}) \varepsilon
    \\
    &\geq 1- |\goodS| \varepsilon.
\end{align*}
	\begin{align*}
	    \sum_{s \in \gV_3} \sum_{\widetilde{s} \in \goodS} A (s, \widetilde{s}) &\leq \sum_{s \in \gV_3} \sum_{\widetilde{s} \in \goodS} \rho (\widetilde{s}) \lp \sP^{\piE} (s_3=s | s_1 = \widetilde{s}, a_1 = \gooda) - \varepsilon \rp \leq 1 - |\goodS| \varepsilon.
	\end{align*}
	\begin{align*}
	    \sum_{s \in \gV_3} A (s, \widetilde{s}) &= \sum_{s \in \gV_3}  \rho (\widetilde{s}) \piail_3 (\gooda|s) \lp \sP^{\piE} (s_3=s | s_1 = \widetilde{s}, a_1 = \gooda) - \varepsilon \rp = d (\widetilde{s}).
	\end{align*}
We have checked the conditions of Lemma \ref{lem:mn_variables_opt_unique}. According to Lemma \ref{lem:mn_variables_opt_unique}, $\forall s \in \goodS, \pi_1 (\gooda|s) = \piE_1 (\gooda|s) =  1$ is the \emph{unique} optimal solution to $\argmin_{\pi_1} \Loss_3 (\pi_1)$. In summary, we have proved that $\forall s \in \goodS, \pi_1 (\gooda|s) = \piE_1 (\gooda|s)= 1$ is an optimal solution to $\argmin_{\pi_1} \Loss_1 (\pi_1)$, the unique optimal solution to $\argmin_{\pi_1} \Loss_2 (\pi_1)$ and the unique optimal solution to $\argmin_{\pi_1} \Loss_3 (\pi_1)$. According to Lemma \ref{lem:unique_opt_solution_condition}, we can derive that $\forall s \in \goodS, \pi_1 (\gooda|s) = \piE_1 (\gooda|s)= 1$ is the unique optimal solution to $\argmin_{\pi_1} f_1 (\pi_1; \piail_2, \piail_3)$, which implies that $\forall s \in \goodS, \piail_1 (\gooda|s) = \piE_1 (\gooda|s)= 1$. We complete the proof in time step $h=1$.  

\section{Proof of Results in Section \ref{sec:a_matching_lower_bound}}

\subsection{An Example for TV-AIL in MDPs Satisfying Assumption \ref{asmp:standard_imitation}}
\label{appendix:example_standard_imitation}
\begin{example}
Consider a simple MDP where $\gS = \{ s^{1}, s^{2} \}$ and $\gA = \{ \GREEN{a}, \BLUE{a} \}$; see \cref{fig:toy_standard_imitation}. Without loss of generality, we let $H=1$ and omit the subscript. 
Suppose the initial state distribution $\rho = (0.5, 0.5)$. The agent is provided with 10 trajectories: 4 trajectories start from $s^{1}$ and the others start from $s^{2}$. 

For TV-AIL, it is easy to calculate the empirical distribution:
\begin{align*}
    \widehat{d^{\piE}} (s^{1}, \GREEN{a}) = 0.4, \widehat{d^{\piE}} (s^{1}, \BLUE{a}) = 0, \quad \widehat{d^{\piE}} (s^{2}, \GREEN{a}) = 0.6, \widehat{d^{\piE}} (s^{2}, \BLUE{a}) = 0.
\end{align*}
Note that there are multiple optimal solutions for the piece-wise linear optimization in TV-AIL; refer to \eqref{eq:ail_piece_wise_linear}. For instance, $\pi(a^1 | s^1) = 0.8, \pi(a^2 | s^1) = 0.2, \pi(a^1 | s^2) = 1.0$, and
\begin{align*}
    d^{\pi}(s^1, a^1) = 0.4, d^{\pi}(s^1, a^2) = 0.1, \quad d^{\pi}(s^2, a^1) = 0.5, d^{\pi}(s^2, a^2) = 0.0.
\end{align*}
For such an optimal policy, the state-action distribution matching loss is $0.2$ and the imitation gap is $0.1$. 
\end{example}

\subsection{Proof of Proposition \ref{proposition:ail_policy_value_gap_standard_imitation}}
\label{appendix:proof_proposition:ail_policy_value_gap_standard_imitation}

\begin{proof}[Proof of \cref{proposition:ail_policy_value_gap_standard_imitation}]

As we have analyzed, for any tabular and episodic MDP satisfying \cref{asmp:standard_imitation}, we have that $d^{\pi}_h (s) = \rho (s), \forall s \in \gS, h \in [H]$. Then we obtain
\begin{align*}
  &\quad \argmin_{\pi \in \Pi} \sum_{h=1}^{H} \sum_{(s, a) \in \gS \times \gA} \labs d^{\pi}_h(s, a) - \widehat{d^{\piE}_h}(s, a) \rabs 
  \\
  &= \argmin_{\pi \in \Pi}  \sum_{h=1}^{H} \sum_{(s, a) \in \gS \times \gA} \labs \rho (s) \pi_h (a|s) - \widehat{d^{\piE}_h}(s, a) \rabs 
  \\
  &= \argmin_{\pi \in \Pi} \bigg\{ \sum_{h=1}^{H} \sum_{s \in \gS } \bigg( \labs \rho (s) \pi_h (a^{1}|s) - \widehat{d^{\piE}_h}(s, a^{1}) \rabs + \sum_{a \in \gA \setminus \{ a^{1} \} } \labs \rho(s) \pi(a|s) - \widehat{d^{\piE}_h}(s, a) \rabs  \bigg) \bigg\}
  \\
  &= \argmin_{\pi \in \Pi}  \bigg\{ \sum_{h=1}^{H} \sum_{s \in \gS } \bigg( \labs \rho (s) \pi_h (a^{1}|s) - \widehat{d^{\piE}_h}(s) \rabs + \rho (s) \lp 1- \pi_h (a^{1}|s) \rp \bigg) \bigg\}
  \\
  &= \argmin_{\pi \in \Pi} \bigg\{  \sum_{h=1}^{H} \sum_{s \in \gS } \bigg( \labs  \rho (s) \pi_h (a^{1}|s) - \widehat{d^{\piE}_h}(s) \rabs - \rho (s) \pi_h (a^{1}|s) \bigg) \bigg\}.
\end{align*}
We see that the above multi-stage policy optimization reduces to $H$ independent state-action distribution matching problems: for each $h \in [H]$ and $s \in \gS$, we solve 
\begin{align}   \label{eq:prop_standard_imitation_step_1}
    \piail_{h} \in \min_{\pi_{h}}  \labs \rho(s) \pi_{h}(a^{1} |s ) - \widehat{d^{\piE}_h} \rabs - \rho(s) \pi_h(a^{1} | s).
\end{align}
For this optimization problem, we introduce the notation $\gW_{h} = \{s \in \gS: \widehat{d^{\piE}_h}(s) < \rho(s) \}$. Then, with \cref{lem:single_variable_opt_condition}, we have that the optimal solution set is $\{\piail_h(\cdot|s):  \piail_h(a^{1}|s) \in  [\widehat{d^{\piE}_h} (s) / \rho (s), 1]\}$. On the other hand, for any state $s \in \gW_h^{c}$ (i.e., the complement set of $\gW_h$), the problem in \eqref{eq:prop_standard_imitation_step_1} reduces to 
\begin{align*}
 \piail_h(a^{1}|s) = \argmin_{\pi_h (a^{1}|s) \in [0, 1]}  \widehat{d^{\piE}_h}(s) - 2\rho (s) \pi_h (a^{1}|s).
\end{align*}
In this case, it is easy to see that the unique optimal solution is $\pi_h (a^{1}|s) = 1$. Hence, the proof of the first point in \cref{proposition:ail_policy_value_gap_standard_imitation} is finished.

Next, we continue to prove the second and third points in \cref{proposition:ail_policy_value_gap_standard_imitation}. To start with, we note that 
\begin{align}
    &\quad  V({\piE}) - V({\piail}) \nonumber  \\
    &= \sum_{h=1}^{H} \sum_{(s, a) \in \gS \times \gA}  \lp d^{\piE}_h(s, a) - d^{\piail}_h(s, a) \rp r_h (s, a)
  \nonumber   \\
    &= \sum_{h=1}^{H} \sum_{s \in \gS}  d^{\piE}_h(s, a^{1}) - d^{\piail}_h(s, a^{1})  \nonumber 
    \\
    &= \sum_{h=1}^{H} \sum_{s \in \gS } d^{\piE}_h(s)  - \rho(s) \piail_h( a^{1}|s). \label{eq:prop_standard_imitation_step_3}
\end{align}
According to the first point in \cref{proposition:ail_policy_value_gap_standard_imitation}, we further find that among all optimal solutions, the largest imitation gap is obtained at   $\piail_h$ with  $\piail_h (a^1|s) = \widehat{d^{\piE}_h}(s) / \rho (s), \forall s \in \gW_h^1$ and $\piail_h (a^1|s) = 1, \forall s \in \gW_{h}^c$. Accordingly, the largest imitation gap is 
\begin{align} 
    &\quad V({\piE}) - V({\piail}) \nonumber  \\
    &= \sum_{h=1}^{H} \sum_{s \in \gW_h} d^{\piE}_h(s)  - \rho(s) \piail_h( a^{1}|s) \nonumber \\
    &= \sum_{h=1}^{H} \sum_{s \in \gW_h} d^{\piE}_h(s)  - \widehat{d^{\piE}_h}(s).   \label{eq:prop_standard_imitation_step_2}
\end{align}
In the sequel, we connect the term $\sum_{s \in \gW_h} d^{\piE}_h(s)  - \widehat{d^{\piE}_h}(s)$ with the estimation error. Notice that for each time step $h \in [H]$, $\sum_{s \in \gS} d^{\piE}_h(s) = \sum_{s \in \gS} \widehat{d^{\piE}_h}(s)$ = 1. Then we have that
\begin{align*}
    \sum_{s \in \gW_h} d^{\piE}_h(s)  - \widehat{d^{\piE}_h}(s) = \sum_{s \in \gW_h^c} \widehat{d^{\piE}_h}(s) - d^{\piE}_h(s).  
\end{align*}
Furthermore, we obtain
\begin{align}
     &\quad \sum_{h=1}^H \lnorm \widehat{d^{\piE}_h} - d^{\piE}_h   \rnorm_1 \nonumber  \\
     &= \sum_{h=1}^H \sum_{s \in \gS} \labs d^{\piE}_h(s) - \widehat{d^{\piE}_h} (s)  \rabs  \nonumber 
     \\
     &= \sum_{h=1}^H \bigg[ \sum_{s \in \gW_h} d^{\piE}_h(s)  - \widehat{d^{\piE}_h}(s) + \sum_{s \in \gW_h^c} \widehat{d^{\piE}_h}(s) - d^{\piE}_h(s) \bigg]  \nonumber 
     \\
     &= 2 \sum_{h=1}^H \sum_{s \in \gW_h} d^{\piE}_h(s)  - \widehat{d^{\piE}_h}(s),  \label{eq:prop_standard_imitation_step_4}
\end{align}
where the penultimate equality follows the definition $\gW_h = \{s \in \gS: \widehat{d^{\piE}_h} (s) < \rho (s) = d^{\piE}_h(s) \}$. Finally, back to \eqref{eq:prop_standard_imitation_step_2}, we get that
\begin{align*}
    V({\piE}) - V({\piail}) &= \sum_{h=1}^{H} \sum_{s \in \gW_h} d^{\piE}_h(s)  - \widehat{d^{\piE}_h}(s) = \frac{1}{2}  \sum_{h=1}^H \lnorm \widehat{d^{\piE}_h} - d^{\piE}_h   \rnorm_1.
\end{align*}
Taking the expectation over the randomness in collecting the dataset finishes the proof.
\end{proof}

\subsection{Proof of Theorem \ref{thm:tv_error_lower_bound_small_data}}

\label{appendix:proof_thm:tv_error_lower_bound_small_data}

\begin{proof}[Proof of \cref{thm:tv_error_lower_bound_small_data}]
First, we consider the small sample regime where $N \lesssim |\gX|$. To prove this lower bound, we draw a connection between the $\ell_1$-norm based estimation error and the missing mass \citep{good1953population, mcallester03concentration, rajaraman2020fundamental}. Specifically, we construct a multinomial distribution $Q^\prime$ as follows.
\begin{align*}
    Q^\prime &= \lp Q^\prime (1), \ldots, Q^\prime(\vert \gX \vert-1), Q^\prime(\vert \gX \vert)  \rp = \lp \frac{1}{|\gX|+1}, \ldots, \frac{1}{|\gX|+1}, 1 - \frac{|\gX|-1}{|\gX|+1}  \rp.
\end{align*}
We consider the regime $N \lesssim |\gX|$. Hence, there exists a constant $c > 0$ such that $N \leq c |\gX|$. Recall the estimator $\widehat{Q}$:
\begin{align*}
    \widehat{Q}(i) = \frac{N (i)}{N},
\end{align*}
where $N (i) = \sum_{j} \mathbb{I}(X_j = i)$ denotes the number that the symbol $i$ is observed in $N$ i.i.d. samples $(X_1, \ldots, X_N)$. Then we have that
\begin{align*}
    \lnorm Q^\prime - \widehat{Q} \rnorm_1 &= \sum_{i=1}^{\vert \gX \vert} \labs Q^\prime(i) - \widehat{Q} (i) \rabs
    \\
    &\geq \sum_{i=1}^{\vert \gX \vert} \labs Q^\prime(i) - \widehat{Q} (i) \rabs \indict \{ N (i) = 0 \}
    \\
    &= \sum_{i=1}^{\vert \gX \vert} Q^\prime(i) \indict \{ N (i) = 0 \}. 
\end{align*}
We note that the term in RHS is called missing mass in the statistics literature, which is defined as the probability mass of symbols unobserved in the dataset $(X_1, \ldots, X_N)$ \citep{good1953population, mcallester03concentration}. Then we have that
\begin{align*}
    \expect \ls \lnorm Q^\prime - \widehat{Q} \rnorm_1 \rs &\geq \expect \ls \sum_{i=1}^{\vert \gX \vert} Q^\prime(i) \indict \{ N (i) = 0 \} \rs 
    \\
    &= \sum_{i=1}^{\vert \gX \vert} Q^\prime(i) \sP \lp N (i) = 0  \rp
    \\
    &= \sum_{i=1}^{\vert \gX \vert} Q^\prime(i) \lp 1 - Q^\prime(i) \rp^N
    \\
    &\geq \sum_{i=1}^{\vert \gX \vert} Q^\prime(i) \lp 1 - Q^\prime(i) \rp^{c |\gX|}. 
\end{align*}
The last inequality follows $ 0 < 1 - Q^\prime(i) \leq 1$ and $N \leq c |\gX|$. Next, we derive that
\begin{align*}
    \expect \ls \lnorm Q^\prime - \widehat{Q} \rnorm_1 \rs &\geq \sum_{i=1}^{\vert \gX \vert} Q^\prime(i) \lp 1 - Q^\prime(i) \rp^{c |\gX|}
    \\
    &\geq \sum_{i=1}^{\vert \gX \vert-1} Q^\prime(i) \lp 1 - Q^\prime(i) \rp^{c |\gX|} 
    \\
    &= \sum_{i=1}^{\vert \gX \vert-1} \frac{1}{\vert \gX \vert +1} \lp 1 - \frac{1}{\vert \gX \vert +1}  \rp^{c |\gX|}
    \\
    &= \frac{\vert \gX \vert -1}{\vert \gX \vert +1} \lp 1 - \frac{1}{\vert \gX \vert +1}  \rp^{c |\gX|}
    \\
    &= \frac{\vert \gX \vert -1}{\vert \gX \vert +1} \lp  \frac{\vert \gX \vert}{\vert \gX \vert +1}  \rp^{c |\gX|} 
    \\
    &\overset{(a)}{\geq} \frac{\vert \gX \vert -1}{\vert \gX \vert +1} \cdot e^{-c}
    \\
    &\overset{(b)}{\geq} \frac{1}{3 e^{c}}. 
\end{align*}
In the inequality $(a)$, we use the fact that $(1+1/\vert \gX \vert)^{c \vert \gX \vert} \leq e^{c}$ and the inequality $(b)$ follows that $\vert \gX \vert \geq 2$. Then we get that
\begin{align*}
    \max_{Q \in \gQ} \expect \ls \lnorm Q - \widehat{Q} \rnorm_1 \rs \geq \expect \ls \lnorm Q^\prime - \widehat{Q} \rnorm_1 \rs \geq  \frac{1}{3 e^{c}} \gtrsim 1, 
\end{align*}
which completes the proof in the small sample regime. The lower bound in the large sample regime can be obtained directly from \citep[Lemma 8]{kamath2015learning}.
\end{proof}

\subsection{Proof of Proposition \ref{prop:lower_bound_vail}}

\begin{proof}[Proof of \cref{prop:lower_bound_vail}]
According to \eqref{eq:prop_standard_imitation_step_3} in the proof of \cref{proposition:ail_policy_value_gap_standard_imitation}, we have that
\begin{align*}
    V({\piE}) - V({\piail})   &= \sum_{h=1}^{H} \sum_{s \in \gW_h } d^{\piE}_h(s)  - \rho(s) \piail_h( a^{1}|s) .
\end{align*}
Note that the optimal solution is not unique on the lower bound instances. Taking expectation with respect to the uniform selection of $\piail$  yields that
\begin{align*}
    &\quad V({\piE}) - \expect_{\piail \sim \text{Unif} (\Pi^{\ail})} \ls V({\piail}) \rs \\
    &= \expect_{\piail \sim \text{Unif} (\Pi^{\ail})} \ls \sum_{h=1}^{H} \sum_{s \in \gW_h } d^{\piE}_h(s)  - \rho(s) \piail_h( a^{1}|s) \rs 
    \\
    &=  \sum_{h=1}^{H} \sum_{s \in \gW_h } \bigg[ d^{\piE}_h(s) \\
    &\quad - \rho (s) \expect_{\piail_h( a^{1}|s) \sim \text{Unif} (  [\widehat{d^{\piE}_h}(s) / \rho (s), 1] ) } \ls \piail_h( a^{1}|s) \rs \bigg] 
    \\
    &= \sum_{h=1}^{H} \sum_{s \in \gW_h } \ls d^{\piE}_h(s) - \rho (s) \frac{\widehat{d^{\piE}_h}(s) / \rho(s) + 1}{2} \rs \\
    &= \frac{1}{2} \sum_{h=1}^{H} \sum_{s \in \gW_h}  d^{\piE}_h(s)  - \widehat{d^{\piE}_h}(s) ,
\end{align*}
where in the last equation we use the fact that $d^{\piE}_h(s) = \rho(s)$. 
Combing with \eqref{eq:prop_standard_imitation_step_4}, we have that
\begin{align*}
    V({\piE}) - \expect_{\piail \sim \text{Unif} (\Pi^{\ail})} \ls V({\piail}) \rs &= \frac{1}{2} \sum_{h=1}^{H} \sum_{s \in \gW_h} d^{\piE}_h(s)  - \widehat{d^{\piE}_h}(s) = \frac{1}{4} \sum_{h=1}^H \lnorm \widehat{d^{\piE}_h} - d^{\piE}_h   \rnorm_1.
\end{align*}
We further take the expectation over the randomness in collecting expert trajectories on both sides.
\begin{align*}
    V({\piE}) - \expect \ls \expect_{\piail \sim \text{Unif} (\Pi^{\ail})} \ls V({\piail}) \rs \rs &= \frac{1}{4} \sum_{h=1}^H \expect \ls \lnorm \widehat{d^{\piE}_h} - d^{\piE}_h   \rnorm_1 \rs.
\end{align*}
Finally, we apply the lower bounds of $\ell_1$-risk of the empirical distribution $\widehat{d^{\piE}}$ (refer to \cref{thm:tv_error_lower_bound_small_data}) to obtain the desired result. 
\end{proof}

\section{Technical Lemmas}
\label{appendix:technical_lemmas}

\subsection{Basic Technical Lemmas}

\begin{lem}   \label{lemma:state_dist_discrepancy}
For any tabular and episodic MDP, considering two policies $\pi$ and $\pi^\prime$, let $d^{\pi}_h (\cdot)$ and $d^{\pi}_h (\cdot, \cdot)$ denote the state distribution and state-action distribution induced by $\pi$ in time step $h$, respectively. Then we have that
\begin{itemize}
    \item 
    $
    \Vert d^{\pi}_h (\cdot) - d^{\pi^\prime}_h (\cdot)  \Vert_{1} \leq \sum_{\ell=1}^{h-1} \expect_{s \sim d^{\pi^\prime}_{\ell} (\cdot)} [\lnorm \pi_{\ell} (\cdot|s) - \pi^\prime_{\ell} (\cdot|s)   \rnorm_1]
    $ when $h \geq 2$.
    \item $\Vert d^{\pi}_h (\cdot, \cdot) - d^{\pi^\prime}_h (\cdot, \cdot)  \Vert_{1} \leq \Vert d^{\pi}_h (\cdot) - d^{\pi^\prime}_h (\cdot)  \Vert_{1} + \expect_{s \sim d^{\pi^\prime}_h (\cdot)} \ls \lnorm \pi_h (\cdot|s)  - \pi^\prime_h (\cdot|s) \rnorm_1 \rs $.
\end{itemize}
\end{lem}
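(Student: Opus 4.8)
\textbf{Proof proposal for Lemma~\ref{lemma:state_dist_discrepancy}.}

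The plan is to prove the two claims in order, using the Bellman flow recursion \eqref{eq:flow_link} and a coupling/telescoping argument. For the first claim, I would proceed by induction on $h$. The base case $h=2$: write $d^{\pi}_2(s) = \sum_{(s',a')} d^{\pi}_1(s') \pi_1(a'|s') P_1(s|s',a')$ and similarly for $\pi'$; note $d^{\pi}_1 = d^{\pi'}_1 = \rho$, so the difference $d^{\pi}_2(s) - d^{\pi'}_2(s)$ equals $\sum_{(s',a')} \rho(s') (\pi_1(a'|s') - \pi'_1(a'|s')) P_1(s|s',a')$; taking $\ell_1$-norm over $s$, pulling the sum out, and using $\sum_s P_1(s|s',a') = 1$ gives the bound $\sum_{s'} \rho(s') \|\pi_1(\cdot|s') - \pi'_1(\cdot|s')\|_1 = \mathbb{E}_{s\sim d^{\pi'}_1}[\|\pi_1(\cdot|s)-\pi'_1(\cdot|s)\|_1]$. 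For the inductive step, decompose
\begin{align*}
d^{\pi}_{h}(s) - d^{\pi'}_{h}(s) = \sum_{(s',a')} \big( d^{\pi}_{h-1}(s')\pi_{h-1}(a'|s') - d^{\pi'}_{h-1}(s')\pi'_{h-1}(a'|s') \big) P_{h-1}(s|s',a'),
\end{align*}
and split the bracket as $d^{\pi}_{h-1}(s')(\pi_{h-1}-\pi'_{h-1})(a'|s') + (d^{\pi}_{h-1}(s') - d^{\pi'}_{h-1}(s'))\pi'_{h-1}(a'|s')$; the first piece contributes $\mathbb{E}_{s\sim d^{\pi}_{h-1}}[\|\pi_{h-1}(\cdot|s)-\pi'_{h-1}(\cdot|s)\|_1]$ after summing over $s$ and using stochasticity of $P$, and the second piece is bounded by $\|d^{\pi}_{h-1}-d^{\pi'}_{h-1}\|_1$, to which the induction hypothesis applies. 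This yields $\sum_{\ell=1}^{h-1}\mathbb{E}_{s\sim d^{\pi'}_\ell}[\cdots]$ after reindexing; one subtlety is that the first-piece expectation is against $d^{\pi}_{h-1}$ rather than $d^{\pi'}_{h-1}$, so I would instead split as $d^{\pi'}_{h-1}(s')(\pi_{h-1}-\pi'_{h-1})(a'|s') + (d^{\pi}_{h-1}-d^{\pi'}_{h-1})(s')\pi_{h-1}(a'|s')$ to keep the measure $d^{\pi'}$ in the leading term, matching the statement exactly.

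For the second claim, I would just write $d^{\pi}_h(s,a) - d^{\pi'}_h(s,a) = d^{\pi}_h(s)\pi_h(a|s) - d^{\pi'}_h(s)\pi'_h(a|s)$, split as $d^{\pi'}_h(s)(\pi_h(a|s)-\pi'_h(a|s)) + (d^{\pi}_h(s)-d^{\pi'}_h(s))\pi_h(a|s)$, take $\ell_1$-norm over $(s,a)$, apply the triangle inequality, and use $\sum_a \pi_h(a|s)=1$ on the second term to obtain $\|d^{\pi}_h(\cdot)-d^{\pi'}_h(\cdot)\|_1 + \mathbb{E}_{s\sim d^{\pi'}_h}[\|\pi_h(\cdot|s)-\pi'_h(\cdot|s)\|_1]$.

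I do not expect any serious obstacle here — this is a standard simulation-lemma-style argument. The only point requiring care is the bookkeeping of which state distribution ($d^{\pi}$ vs.\ $d^{\pi'}$) appears in the leading expectation at each step of the induction; choosing the split consistently so that $d^{\pi'}$ is always factored out of the policy-difference term is what makes the telescoped bound come out with $d^{\pi'}_\ell$ as stated.
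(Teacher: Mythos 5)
Your proposal is correct and follows essentially the same route as the paper's proof: the Bellman-flow recursion with the product split that factors $d^{\pi'}_{\ell}$ out of the policy-difference term (your corrected split is exactly the one the paper uses), telescoped from $\Vert d^{\pi}_1 - d^{\pi'}_1\Vert_1 = 0$, and the same one-line decomposition for the state-action claim. The bookkeeping subtlety you flag is real, and you resolve it the way the paper does.
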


\begin{proof}
For the first statement, it is direct to obtain that $\Vert d^{\pi}_1 (\cdot) - d^{\pi^\prime}_1 (\cdot)  \Vert_{1} = \lnorm \rho (\cdot) - \rho (\cdot)  \rnorm_{1}= 0$. When $h \geq 2$, for any $\ell $ where $1 < \ell \leq h$, we will prove the following recursion:
\begin{align*}
 \lnorm d^{\pi}_\ell (\cdot) - d^{\pi^\prime}_\ell (\cdot)  \rnorm_{1} &\leq \lnorm d^{\pi}_{\ell-1} (\cdot) - d^{\pi^\prime}_{\ell-1} (\cdot)  \rnorm_{1} + \expect_{s \sim d^{\pi^\prime}_{\ell-1} (\cdot)} \ls \lnorm \pi_{\ell-1} (\cdot|s) - \pi^\prime_{\ell-1} (\cdot|s)   \rnorm_1 \rs.
\end{align*}
With Bellman-flow equation in \eqref{eq:flow_link}, we have
\begin{align*}
    &\quad \lnorm d^{\pi}_\ell (\cdot) - d^{\pi^\prime}_\ell (\cdot)  \rnorm_{1}
    \\
    &= \sum_{s \in \gS} \labs d^{\pi}_\ell (s) - d^{\pi^\prime}_\ell (s)  \rabs
    \\
    &= \sum_{s \in \gS} \bigg\vert \sum_{(s^\prime, a^\prime)} d^{\pi}_{\ell-1} (s^\prime) \pi_{\ell-1} (a^\prime|s^\prime) P_{\ell-1}(s|s^\prime, a^\prime) - \sum_{(s^\prime, a^\prime)} d^{\pi^\prime}_{\ell-1} (s^\prime) \pi^\prime_{\ell-1} (a^\prime|s^\prime) P_{\ell-1}(s|s^\prime, a^\prime)  \bigg\vert
    \\
    &= \sum_{s \in \gS} \Bigg| \sum_{(s^\prime, a^\prime)} \big( d^{\pi}_{\ell-1} (s^\prime) - d^{\pi^\prime}_{\ell-1} (s^\prime) \big) \pi_{\ell-1} (a^\prime|s^\prime) P_{\ell-1}(s|s^\prime, a^\prime) 
    \\
    &\, + \sum_{(s^\prime, a^\prime)} d^{\pi^\prime}_{\ell-1} (s^\prime) \lp \pi_{\ell-1} (a^\prime|s^\prime) -\pi^\prime_{\ell-1} (a^\prime|s^\prime) \rp  P_{\ell-1}(s|s^\prime, a^\prime)  \Bigg|
    \\
    &\leq \sum_{s \in \gS} \sum_{(s^\prime, a^\prime)} \labs d^{\pi}_{\ell-1} (s^\prime) - d^{\pi^\prime}_{\ell-1} (s^\prime) \rabs \pi_{\ell-1} (a^\prime|s^\prime) P_{\ell-1}(s|s^\prime, a^\prime) 
    \\
    &\; + \sum_{s \in \gS} \sum_{(s^\prime, a^\prime)} \bigg[ d^{\pi^\prime}_{\ell-1} (s^\prime)  \cdot \labs \pi_{\ell-1} (a^\prime|s^\prime) -\pi^\prime_{\ell-1} (a^\prime|s^\prime) \rabs  P_{\ell-1}(s|s^\prime, a^\prime) \bigg]
    \\
    &= \lnorm d^{\pi}_{\ell-1} (\cdot) - d^{\pi^\prime}_{\ell-1} (\cdot)  \rnorm_{1} + \expect_{s \sim d^{\pi^\prime}_{\ell-1} (\cdot)} \ls \lnorm \pi_{\ell-1} (\cdot|s) - \pi^\prime_{\ell-1} (\cdot|s)   \rnorm_1 \rs,  
\end{align*}
Applying the recursion with $\Vert d^{\pi}_1 (\cdot) - d^{\pi^\prime}_1 (\cdot)  \Vert_{1}=0$ finishes the proof of the first statement.

Next, we continue to prove the second statement.
\begin{align*}
    &\quad \lnorm d^{\pi}_h (\cdot, \cdot) - d^{\pi^\prime}_h (\cdot, \cdot)  \rnorm_{1} 
    \\
    &= \sum_{(s, a)} \labs d^{\pi}_h (s, a) - d^{\pi^\prime}_h (s, a) \rabs
    \\
    &= \sum_{(s, a)} \labs d^{\pi}_h (s) \pi_h (a|s) - d^{\pi^\prime}_h (s) \pi^\prime_h (a|s) \rabs
    \\
    &= \sum_{(s, a)} \bigg\vert \lp d^{\pi}_h (s) - d^{\pi^\prime}_h (s) \rp \pi_h (a|s) + d^{\pi^\prime}_h (s) \lp \pi_h (a|s) -   \pi^\prime_h (a|s) \rp \bigg\vert
    \\
    &\leq \sum_{(s, a)} \labs  d^{\pi}_h (s) - d^{\pi^\prime}_h (s) \rabs \pi_h (a|s) + \sum_{(s, a)} d^{\pi^\prime}_h (s) \labs \pi_h (a|s) -   \pi^\prime_h (a|s) \rabs
    \\
    &= \lnorm d^{\pi}_h (\cdot) - d^{\pi^\prime}_h (\cdot)  \rnorm_{1} + \expect_{s \sim d^{\pi^\prime}_h (\cdot)} \ls \lnorm \pi_h (\cdot|s)  - \pi^\prime_h (\cdot|s) \rnorm_1 \rs, 
\end{align*}
which proves the second statement.
\end{proof}

\begin{lem}
\label{lem:unique_opt_solution_condition}
Consider the optimization problem: $$\min_{x \in [0, 1]^n} f (x) := \sum_{i=1}^m f_i (x),$$ where $f_i : [0, 1]^n \rightarrow \reals, \forall i \in [m]$. Suppose that 1) there exists $k \in [m]$ such that $x^{\star}$ is the unique optimal solution to $\min_{x \in [0, 1]^n} f_k (x)$; 2) for each $j \in [m], j \not= k$, $x^{\star}$ is the optimal solution to $\min_{x \in [0, 1]^n} f_j (x)$. Then, $x^{\star}$ is the unique optimal solution to $\min_{x \in [0, 1]^n} f (x)$.
\end{lem}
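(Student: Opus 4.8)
The plan is a direct two-part argument: first establish that $x^\star$ is \emph{a} minimizer of $f$, then upgrade to uniqueness by exploiting the one index $k$ where minimality is strict. Throughout I will use the key observation that, under the hypotheses, $x^\star$ minimizes \emph{every} summand: for $j \neq k$ this is assumption (2), and for $i = k$ it follows from assumption (1) (unique minimizer is in particular a minimizer). So $f_i(x) \geq f_i(x^\star)$ for all $i \in [m]$ and all $x \in [0,1]^n$.

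First I would sum these inequalities: for any $x \in [0,1]^n$,
\begin{align*}
    f(x) = \sum_{i=1}^{m} f_i(x) \geq \sum_{i=1}^{m} f_i(x^\star) = f(x^\star),
\end{align*}
which shows $x^\star \in \argmin_{x \in [0,1]^n} f(x)$.

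For uniqueness I would argue by contradiction: suppose $x^\prime \in [0,1]^n$ with $x^\prime \neq x^\star$ and $f(x^\prime) = f(x^\star)$. Then $\sum_{i=1}^m f_i(x^\prime) = \sum_{i=1}^m f_i(x^\star)$, while $f_i(x^\prime) \geq f_i(x^\star)$ termwise; a sum of nonnegative quantities $f_i(x^\prime) - f_i(x^\star)$ equal to zero forces each term to vanish, i.e.\ $f_i(x^\prime) = f_i(x^\star)$ for every $i$. Taking $i = k$ gives $f_k(x^\prime) = f_k(x^\star) = \min_{x} f_k(x)$, so $x^\prime$ is a minimizer of $f_k$; by the uniqueness in assumption (1), $x^\prime = x^\star$, contradicting $x^\prime \neq x^\star$. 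Hence $x^\star$ is the unique optimal solution to $\min_{x \in [0,1]^n} f(x)$.

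There is essentially no obstacle here — the only thing to be careful about is making sure the hypotheses are invoked in the right places (that assumption (1) also supplies ordinary minimality of $f_k$ at $x^\star$, so the termwise inequality genuinely holds for all $m$ summands, not just the $m-1$ covered by assumption (2)). The lemma is a convenience statement packaging the "unique minimizer of one piece plus common minimizer of the rest" pattern that recurs in the dynamic-programming proof of \cref{prop:ail_general_reset_cliff}, so the proof can be kept to a few lines.
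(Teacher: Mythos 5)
Your proof is correct and uses essentially the same argument as the paper: each summand is minimized at $x^\star$ (with strict minimality for $f_k$), and summing the termwise inequalities yields the conclusion. The paper phrases this directly as $f(x^\star) < f(x)$ for all $x \neq x^\star$ rather than via your contradiction step, but the content is identical.
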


\begin{proof}
Since $x^{\star}$ is the unique optimal solution to $$\min_{x \in [0, 1]^n} f_k (x),$$ we have that $\forall x \in [0, 1]^n, x \not= x^{\star}$, $f_k (x^{\star}) < f_k (x)$. Furthermore, for each $j \in [m], j \not= k$, recall that $x^{\star}$ is the optimal solution to $\min_{x \in [0, 1]^n} f_j (x)$. We have that
\begin{align*}
    \forall j \in [m], j \not= k, \forall x \in [0, 1]^n, x \not= x^{\star}, f_j (x^{\star}) \leq f_j (x).  
\end{align*}
Then we derive that $\forall x \in [0, 1]^n, x \not= x^{\star}$, $f(x^{\star}) < f(x)$ and $x$ is the unique optimal solution to $\min_{x \in [0, 1]^n} f (x)$.
\end{proof}

\begin{lem}
\label{lem:n_vars_opt_greedy_structure}
Consider the optimization problem $$\min_{x_1, \ldots, x_n} f (x_1, \ldots, x_n).$$ Suppose that $x^{\star} = (x^{\star}_1, \ldots, x^{\star}_n)$ is the optimal solution, then $\forall i \in [n]$, $x^{\star}_i$ is the optimal solution to $\min_{x_i} F (x_i) := f (x^{\star}_1, \ldots, x_i, \ldots, x^{\star}_n)$. 
\end{lem}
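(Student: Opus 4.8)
The plan is a one-line argument by contradiction, exploiting that a global minimizer restricts to a minimizer along every coordinate slice. Fix $i \in [n]$ and consider the restricted function $F(x_i) := f(x^\star_1, \ldots, x^\star_{i-1}, x_i, x^\star_{i+1}, \ldots, x^\star_n)$, which is a legitimate single-variable objective over the same feasible domain for $x_i$.

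First I would suppose, toward a contradiction, that $x^\star_i$ is \emph{not} optimal for $\min_{x_i} F(x_i)$. Then there exists a feasible $x_i^\prime$ with $F(x_i^\prime) < F(x^\star_i)$. Unfolding the definition of $F$, this says exactly that
\begin{align*}
    f(x^\star_1, \ldots, x^\star_{i-1}, x_i^\prime, x^\star_{i+1}, \ldots, x^\star_n) < f(x^\star_1, \ldots, x^\star_{i-1}, x^\star_i, x^\star_{i+1}, \ldots, x^\star_n) = f(x^\star).
\end{align*}
But the tuple on the left-hand side is a feasible point of the original problem $\min_{x_1, \ldots, x_n} f(x_1, \ldots, x_n)$, so its objective value cannot be strictly smaller than $f(x^\star)$; this contradicts the assumed global optimality of $x^\star$. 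Hence $x^\star_i$ must be optimal for $\min_{x_i} F(x_i)$, and since $i$ was arbitrary, the claim holds for all $i \in [n]$.

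There is essentially no obstacle here — the only thing to be slightly careful about is that the feasible set for each coordinate is the same whether we optimize jointly or hold the other coordinates fixed at $x^\star_{-i}$ (true in the paper's use cases, where the domain is a product set such as $[0,1]^n$ or a product of simplices), so the substituted tuple is genuinely feasible. I would state this product-domain assumption explicitly (or simply note the lemma is invoked only in that setting) to make the contradiction step airtight.
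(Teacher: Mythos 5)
Your proof is correct and is essentially identical to the paper's own argument: the same contradiction via substituting a better coordinate value into the joint objective. Your added remark about the domain being a product set is a reasonable clarification but does not change the argument.
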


\begin{proof}
The proof is based on contradiction. Suppose that the original statement is not true. There exists $\widetilde{x}_i \not= x^{\star}_i$ such that
\begin{align*}
    F (\widetilde{x}_i) < F (x^{\star}_i). 
\end{align*}
Consider $\widetilde{x} = (x^{\star}_1, \cdots,\widetilde{x}_i, \cdots, x^{\star}_n)$ which differs from $x^{\star}$ in the $i$-th component. Then we have that
\begin{align*}
    f (x^{\star}_1, \cdots,\widetilde{x}_i, \cdots, x^{\star}_n) &= F (\widetilde{x}_i) \\
    &< F (x^{\star}_i) \\
    &= f (x^{\star}_1, \cdots, x^{\star}_i, \cdots, x^{\star}_n), 
\end{align*}
which contradicts the fact that $x^{\star} = (x^{\star}_1, \cdots, x^{\star}_n)$ is the optimal solution to $\min_{x_1, \cdots, x_n} f (x_1, \cdots, x_n)$. Hence, the original statement is true. 
\end{proof}

\begin{lem}
\label{lem:single_variable_opt}
For any constants $a, c \geq 0$, we define the function $f(x) = \vert c - ax \vert - ax$. Consider the optimization problem $\min_{x \in [0, 1]} f(x) $, then $x^{\star} = 1$ is the optimal solution. 
\end{lem}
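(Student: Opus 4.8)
The plan is to treat $f$ as a piecewise linear (indeed convex) function of $x$ and read off its minimizer directly from the two linear pieces. First I would dispose of the degenerate case $a=0$: then $f(x)=|c|=c$ is constant on $[0,1]$, so $x^\star=1$ is trivially an optimal solution. From now on assume $a>0$, and let $x_0:=c/a\ge 0$ denote the breakpoint at which the argument of the absolute value changes sign.

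Next I would split the feasible interval at $x_0$. On $\{x\in[0,1]:ax\le c\}$ we have $f(x)=(c-ax)-ax=c-2ax$, which is strictly decreasing in $x$ because $a>0$; on $\{x\in[0,1]:ax\ge c\}$ we have $f(x)=(ax-c)-ax=-c$, which is constant. Equivalently, $f$ is convex with one-sided slopes $-2a<0$ on the first piece and $0$ on the second, so $f$ is non-increasing on all of $[0,1]$.

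Then I would combine the two pieces to locate the minimum. If $x_0\ge 1$, the whole interval $[0,1]$ lies in the first piece, $f$ is strictly decreasing there, and hence $\min_{x\in[0,1]}f(x)=f(1)=c-2a$. If $x_0<1$, then $f$ decreases on $[0,x_0]$ from $f(0)=c$ to $f(x_0)=c-2ax_0=-c$ and equals the constant $-c$ on $[x_0,1]$, so $\min_{x\in[0,1]}f(x)=-c$, attained at every point of $[x_0,1]$, in particular at $x=1$. In all cases $x=1$ attains the minimum, so $x^\star=1$ is an optimal solution, completing the proof.

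I do not expect any genuine obstacle here; the only thing that requires care is making the piecewise description exhaustive, i.e.\ correctly handling whether the breakpoint $c/a$ lies inside $[0,1]$ and the boundary values $c=0$ and $a=0$, together with the remark that $x^\star=1$ need not be the \emph{unique} minimizer (it is, however, always one of them, which is all the statement claims).
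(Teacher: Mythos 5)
Your proof is correct, but it takes a genuinely different route from the paper's. The paper argues by contradiction: it supposes some $\widetilde{x}\in[0,1)$ satisfies $f(\widetilde{x})<f(1)$, rewrites this as $\vert c-a\vert-\vert c-a\widetilde{x}\vert>a-a\widetilde{x}$, and then invokes the reverse triangle inequality $\vert p\vert-\vert q\vert\leq\vert p-q\vert$ to get $\vert c-a\vert-\vert c-a\widetilde{x}\vert\leq a-a\widetilde{x}$, a contradiction; this is shorter and avoids any case split (it silently covers $a=0$ as well). Your explicit piecewise-linear analysis is slightly longer but more informative: by locating the breakpoint $c/a$ and the slopes $-2a$ and $0$ you recover not just that $x=1$ is optimal but the entire optimal solution set, which is exactly the content of the paper's companion result \cref{lem:single_variable_opt_condition}, and your decreasing-slope observation is essentially the quantitative estimate that \cref{lem:single_variable_regularity} later needs. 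Your closing remark that $x^{\star}=1$ need not be the unique minimizer is accurate and consistent with how the lemma is used. The only nitpick is cosmetic: when you write the case $x_0\geq 1$ you give the minimum value as $f(1)=c-2a$, which is fine, but you should make sure the two cases are stated as covering $[0,1]$ exhaustively (they do, since $f$ is continuous at $x_0$); as written the argument is complete.
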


\begin{proof}
We assume that $x^{\star} = 1$ is not the optimal solution. There exists $\widetilde{x^{\star}} \in [0, 1)$ such that $f(\widetilde{x^{\star}}) < f (x^{\star})$. That is
\begin{align*}
    \vert c - a\widetilde{x^{\star}} \vert - a\widetilde{x^{\star}} - \vert c - a \vert + a < 0, 
\end{align*}
which implies that $\vert c - a \vert - \vert c - a\widetilde{x^{\star}} \vert > a - a \widetilde{x^{\star}}$. On the other hand, according to the inequality that $\labs p \rabs - \labs q \rabs \leq \labs p-q \rabs$ for $p, q \in \reals$, we have
\begin{align*}
    \vert c - a \vert - \vert c - a\widetilde{x^{\star}} \vert \leq \vert a\widetilde{x^{\star}} - a \vert = a- a\widetilde{x^{\star}},
\end{align*}
where the last equality follows that $\widetilde{x^{\star}} < 1$. We construct a contradiction. Therefore, the original statement is true.

\end{proof}

\begin{lem}
\label{lem:single_variable_opt_condition}
For any constants $a, c > 0$, we define the function $f(x) = \vert c - ax \vert - ax$. Consider the optimization problem $\min_{x \in [0, 1]} f(x) $, If $x^{\star}$ is an optimal solution, then $x^{\star} > 0$. Furthermore, if $c < a$, then the optimal solution set is $ [c/a, 1]$.  
\end{lem}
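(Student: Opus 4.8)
The plan is to exploit the piecewise linear structure of $f$. Writing $f(x) = |c - ax| - ax$, the absolute value has its kink at $x_0 := c/a$, which is strictly positive since $a,c>0$. For $x \le x_0$ we have $c - ax \ge 0$, so $f(x) = (c - ax) - ax = c - 2ax$, which is strictly decreasing (slope $-2a < 0$). For $x \ge x_0$ we have $c - ax \le 0$, so $f(x) = (ax - c) - ax = -c$, a constant. Thus $f$ is strictly decreasing on $[0, x_0]$ and flat at value $-c$ on $[x_0, \infty)$.

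First I would dispatch the claim $x^\star > 0$. The interval $[0, \min\{x_0, 1\}]$ is non-degenerate because $x_0 > 0$, and on it $f$ is strictly decreasing, so $f(x) < f(0) = c$ for every $x$ in $(0, \min\{x_0,1\}]$. Hence $x=0$ cannot be a minimizer of $f$ over $[0,1]$, i.e., any optimal solution $x^\star$ satisfies $x^\star > 0$.

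Next, under the extra hypothesis $c < a$, equivalently $x_0 = c/a < 1$, the domain splits as $[0,1] = [0, x_0] \cup [x_0, 1]$ with both pieces non-degenerate. On $[0, x_0]$, $f$ strictly decreases from $f(0)=c$ to $f(x_0)=-c$; on $[x_0, 1]$, $f \equiv -c$. Therefore $\min_{x \in [0,1]} f(x) = -c$, and the minimizer set is exactly $\{x \in [0,1] : f(x) = -c\} = [x_0, 1] = [c/a, 1]$, as claimed; this also re-confirms $x^\star \ge c/a > 0$.

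There is no genuinely hard step; the only point meriting a moment's care is the boundary alternative $c/a \ge 1$ (which can occur when $c<a$ fails), where $f$ is strictly decreasing on all of $[0,1]$ so the unique minimizer is $x^\star = 1$, again consistent with the first claim. I would therefore organize the write-up as an explicit case split on whether $x_0 < 1$ or $x_0 \ge 1$, after establishing the two-piece formula for $f$, to keep everything transparent.
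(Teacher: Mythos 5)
Your proof is correct and follows essentially the same route as the paper: both arguments reduce to the explicit piecewise-linear form of $f$ (strictly decreasing with slope $-2a$ on $[0,c/a]$, constant at $-c$ beyond), from which the minimizer set $[c/a,1]$ is immediate. The only cosmetic difference is in the first claim, where the paper compares $f(0)$ and $f(1)$ directly via $f(0)-f(1)=c+a-|c-a|>0$ while you invoke strict monotonicity on a non-degenerate initial segment; both are equally valid.
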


\begin{proof}
To begin with, we prove the first statement. The proof is based on contradiction. We assume that $x = 0$ is the optimal solution. We compare the function value on $x=1$ and $x=0$.
\begin{align*}
    f (0) - f(1) &= c +a - \labs c-a \rabs > 0,
\end{align*}
where the strict inequality follows that $a, c>0$. We obtain that $f(1) < f(0)$, which contradicts with the assumption that $x=0$ is the optimal solution. Therefore, the original statement is true and we finish the proof.

Then we prove the second statement. It is easy to see that
\begin{align*}
    f (x) = \begin{cases}
      c-2ax & x \in [0, \frac{c}{a}), \\
      - c & x \in [\frac{c}{a}, 1].
    \end{cases}
\end{align*}
$f(x)$ is continuous piece-wise linear function. $f(x)$ is strictly decreasing when $x \in [0, c / a)$ and is constant when $x \in [c / a, 1]$. Therefore, we can get that the optimal solutions are $x^{\star} \in [c / a, 1]$. 
\end{proof}

\begin{lem}
\label{lem:single_variable_regularity}
For any constants $a > 0$ and $c \geq 0$, we define the function $f(x) = \vert c - ax \vert - ax$. For any $x \leq \min\{ c/a, 1 \}$, we have $f(x) - f(1) = 2 a ( \min\{ c/a, 1 \} - x)$. 
\end{lem}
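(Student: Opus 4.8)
The plan is to reduce everything to the elementary observation that on the interval $x \le \min\{c/a,1\}$ the absolute value in $f$ can be removed with a fixed sign, turning $f$ into a single affine function there, and then to compute $f(1)$ by a short case split.

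First I would record the key fact: if $x \le \min\{c/a,1\}$, then in particular $x \le c/a$, hence $ax \le c$, so $c - ax \ge 0$ and therefore $|c - ax| = c - ax$. Substituting into the definition gives $f(x) = (c - ax) - ax = c - 2ax$ for all such $x$. This is the only nontrivial step, and it is immediate from $a>0$.

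Next I would split on whether $\min\{c/a,1\}$ equals $c/a$ or $1$. In the case $c \le a$ we have $\min\{c/a,1\} = c/a$, and at $x=1$ we get $a \ge c$, so $|c-a| = a-c$ and $f(1) = (a-c) - a = -c$; then $f(x) - f(1) = (c - 2ax) - (-c) = 2c - 2ax = 2a\big(\tfrac{c}{a} - x\big)$, which is the claimed identity. In the case $c > a$ we have $\min\{c/a,1\} = 1$, and since $x \le 1 < c/a$ the formula $f(x) = c - 2ax$ holds in particular at $x=1$, giving $f(1) = c - 2a$; then $f(x) - f(1) = (c-2ax) - (c - 2a) = 2a - 2ax = 2a(1 - x)$, again matching the claim. (The degenerate subcase $c=0$ is subsumed, since then $\min\{c/a,1\}=0$ and both sides vanish at $x=0$.)

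I do not anticipate any real obstacle here; the statement is a bookkeeping consequence of \cref{lem:single_variable_opt_condition}'s piecewise description of $f$, and the only thing to be careful about is correctly identifying which branch of $\min\{c/a,1\}$ is active and verifying that $x=1$ still lies in the linear region $[0,\min\{c/a,1\}]$ in the second case (it does, strictly, because $c/a > 1$ there).
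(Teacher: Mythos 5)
Your proposal is correct and follows essentially the same route as the paper: both arguments note that $f(x)=c-2ax$ on the region $x\le c/a$ and then split on whether $\min\{c/a,1\}$ equals $c/a$ or $1$ (the paper phrases the split as $c\ge a$ versus $c<a$, which is the same thing up to the boundary case, where both branches agree). Your explicit justification that $|c-ax|=c-ax$ whenever $ax\le c$ is exactly the observation underlying the paper's piecewise-linear description, so there is nothing to add.
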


\begin{proof}
We consider two cases: $c \geq a$ and $c < a$. When $c \geq a$, the function $f (x)$ at $[0, 1]$ is formulated as $f(x) = c - 2ax$. For any $x \leq \min\{ c/a, 1 \} = 1 $, $f(x) - f(1) = 2a (1-x) = 2 a ( \min\{ c/a, 1 \} - x)$. On the other hand, when $c < a$, the function $f (x)$ at $[0, 1]$ is formulated as
\begin{align*}
    f(x) = \begin{cases}
      c-2ax & x \in [0, \frac{c}{a}), \\
      - c & x \in [\frac{c}{a}, 1].
    \end{cases}
\end{align*}
For any $x \leq \min\{ c/a, 1 \} = c/a$, $f(x) - f(1) = 2 a (c/a-x) = 2 a ( \min\{ c/a, 1 \} - x) $. Therefore, we finish the proof.
\end{proof}

\begin{lem}
\label{lem:mn_variables_opt_unique}
Consider that $A = (a_{ij}) \in \reals^{m \times n}, c \in \reals^{m}, d \in \reals^{n}$ where $a_{ij} > 0$, $\sum_{i=1}^m c_i \geq \sum_{i=1}^m \sum_{j=1}^n a_{ij}$ and for each $j \in [n]$, $\sum_{i=1}^m a_{ij} = d_j$. Consider the following optimization problem:
\begin{align*}
    \min_{x \in [0, 1]^n}f (x) &:= \lnorm c - A x \rnorm_{1} - d^{\top} x = \sum_{i=1}^m \labs c_i - \sum_{j=1}^n a_{ij} x_j \rabs - \sum_{j=1}^n d_j x_j.
\end{align*}
Then $x^{\star} = \mathbf{1}$ is the unique optimal solution, where $\mathbf{1}$ is the vector that each element is 1. 
\end{lem}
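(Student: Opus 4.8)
The plan is to collapse this $n$-variable minimization to a one-dimensional monotonicity argument. The first step is to use the hypothesis $d_j = \sum_{i=1}^m a_{ij}$ to rewrite the linear term: $d^\top x = \sum_j\big(\sum_i a_{ij}\big)x_j = \sum_i (Ax)_i$, so that
\[
 f(x) = \sum_{i=1}^m \Big( \big| c_i - (Ax)_i \big| - (Ax)_i \Big) = \sum_{i=1}^m g_i\big( (Ax)_i \big), \qquad g_i(t):=|c_i-t|-t .
\]
The only structural fact I need about $g_i$ is that, for $t\ge 0$, it is nonincreasing, with slope $-2$ on $[0,c_i^+]$ and constant equal to $-c_i$ on $[c_i^+,\infty)$, where $c_i^+:=\max\{c_i,0\}$; in particular it is strictly decreasing on $[0,c_i^+]$.

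Next I would show $x^\star=\mathbf 1$ is a minimizer. Because $a_{ij}>0$, every coordinate $(Ax)_i=\sum_j a_{ij}x_j$ is coordinatewise nondecreasing in $x$ on $[0,1]^n$, and $\mathbf 1$ is the coordinatewise maximum of $[0,1]^n$, so $(Ax)_i\le (A\mathbf 1)_i=:r_i$ for all $i$ and all $x\in[0,1]^n$. Since each $g_i$ is nonincreasing, $g_i((Ax)_i)\ge g_i(r_i)$, and summing gives $f(x)\ge f(\mathbf 1)$. Moreover, if $x\ne\mathbf 1$ then some $x_{j_0}<1$, and since the entire $j_0$-th column of $A$ is strictly positive, $(Ax)_i<r_i$ \emph{strictly} for every $i$.

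For uniqueness, I take $x\in[0,1]^n$ with $x\ne\mathbf 1$ and assume for contradiction $f(x)=f(\mathbf 1)$; since the summands satisfy $g_i((Ax)_i)\ge g_i(r_i)$, this forces equality termwise. Fix $i$. If $(Ax)_i<c_i^+$, then $(Ax)_i<\min\{c_i^+,r_i\}$, both points lie in $[0,c_i^+]$ where $g_i$ is strictly decreasing, so $g_i((Ax)_i)>g_i(\min\{c_i^+,r_i\})\ge g_i(r_i)$, contradicting equality. Hence $(Ax)_i\ge c_i^+\ge c_i$ for every $i$. Summing and invoking the hypothesis $\sum_i c_i\ge\sum_{i,j}a_{ij}=\sum_i r_i$ gives $\sum_i (Ax)_i\ge\sum_i c_i\ge\sum_i r_i$; but $(Ax)_i<r_i$ for all $i$ yields $\sum_i (Ax)_i<\sum_i r_i$, a contradiction. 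Therefore $f(x)>f(\mathbf 1)$ whenever $x\ne\mathbf 1$, so $\mathbf 1$ is the unique minimizer.

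The step requiring the most care is the bookkeeping around the kink of $g_i$: one has to track the three quantities $c_i$ (through $c_i^+$, to allow $c_i$ of either sign), $(Ax)_i$, and $r_i$, and invoke strict monotonicity only on the genuinely sloped part $[0,c_i^+]$; the rest is elementary. It is worth noting that the hypothesis $\sum_i c_i\ge\sum_{i,j}a_{ij}$ is used exactly once — to exclude the degenerate case in which all $(Ax)_i$ already sit in the flat region of $g_i$, which is precisely where uniqueness could otherwise fail.
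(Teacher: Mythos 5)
Your proof is correct. Every step checks out: the identity $d^\top x=\sum_i (Ax)_i$ follows from $d_j=\sum_i a_{ij}$; positivity of $A$ gives both $(Ax)_i\ge 0$ and the strict inequality $(Ax)_i<(A\mathbf 1)_i$ for every $i$ whenever some $x_{j_0}<1$; and the equality analysis correctly isolates the only way all terms can be flat, which is then killed by $\sum_i c_i\ge\sum_{i,j}a_{ij}$.

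Your route is genuinely different from the paper's. The paper argues by contradiction via a single-coordinate improvement: given a purported optimum $x\ne\mathbf 1$, it raises one coordinate $x_k$ to $1$ and shows the objective strictly decreases, using the reverse triangle inequality $|p|-|q|\le|p-q|$ row by row, then invoking the equality condition of that inequality (namely $(q-p)q\le 0$) together with $\sum_i c_i\ge\sum_{i,j}a_{ij}$ to guarantee strictness in at least one row. You instead absorb the linear term into the rows, decompose $f$ as $\sum_i g_i((Ax)_i)$ with each $g_i$ univariate, piecewise linear and nonincreasing, so that optimality of $\mathbf 1$ falls out of coordinatewise monotonicity, and uniqueness reduces to showing that not all rows can sit in the flat region of their $g_i$ --- which is exactly where the hypothesis $\sum_i c_i\ge\sum_{i,j}a_{ij}$ enters, in the same role it plays in the paper. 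What your version buys is a cleaner separation of concerns (monotonicity for optimality, the sum hypothesis only for uniqueness), no need to invoke the equality case of the triangle inequality, and explicit handling of $c_i$ of either sign via $c_i^+$; the paper's local-perturbation version has the advantage of directly exposing the per-coordinate decrement $a_{ik}(1-x_k)$, which is what its companion quantitative bound (Lemma~\ref{lem:mn_variables_opt_regularity}) is built on.
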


\begin{proof}
For $x = (x_1, \ldots, x_n)$, the function $f(x)$ is formulated as
\begin{align*}
    f (x) = \sum_{i=1}^m \labs c_i - \sum_{j=1}^n a_{ij} x_j \rabs - \sum_{j=1}^n d_j x_j.
\end{align*}
The proof is based on contradiction. We assume that the original statement is not true and there exists $x = (x_1, \ldots, x_n) \not= \mathbf{1}$ such that $x$ is the optimal solution. Let $k \in [n]$ denote some index where $x_k \not= 1$. We construct $\widetilde{x} = \lp \widetilde{x}_1, \ldots, \widetilde{x}_n  \rp \in [0, 1]^n$ in the following way.
\begin{align*}
    \widetilde{x}_j = x_j, \forall j \in [n] \setminus \{k\}, \quad \widetilde{x}_k = 1. 
\end{align*}
We compare the function value of $x$ and $\widetilde{x}$.
\begin{align*}
    &\quad f(\widetilde{x}) - f(x) \\
    &= \sum_{i=1}^m \lp \labs c_i - \sum_{j=1}^n a_{ij} \widetilde{x}_j \rabs - \labs c_i - \sum_{j=1}^n a_{ij} x_j  \rabs  \rp - d_k (1-x_k)
    \\
    &< \sum_{i=1}^m \lp a_{ik} (1-x_k) \rp - d_k (1-x_k) = 0.
\end{align*}
Here the strict inequality follows the statement that there exists $i^{\star} \in [m]$ such that
\begin{align*}
    &\quad \labs c_{i^{\star}} - \sum_{j=1}^n a_{i^{\star} j} \widetilde{x}_j \rabs - \labs c_{i^{\star}} - \sum_{j=1}^n a_{i^{\star} j} x_j  \rabs  \\
    &<  \labs \lp c_{i^{\star}} - \sum_{j=1}^n a_{i^{\star} j} \widetilde{x}_j \rp - \lp c_{i^{\star}} - \sum_{j=1}^n a_{i^{\star} j} x_j \rp  \rabs \\
    &=  a_{i^{\star} k} (1-x_k).
\end{align*}
We will prove this statement later. As for $i \in [m], i \not= i^{\star}$, with the inequality that $\labs a \rabs - \labs b \rabs \leq \labs a-b \rabs$ for $a, b \in \reals$, we obtain that
\begin{align*}
    &\quad \labs c_i - \sum_{j=1}^n a_{ij} \widetilde{x}_j \rabs - \labs c_i - \sum_{j=1}^n a_{ij} x_j  \rabs  \\
    &\leq \labs \lp c_i - \sum_{j=1}^n a_{ij} \widetilde{x}_j \rp - \lp c_i - \sum_{j=1}^n a_{ij} x_j \rp  \rabs=  a_{ik} (1-x_k) .
\end{align*}
Hence the strict inequality holds and we construct $\widetilde{x}$ such that $f(\widetilde{x}) < f(x) $, which contradicts with the assumption that $x$ is the optimal solution. Therefore, we prove that the original statement is true and finish the proof.

Now we proceed to prove the statement that there exists $i^{\star} \in [m]$ such that
\begin{align*}
    &\quad \labs c_{i^{\star}} - \sum_{j=1}^n a_{i^{\star} j} \widetilde{x}_j \rabs - \labs c_{i^{\star}} - \sum_{j=1}^n a_{i^{\star} j} x_j  \rabs <  \labs \lp c_{i^{\star}} - \sum_{j=1}^n a_{i^{\star} j} \widetilde{x}_j \rp - \lp c_{i^{\star}} - \sum_{j=1}^n a_{i^{\star} j} x_j \rp  \rabs
\end{align*}
We also prove this statement by contradiction. We assume that for all $i \in [m]$,
\begin{align*}
    &\quad \labs c_i - \sum_{j=1}^n a_{ij} \widetilde{x}_j \rabs - \labs c_i - \sum_{j=1}^n a_{ij} x_j  \rabs \geq \labs \lp c_i - \sum_{j=1}^n a_{ij} \widetilde{x}_j \rp - \lp c_i - \sum_{j=1}^n a_{ij} x_j \rp  \rabs
\end{align*}
According to the inequality that $\labs a \rabs - \labs b \rabs \leq \labs a-b \rabs$ for $a, b \in \reals$, we have $ \forall i \in [m]$, 
\begin{align*}
   &\quad \labs c_i - \sum_{j=1}^n a_{ij} \widetilde{x}_j \rabs - \labs c_i - \sum_{j=1}^n a_{ij} x_j  \rabs = \labs \lp c_i - \sum_{j=1}^n a_{ij} \widetilde{x}_j \rp - \lp c_i - \sum_{j=1}^n a_{ij} x_j \rp  \rabs
\end{align*}
Furthermore, consider the inequality $\labs a \rabs - \labs b \rabs \leq \labs a-b \rabs$ for $a, b \in \reals$. Notice that the equality holds iff $(b-a) b \leq 0$. Hence we have that
\begin{align*}
     \forall i \in [m], \lp a_{ik} (1-x_k)  \rp  \lp  c_i - \sum_{j=1}^n a_{ij} x_j \rp   \leq 0. 
\end{align*}
Since $\lp a_{ik} (1-x_k)  \rp > 0$, we obtain that
\begin{align*}
    \forall i \in [m], c_i - \sum_{j=1}^n a_{ij} x_j \leq 0. 
\end{align*}
This implies that
\begin{align*}
    \sum_{i=1}^m c_i \leq \sum_{i=1}^m \sum_{j=1}^n a_{ij} x_j < \sum_{i=1}^m \sum_{j=1}^n a_{ij} \leq \sum_{i=1}^m c_i,  
\end{align*}
where the strict inequality follows that $x_k < 1$ and $a_{ij} > 0$. The last inequality follows the assumption of \cref{lem:mn_variables_opt_unique}. Here we find a contradiction that $\sum_{i=1}^m c_i < \sum_{i=1}^m c_i$ and hence the original statement is true.
\end{proof}

\begin{lem}
\label{lem:mn_variables_opt_regularity}
Under the same conditions in \cref{lem:mn_variables_opt_unique}, for any $x \in [0,1]^{n}$, we have that
\begin{align*}
    f (x) - f(x^{\star}) \geq \sum_{j=1}^n \min_{i \in [m]} \{a_{ij} \} (1-x_{j}),
\end{align*}
where $x^{\star} = \mathbf{1}$, which is the vector that each element is 1. 
\end{lem}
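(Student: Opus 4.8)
The plan is to prove \cref{lem:mn_variables_opt_regularity} by a telescoping argument that upgrades the single‑coordinate comparison appearing inside the proof of \cref{lem:mn_variables_opt_unique} into a quantitative improvement bound. Concretely, for any $x' \in [0,1]^n$ and any index $k$, let $x''$ denote the vector obtained from $x'$ by resetting its $k$-th entry to $1$. I would first establish the claim
\begin{align}
    f(x') - f(x'') \ge \min_{i\in[m]}\{a_{ik}\}\,(1-x'_k). \label{eq:plan-single}
\end{align}
Granting this, set $x^{(0)} = x$ and let $x^{(j)}$ be $x$ with its first $j$ coordinates reset to $1$, so that $x^{(n)} = \mathbf 1 = x^\star$ and $x^{(j)}$ is obtained from $x^{(j-1)}$ by resetting coordinate $j$; since the $j$-th entry of $x^{(j-1)}$ is still $x_j$, applying \eqref{eq:plan-single} with $x' = x^{(j-1)}$ and $k = j$, and summing the telescoping differences $f(x^{(j-1)}) - f(x^{(j)})$ over $j = 1,\dots,n$, yields exactly $f(x) - f(x^\star) \ge \sum_{j} \min_i\{a_{ij}\}(1-x_j)$.

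To prove \eqref{eq:plan-single}, write $e_i := c_i - \sum_j a_{ij}x'_j$ for the residual at $x'$ and $t_i := a_{ik}(1-x'_k) \ge 0$, so the residual at $x''$ is $e_i - t_i$; using $\sum_i a_{ik} = d_k$ one checks $f(x') - f(x'') = \sum_i \lp \labs e_i\rabs - \labs e_i - t_i\rabs + t_i \rp$. A short case analysis on the sign of $e_i$ and on whether $e_i \ge t_i$ gives the identity $\labs e_i\rabs - \labs e_i - t_i\rabs + t_i = 2\max\{0,\min\{e_i,t_i\}\}$, which is in particular nonnegative. It then remains to lower bound $\sum_i \min\{e_i^+, t_i\}$, where $e_i^+ := \max\{0,e_i\}$, by $\min_i t_i$. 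This is where the hypothesis $\sum_i c_i \ge \sum_{i,j} a_{ij}$ enters: combined with $x' \in [0,1]^n$ and $\sum_i a_{ij} = d_j$ it yields $\sum_i e_i = \sum_i c_i - \sum_j d_j x'_j \ge \sum_j d_j(1-x'_j) \ge d_k(1-x'_k) = \sum_i t_i$. Hence $\sum_i e_i^+ \ge \sum_i e_i \ge \sum_i t_i$, and a dichotomy closes it: either $e_i^+ < t_i$ for every $i$, in which case $\sum_i \min\{e_i^+,t_i\} = \sum_i e_i^+ \ge \sum_i t_i \ge \min_i t_i$; or some $i^\star$ satisfies $e_{i^\star}^+ \ge t_{i^\star}$, whence $\min\{e_{i^\star}^+, t_{i^\star}\} = t_{i^\star} \ge \min_i t_i$ and the remaining nonnegative terms only help.

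The routine parts — the algebraic identity for $f(x') - f(x'')$, the elementary case analysis for the pointwise identity, and the telescoping — I would not spell out in detail. The crux, and the only place requiring genuine care, is recognizing that the global balance condition $\sum_i c_i \ge \sum_{i,j} a_{ij}$ forces $\sum_i e_i \ge \sum_i t_i$ at every point of the cube, which is precisely what makes the total residual "mass" large enough to guarantee a gain of size $\min_i\{a_{ik}\}(1-x'_k)$ per coordinate; this is the quantitative refinement of the contradiction step used in \cref{lem:mn_variables_opt_unique}. (As a byproduct one even obtains the stronger constant $2\min_i\{a_{ik}\}$, but the stated bound is all that is needed.)
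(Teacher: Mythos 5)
Your proposal is correct, and it shares the paper's overall skeleton: both proofs telescope $f(x)-f(x^{\star})$ along the path that resets one coordinate at a time to $1$, and both ultimately rest on the balance condition $\sum_i c_i \geq \sum_{i,j} a_{ij}$ to control the residuals $e_i = c_i - \sum_j a_{ij}x'_j$. Where you diverge is in the one-coordinate step. The paper treats $G_j(t) = f(x^{\star}_{1:j-1}, t, x_{j+1:n})$ as a continuous piecewise-linear function, uses the balance condition only qualitatively (at every point of the cube at least one residual is nonnegative) to bound the derivative by $-\min_i\{a_{ij}\}$ at every differentiable point, and integrates segment by segment. You instead compute the one-coordinate difference exactly via the identity $\vert e\vert - \vert e - t\vert + t = 2\max\{0,\min\{e,t\}\}$ and then exploit the balance condition quantitatively, namely $\sum_i e_i \geq \sum_j d_j(1-x'_j) \geq \sum_i t_i$, to run a dichotomy on whether some $e_i^{+}$ exceeds $t_i$. (Your first branch is in fact vacuous given that inequality, but the stated chain of inequalities is still valid, so nothing is lost.) Your route avoids the differentiability and boundary-point bookkeeping of the piecewise-linear calculus argument and, as you observe, delivers the stronger per-coordinate gain $2\min_i\{a_{ij}\}(1-x_j)$; the paper's route is perhaps more transparent about \emph{why} the function decreases monotonically toward $\mathbf{1}$ along each coordinate, which is the geometric picture behind the uniqueness claim in the companion lemma. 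Either argument proves the stated bound.
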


\begin{proof}
Recall that $$f (x) = \sum_{i=1}^m \vert c_i - \sum_{j=1}^n a_{ij} x_j \vert - \sum_{j=1}^n d_j x_j.$$ We first claim that when $x \in [0, 1]^n$, $c_i - \sum_{j=1}^n a_{ij} x_j < 0$ does not hold simultaneously for all $i \in [m]$. We prove this claim via contradiction. Assume that there exists $x \in [0, 1]^n$ such that $c_i - \sum_{j=1}^n a_{ij} x_j < 0, \forall i \in [m]$. Then we have that
\begin{align*}
    \sum_{i=1}^m c_i < \sum_{i=1}^m \sum_{j=1}^n a_{ij} x_j \overset{(1)}{\leq} \sum_{i=1}^m \sum_{j=1}^n a_{ij} \overset{(2)}{\leq} \sum_{i=1}^m c_i.   
\end{align*}
The inequality $(1)$ follows that $A > 0$ and $x \in [0, 1]^n$ and the inequality $(2)$ follows that original assumption of \cref{lem:mn_variables_opt_regularity}. Thus we constructs a contradiction, which implies that the original claim is true.

Let $x_{p:q}$ be the shorthand of $(x_p, x_{p+1}, \ldots, x_{q})$ for any $1 \leq p \leq q \leq n$. With telescoping, we have that
\begin{align*}
    f(x) - f(x^{\star}) = \sum_{j=1}^n f(x^{\star}_{1:j-1}, x_{j:n}) - f(x^{\star}_{1:j}, x_{j+1:n}).
\end{align*}
Note that $f(x^{\star}_{1:j-1}, x_{j:n}) $ and $f(x^{\star}_{1:j}, x_{j+1:n})$ only differ in the $j$-th variable. For each $j \in [n]$, with fixed $x^{\star}_1, \ldots, x^{\star}_{j-1}, \\ x_{j+1}, \ldots, x_n \in [0, 1]$, we define one-variable function $F_j (t) = f (x^{\star}_{1:j-1}, t, x_{j+1:n}), \forall t \in [0, 1]$. Notice that $F_j (t)$ is also a continuous piece-wise linear function.

On the one hand, $F_j (t)$ is differentiable at any interior point $t_0$ and it holds that 
\begin{align*}
    &\quad F_j^\prime (t_0) \\
    &= \sum_{i=1}^m \indict \lb \bigg( c_i - \sum_{k=1}^{j-1} a_{ik} x^{\star}_k - a_{ij} t_0 - \sum_{k=j+1}^n a_{ik} x_k \bigg) < 0  \rb a_{ij} - d_j  \\
    &\leq - \min_{i \in [m]} \{ a_{ij} \}.
\end{align*}
The last inequality follows that $\forall x \in [0, 1]^n, c_i - \sum_{j=1}^n a_{ij} x_j \leq 0$ does not hold simultaneously for all $i \in [m]$ and $d_j = \sum_{i=1}^m a_{ij}$. On the other hand, the number of boundary points of $F_j (t)$ is $m$ at most. Let $b_j^1, b_j^2, \ldots, b_j^{n_j}$ denote the boundary point of $F_j (t)$ when $t \in [x_j, x_j^{\star}]$. With fundamental theorem of calculus, we have that
\begin{align*}
   &\quad f(x) - f(x^{\star}) \\
   &= \sum_{j=1}^n f(x^{\star}_{1:j-1}, x_{j:n}) - f(x^{\star}_{1:j}, x_{j+1:n})
   \\
   &= \sum_{j=1}^n F_j (x_j) - F_j (x_j^{\star})
   \\
   &= \sum_{j=1}^n \bigg[ F_j (x_j) - F_j (b_j^1) + \sum_{k=1}^{n_j-1} F_j (b_j^k) - F_j (b_j^{k+1}) + F_j (b_j^{n_j}) - F(x_j^{\star})   \bigg]  
   \\
   &= - \sum_{j=1}^n \bigg[ \int_{x_j}^{b_j^1} F_j^\prime (t) dt + \sum_{k=1}^{n_j-1} \int_{b_j^k}^{b_j^{k+1}} F_j^\prime (t) dt + \int_{b_j^{n_j}}^{x_j^{\star}} F_j^\prime (t) dt    \bigg]
   \\
   &\geq  \sum_{j=1}^n \min_{i \in [m]} \{ a_{ij} \} \lp x_j^{\star} - x_j  \rp \\
   &=  \sum_{j=1}^n \min_{i \in [m]} \{ a_{ij}\} \lp 1 - x_j  \rp.
\end{align*}
\end{proof}

\subsection{Proof of Technical Lemmas in Appendix}

\subsubsection{Proof of Lemma \ref{lemma:ail_policy_ail_objective_equals_expert_policy_ail_objective}}
\label{appendix:proof_lemma:ail_policy_ail_objective_equals_expert_policy_ail_objective}

\begin{proof}[Proof of \cref{lemma:ail_policy_ail_objective_equals_expert_policy_ail_objective}]
For $h, h^\prime \in [H], h \leq h^\prime$, we use $\pi_{h:h^\prime}$ denote the shorthand of $\lp \pi_h, \pi_{h+1}, \cdots, \pi_{h^\prime} \rp$. From \cref{prop:ail_general_reset_cliff}, we have that $\forall h \in [H-1], s \in \goodS, \piail_{h} (a^{1}|s) = \piE_{h} (a^{1}|s) = 1$. Hence, $\piail$ and $\piE$ never visit bad states. Furthermore, notice that for any time step $h \in [H]$, $\text{Loss}_{h}$ only depends on $\pi_{1:h}$. Therefore, we have
\begin{align*}
    \sum_{h=1}^{H-1} \text{Loss}_{h} (\piail) = \sum_{h=1}^{H-1} \text{Loss}_{h} (\piE).
\end{align*}
It remains to prove that $\text{Loss}_{H} (\piail) = \text{Loss}_{H} (\piE)$. From \cref{lem:n_vars_opt_greedy_structure}, we have that 
\begin{align*}
    \piail_{H} &\in \argmin_{\pi_H}  \text{Loss}_{H} := \sum_{(s, a)} \labs \widehat{d^{\piE}_H}(s, a) - d^{\pi}_H(s, a) \rabs,
\end{align*}
where $d^{\pi}_H$ is computed by $\piail_{1:H-1}$. Then, we have that 
\begin{align*}
    \piail_{H}  &\in \argmin_{\pi_H} \sum_{s \in \gS} \sum_{a \in \gA} \labs \widehat{d^{\piE}_H}(s, a) - d^{\piE}_H (s) \pi_H (a|s)  \rabs
    \\
    &= \argmin_{\pi_H} \bigg\{ \sum_{s \in \goodS} \labs \widehat{d^{\piE}_H}(s) - d^{\piE}_H (s) \pi_H (a^{1}|s)  \rabs + d^{\piE}_H (s) \lp 1 - \pi_{H} (a^{1}|s) \rp \bigg\}
    \\
    &= \argmin_{\pi_H} \bigg\{ \sum_{s \in \goodS} \labs \widehat{d^{\piE}_H}(s) - d^{\piE}_H (s) \pi_H (a^{1}|s)  \rabs - d^{\piE}_H (s) \pi_{H} (a^{1}|s) \bigg\}. 
\end{align*}
In the penultimate equality, we use the facts that 1) for each $s \in \goodS$, we have $\widehat{d^{\piE}_H}(s, a^{1}) = \widehat{d^{\piE}_H}(s)$, and $\widehat{d^{\piE}_H} (s, a) = 0, \forall a \in \gA \setminus \{a^{1}\}$; 2) for each $s \in \badS$, $\widehat{d^{\piE}_H}(s) = d^{\piE}_H (s) = 0$. The last equality follows that $d^{\piE}_H (s)$ is independent of $\pi_H$. Since the optimization variables $\pi_H (a^{1}|s)$ for different $s \in \goodS$ are independent, we can view the above optimization problem for each $\pi_H (a^{1}|s)$ individually.
\begin{align*}
    \piail_{H}(a^{1}|s) &\in \argmin_{\pi_H (a^{1}|s) \in [0, 1]} \bigg\{ \labs \widehat{d^{\piE}_H}(s)  - d^{\piE}_H (s) \pi_H (a^{1}|s)  \rabs - d^{\piE}_H (s) \pi_{H} (a^{1}|s) \bigg\}.
\end{align*}
By \cref{lem:single_variable_opt}, we have that $\piE_H(a^{1}|s) = 1$ is the optimal solution.  Therefore, we have that $\Vert d^{\piail}_H - \widehat{d^{\piE}_H} \Vert_1 = \Vert d^{\piE}_H - \widehat{d^{\piE}_H} \Vert_1$. Combing the above steps, we finish the proof.
\end{proof}

\subsubsection{Proof of Proposition \ref{prop:ail_general_reset_cliff_approximate_solution}}
\label{appendix:proof_prop:ail_general_reset_cliff_approximate_solution}

\begin{proof}
Suppose that $\piail$ is an optimal solution to \eqref{eq:ail}. Since $\widebar{\pi}$ is $\varepsilon$-optimal, we have that
\begin{align*}
    f (\widebar{\pi}) - f(\piail) \leq \varepsilon,
\end{align*}
where $f(\pi)$ denotes the state-action distribution matching loss, i.e.,
\begin{align*}
    f(\pi) = \sum_{h=1}^{H} \sum_{(s, a)} \labs d^{\pi}_h(s, a) - \widehat{d^{\piE}_h}(s, a) \rabs.
\end{align*}
By \cref{lemma:ail_policy_ail_objective_equals_expert_policy_ail_objective}, it holds that $f(\piail) = f(\piE)$. Furthermore, with the decomposition of $f(\pi)$, we have 
\begin{align}
    f (\widebar{\pi}) - f(\piail) &= f (\widebar{\pi}) - f(\piE) \nonumber \\
    &=  \sum_{h=1}^{H} \Loss_h (\widebar{\pi}) - \Loss_h (\piE) \nonumber  \\
    &\leq \varepsilon, \label{eq:ail_objective_pi_bar_minus_piE}
\end{align}
where $\Loss_h$ refers to the one-stage state-action distribution matching loss. For any $h, h^\prime \in [H]$ with $h \leq h^\prime$, we use $\pi_{h:h^\prime}$ denote the shorthand of $\lp \pi_h, \pi_{h+1}, \cdots, \pi_{h^\prime} \rp$. Note that $\Loss_h$ only depends on $\pi_{1:h}$ and thus we have
\begin{align*}
    \sum_{h=1}^{H} \Loss_h (\widebar{\pi}_{1:h}) - \Loss_h (\piE_{1:h}) \leq \varepsilon.
\end{align*}
We define the policy candidate set $\Pi^{\opt} = \{ \pi \in \Pi: \forall h \in [H], \exists s \in \goodS, \pi_h (a^{1}|s) > 0 \}$ and assume that $\widebar{\pi} \in \Pi^{\opt}$. We will analyze $\sum_{h=1}^{H} \Loss_h (\pi_{1:h}) - \Loss_h (\piE_{1:h})$ for any $\pi \in \Pi^{\opt}$. For each $h \in [H]$, we have the following key composition by telescoping: 
\begin{align}
\label{eq:sum_telescoping}
\Loss_h (\pi_{1:h}) - \Loss_h (\piE_{1:h}) &= \sum_{\ell=1}^{h} \Loss_h (\pi_{1:\ell}, \piE_{\ell+1:h}) - \Loss_h (\pi_{1:\ell-1}, \piE_{\ell:h}). 
\end{align}
In the following part, we consider two cases: Case I: $ h<H$ and Case II: $ h = H$.

First, we consider Case I and focus on the term $\Loss_h (\pi_{1:\ell}, \piE_{\ell+1:h}) - \Loss_h (\pi_{1:\ell-1}, \piE_{\ell:h})$. In Case I, we consider two situations: $\ell = h$ and $\ell < h$.
\begin{itemize}
    \item When $\ell = h$, we consider the term $\Loss_h (\pi_{1:h}) - \Loss_h (\pi_{1:h-1}, \piE_{h})$. Note that $\pi_{1:h}$ and $(\pi_{1:h-1}, \piE_{h})$ differ in the policy in time step $h$. Hence, we take the policy in time step $h$ as variable and focus on
    \begin{align*}
        g (\pi_{h}) - g(\piE_{h}),
    \end{align*}
    where we define that $g (\pi_{h}) = \Loss_h (\pi_{1:h})$ and $g(\piE_{h}) = \Loss_h (\pi_{1:h-1}, \piE_{h})$. In the following part, we formulate $g (\pi_{h}) = \Loss_h (\pi_{1:h})$ as
    \begin{align*}
        &\quad g (\pi_{h}) \\
        &=  \sum_{(s, a)} | \widehat{d^{\piE}_h}(s, a) - d^{\pi}_h(s, a)  |
        \\
        &= \sum_{s \in \goodS} \sum_{a \in \gA} \labs \widehat{d^{\piE}_{h}} (s, a) - d^{\pi}_{h} (s) \pi_{h} (a|s)  \rabs + \sum_{s \in \badS} \sum_{a \in \gA} d^{\pi}_{h} (s, a)
        \\
        &= \sum_{s \in \goodS} \bigg[ \labs \widehat{d^{\piE}_{h}} (s, a^1) - d^{\pi}_{h} (s) \pi_h(a^1|s) \rabs + d^{\pi}_{h} (s) \lp 1 - \pi_h(a^1|s)  \rp  \bigg] + \sum_{s \in \badS} d^{\pi}_{h} (s)  
        \\
        &= \sum_{s \in \goodS}\bigg[ \labs \widehat{d^{\piE}_{h}} (s) - d^{\pi}_{h} (s) \pi_h(a^1|s) \rabs + d^{\pi}_{h} (s) \lp 1 - \pi_h(a^1|s)  \rp   \bigg] + \sum_{s \in \badS} d^{\pi}_{h} (s). 
    \end{align*}
    Note that $d^{\pi}_{h} (s)$ is independent of the policy in time step $h$. Then we have that
    \begin{align*}
       &\quad g (\pi_{h}) - g(\piE_{h})  \\
       &= \sum_{s \in \goodS}\lp \labs \widehat{d^{\piE}_{h}} (s) - d^{\pi}_{h} (s) \pi_h(a^1|s) \rabs - d^{\pi}_{h} (s)   \pi_h(a^1|s)     \rp - \lp \labs \widehat{d^{\piE}_{h}} (s)  - d^{\pi}_{h} (s) \piE_h(a^1|s) \rabs - d^{\pi}_{h} (s)   \piE_h(a^1|s)     \rp. 
    \end{align*}
    For each $s \in \goodS$, we may apply \cref{lem:single_variable_opt} and obtain that 
    \begin{align}
    \label{eq:case_one_situation_one_result}
        g (\pi_{h}) - g(\piE_{h}) = \Loss_h (\pi_{1:h}) - \Loss_h (\pi_{1:h-1}, \piE_{h}) \geq 0.
    \end{align}
    \item When $\ell < h$, we consider the term $\Loss_h (\pi_{1:\ell}, \piE_{\ell+1:h}) - \\ \Loss_h (\pi_{1:\ell-1}, \piE_{\ell:h})$. We notice that $(\pi_{1:\ell}, \piE_{\ell+1:h})$ and $(\pi_{1:\ell-1}, \piE_{\ell:h})$ only differ in the policy in time step $\ell$. Therefore, we take the policy in time step $\ell$ as variable and focus on
    \begin{align*}
        g (\pi_{\ell}) - g(\piE_{\ell}),
    \end{align*}
    where we recall that $g (\pi_{\ell}) = \Loss_h (\pi_{1:\ell}, \piE_{\ell+1:h})$ and $g (\piE_{\ell}) = \Loss_h (\pi_{1:\ell-1}, \piE_{\ell:h})$. We can calculate $g (\pi_{\ell})$ as
    \begin{align*}
    &\quad g (\pi_{\ell}) \\
    &=  \sum_{(s, a)} | \widehat{d^{\piE}_h}(s, a) - d^{\pi}_h(s, a)  |
    \\
    &= \sum_{s \in \goodS} \sum_{a \in \gA} \labs \widehat{d^{\piE}_{h}} (s, a) - d^{\pi}_{h} (s) \pi_{h} (a|s)  \rabs + \sum_{s \in \badS} \sum_{a \in \gA} d^{\pi}_{h} (s, a)
    \\
    &\overset{(a)}{=} \sum_{s \in \goodS} \labs \widehat{d^{\piE}_{h}} (s, a^1) - d^{\pi}_{h} (s, a^1) \rabs + \sum_{s \in \badS} d^{\pi}_{h} (s)  
    \\
    &\overset{(b)}{=} \sum_{s \in \goodS} \labs \widehat{d^{\piE}_{h}} (s) - d^{\pi}_{h} (s) \rabs + \sum_{s \in \badS} d^{\pi}_{h} (s).
    \end{align*}
    Here $d^{\pi}_h(s, a)$ and $d^{\pi}_{h} (s)$ are decided by $(\pi_{1:\ell}, \piE_{\ell+1:h})$, so equality $(a)$ and $(b)$ follow that $\pi_h(a^{1}|s) = 1$ for all $s \in \goodS$. This is the difference with the result in the previous case. Similar to the proof of \cref{prop:ail_general_reset_cliff}, with Bellman-flow equation in \eqref{eq:flow_link}, we have $\forall s \in \goodS$, 
    \begin{align*}
    &\quad d^{\pi}_{h} (s) \\
    &= \sum_{s^\prime \in \gS} \sum_{a \in \gA} d^{\pi}_{\ell} (s^\prime) \pi_\ell (a|s^\prime) \sP^{\pi} \lp s_{h} = s |s_\ell = s^\prime, a_\ell = a \rp  
        \\
        &= \sum_{s^\prime \in \goodS} d^{\pi}_{\ell} (s^\prime) \pi_{\ell} (a^{1}|s^\prime) \sP^{\pi} \lp s_{h} = s |s_{\ell} = s^\prime, a_{h} = a^{1} \rp.
    \end{align*}
    Keep in mind that the conditional probability $\sP^{\pi} ( s_{h} = s |s_{\ell} = s^\prime, a_{h} = a^{1} )$ is independent of $\pi_\ell$. Besides, for the visitation probability on bad states in time step $h$, we have
    \begin{align*}
        &\quad \sum_{s \in \badS} d^{\pi}_{h} (s) \\
        &= \sum_{s^\prime \in \badS} d^{\pi}_{\ell} (s^\prime) + \sum_{s^\prime \in \goodS} \sum_{a \in \gA \setminus \{a^1 \}} d^{\pi}_{\ell} (s^\prime)  \pi_{\ell} (a|s^\prime)
        \\
        &= \sum_{s^\prime \in \badS} d^{\pi}_{\ell} (s^\prime) + \sum_{s^\prime \in \goodS} d^{\pi}_{\ell} (s^\prime) \lp 1 - \pi_{\ell} (a^{1}|s^\prime) \rp .
    \end{align*}
    Plugging the above two equations into $g (\pi_{\ell})$ yields that
    \begin{align*}
        &\quad g (\pi_{\ell})  \\
        &= \sum_{s \in \goodS} \bigg\vert \widehat{d^{\piE}_{h}} (s) - \sum_{s^\prime \in \goodS} d^{\pi}_{\ell} (s^\prime) \pi_{\ell} (a^{1}|s^\prime) \sP^{\pi} \lp s_{h} = s |s_{\ell} = s^\prime, a_{\ell} = a^{1} \rp \bigg\vert + \sum_{s^\prime \in \badS} d^{\pi}_{\ell} (s^\prime) 
        \\
        &\; + \sum_{s^\prime \in \goodS} d^{\pi}_{\ell} (s^\prime) \lp 1 - \pi_{\ell} (a^{1}|s^\prime) \rp. 
    \end{align*}
    Note that $d^{\pi}_{\ell} (s)$ is independent of the policy in time step $\ell$ and we have
    \begin{align*}
    &\quad g (\pi_{\ell}) - g (\piE_{\ell}) \\
    &= \bigg[ \sum_{s \in \goodS} \bigg\vert \widehat{d^{\piE}_{h}} (s) - \sum_{s^\prime \in \goodS} d^{\pi}_{\ell} (s^\prime)  \sP^{\pi} \lp s_{h} = s |s_{\ell} = s^\prime, a_{\ell} = a^{1} \rp \pi_{\ell} (a^{1}|s^\prime) \bigg\vert - \sum_{s^\prime \in \goodS} d^{\pi}_{\ell} (s^\prime) \pi_{\ell} (a^{1}|s^\prime)   \bigg] 
    \\
    &\; - \bigg[ \sum_{s \in \goodS} \bigg\vert \widehat{d^{\piE}_{h}} (s) - \sum_{s^\prime \in \goodS} d^{\pi}_{\ell} (s^\prime)  \sP^{\pi} \lp s_{h} = s |s_{\ell} = s^\prime, a_{\ell} = a^{1} \rp \piE_{\ell} (a^{1}|s^\prime) \bigg\vert - \sum_{s^\prime \in \goodS} d^{\pi}_{\ell} (s^\prime) \piE_{\ell} (a^{1}|s^\prime)   \bigg] .
    \end{align*}
    For this function, we can use \cref{lem:mn_variables_opt_regularity} to prove that
    \begin{align}   
    &\quad g (\pi_{\ell}) - g (\piE_{\ell}) \nonumber \\
    &\geq \sum_{s^\prime \in \goodS} \bigg[ \min_{s \in \goodS} \lb d^{\pi}_{\ell} (s^\prime) \sP^{\pi} \lp s_{h} = s |s_{\ell} = s^\prime, a_{\ell} = a^{1} \rp \rb \nonumber  \lp 1 - \pi_{\ell} (a^{1}|s^\prime)  \rp \bigg]
  \nonumber  \\
    &= \sum_{s^\prime \in \goodS} \bigg[ \min_{s \in \goodS} \lb  \sP^{\pi} \lp s_{h} = s |s_{\ell} = s^\prime, a_{\ell} = a^{1} \rp \rb d^{\pi}_{\ell} (s^\prime) \lp 1 - \pi_{\ell} (a^{1}|s^\prime)  \rp \bigg].  \label{eq:approximate_optimality_condition_prop_step_1}
    \end{align}
    To check the conditions required by \cref{lem:mn_variables_opt_regularity}, we define
    \begin{align*}
        & m = n = \labs \goodS \rabs, \forall s \in \goodS, c(s) = \widehat{d^{\piE}_{h}} (s), \\
        & \forall s, s^\prime \in \goodS, A (s, s^\prime) = d^{\pi}_{\ell} (s^\prime)  \sP^{\pi} \lp s_{h} = s |s_{\ell} = s^\prime, a_{\ell} = a^{1} \rp,
        \\
        & \forall s^\prime \in \goodS, d(s^\prime) = d^{\pi}_{\ell} (s^\prime). 
    \end{align*}
    Remember that $\pi \in \Pi^{\opt}$. With the reachable assumption (refer to \cref{asmp:reset_cliff}) that $\forall h \in [H], s, s^\prime \in \goodS, P_h (s^\prime |s, a^1) > 0$, we have that $\forall s, s^\prime \in \goodS$, 
    \begin{align*}
     d^{\pi}_{\ell} (s^\prime)  > 0, \sP^{\pi} \lp s_{h} = s |s_\ell = s^\prime, a_\ell = a^{1} \rp > 0.
    \end{align*}
    Then we can obtain that $A > 0$ where $>$ means element-wise comparison. Besides, we have that
    \begin{align*}
        &\quad  \sum_{s \in \goodS} c (s) \\
        &= 1 \\
        &\geq \sum_{s \in \goodS} \sum_{s^\prime \in \goodS} d^{\pi}_\ell (s^\prime)  \sP^{\pi} \lp s_{h} = s |s_\ell = s^\prime, a_\ell = a^{1} \rp \\
        &= \sum_{s \in \goodS} \sum_{s^\prime \in \goodS} A (s, s^\prime).
    \end{align*}
    For each $s^\prime \in \goodS$, we further have that 
    \begin{align*}
        &\quad \sum_{s \in \goodS} A (s, s^\prime) \\
        &= \sum_{s \in \goodS} d^{\pi}_\ell (s^\prime)  \sP^{\pi} \lp s_{h} = s |s_\ell = s^\prime, a_\ell = a^{1} \rp \\
        &= d^{\pi}_\ell (s^\prime) = d(s^\prime).  
    \end{align*}
    Thus, we have verified the conditions in \cref{lem:mn_variables_opt_regularity} and \eqref{eq:approximate_optimality_condition_prop_step_1} is true. From \eqref{eq:approximate_optimality_condition_prop_step_1}, we have that
    \begin{align*}
    &\quad g (\pi_{\ell}) - g (\piE_{\ell}) \\
    &\geq \sum_{s^\prime \in \goodS} \bigg[ \min_{s \in \goodS} \lb  \sP^{\pi} \lp s_{h} = s |s_{\ell} = s^\prime, a_{\ell} = a^{1} \rp \rb d^{\pi}_{\ell} (s^\prime) \lp 1 - \pi_{\ell} (a^{1}|s^\prime)  \rp \bigg]
    \\
    &\geq \min_{s, s^\prime \in \goodS} \lb  \sP^{\pi} \lp s_{h} = s |s_{\ell} = s^\prime, a_{\ell} = a^{1} \rp \rb \sum_{s^\prime \in \goodS} d^{\pi}_{\ell} (s^\prime) \lp 1 - \pi_{\ell} (a^{1}|s^\prime)  \rp
    \\
    &= c_{\ell, h} \sum_{s^\prime \in \goodS} d^{\pi}_{\ell} (s^\prime) \lp 1 - \pi_{\ell} (a^{1}|s^\prime)  \rp.   
    \end{align*}
    Here $c_{\ell, h} = \min_{s, s^\prime \in \goodS} \{ \sP^{\pi} \lp s_{h} = s |s_{\ell} = s^\prime, a_{\ell} = a^{1} \rp \}$ and we have $c_{\ell, h} > 0$. In conclusion, we have proved that for each $\ell < h$,
    \begin{align}
        \Loss_h (\pi_{1:\ell}, \piE_{\ell+1:h}) - \Loss_h (\pi_{1:\ell-1}, \piE_{\ell:h}) \geq c_{\ell, h} \sum_{s^\prime \in \goodS} d^{\pi}_{\ell} (s^\prime) \lp 1 - \pi_{\ell} (a^{1}|s^\prime)  \rp, \label{eq:case_one_situation_two_result} 
    \end{align}
\end{itemize}

Then for Case I where $h < H$, we combine the results in \eqref{eq:case_one_situation_one_result} and \eqref{eq:case_one_situation_two_result} to obtain 
\begin{align}
    &\quad \Loss_h (\pi_{1:h}) - \Loss_h (\piE_{1:h}) \nonumber \\
    &= \sum_{\ell=1}^{h} \Loss_h (\pi_{1:\ell}, \piE_{\ell+1:h}) - \Loss_h (\pi_{1:\ell-1}, \piE_{\ell:h}) \nonumber
    \\
    &= \Loss_h (\pi_{1:h}) - \Loss_h (\pi_{1:h-1}, \piE_{h}) \nonumber + \sum_{\ell=1}^{h-1} \Loss_h (\pi_{1:\ell}, \piE_{\ell+1:h}) - \Loss_h (\pi_{1:\ell-1}, \piE_{\ell:h}) \nonumber
    \\
    &\geq \sum_{\ell=1}^{h-1} \Loss_h (\pi_{1:\ell}, \piE_{\ell+1:h}) - \Loss_h (\pi_{1:\ell-1}, \piE_{\ell:h}) \nonumber
    \\
    &\geq \sum_{\ell=1}^{h-1} c_{\ell, h} \sum_{s^\prime \in \goodS} d^{\pi}_{\ell} (s^\prime) \lp 1 - \pi_{\ell} (a^{1}|s^\prime)  \rp. \label{eq:case_one_result} 
\end{align}
The penultimate inequality follows \eqref{eq:case_one_situation_one_result} and the last inequality follows \eqref{eq:case_one_situation_two_result}. 

Second, we consider Case II where $h = H$. By telescoping, we have that
\begin{align}
    &\quad \Loss_h (\pi_{1:H}) - \Loss_h (\piE_{1:H}) \nonumber
    \\
    &= \sum_{\ell=1}^{H} \Loss_h (\pi_{1:\ell}, \piE_{\ell+1:H}) - \Loss_h (\pi_{1:\ell-1}, \piE_{\ell:H}) \nonumber
    \\
    &= \Loss_h (\pi_{1:H}) - \Loss_h (\pi_{1:H-1}, \piE_{H}) + \sum_{\ell=1}^{H-1} \Loss_h (\pi_{1:\ell}, \piE_{\ell+1:H}) - \Loss_h (\pi_{1:\ell-1}, \piE_{\ell:H}). \label{eq:case_two_telescoping}
\end{align}
Similar to Case I, we also consider two situations: $\ell=H$ and $\ell<H$. We first consider the situation where $\ell<H$, which is similar to the corresponding part in Case I.

\begin{itemize}
    \item When $\ell<H$, we consider $\Loss_h (\pi_{1:\ell}, \piE_{\ell+1:H}) - \\ \Loss_h (\pi_{1:\ell-1}, \piE_{\ell:H})$. The following analysis is similar to that in Case I. Note that $(\pi_{1:\ell}, \piE_{\ell+1:H})$ and $(\pi_{1:\ell-1}, \piE_{\ell:H})$ only differ in the policy in time step $\ell$. We take the policy in time step $\ell$ as variable and focus on
    \begin{align*}
        g (\pi_{\ell}) - g(\piE_{\ell}),
    \end{align*}
    where recall that $g (\pi_{\ell}) = \Loss_h (\pi_{1:\ell}, \piE_{\ell+1:h})$ and $g (\piE_{\ell}) = \Loss_h (\pi_{1:\ell-1}, \piE_{\ell:h})$. Similarly, we have 
    \begin{align*}
        g (\pi_{\ell}) = \sum_{s \in \goodS} \labs \widehat{d^{\piE}_{h}} (s) - d^{\pi}_{h} (s) \rabs + \sum_{s \in \badS} d^{\pi}_{h} (s).
    \end{align*}
    Here $d^{\pi}_h(s)$ is computed by $(\pi_{1:\ell}, \piE_{\ell+1:h})$. With Bellman-flow equation in \eqref{eq:flow_link}, it holds that $ \forall s \in \goodS$,
    \begin{align*}
        d^{\pi}_{h} (s) &= \sum_{s^\prime \in \goodS} \bigg[ d^{\pi}_{\ell} (s^\prime) \pi_{\ell} (a^{1}|s^\prime) \sP^{\pi} \lp s_{H} = s |s_{\ell} = s^\prime, a_{h} = a^{1} \rp \bigg],
    \end{align*}
    and 
    \begin{align*}
          \sum_{s \in \badS} d^{\pi}_{h} (s) &= \sum_{s \in \badS} d^{\pi}_{\ell} (s) + \sum_{s^\prime \in \goodS} d^{\pi}_{\ell} (s^\prime) \lp 1 - \pi_{\ell} (a^{1}|s^\prime) \rp .
    \end{align*}
    Plugging the above two equations into $g (\pi_{\ell})$ yields that
    \begin{align*}
        &\quad g (\pi_{\ell})  \\
        &= \sum_{s \in \goodS} \bigg[ \bigg\vert \widehat{d^{\piE}_{h}} (s) \\
        &\,\, - \sum_{s^\prime \in \goodS} d^{\pi}_{\ell} (s^\prime) \pi_{\ell} (a^{1}|s^\prime) \sP^{\pi} \lp s_{H} = s |s_{\ell} = s^\prime, a_{\ell} = a^{1} \rp \bigg\vert \bigg] + \sum_{s \in \badS} d^{\pi}_{\ell} (s) + \sum_{s^\prime \in \goodS} d^{\pi}_{\ell} (s^\prime) \lp 1 - \pi_{\ell} (a^{1}|s^\prime) \rp. 
    \end{align*}
    Notice that $d^{\pi}_{\ell} (s)$ is independent of the policy in time step $\ell$ and we have
    \begin{align*}
    &\quad g (\pi_{\ell}) - g (\piE_{\ell})
    \\
    &= \bigg\{ \sum_{s \in \goodS} \bigg[\bigg\vert \widehat{d^{\piE}_{h}} (s) - \sum_{s^\prime \in \goodS} d^{\pi}_{\ell} (s^\prime)  \sP^{\pi} \lp s_{H} = s |s_{\ell} = s^\prime, a_{\ell} = a^{1} \rp \pi_{\ell} (a^{1}|s^\prime) \bigg\vert \bigg] - \sum_{s^\prime \in \goodS} d^{\pi}_{\ell} (s^\prime) \pi_{\ell} (a^{1}|s^\prime)   \bigg\} 
    \\
    &\; - \bigg\{ \sum_{s \in \goodS} \bigg[ \bigg\vert \widehat{d^{\piE}_{h}} (s) - \sum_{s^\prime \in \goodS} d^{\pi}_{\ell} (s^\prime)  \sP^{\pi} \lp s_{H} = s |s_{\ell} = s^\prime, a_{\ell} = a^{1} \rp \piE_{\ell} (a^{1}|s^\prime) \bigg\vert \bigg] - \sum_{s^\prime \in \goodS} d^{\pi}_{\ell} (s^\prime) \piE_{\ell} (a^{1}|s^\prime)   \bigg\} .
    \end{align*}
    For this type function in RHS, we can use \cref{lem:mn_variables_opt_regularity} to prove that
    \begin{align}
    &\quad g (\pi_{\ell}) - g (\piE_{\ell}) \nonumber \\
    &\geq \sum_{s^\prime \in \goodS} \bigg[ \min_{s \in \goodS} \lb d^{\pi}_{\ell} (s^\prime) \sP^{\pi} \lp s_{H} = s |s_{\ell} = s^\prime, a_{\ell} = a^{1} \rp \rb  \nonumber \lp 1 - \pi_{\ell} (a^{1}|s^\prime)  \rp \bigg]  \nonumber
    \\
    &= \sum_{s^\prime \in \goodS} \bigg[ \min_{s \in \goodS} \lb  \sP^{\pi} \lp s_{H} = s |s_{\ell} = s^\prime, a_{\ell} = a^{1} \rp \rb  d^{\pi}_{\ell} (s^\prime) \lp 1 - \pi_{\ell} (a^{1}|s^\prime)  \rp \bigg].  \label{eq:approximate_optimality_condition_prop_step_2}
    \end{align}
    To check the conditions in \cref{lem:mn_variables_opt_regularity}, we define
    \begin{align*}
        & m = n = \labs \goodS \rabs, \forall s \in \goodS, c(s) = \widehat{d^{\piE}_{h}} (s), \\
        & \forall s, s^\prime \in \goodS, A (s, s^\prime) = d^{\pi}_{\ell} (s^\prime)  \sP^{\pi} \lp s_{H} = s |s_{\ell} = s^\prime, a_{\ell} = a^{1} \rp,
        \\
        & \forall s^\prime \in \goodS, d(s^\prime) = d^{\pi}_{\ell} (s^\prime). 
    \end{align*}
    Similar to the analysis in Case I, we obtain that $A > 0$ and
    \begin{align*}
         & \sum_{s \in \goodS} c (s) \\
         &= 1 \\
         &\geq \sum_{s \in \goodS}  \sum_{s^\prime \in \goodS} d^{\pi}_\ell (s^\prime)  \sP^{\pi} \lp s_{H} = s |s_\ell = s^\prime, a_\ell = a^{1} \rp \\
         &= \sum_{s \in \goodS} \sum_{s^\prime \in \goodS} A (s, s^\prime),
    \end{align*}
    Furthermore, for $\forall s^\prime \in \goodS$, we have 
    \begin{align*}
      &\quad \sum_{s \in \goodS} A (s, s^\prime) \\
      &= \sum_{s \in \goodS} d^{\pi}_\ell (s^\prime)  \sP^{\pi} \lp s_{h} = s |s_\ell = s^\prime, a_\ell = a^{1} \rp \\
      &= d^{\pi}_\ell (s^\prime) = d(s^\prime).
    \end{align*}
    Thus, we have verified the conditions in \cref{lem:mn_variables_opt_regularity} and \eqref{eq:approximate_optimality_condition_prop_step_2} is true. From \eqref{eq:approximate_optimality_condition_prop_step_2}, we get
    \begin{align*}
    &\quad g (\pi_{\ell}) - g (\piE_{\ell}) \\
    &\geq \sum_{s^\prime \in \goodS} \bigg[ \min_{s \in \goodS} \lb  \sP^{\pi} \lp s_{H} = s |s_{\ell} = s^\prime, a_{\ell} = a^{1} \rp \rb d^{\pi}_{\ell} (s^\prime) \lp 1 - \pi_{\ell} (a^{1}|s^\prime)  \rp \bigg]
    \\
    &\geq \min_{s, s^\prime \in \goodS} \lb  \sP^{\pi} \lp s_{H} = s |s_{\ell} = s^\prime, a_{\ell} = a^{1} \rp \rb \sum_{s^\prime \in \goodS} d^{\pi}_{\ell} (s^\prime)  \lp 1 - \pi_{\ell} (a^{1}|s^\prime)  \rp 
    \\
    &= c_{\ell, H} \sum_{s^\prime \in \goodS} d^{\pi}_{\ell} (s^\prime)  \lp 1 - \pi_{\ell} (a^{1}|s^\prime) \rp. 
    \end{align*}
    Here $c_{\ell, H} = \min_{s, s^\prime \in \goodS} \{ \sP^{\pi} \lp s_{H} = s |s_{\ell} = s^\prime, a_{\ell} = a^{1} \rp \} $. In summary, for $\ell < H$, we have proved that
    \begin{align}
    \label{eq:case_two_situation_one_result}
        \Loss_h (\pi_{1:\ell}, \piE_{\ell+1:H}) - \Loss_h (\pi_{1:\ell-1}, \piE_{\ell:H}) \geq c_{\ell, H} \sum_{s^\prime \in \goodS} d^{\pi}_{\ell} (s^\prime)  \lp 1 - \pi_{\ell} (a^{1}|s^\prime) \rp. 
    \end{align}
    \item When $\ell = H$, we consider the term $\Loss_h (\pi_{1:H}) - \Loss_h (\pi_{1:H-1}, \piE_{H})$. The analysis in this situation is more complicated. Note that $\pi_{1:H}$ and $(\pi_{1:H-1}, \piE_{H})$ only differs in the policy in the last time step $H$. Take the policy in time step $H$ as variable and we focus on
    \begin{align*}
        g (\pi_{H}) - g(\piE_{H}),
    \end{align*}
    where keep in mind that $g (\pi_{H}) = \Loss_h (\pi_{1:H})$ and $g(\piE_{H}) = \Loss_h (\pi_{1:H-1}, \piE_{H})$. Similarly, we can formulate $g (\pi_{H}) = \Loss_h (\pi_{1:H})$ as
    \begin{align*}
         g (\pi_{H}) = \sum_{s \in \goodS}\bigg[ \labs \widehat{d^{\piE}_{h}} (s) - d^{\pi}_{h} (s) \pi_H(a^1|s) \rabs + d^{\pi}_{h} (s) \lp 1 - \pi_H(a^1|s)  \rp   \bigg] + \sum_{s \in \badS} d^{\pi}_{h} (s).
    \end{align*}
    Note that $d^{\pi}_{h} (s)$ is independent of the policy in time step $H$ and we have that
    \begin{align*}
        &\quad g (\pi_{H}) - g(\piE_{H}) \\
        &= \sum_{s \in \goodS} \lp \labs \widehat{d^{\piE}_{h}} (s) - d^{\pi}_{h} (s) \pi_H (a^1|s) \rabs - d^{\pi}_{h} (s)   \pi_H (a^1|s)     \rp - \lp \labs \widehat{d^{\piE}_{h}} (s) - d^{\pi}_{h} (s) \piE_H (a^1|s) \rabs - d^{\pi}_{h} (s)   \piE_H (a^1|s)     \rp.
    \end{align*}
    Given estimation $\widehat{d^{\piE}_{h}} (s)$, we divide the set of good states into two parts. That is $\goodS = \gS_{\pi} \cup  \gS_{\pi}^c$ and $\gS_{\pi} \cap \gS_{\pi}^c = \emptyset$. Here $\gS_{\pi}  = \{s \in \goodS, \pi_H (a^1|s) \leq \min\{1, \widehat{d^{\piE}_{h}} (s) / d^{\piE}_H (s) \}  \}$. Therefore, we have that
    \begin{align*}
     &\quad g (\pi_{H}) - g(\piE_{H}) \\
     &= \sum_{s \in  \gS_{\pi}} \lp \labs \widehat{d^{\piE}_{h}} (s) - d^{\pi}_{h} (s) \pi_H (a^1|s) \rabs - d^{\pi}_{h} (s)   \pi_H (a^1|s)     \rp - \lp \labs \widehat{d^{\piE}_{h}} (s) - d^{\pi}_{h} (s) \piE_H (a^1|s) \rabs - d^{\pi}_{h} (s)   \piE_H (a^1|s)     \rp
        \\
        &+  \sum_{s \in \gS_{\pi}^{c}} \underbrace{ \lp \labs \widehat{d^{\piE}_{h}} (s) - d^{\pi}_{h} (s) \pi_H (a^1|s) \rabs - d^{\pi}_{h} (s)   \pi_H (a^1|s)     \rp}_{T_1 (s)} \underbrace{- \big( \labs \widehat{d^{\piE}_{h}} (s) - d^{\pi}_{h} (s) \piE_H (a^1|s) \rabs - d^{\pi}_{h} (s)   \piE_H (a^1|s) \big)}_{T_2 (s)}. 
    \end{align*}
    By \cref{lem:single_variable_opt}, we that $\sum_{s \in \gS_{\pi}^c} T_1 (s) + T_2 (s) \geq 0$. Then we have that
    \begin{align*}
       &\quad g (\pi_{H}) - g(\piE_{H})  \\
       &\geq \sum_{s \in  \gS_{\pi}} \lp \labs \widehat{d^{\piE}_{h}} (s) - d^{\pi}_{h} (s) \pi_H (a^1|s) \rabs - d^{\pi}_{h} (s)   \pi_H (a^1|s)     \rp - \lp \labs \widehat{d^{\piE}_{h}} (s) - d^{\pi}_{h} (s) \piE_H (a^1|s) \rabs - d^{\pi}_{h} (s)   \piE_H (a^1|s)     \rp.
    \end{align*}
    For each $s \in  \gS_{\pi}$, we aim to apply \cref{lem:single_variable_regularity} to prove that
    \begin{align*}
        &\lp \labs \widehat{d^{\piE}_{h}} (s) - d^{\pi}_{h} (s) \pi_H (a^1|s) \rabs - d^{\pi}_{h} (s)   \pi_H (a^1|s)     \rp - \lp \labs \widehat{d^{\piE}_{h}} (s) - d^{\pi}_{h} (s) \piE_H (a^1|s) \rabs - d^{\pi}_{h} (s)   \piE_H (a^1|s)     \rp
        \\
        &\geq 2 d^{\pi}_{h} (s) \lp \min\{1,  \widehat{d^{\piE}_{h}} (s) / d^{\piE}_H (s)\} -  \pi_H (a^1|s)  \rp.
    \end{align*}
    To check the conditions required by \cref{lem:single_variable_regularity}, we define
    \begin{align*}
        c = \widehat{d^{\piE}_{h}} (s), a = d^{\pi}_{h} (s), x = \pi_H (a^1|s).   
    \end{align*}
    It is easy to see that $c \geq 0$. Since $\pi \in \Pi^{\opt}$, combined with the reachable assumption (refer to \cref{asmp:reset_cliff}) that $\forall h \in [H-1], \forall s, s^\prime \in \goodS, P_{h} (s^\prime|s, a^{1}) > 0$, we have that $a = d^{\pi}_{h} (s) > 0$. According to the definition of $\gS_{\pi}$, we have that $x =  \pi_H (a^1|s) \leq \min\{ 1,  \widehat{d^{\piE}_{h}} (s) / d^{\piE}_H (s) \} \leq  \min\{ 1,  \widehat{d^{\piE}_{h}} (s) / d^{\pi}_h (s) \} = \min\{ 1,  c / a \}$. We have verified the conditions in \cref{lem:single_variable_regularity} and obtain that
    \begin{align*}
        &\lp \labs \widehat{d^{\piE}_{h}} (s) - d^{\pi}_{h} (s) \pi_H (a^1|s) \rabs - d^{\pi}_{h} (s)   \pi_H (a^1|s)     \rp - \lp \labs \widehat{d^{\piE}_{h}} (s) - d^{\pi}_{h} (s) \piE_H (a^1|s) \rabs - d^{\pi}_{h} (s)   \piE_H (a^1|s)     \rp
        \\
        &= 2 d^{\pi}_{h} (s) \lp \min\{1,  \widehat{d^{\piE}_{h}} (s) / d^{\pi}_h (s)\} -  \pi_H (a^1|s)  \rp
        \\
        &\geq 2 d^{\pi}_{h} (s) \lp \min\{1,  \widehat{d^{\piE}_{h}} (s) / d^{\piE}_H (s)\} -  \pi_H (a^1|s)  \rp, 
    \end{align*}
    where the last inequality follows that $\forall s \in \goodS, d^{\piE}_H (s) \geq d^{\pi}_h (s)$. Plugging the above inequality into $g (\pi_{H}) - g(\piE_{H})$ yields that
    \begin{align*}
        g (\pi_{H}) - g(\piE_{H}) &\geq 2 \sum_{s \in  \gS_{\pi}}  d^{\pi}_{h} (s) \lp \min\{1,  \widehat{d^{\piE}_{h}} (s) / d^{\piE}_H (s)\} -  \pi_H (a^1|s)  \rp .
    \end{align*}
    In summary, we have proved that 
    \begin{align}
    \label{eq:case_two_situation_two_result}
        \Loss_h (\pi_{1:H}) - \Loss_h (\pi_{1:H-1}, \piE_{H}) \geq  2 \sum_{s \in  \gS_{\pi}}  d^{\pi}_{h} (s) \lp \min\{1,  \widehat{d^{\piE}_{h}} (s) / d^{\piE}_H (s)\} -  \pi_H (a^1|s)  \rp . 
    \end{align}
\end{itemize}

In Case II where $h=H$, with \eqref{eq:case_two_telescoping}, \eqref{eq:case_two_situation_one_result}, and \eqref{eq:case_two_situation_two_result},   we have that
\begin{align}
    &\quad \Loss_h (\pi_{1:H}) - \Loss_h (\piE_{1:H}) \nonumber
    \\
    &= \Loss_h (\pi_{1:H}) - \Loss_h (\pi_{1:H-1}, \piE_{H}) \nonumber + \sum_{\ell=1}^{H-1} \Loss_h (\pi_{1:\ell}, \piE_{\ell+1:H}) - \Loss_h (\pi_{1:\ell-1}, \piE_{\ell:H}) \nonumber
    \\
    &\geq 2 \sum_{s \in  \gS_{\pi}}  d^{\pi}_{h} (s) \lp \min\{1,  \widehat{d^{\piE}_{h}} (s) / d^{\piE}_H (s)\} -  \pi_H (a^1|s)  \rp + \sum_{\ell=1}^{H-1} c_{\ell, H} \sum_{s \in \goodS} d^{\pi}_{\ell} (s)  \lp 1 - \pi_{\ell} (a^{1}|s) \rp, \label{eq:case_two_result} 
\end{align}
where $c_{\ell, H} = \min_{s, s^\prime \in \goodS} \{ \sP^{\pi} \lp s_{H} = s |s_{\ell} = s^\prime, a_{\ell} = a^{1} \rp \} $. The last inequality follows $\eqref{eq:case_two_situation_one_result}$ and $\eqref{eq:case_two_situation_two_result}$.

Finally, we combine the results in \eqref{eq:case_one_result} and $\eqref{eq:case_two_result}$ to obtain that 
\begin{align*}
    &\quad f (\pi) - f(\piE)
    \\ &= \sum_{h=1}^{H} \Loss_h (\pi_{1:h}) - \Loss_h (\piE_{1:h})
    \\
    &=  \sum_{h=1}^{H-1} \bigg[ \Loss_h (\pi_{1:h}) - \Loss_h (\piE_{1:H}) + \Loss_h (\pi_{1:h}) - \Loss_h (\piE_{1:H}) \bigg]
    \\
    &\geq \sum_{h=1}^{H-1} \sum_{\ell=1}^{h-1} c_{\ell, h} \sum_{s \in \goodS} d^{\pi}_{\ell} (s) \lp 1 - \pi_{\ell} (a^{1}|s)  \rp + 2 \sum_{s \in  \gS_{\pi}}  d^{\pi}_{h} (s) \lp \min\{1,  \widehat{d^{\piE}_{h}} (s) / d^{\piE}_H (s)\} -  \pi_H (a^1|s)  \rp 
    \\
    &\; + \sum_{\ell=1}^{H-1} c_{\ell, H} \sum_{s \in \goodS} d^{\pi}_{\ell} (s)  \lp 1 - \pi_{\ell} (a^{1}|s) \rp
    \\
    &= \sum_{h=1}^{H} \sum_{\ell=1}^{h-1} c_{\ell, h} \sum_{s \in \goodS} d^{\pi}_{\ell} (s) \lp 1 - \pi_{\ell} (a^{1}|s)  \rp + 2 \sum_{s \in  \gS_{\pi}}  d^{\pi}_{h} (s) \lp \min\{1,  \widehat{d^{\piE}_{h}} (s) / d^{\piE}_H (s)\} -  \pi_H (a^1|s)  \rp, 
\end{align*}
where $c_{\ell, h}  = \min_{s, s^\prime \in \goodS} \{ \sP^{\pi} \lp s_{h} = s |s_{\ell} = s^\prime, a_{\ell} = a^{1} \rp \} $. The penultimate inequality follows \eqref{eq:case_one_result} and \eqref{eq:case_two_result}. In summary, we prove that for any $\pi \in \Pi^{\mathrm{opt}}$, we have
\begin{align*}
    &\quad f (\pi) - f(\piE) \\
    &\geq \sum_{h=1}^{H} \sum_{\ell=1}^{h-1} c_{\ell, h} \sum_{s \in \goodS} d^{\pi}_{\ell} (s) \lp 1 - \pi_{\ell} (a^{1}|s)  \rp + 2 \sum_{s \in  \gS_{\pi}}  d^{\pi}_{h} (s) \big( \min\{1,  \widehat{d^{\piE}_{h}} (s) / d^{\piE}_H (s)\} -  \pi_H (a^1|s)  \big)
    \\
    &\geq c (\pi) \bigg[ \sum_{h=1}^{H} \sum_{\ell=1}^{h-1} \sum_{s \in \goodS} d^{\pi}_{\ell} (s) \lp 1 - \pi_{\ell} (a^{1}|s)  \rp \\
    &\,\, + \sum_{s \in  \gS_{\pi}}  d^{\pi}_{h} (s) \big( \min\{1,  \widehat{d^{\piE}_{h}} (s) / d^{\piE}_H (s)\} -  \pi_H (a^1|s)  \big)  \bigg].
\end{align*}
Here $c(\pi)$ is defined as 
\begin{align*}
    c (\pi) &= \min_{1 \leq \ell < h \leq H} c_{\ell, h} =  \min_{1 \leq \ell < h \leq H, s, s^\prime \in \goodS} \{ \sP^{\pi} \lp s_{h} = s |s_{\ell} = s^\prime, a_{\ell} = a^{1} \rp \}
\end{align*}
Since $\widebar{\pi} \in \Pi^{\mathrm{opt}}$, it holds that
\begin{align*}
    &\quad f (\widebar{\pi}) - f(\piE) \\
    &\geq c (\widebar{\pi}) \bigg[ \sum_{h=1}^{H} \sum_{\ell=1}^{h-1} \sum_{s \in \goodS} d^{\widebar{\pi}}_{\ell} (s) \lp 1 - \widebar{\pi}_{\ell} (a^{1}|s)  \rp + \sum_{s \in  \gS^{\widebar{\pi}}_H}  d^{\widebar{\pi}}_{H} (s) \lp \min\{1,  \widehat{d^{\piE}_{h}} (s) / d^{\piE}_H (s)\} - \widebar{\pi}_H (a^1|s)  \rp  \bigg]. 
\end{align*}
Finally, with \eqref{eq:ail_objective_pi_bar_minus_piE}, we obtain the desired result.
\end{proof}

\section{Discussion}

\subsection{Optimization Procedures for TV-AIL}
\label{appendix:optimization_procedures}
Here we discuss optimization procedures for TV-AIL. Recall the objective of TV-AIL:
\begin{align}
  \min_{\pi \in \Pi} \sum_{h=1}^{H} \sum_{(s, a) \in \gS \times \gA} | d^{\pi}_h(s, a) - \widehat{d^{\piE}_h}(s, a) |,
\end{align}
There are two main optimization approaches to the above state-action distribution matching problem. First, we can utilize linear programming to solve the above optimization problem \emph{exactly} \citep{syed08lp, nived2021provably}. The main idea is that relax the optimization variable from $\pi$ to $d^{\pi}$, and solve the matching problem in the space of state-action distributions. In particular, we observe that the optimization objective and Bellman-flow constraints are linear with respect to $d^{\pi}$. Thus, linear programming is applicable. Finally, we recover the optimal policy from the solved state-action distribution. Please see \citep{syed08lp, nived2021provably} for details.

Second, we can utilize gradient-based methods to solve this optimization problem \emph{approximately} \citep{syed07game, xu2021error}. This type of optimization approach is widely used in practice \citep{ho2016gail, ghasemipour2019divergence, ke2019imitation, Kostrikov19dac}.  We briefly introduce the approach in \citep{xu2021error} and defer interested readers to \citep{xu2021error} for more information. The main idea is to utilize the mini-max formulation for TV-AIL:
\begin{align}
    \min_{\pi \in \Pi} \max_{c \in \gC_{\TV}}  \,\,  & \sum_{h=1}^H \expect_{(s, a) \sim d^{\pi}_h} \left[c_h(s, a) \right] - \expect_{(s, a)\sim \widehat{d^{\piE}_h} } \left[c_h(s, a) \right], 
\end{align}
where $\gC_{\TV}$ is the set of functions $c_h: \gS \times \gA \rar [-1, 1]$. Then our target is to solve the saddle point of the above mini-max problem. By the dual representation of policy value in \eqref{eq:value_dual_representation}, we see that the outer problem is to maximize the policy value of $\pi$ given the reward function $- c_h(s, a)$. For the inner optimization problem, we can use online gradient descent methods \citep{shalev12online-learning} so that we can finally reach an approximate saddle point. Formally, let us define the objective $f^{(t)}(c)$:
\begin{align}
\label{eq:objective_c}
    & \sum_{h=1}^{H} \sum_{(s, a)} c_h(s, a) \lp \widehat{d^{\piE}_h} (s, a) - d^{\pi^{(t)}}_h (s, a) \rp,
\end{align}
where $\pi^{(t)}$ is the optimized policy at iteration $t$. Then the update rule for $c$ is:
\begin{align*}   
    c^{(t+1)} := \gP_{\gC_{\TV}}\lp c^{(t)} - \eta^{(t)}  \nabla f^{(t)}(c^{(t)}) \rp, 
\end{align*}
where $\eta^{(t)} > 0$ is the stepsize to be chosen later, and $\gP_{\gC_{\TV}}$ is the Euclidean projection on the set $\gC_{\TV}$, i.e., $\gP_{\gC_{\TV}} (c) := \argmin_{z \in \gC_{\TV}} \lnorm z- c \rnorm_2$. The above procedure is outlined in \cref{algo:main_aglorithm}.

\begin{algorithm}[htbp]
\caption{\textsf{TV-AIL} via Gradient-based Method \citep{xu2021nearly}}
\label{algo:main_aglorithm}
{
\begin{algorithmic}[1]
\REQUIRE{expert demonstrations $\gD$, transition model $\gP$, number of iterations $T$, step size $\eta^{(t)}$, and initialization $c^{(1)}$.}
\STATE{Obtain the estimation $\widehat{d_h^{\piE}}$ in \eqref{eq:estimate_by_count}.}
\FOR{$t = 1, 2, \cdots, T$}
\STATE{$\pi^{(t)} \lar $ solve the optimal policy with the reward function $- c^{(t)}$ in transition model $\gP$ up to an error of $\varepsilon_{\rl}$.}
\STATE{Compute the state-action distribution $d^{\pi^{(t)}}_h$ for $\pi^{(t)}$ for all $h \in [H]$ in transition model $\gP$.}
\STATE{Update $ c^{(t+1)} := \gP_{\gC_{\TV}}\lp c^{(t)} - \eta^{(t)}  \nabla f^{(t)}(c^{(t)}) \rp$ with $f^{(t)}(c)$ defined in \eqref{eq:objective_c}.}
\ENDFOR
\STATE{Compute the mean state-action distribution $\widebar{d_h} (s, a) = \sum_{t=1}^{T} d^{\pi^{(t)}}_h(s, a) / T$ for all $h \in [H], (s, a) \in \gS \times \gA$.}
\STATE{Compute $\widebar{\pi}_h (a|s) \lar \widebar{d_h}(s, a) / \sum_{a} \widebar{d_h} (s, a)$ for all $h \in [H], (s, a) \in \gS \times \gA$.}
\ENSURE{policy $\widebar{\pi}$.}
\end{algorithmic}
}
\end{algorithm}

For the optimization problem in Line 3 of \cref{algo:main_aglorithm}, value iteration and policy gradient methods are applicable. Specifically, if we use value iteration, $\varepsilon_{\opt} = 0$ and this procedure can be done in $H$ iterations for episodic MDPs. The following theoretical guarantee is provided in \citep{xu2021nearly}.

\begin{prop}[Lemma E.5 in \citep{xu2021nearly}] \label{prop:approximate_vail_opt_error} 
Fix $\varepsilon \in \lp 0, H \rp$. Consider Algorithm \ref{algo:main_aglorithm} with $\varepsilon_{\rl} \leq \varepsilon / 2$ and  $\widebar{\pi}$ being the output policy. If we take the number of iterations $T \geq 32  H^2  |\gS|  |\gA| / \varepsilon^2$, and the step size $\eta^{(t)} :=  \sqrt{|\gS||\gA| / (8T)}$, then we have
\begin{align*}
    \sum_{h=1}^H \lnorm d^{\widebar{\pi}}_h - \widehat{d^{\piE}_h} \rnorm_{1} \leq \min_{\pi \in \Pi} \sum_{h=1}^H \lnorm d^{\pi}_h - \widehat{d^{\piE}_h} \rnorm_{1} + \varepsilon. 
\end{align*}
\end{prop}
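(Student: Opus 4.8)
### Proof Proposal

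\textbf{Overall approach.} The plan is to cast Algorithm~\ref{algo:main_aglorithm} as an instance of min-max optimization solved by a no-regret dynamics on the discriminator side, paired with an (approximate) best response on the policy side. The output policy's mean state-action distribution $\widebar{d_h}$ then enjoys the standard ``average iterate converges to a saddle point'' guarantee. Concretely, I want to show that the TV-distance objective value of $\widebar{d}$ exceeds the optimum by at most the regret of the discriminator updates plus the accumulated policy-side optimization error, and then plug in the regret bound (Lemma~\ref{lemma:regret_of_ogd}) together with the prescribed choices of $T$ and $\eta^{(t)}$.

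\textbf{Key steps in order.} First I would recall that for any fixed discriminator-and-policy pair the quantity $\sum_{h}\sum_{(s,a)} c_h(s,a)(\widehat{d^{\piE}_h}(s,a) - d^{\pi}_h(s,a))$ is exactly $f^{(t)}(c)$ from \eqref{eq:objective_c}, and that $\max_{c \in \gC_{\TV}} \sum_{h}\sum_{(s,a)} c_h(s,a)(\widehat{d^{\piE}_h}(s,a)-d_h(s,a)) = \sum_h \lnorm d_h - \widehat{d^{\piE}_h}\rnorm_1$ since the maximizing $c$ is $\sign$ of the coordinatewise difference, which lies in $\gC_{\TV}$. Second, using convexity of the max over $c$ and Jensen's inequality applied to the average $\widebar{d_h} = \frac{1}{T}\sum_t d^{\pi^{(t)}}_h$, I would bound
\begin{align*}
  \sum_{h} \lnorm \widebar{d_h} - \widehat{d^{\piE}_h}\rnorm_1
  &= \max_{c \in \gC_{\TV}} \frac{1}{T}\sum_{t=1}^T f^{(t)}(c)
  \leq \frac{1}{T}\sum_{t=1}^T f^{(t)}(c^{(t)}) + \frac{\text{Regret}_T}{T}.
\end{align*}
Third, I would argue that each per-iteration term $f^{(t)}(c^{(t)})$ is at most $\min_{\pi \in \Pi} \sum_h \sum_{(s,a)} c^{(t)}_h(s,a)(\widehat{d^{\piE}_h}(s,a) - d^{\pi}_h(s,a)) + \varepsilon_{\mathrm{opt}}$, because $\pi^{(t)}$ is an $\varepsilon_{\mathrm{opt}}$-approximate optimal policy for reward $-c^{(t)}$ and (via the dual representation \eqref{eq:value_dual_representation}) maximizing the policy value under reward $-c^{(t)}$ is the same as minimizing $\sum_h \expect_{d^{\pi}_h}[c^{(t)}_h]$. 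Then $\min_\pi \sum_h \sum_{(s,a)} c^{(t)}_h(\widehat{d^{\piE}_h} - d^{\pi}_h) \le \sum_h \sum_{(s,a)} c^{(t)}_h(\widehat{d^{\piE}_h} - d^{\pi^\star}_h) \le \max_{c}\sum_h\sum_{(s,a)} c_h(\widehat{d^{\piE}_h} - d^{\pi^\star}_h) = \min_{\pi}\sum_h \lnorm d^{\pi}_h - \widehat{d^{\piE}_h}\rnorm_1$, where $\pi^\star$ is the exact minimizer. Averaging over $t$ gives $\frac{1}{T}\sum_t f^{(t)}(c^{(t)}) \le \min_{\pi}\sum_h \lnorm d^{\pi}_h - \widehat{d^{\piE}_h}\rnorm_1 + \varepsilon_{\mathrm{opt}}$. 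Fourth, I would invoke Lemma~\ref{lemma:regret_of_ogd} to get $\text{Regret}_T \precsim \sqrt{H}\,\sqrt{H|\gS||\gA|}\,\sqrt{T} = H\sqrt{|\gS||\gA|T}$ with the stated step size, so $\text{Regret}_T/T \precsim H\sqrt{|\gS||\gA|/T} \le \varepsilon/2$ once $T \succsim |\gS||\gA|H^2/\varepsilon^2$. Combining with $\varepsilon_{\mathrm{opt}} \le \varepsilon/2$ closes the bound.

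\textbf{Main obstacle.} The routine parts are the regret substitution and the Jensen step; the part demanding care is the swap between ``approximately optimal policy for reward $-c^{(t)}$'' and ``approximately minimizes the linear-in-$d^{\pi}$ objective,'' i.e., making sure the $\varepsilon_{\mathrm{opt}}$ measured in policy value translates cleanly into an additive $\varepsilon_{\mathrm{opt}}$ on $f^{(t)}$ with the right sign and no horizon blow-up. This hinges on $-c^{(t)}_h(s,a) \in [-1,1]$ so that one-step rewards stay bounded and the dual identity $V(\pi) = \sum_h \sum_{(s,a)} d^{\pi}_h(s,a) r_h(s,a)$ applies verbatim; I would also need to confirm Lemma~\ref{lemma:regret_of_ogd} is stated for exactly this sequence of convex (indeed linear) losses $f^{(t)}$, so that no additional smoothness or boundedness hypothesis is silently required. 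A secondary point to verify is that $\widebar{\pi}$ produced in Lines 7--8 of the algorithm induces precisely the averaged state-action distribution $\widebar{d_h}$ (this is the standard fact that mixing occupancy measures corresponds to a well-defined non-stationary policy), so that the bound on $\sum_h \lnorm \widebar{d_h} - \widehat{d^{\piE}_h}\rnorm_1$ is indeed a bound on $\sum_h \lnorm d^{\widebar{\pi}}_h - \widehat{d^{\piE}_h}\rnorm_1$.
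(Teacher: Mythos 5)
Your overall architecture is the same as the paper's: treat the discriminator as a no-regret player, the policy as an approximate best responder, use the dual representation of the $\ell_1$-norm plus linearity of the averaged occupancy measure, and translate $\varepsilon_{\mathrm{opt}}$ through the exact dual identity for policy values. The parameter accounting at the end (regret of order $H\sqrt{|\gS||\gA|T}$, hence $T \succsim |\gS||\gA|H^2/\varepsilon^2$) and the identification $d^{\widebar{\pi}}_h = \frac{1}{T}\sum_t d^{\pi^{(t)}}_h$ are also correct. However, both of your two central inequalities are stated in the wrong direction, and the argument as written is invalid; the final bound comes out right only because the two sign errors cancel.

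Concretely: your step $\max_{c}\frac{1}{T}\sum_t f^{(t)}(c) \leq \frac{1}{T}\sum_t f^{(t)}(c^{(t)}) + \mathrm{Regret}_T/T$ is the regret guarantee of gradient \emph{ascent} on $f^{(t)}$, but Algorithm \ref{algo:main_aglorithm} performs projected gradient \emph{descent} and Lemma \ref{lemma:regret_of_ogd} accordingly bounds $\sum_t f^{(t)}(c^{(t)}) - \min_c\sum_t f^{(t)}(c)$. (To see your version fails: if all $\pi^{(t)}$ equal a fixed $\pi_0$ with $d^{\pi_0}_h \neq \widehat{d^{\piE}_h}$, OGD drives $f^{(t)}(c^{(t)})$ toward $-\sum_h \Vert \widehat{d^{\piE}_h} - d^{\pi_0}_h \Vert_1 < 0$, while the left side equals $+\sum_h \Vert \widehat{d^{\piE}_h} - d^{\pi_0}_h \Vert_1$.) The inequality you actually need follows from the symmetry of $\gC_{\TV}$: $\max_c\frac{1}{T}\sum_t f^{(t)}(c) = -\min_c\frac{1}{T}\sum_t f^{(t)}(c) \leq -\frac{1}{T}\sum_t f^{(t)}(c^{(t)}) + \mathrm{Regret}_T/T$, with a minus sign on the middle term. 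Your third step is reversed as well: since $\pi^{(t)}$ approximately maximizes the value under reward $-c^{(t)}$, it approximately \emph{minimizes} $\sum_h\sum_{(s,a)} c^{(t)}_h d^{\pi}_h$ and therefore approximately \emph{maximizes} $\pi \mapsto \sum_h\sum_{(s,a)} c^{(t)}_h(\widehat{d^{\piE}_h} - d^{\pi}_h)$; the valid statement is $f^{(t)}(c^{(t)}) \geq \max_\pi \sum_h\sum_{(s,a)} c^{(t)}_h(\widehat{d^{\piE}_h} - d^{\pi}_h) - \varepsilon_{\mathrm{opt}} \geq -\min_{\pi}\sum_h \Vert d^{\pi}_h - \widehat{d^{\piE}_h}\Vert_1 - \varepsilon_{\mathrm{opt}}$, not the upper bound you wrote. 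Chaining the corrected versions gives $\sum_h \Vert d^{\widebar{\pi}}_h - \widehat{d^{\piE}_h}\Vert_1 \leq -\frac{1}{T}\sum_t f^{(t)}(c^{(t)}) + \mathrm{Regret}_T/T \leq \min_{\pi}\sum_h \Vert d^{\pi}_h - \widehat{d^{\piE}_h}\Vert_1 + \varepsilon_{\mathrm{opt}} + \mathrm{Regret}_T/T$, which is exactly the paper's chain; so the repair is mechanical, but as submitted both pivotal inequalities are false.
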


\subsection{Model-based TV-AIL}
\label{appendix:subsec:mb-tvail}

\begin{algorithm}[htbp]
\caption{Model-based TV-AIL}
\label{algo:mbtvail_detailed}
\begin{algorithmic}[1]
\REQUIRE{Expert demonstrations $\gD$.}
\STATE{$\widehat{\gP} \leftarrow$ Invoke RF-Express \citep{menard20fast-active-learning} to interact with the environment for $M$ trajectories and learn a transition model.}
\STATE{$\widebar{\pi} \lar$ Apply \cref{algo:main_aglorithm} with the learned transition model $\widehat{\gP}$ to solve the optimization problem in \eqref{eq:AIL_with_empirical_transition_model}.}
\ENSURE{Policy $\widebar{\pi}$.}
\end{algorithmic}
\end{algorithm}

In this part, we elaborate on the model-based TV-AIL method in the unknown transition setting. This extension has been briefly discussed in \cref{subsec:unknown_transition} and we provide more details here.

Consider the meta-algorithm displayed in \cref{algo:mbtvail}. First, reward-free exploration methods \citep{chi20reward-free, menard20fast-active-learning} satisfy the uniform policy evaluation condition in \cref{def:uniform_policy_evaluation}, as pointed out by \citep{xu2021nearly}. Therefore, following \citep{xu2021nearly}, we invoke the reward-free exploration method RF-Express \citep{menard20fast-active-learning} to interact with the environment for $M$ trajectories and learn a transition model $\widehat{\gP}$. With the learned transition model, we can apply \cref{algo:main_aglorithm} to solve the state-action distribution matching problem in \eqref{eq:AIL_with_empirical_transition_model}. Finally, we arrive at the model-based TV-AIL method outlined in \cref{algo:mbtvail_detailed}. We provide the following theorem, which indicates the number of online interactions required by model-based TV-AIL to achieve an approximately optimal solution.

\begin{thm}\label{theorem:sample-complexity-unknown-transition}
Fix $\varepsilon \in \lp 0, 1 \rp$ and $\delta \in (0, 1)$. Under the unknown transition setting, consider Model-based TV-AIL displayed in Algorithm \ref{algo:mbtvail_detailed} and $\widebar{\pi}$ is output policy, take $\varepsilon_{\rl} \leq \varepsilon/6$, $T \geq 288 H^2 |\gS| |\gA| / \varepsilon^2$ and $\eta^{(t)} :=  \sqrt{|\gS||\gA| / (8T)}$. If the number of online interactions satisfies,
\begin{align*}
M \gtrsim  \frac{H^3 |\gS| |\gA|}{\varepsilon^2} \lp |\gS| + \log \lp \frac{H |\gS| |\gA|}{\delta \varepsilon} \rp \rp,
\end{align*}
then with probability at least $1-\delta$, $\widebar{\pi}$ is an $\varepsilon$-optimal solution in \cref{defn:approximate_solution}.
\begin{align*}
    \sum_{h=1}^{H} \lnorm d^{\widebar{\pi}}_h - \widehat{d^{\piE}_h} \rnorm_1 \leq \min_{\pi \in \Pi} \sum_{h=1}^{H} \lnorm d^{\pi}_h - \widehat{d^{\piE}_h} \rnorm_1 + \varepsilon.
\end{align*}
\end{thm}
The theorem indicates that with the number of online interactions of $\widetilde{\gO} (H^3|\gS|^2 |\gA| / \varepsilon^2)$, model-based TV-AIL can achieve an $\varepsilon$-optimal solution in the unknown transition setting. Furthermore, by \cref{theorem:ail_approximate_reset_cliff}, we can demonstrate that model-based TV-AIL also enjoys the horizon-free imitation gap in terms of the number of expert trajectories. In summary, our theoretical results also hold even when the transition function is unknown.

\begin{proof}[Proof of Theorem \ref{theorem:sample-complexity-unknown-transition}]
The proof is based on \cref{prop:transfer_error} and the analysis in \citep{xu2021nearly}. First, we argue that when the number of online interactions satisfies
\begin{align}
\label{eq:rfexpress-interaction-complexity}
    M \gtrsim \frac{H^3 |\gS| |\gA|}{\varepsilon^2} \lp |\gS| + \log \lp \frac{H |\gS| |\gA|}{\delta \varepsilon} \rp \rp, 
\end{align}
the reward-free exploration method RF-Express is $(\varepsilon/3, \delta)$-PAC for uniform policy evaluation in \cref{def:uniform_policy_evaluation}. According to Theorem 1, Lemma 1 and the stopping rule in RF-express in \citep{menard20fast-active-learning}, with the number of online interactions in \eqref{eq:rfexpress-interaction-complexity}, for any reward function $r = \{ r_1, \ldots, r_H \}$ with $r_h : \gS \times \gA \rightarrow [0, 1]$ and any policy $\pi \in \Pi$,
\begin{align*}
    \labs V^{\pi, \gP, r} -  V^{\pi, \widehat{\gP}, r} \rabs \leq \frac{\varepsilon}{6}, 
\end{align*}
where $\widehat{\gP}$ is the transition model learned by RF-Express. Here $V^{\pi, \gP, r}$ denotes the policy value of $\pi$ under the transition model $\gP$ and reward function $r$, and $V^{\pi, \widehat{\gP}, r}$ denotes the counterpart under the learned transition model $\widehat{\gP}$. 
By the dual representation of $\ell_1$-norm, we have that
\begin{align*}
    \sum_{h=1}^H \lnorm d^{\pi, \gP}_h - d^{\pi, \widehat{\gP}}_h \rnorm_1 &= 2 \max_{w \in \gR} \sum_{h=1}^H \sum_{(s, a)} \lp d^{\pi, \gP}_h (s, a) - d^{\pi, \widehat{\gP}}_h (s, a) \rp w_h (s, a).
\end{align*}
Here $\gR = \{ r = \{r_1, \ldots, r_H \}: \forall h \in [H], \forall (s, a) \in \gS \times \gA, r_h (s, a) \in [0, 1] \}$. Furthermore, by the dual form of policy value in \eqref{eq:value_dual_representation}, we have
\begin{align*}
    \sum_{h=1}^H \lnorm d^{\pi, \gP}_h - d^{\pi, \widehat{\gP}}_h \rnorm_1 = 2 \max_{w \in \gR} V^{\pi, \gP, w} - V^{\pi, \widehat{\gP}, w} \leq \frac{\varepsilon}{3}.
\end{align*}
Therefore, we obtain that the reward-free exploration method RF-Express is $(\varepsilon/3, \delta)$-PAC for uniform policy evaluation in \cref{def:uniform_policy_evaluation}.

Second, we consider \cref{algo:main_aglorithm} and take $\varepsilon_{\rl} \leq \varepsilon/6$, $T \geq 288 H^2 |\gS| |\gA| / \varepsilon^2$ and $\eta^{(t)} :=  \sqrt{|\gS||\gA| / (8T)}$. With \cref{prop:approximate_vail_opt_error}, we have that
\begin{align*}
    \sum_{h=1}^H \lnorm d^{\widebar{\pi}, \widehat{\gP}}_h - \widehat{d^{\piE}_h} \rnorm_{1} \leq \min_{\pi \in \Pi} \sum_{h=1}^H \lnorm d^{\pi, \widehat{\gP}}_h - \widehat{d^{\piE}_h} \rnorm_{1} + \frac{\varepsilon}{3}.
\end{align*}
Therefore, we derive that \cref{algo:main_aglorithm} solves the optimization problem in \eqref{eq:AIL_with_empirical_transition_model} up to an error of $\varepsilon/3$.

In summary, we obtain that RF-Express is $(\varepsilon/3, \delta)$-PAC for uniform policy evaluation in \cref{def:uniform_policy_evaluation} and \cref{algo:main_aglorithm} solves the optimization problem in \eqref{eq:AIL_with_empirical_transition_model} up to an error of $\varepsilon/3$. Applying \cref{prop:transfer_error} with $\varepsilon_{\eval} = \varepsilon/3$ and $\varepsilon_{\opt} = \varepsilon/3$ completes the proof.  
\end{proof}

\subsection{Addressing Sample Barrier Issue of TV-AIL}
\label{appendix:vail_sample_barrier}

This section discusses how to address the sample barrier issue of TV-AIL on RBAS MDPs. That is, in the last time step, TV-AIL may make a wrong decision because there is no future guidance. We propose two approaches. The first approach is to add a one-stage terminal loss $\Vert d^{\pi}_{H+1} - \widehat{d^{\piE}_{H+1}} \Vert_1$. As a consequence, TV-AIL can exactly recover the expert policy in the first $H$ time steps. We note that this approach is widely applied in dynamic-programming-based algorithms \citep{bertsekas2012dynamic}. Another approach is to directly override the policy of TV-AIL in the final time step by a BC's policy. In this way, we can prove that the imitation gap bound becomes $\gO(\min\{1, |\gS|/N\})$, which is the one-step imitation gap of a BC policy. Notably, the improved bound for TV-AIL is better than that of BC in the whole sample regime.

\subsection{Performance of Other AIL Methods}
\label{appendix:algorithmic_behaviors_of_other_ail_methods}

In this part, we present experiment results for three representative AIL methods FEM, GTAL and GAIL, on RBAS MDPs and the lower bound instances (refer to \cref{asmp:standard_imitation}). In particular, FEM \citep{pieter04apprentice} and GTAL \citep{syed07game} perform state-action distribution matching with $\ell_2$-norm-based and $\ell_{\infty}$-norm-based divergences, respectively. Besides, GAIL \citep{ho2016gail} minimizes the state-action distribution discrepancy with the JS divergence. First, \cref{tab:reset_cliff_other_ail} summarizes the imitation gaps with different horizons on a RBAS MDP. We clearly see that the imitation gaps of FEM, GTAL and GAIL do not increase when the horizon grows, which is similar to TV-AIL. Second, we evaluate TV-AIL, FEM, GTAL and GAIL on the lower bound instance with different horizons; see the results in \cref{tab:standard_imitation_other_ail_small_data} and \cref{tab:standard_imitation_other_ail_large_data}. In both large sample regime (\cref{tab:standard_imitation_other_ail_small_data}) and small sample regime (\cref{tab:standard_imitation_other_ail_large_data}), we observe that the imitation gaps of all methods increase when the horizon grows. In summary, we observe that FEM, GTAL and GAIL exhibit similar algorithmic behaviors to TV-AIL.

\begin{table}[htbp]
\centering
\caption{Imitation gap on the RBAS MDP with $N=1$.}
\label{tab:reset_cliff_other_ail}
\begin{tabular}{@{}ccccc@{}}
\toprule
         &$H=100$  & $H=500$  & $H=1000$ &$H=2000$ \\ \midrule
         TV-AIL & \meanstd{0.69}{0.00} & \meanstd{0.70}{0.00}  & \meanstd{0.71}{0.00}   & \meanstd{0.71}{0.00} \\
         FEM &  \meanstd{0.58}{0.00} & \meanstd{0.57}{0.00} & \meanstd{0.58}{0.00} & \meanstd{0.58}{0.00}       \\
GTAL     & \meanstd{0.80}{0.00} & \meanstd{0.81}{0.00}  & \meanstd{0.81}{0.19}   & \meanstd{0.74}{0.32} \\
GAIL &    \meanstd{0.94}{0.00} & \meanstd{0.95}{0.00} & \meanstd{0.95}{0.00} & \meanstd{0.95}{0.00}
\\
\bottomrule
\end{tabular}
\end{table}

\section{Experiment Details}
\label{appendix:experiment_details}

\subsection{Experiment Details on MuJoCo}

For MuJoCo tasks, all experiments are run with $5$ random seeds. We use the expert dataset collected by the trained online SAC \citep{haarnoja2018sac} with 1 million steps. We use the deterministic policy as the expert policy, which is common in the literature \citep{ho2016gail}. The expert policy values are listed in \cref{table:expert_value_mujoco}.

For MuJoCo tasks, we implement BC according to \citep{li2022rethinking}. The implementation of TV-AIL is based on an existing AIL algorithm DAC \citep{Kostrikov19dac} (\url{https://github.com/google-research/google-research/tree/master/value_dice}). Instead of the KL-divergence in DAC, TV distance is considered in TV-AIL. To this end,  the tanh activation function is applied in the last layer of the discriminator.

\begin{table}[htbp]
\caption{Policy value of the expert policy on Hopper, HalfCheetah and Walker2d with different horizons.}
\label{table:expert_value_mujoco}
\centering
\begin{tabular}{@{}lllll@{}}
\toprule
Horizon           & H=100 &H=500 & H=1000 & H=2000 \\ \midrule
Hopper                        & 221.76 & 1533.83      &3202.19           & 6496.92           \\
HalfCheetah                        & 497.12 & 3719.69      &7770.66           & 15866.96           \\
Walker2d                        & 225.24 & 2176.84      &5046.76           & 10797.29           \\
\bottomrule
\end{tabular}
\end{table}

\begin{table*}[t]
\centering
\caption{{Imitation gap on Hopper, HalfCheetah and Walker2d with $H = 1000$. We report the mean of the imitation gap with the standard deviation over 5 independent experiments (same with \cref{tab:mujoco_sample_size_return}, \cref{tab:mujoco_horizon_unscaled_imitation_gap} and \cref{tab:mujoco_horizon_return}).}}
\label{tab:mujoco_sample_size_unscaled_imitation_gap}
\begin{tabular}{@{}cccccc@{}}
\toprule
     &    & $N=1$ & $N=4$ & $N=7$ &$N=10$ \\ \midrule
\multirow{2}{*}{Hopper } &BC      & \meanstd{2511.89}{89.90} &  \meanstd{2838.54}{100.04} &  \meanstd{2132.61}{341.06} & \meanstd{1474.30}{239.85}       \\
 &TV-AIL      & \meanstd{33.23}{36.01}  & \meanstd{5.78}{9.21}       & \meanstd{-15.87}{34.80}  & \meanstd{7.44}{33.9} \\
\midrule
\multirow{2}{*}{HalfCheetah } &BC      & \meanstd{8150.26}{63.70}& \meanstd{8209.84}{175.26} & \meanstd{7608.13}{273.49} & \meanstd{4462.39}{1318.84}       \\
 &TV-AIL      & \meanstd{-172.86}{782.73}  & \meanstd{-654.17}{129.64}       & \meanstd{-605.93}{53.75} & \meanstd{-610.52}{61.22} \\
 \midrule
\multirow{2}{*}{Walker2d } &BC      & \meanstd{5010.65}{48.42} & \meanstd{4696.33}{95.52} & \meanstd{2644.53}{907.30} & \meanstd{1114.92}{264.86}       \\
 &TV-AIL      & \meanstd{64.45}{98.05}  & \meanstd{45.20}{89.96}       & \meanstd{-24.53}{40.69}  & \meanstd{71.79}{62.42} \\ \bottomrule
\end{tabular}
\end{table*}

\begin{table*}[t]
\centering
\caption{Return on Hopper, HalfCheetah and Walker2d with $H = 1000$.}
\label{tab:mujoco_sample_size_return}
\begin{tabular}{@{}cccccc@{}}
\toprule
     &    & $N=1$ & $N=4$ & $N=7$ &$N=10$ \\ \midrule
\multirow{2}{*}{Hopper } &BC      & \meanstd{690.30}{89.90} &  \meanstd{363.65}{100.04} &  \meanstd{1069.58}{341.06} & \meanstd{1727.89}{239.85}       \\
 &TV-AIL      & \meanstd{3168.96}{36.01}  & \meanstd{3196.41}{9.21}       & \meanstd{3218.06}{34.80}  & \meanstd{3194.75}{33.9} \\
\midrule
\multirow{2}{*}{HalfCheetah } &BC      & \meanstd{-379.60}{63.70}& \meanstd{-439.18}{175.26} & \meanstd{162.53}{273.49} & \meanstd{3308.27}{1318.84}       \\
 &TV-AIL      & \meanstd{7943.52}{782.73}  & \meanstd{8423.83}{129.64}       & \meanstd{8376.59}{53.75} & \meanstd{8381.18}{61.22} \\
 \midrule
\multirow{2}{*}{Walker2d } &BC      & \meanstd{36.11}{48.42} & \meanstd{350.43}{95.52} & \meanstd{2402.23}{907.30} & \meanstd{3931.84}{264.86}       \\
 &TV-AIL      & \meanstd{4982.31}{98.05}  & \meanstd{5001.56}{89.96}       & \meanstd{5071.29}{40.69}  & \meanstd{4974.97}{62.42} \\ \bottomrule
\end{tabular}
\end{table*}

\begin{table*}[t]
\centering
\caption{Imitation gap on Hopper, HalfCheetah and Walker2d with $N=1$.}
\label{tab:mujoco_horizon_unscaled_imitation_gap}
\begin{tabular}{@{}cccccc@{}}
\toprule
     &    & $H=100$ & $H=500$ & $H=1000$ &$H=2000$ \\ \midrule
\multirow{2}{*}{Hopper} &BC      &\meanstd{2.56}{5.50} & \meanstd{571.38}{253.67} & \meanstd{2511.89}{89.90} & \meanstd{6241.36}{115.84}       \\
 &TV-AIL       & \meanstd{15.86}{14.60}   & \meanstd{-5.54}{21.65}       & \meanstd{33.23}{36.01}  & \meanstd{-31.75}{118.87} \\
\midrule
\multirow{2}{*}{HalfCheetah} &BC      &  \meanstd{434.56}{71.08} & \meanstd{3787.72}{161.79} & \meanstd{8150.26}{63.70} & \meanstd{16929.29}{99.57}       \\
 &TV-AIL      & \meanstd{65.39}{63.58}  & \meanstd{-190.44}{88.22}       & \meanstd{-172.86}{782.73}  & \meanstd{-1307.71}{123.65} \\
 \midrule
\multirow{2}{*}{Walker2d} &BC      & \meanstd{51.23}{39.47} & \meanstd{2065.12}{68.27} & \meanstd{5010.65}{48.42} & \meanstd{10794.19}{10.24}       \\
 &TV-AIL      & \meanstd{-0.91}{5.18}  & \meanstd{81.32}{113.18}       & \meanstd{64.65}{98.05}  & \meanstd{358.43}{331.51} \\ \bottomrule
\end{tabular}
\end{table*}

\begin{table*}[t]
\centering
\caption{Return on Hopper, HalfCheetah and Walker2d with $N=1$.}
\label{tab:mujoco_horizon_return}
\begin{tabular}{@{}cccccc@{}}
\toprule
     &    & $H=100$ & $H=500$ & $H=1000$ &$H=2000$ \\ \midrule
\multirow{2}{*}{Hopper} &BC      &\meanstd{219.20}{5.50} & \meanstd{962.45}{253.67} & \meanstd{690.30}{89.90} & \meanstd{255.56}{115.84}       \\
 &TV-AIL       & \meanstd{205.90}{14.60}   & \meanstd{1539.37}{21.65}       & \meanstd{3168.96}{36.01}  & \meanstd{6528.67}{118.87} \\
\midrule
\multirow{2}{*}{HalfCheetah} &BC      &  \meanstd{62.56}{71.08} & \meanstd{-68.03}{161.79} & \meanstd{-379.6}{63.70} & \meanstd{-1062.33}{99.57}       \\
 &TV-AIL      & \meanstd{431.73}{63.58}  & \meanstd{3910.13}{88.22}       & \meanstd{7943.52}{782.73}  & \meanstd{17174.67}{123.65} \\
 \midrule
\multirow{2}{*}{Walker2d} &BC      & \meanstd{174.01}{39.47} & \meanstd{111.72}{68.27} & \meanstd{36.11}{48.42} & \meanstd{3.10}{10.24}       \\
 &TV-AIL      & \meanstd{226.15}{5.18}  & \meanstd{2095.52}{113.18}       & \meanstd{4982.11}{98.05}  & \meanstd{10438.86}{331.51} \\ \bottomrule
\end{tabular}
\end{table*}

\begin{table*}[tbp]
\caption{Information about tabular MDPs.}
\label{table:task_information}
\centering

\begin{tabular*}{\textwidth}{@{\extracolsep{\fill}}p{0.34\textwidth}cccc@{}}
\toprule
Tasks & \makecell{Number of\\ states} & \makecell{Number of\\ actions} & Horizon & \makecell{Number of\\ expert trajectories} \\ \midrule
RBAS MDPs (\cref{tab:reset_cliff_sample_size}) & 20 & 2 & 1000 & $[1,4,7,10]$ \\
\midrule
RBAS MDPs (\cref{tab:reset_cliff_horizon}) & 20 & 2 & $[10,100,1000,2000]$ & 1 \\
\midrule
Lower bound instances (\cref{tab:standard_imitation_other_ail_small_data}) & 100 & 2 & $[10,100,1000,2000]$ & 1 
\\
\midrule
Lower bound instances (\cref{tab:standard_imitation_other_ail_large_data}) & 100 & 2 & $[10,100,1000,2000]$ & 100 
\\
\bottomrule
\end{tabular*}
\end{table*}

\subsection{Experiment Details on Tabular MDPs}

For tabular MDPs, all experiments are run with $20$ random seeds. The detailed task information is listed in Table \ref{table:task_information}. For RBAS MDPs, we consider that the initial state distribution is uniform over good states. For the lower bound instances (refer to \cref{asmp:standard_imitation}), we consider that the initial state distribution is uniform over all states. By construction, the policy value of the expert policy is $H$.

BC directly estimates the expert policy from expert demonstrations via \eqref{eq:pi_bc}. For TV-AIL, we run \cref{algo:main_aglorithm} to obtain an approximate solution. In our experiments, an adaptive step size~\citep{Orabona19a_modern_introduction_to_ol} is implemented for TV-AIL: 
\begin{align*}
    \eta^{(t)} = \frac{D}{ \sqrt{\sum_{i=1}^t \lnorm \nabla_{c} f^{(i)} \lp c^{(i)} \rp \rnorm_2^2}},
\end{align*}
where $D = \sqrt{2H |\gS| |\gA|}$ is the diameter of the set $\gC_{\TV}$. Notice that \cref{prop:approximate_vail_opt_error} still holds with this adaptive step size \citep{Orabona19a_modern_introduction_to_ol}.

\end{document}